\newtheorem{theorem}{Theorem}
\newtheorem{lemma}{Lemma}
\newtheorem{corollary}{Corollary}
\declaretheoremstyle[headfont=\bf,bodyfont=\normalfont]{ex}
\declaretheoremstyle[bodyfont=\normalfont]{rm}
\DeclareMathOperator*{\argmin}{arg\,min}
\newcommand{\defn}{\ensuremath{:\,=}}
\newcommand{\nfed}{\ensuremath{=\,:}}
\DeclareMathOperator{\proj}{\Pi}
\DeclareMathOperator{\diag}{{\rm diag}}
\newcommand{\dx}{\ensuremath{\diff \state}}
\newcommand{\diff}{\ensuremath{d}}
\long\def\@makecaption#1#2{
	\vskip 0.8ex
	\setbox\@tempboxa\hbox{\small {\bf #1:} #2}
	\parindent 1.5em  
	\dimen0=\hsize
	\advance\dimen0 by -3em
	\ifdim \wd\@tempboxa >\dimen0
	\hbox to \hsize{
		\parindent 0em
		\hfil 
		\parbox{\dimen0}{\def\baselinestretch{0.96}\small
			{\bf #1.} {#2}
		} 
		\hfil}
	\else \hbox to \hsize{\hfil \box\@tempboxa \hfil}
	\fi
}
\newcommand{\Prob}{\ensuremath{\mathbb{P}}}
\newcommand{\Exp}{\ensuremath{\mathbb{E}}}
\newcommand{\Real}{\ensuremath{\mathds{R}}}
\newcommand{\Natural}{\ensuremath{\mathds{N}}}
\newcommand{\Int}{\ensuremath{\mathds{Z}}}
\newcommand{\thetastar}{\ensuremath{\theta^*}}
\newcommand{\thetahat}{\ensuremath{\widehat{\theta}}}
\newcommand{\Deltahat}{\ensuremath{\widehat{\Delta}}}
\newcommand{\alphabold}{\ensuremath{\boldsymbol{\alpha}}}
\newcommand{\PackNum}{\ensuremath{M}}
\newcommand{\RKHS}{\ensuremath{\mathds{H}}}
\newcommand{\RKHShat}{\ensuremath{\widehat{\mathds{H}}}}
\newcommand{\Lmu}{\ensuremath{L^2(\distr)}}
\newcommand{\MRP}{\ensuremath{\mathscr{I}}}
\newcommand{\Ker}{\ensuremath{\mathcal{K}}}
\newcommand{\KerFun}{\Ker}
\newcommand{\MRPclass}{\ensuremath{\mathfrak{M}}}
\newcommand{\reward}{\ensuremath{r}}
\newcommand{\discount}{\ensuremath{\gamma}}
\newcommand{\TransOp}{\ensuremath{\mathcal{P}}}
\newcommand{\IdOp}{\ensuremath{\mathcal{I}}}
\newcommand{\StateSp}{\ensuremath{\mathcal{X}}}
\newcommand{\CovOp}{\ensuremath{\Sigma_{\rm cov}}}
\newcommand{\CovOphat}{\ensuremath{\widehat{\Sigma}_{\rm cov}}}
\newcommand{\CovOptilde}{\ensuremath{\widetilde{\Sigma}_{\rm cov}}}
\newcommand{\CrOp}{\ensuremath{\Sigma_{\rm cr}}}
\newcommand{\CrossOp}{\CrOp}
\newcommand{\CrOphat}{\ensuremath{\widehat{\Sigma}_{\rm cr}}}
\newcommand{\CovMt}{\ensuremath{{\bf K}_{\rm cov}}}
\newcommand{\CrMt}{\ensuremath{{\bf K}_{\rm cr}}}
\newcommand{\IdMt}{\ensuremath{{\bf I}_{\numobs}}}
\newcommand{\yvec}{\ensuremath{\boldsymbol{y}}}
\newcommand{\widgraph}[2]{\includegraphics[keepaspectratio,width=#1]{#2}}
\newcommand{\mydefn}{\ensuremath{: \, =}}
\newcommand{\numobs}{\ensuremath{n}}
\newcommand{\delcrit}{\ensuremath{\delta_\numobs}}
\newcommand{\delcritsq}{\ensuremath{\delta^2_\numobs}}
\newcommand{\statdim}{\ensuremath{d_\numobs}}
\newcommand{\eig}{\ensuremath{\mu}}
\newcommand{\distr}{\ensuremath{\upmu}}
\newcommand{\distrm}[1]{\ensuremath{\distr_{#1}}}
\newcommand{\ridge}{\ensuremath{\lambda_\numobs}}
\newcommand{\Event}{\ensuremath{\mathcal{E}}}
\newcommand{\Term}{\ensuremath{T}}
\DeclarePairedDelimiterX{\inprod}[2]{\langle}{\rangle}{#1, \, #2}
\DeclarePairedDelimiterX{\kulldiv}[2]{(}{)}{#1\;\delimsize\|\;#2}
\newcommand{\@kullstar}[2]{D_{\text{KL}}\kulldiv*{#1}{#2}}
\newcommand{\@kullnostar}[3][]{D_{\text{KL}}\kulldiv[#1]{#2}{#3}}
\newcommand{\kull}{\@ifstar\@kullstar\@kullnostar}
\newcommand{\@hilinstar}[2]{\inprod*{#1}{#2}_{\RKHS}}
\newcommand{\@hilinnostar}[3][]{\inprod[#1]{#2}{#3}_{\RKHS}}
\newcommand{\hilin}{\@ifstar\@hilinstar\@hilinnostar}
\DeclarePairedDelimiterX{\norm}[1]{\|}{\|}{#1}
\newcommand{\@normstar}[1]{\norm*{#1}_{\RKHS}}
\newcommand{\@normnostar}[2][]{\norm[#1]{#2}_{\RKHS}}
\newcommand{\hilnorm}{\@ifstar\@normstar\@normnostar}
\newcommand{\@supnormstar}[1]{\norm*{#1}_{\infty}}
\newcommand{\@supnormnostar}[2][]{\norm[#1]{#2}_{\infty}}
\newcommand{\supnorm}{\@ifstar\@supnormstar\@supnormnostar}
\newcommand{\@munormstar}[1]{\norm*{#1}_{\distr}}
\newcommand{\@munormnostar}[2][]{\norm[#1]{#2}_{\distr}}
\newcommand{\munorm}{\@ifstar\@munormstar\@munormnostar}
\newcommand{\@mumnormstar}[2]{\norm*{#2}_{\distrm{#1}}}
\newcommand{\@mumnormnostar}[3][]{\norm[#1]{#3}_{\distrm{#2}}}
\newcommand{\mumnorm}{\@ifstar\@mumnormstar\@mumnormnostar}
\newcommand{\Rep}[1]{\ensuremath{\Phi_{#1}}}
\newcommand{\xplus}{\ensuremath{x_+}}
\newcommand{\xneg}{\ensuremath{x_-}}
\newcommand{\mprime}{\ensuremath{{m'}}}
\newcommand{\mdagger}{\ensuremath{{m^\dagger}}}
\long\def\comment#1{}
\newcommand{\plaincon}{\ensuremath{c}}
\newcommand{\Aevent}{\ensuremath{\mathcal{A}}}
\newcommand{\svar}{\ensuremath{\nu}}
\newcommand{\GamOp}{\ensuremath{\Gamma}}
\newcommand{\GamOpHat}{\ensuremath{\widehat{\GamOp}}}
\newcommand{\specfun}{\ensuremath{\rho}}
\newcommand{\real}{\ensuremath{\mathbb{R}}}
\newcommand{\usedim}{\ensuremath{d}}
\newcommand{\Bevent}{\ensuremath{\mathcal{B}}}
\newcommand{\plaincontwo}{\ensuremath{\plaincon^\prime}}
\newcommand{\Ztil}{\ensuremath{\widetilde{Z}}}
\newcommand{\Exs}{\Exp}
\newcommand{\newrad}{\ensuremath{R}}
\newcommand{\newradsq}{\ensuremath{\newrad^2}}
\newcommand{\stdfun}{\ensuremath{\sigma}}
\newcommand{\cbar}{\ensuremath{\bar{c}}}
\newcommand{\tcrit}{\ensuremath{t_\numobs}}
\newcommand{\tcritsq}{\ensuremath{t^2_\numobs}}
\newcommand{\ucrit}{\ensuremath{u_\numobs}}
\newcommand{\ucritsq}{\ensuremath{u^2_\numobs}}
\newcommand{\SqDis}{\ensuremath{(1-\discount)}}
\newcommand{\Ellipse}{\ensuremath{\mathcal{E}}}
\newcommand{\Elltil}{\ensuremath{\tilde{\Ellipse}}}
\newcommand{\SpecialConstant}{\ensuremath{\zeta}}
\newcommand{\unibou}{\ensuremath{\kappa}}
\newcommand{\bou}{\ensuremath{b}}
\newcommand{\CI}{\ensuremath{\operatorname{CI}}}
\newcommand{\parap}{\ensuremath{p}}
\newcommand{\Deltap}{\ensuremath{\Delta p}}
\newcommand{\interval}{\ensuremath{\Delta}}
\newcommand{\@twonormstar}[1]{\norm*{#1}_{L^2}}
\newcommand{\@twonormnostar}[2][]{\norm[#1]{#2}_{L^2}}
\newcommand{\twonorm}{\@ifstar\@twonormstar\@twonormnostar}
\newcommand{\base}{\ensuremath{\phi}}
\newcommand{\newradbar}{\ensuremath{\bar{\newrad}}}
\newcommand{\newradbarreward}{\ensuremath{\newradbar}}
\newcommand{\stdbarreward}{\ensuremath{\bar{\stdfun}}}
\newcommand{\eigtil}{\ensuremath{\tilde{\eig}}}
\newcommand{\basetil}{\ensuremath{\tilde{\base}}}
\newcommand{\stdbar}{\ensuremath{\bar{\stdfun}}}
\newcommand{\bv}{\ensuremath{\boldsymbol{v}}}
\newcommand{\bu}{\ensuremath{\boldsymbol{u}}}
\newcommand{\util}{\ensuremath{\tilde{u}}}
\newcommand{\bx}{\ensuremath{\boldsymbol{\state}}}
\newcommand{\bD}{\ensuremath{{\bf D}}}
\newcommand{\bI}{\ensuremath{{\bf I}}}
\newcommand{\bSigma}{\ensuremath{{\bf \Sigma}}}
\newcommand{\bP}{\ensuremath{{\bf P}}}
\newcommand{\bPbase}{\ensuremath{\bP_0}}
\newcommand{\bPr}{\ensuremath{\bPA}}
\newcommand{\bPt}{\ensuremath{\bPB}}
\newcommand{\bmu}{\ensuremath{\boldsymbol{\distr}}}
\newcommand{\bF}{\ensuremath{{\bf F}}}
\newcommand{\br}{\ensuremath{\boldsymbol{\reward}}}
\newcommand{\btheta}{\ensuremath{\boldsymbol{\theta}}}
\newcommand{\bthetabase}{\ensuremath{\btheta_0}}
\newcommand{\bthetar}{\ensuremath{\btheta_A}}
\newcommand{\bthetat}{\ensuremath{\btheta_B}}
\newcommand{\cha}{\ensuremath{\chi}}
\newcommand{\be}{\ensuremath{{\bf e}}}
\newcommand{\TermA}{\ensuremath{A}}
\newcommand{\TermAtil}{\ensuremath{\widetilde{A}}}
\newcommand{\state}{\ensuremath{x}}
\newcommand{\statetwo}{\ensuremath{\state'}}
\newcommand{\statenew}{\ensuremath{y}}
\newcommand{\State}{\ensuremath{X}}
\newcommand{\Statetwo}{\ensuremath{\State'}}
\newcommand{\Statenew}{\ensuremath{Y}}
\newcommand{\LB}{\operatorname{LB}}
\newenvironment{carlist}
 {\begin{list}{$\bullet$}
 {\setlength{\topsep}{0in} \setlength{\partopsep}{0in}
  \setlength{\parsep}{0in} \setlength{\itemsep}{\parskip}
  \setlength{\leftmargin}{0.15in} \setlength{\rightmargin}{0.08in}
  \setlength{\listparindent}{0in} \setlength{\labelwidth}{0.08in}
  \setlength{\labelsep}{0.1in} \setlength{\itemindent}{0in}}}
 {\end{list}}
\newcommand{\bcar}{\begin{carlist}}
\newcommand{\ecar}{\end{carlist}}
\newcommand{\simpleub}{\varepsilon^2}
\newcommand{\RKHSA}{\RKHS_A}
\newcommand{\RKHSB}{\RKHS_B}
\newcommand{\MRPclassA}{\MRPclass_A}
\newcommand{\MRPclassB}{\MRPclass_B}
\newcommand{\bPA}{\bP_A}
\newcommand{\bPB}{\bP_B}
\newcommand{\KerA}{\Ker_A}
\newcommand{\KerB}{\Ker_B}
\newcommand{\rewardA}{\reward_A}
\newcommand{\rewardB}{\reward_B}
\newcommand{\baseA}[1]{\ensuremath{\base_{A,{#1}}}}
\newcommand{\baseB}[1]{\ensuremath{\base_{B,{#1}}}}
\newcommand{\numint}{\ensuremath{K}}
\long\def\comment#1{}
\newcommand{\EffHorizon}{\ensuremath{H}}
\newcommand{\specexp}{\ensuremath{\nu}}
\newcommand{\KerComplex}{\ensuremath{\mathcal{C}}}
\newcommand{\Vstar}{\ensuremath{V^*}}
\newcommand{\Bellman}{\ensuremath{\mathcal{T}}}
\newcommand{\Gclass}{\ensuremath{\mathds{G}}}
\begin{document}
	
\begin{center}
  {\bf \LARGE Optimal policy evaluation using kernel-based
    temporal difference methods} \\
  \vspace{.5em} 
\end{center}

\begin{center}
  \begin{tabular}{ccc}
    Yaqi Duan && Mengdi Wang \\
    Department of ORFE & & Department of ECE \\
    Princeton University & & Princeton University
  \end{tabular}

\vspace*{0.05in}  
\begin{tabular}{c}  
  Martin J. Wainwright \\
  Departments of Statistics and EECS \\
  UC Berkeley
\end{tabular}

\vspace*{0.3in}
\today

\end{center}

\begin{center}
  \begin{abstract}
    We study methods based on reproducing kernel Hilbert spaces for
    estimating the value function of an infinite-horizon discounted
    Markov reward process (MRP).  We study a regularized form of the
    kernel least-squares temporal difference (LSTD) estimate; in the
    population limit of infinite data, it corresponds to the fixed
    point of a projected Bellman operator defined by the associated
    reproducing kernel Hilbert space.  The estimator itself is
    obtained by computing the projected fixed point induced by a
    regularized version of the empirical operator; due to the
    underlying kernel structure, this reduces to solving a linear
    system involving kernel matrices.  We analyze the error of this
    estimate in the $L^2(\upmu)$-norm, where $\upmu$ denotes the
    stationary distribution of the underlying Markov chain.  Our
    analysis imposes no assumptions on the transition operator of the
    Markov chain, but rather only conditions on the reward function
    and population-level kernel LSTD solutions.  We use empirical
    process theory techniques to derive a non-asymptotic upper bound
    on the error with explicit dependence on the eigenvalues of the
    associated kernel operator, as well as the instance-dependent
    variance of the Bellman residual error.  In addition, we prove
    minimax lower bounds over sub-classes of MRPs, which shows that
    our rate is optimal in terms of the sample size $n$ and the
    effective horizon $H = (1 - \gamma)^{-1}$.  Whereas existing
    worst-case theory predicts cubic scaling ($H^3$) in the effective
    horizon, our theory reveals that there is in fact a much wider
    range of scalings, depending on the kernel, the stationary
    distribution, and the variance of the Bellman residual error.
    Notably, it is only parametric and near-parametric problems that
    can ever achieve the worst-case cubic scaling.
\end{abstract}

\end{center}

	
\section{Introduction}
\label{SecIntroduction}

Markov decision processes provide a formalism for studying optimal
decision-making in dynamic
settings~\cite{Puterman05,bertsekas1995dynamic}, and are used in a
wide variety of applications
(e.g.,~\cite{BouDij17,sutton2018reinforcement}).  Reinforcement
learning (RL) refers to methods that operate in settings where the
model structure and/or parameters are unknown.  In this context, a
central problem is to use samples to evaluate the quality of a given
policy, as assessed via its value function.  Indeed, the estimation of
value functions serves as a fundamental building block for many RL
algorithms~\cite{BerTsi96,sutton2018reinforcement}.

When a given policy is fixed, a Markov decision process reduces to a
Markov reward process (MRP).  The value of any given initial state in
an MRP corresponds to the expected cumulative reward along a
trajectory when starting from the given state; the collection of all
such state values defines the value function.  The problem of
estimating this function is known as \emph{policy evaluation}, or
\emph{value function estimation}, and we use these terms
interchangeably.  In practice, policy evaluation is challenging
because the state space might be continuous, or even when discrete, it
might involve a huge number of possible states.  For this reason,
practical methods for policy evaluation typically involve some form of
function approximation.

The simplest and most well-studied approach is based on linear
function approximation, in which the value function is approximated as
a weighted combination of a fixed set of features.  This particular
choice leads to the least-squares policy evaluation estimator, also
known as the least-squares temporal difference (LSTD) estimate, along
with its online temporal difference variants
(e.g.,~\cite{bradtke1996linear,tsitsiklis1997analysis,sutton2018reinforcement,MouLiWaiBarJor20}).
The choice of linear functions is attractive in that the LSTD estimate
is easy to compute, based on solving a linear system of equations.
However, the expressivity of linear functions is limited, and so that
it is natural to seek approximations in richer function classes.

In many types of statistical problems, including regression, density
estimation, and clustering, methods based on reproducing kernel
Hilbert spaces (RKHSs) have proven
useful~\cite{Gu02,BerTho04,shawe2004kernel,wainwright2019high}.  As we
discuss in Section~\ref{SecRelated}, kernel methods have also proven
useful in the specific context of reinforcement learning.  Kernel
methods allow for much richer representations of functions, by
working---in an implicit way---over a possibly infinite set of
features, as defined by the eigenfunctions of the associated kernel
integral operator.  However, at the same time, due to the classical
representer theorem~\cite{KimWah71, wainwright2019high}, a broad class
of kernel-based estimators can be computed relatively easily by
working directly with $\numobs$-dimensional kernel matrices, where
$\numobs$ is the sample size.

The main goal of this paper is to provide a sharp characterization of
the statistical properties of a family of kernel-based procedures for
policy evaluation. So as to bring our specific contributions into
sharp focus, we study the case of infinite-horizon
$\discount$-discounted Markov reward processes (MRPs), but much of our
analysis and associated techniques also has consequences for kernel
methods in the finite-horizon setting.  In our analysis, we assume
that we have access to the reward function and i.i.d. transition pairs
drawn from the stationary distribution.  We analyze a kernel-based
temporal difference estimator, whose population limit corresponds to
the fixed point of a projected Bellman operator.  We measure the
difference between the empirical and population estimators in $\Lmu$
norm, with $\distr$ denoting the stationary distribution. We refer to
this $\Lmu$ error as the \emph{estimation error}.  At a high level,
the main contribution of this paper is to provide a sharp and
partially instance-independent analysis of this estimation error.


\subsection{Related work and our contributions}
\label{SecRelated}

We begin by discussing related work and then, with this context in
place, provide a high-level overview of our contributions.

\paragraph{Related work:}

Here we provide a partial overview of past work, with an emphasis on
those papers providing estimation error bounds that are most relevant
for putting our results in context.  The utility of kernel methods in
reinforcement learning is by now well-established, as attested to by
the lengthy line of previous papers on the topic
(e.g.,~\cite{BagSch03,grunewalder2012modelling,
  taylor2009kernelized,BarPrePin16,koppel2020policy,
  dai2017learning,feng2019kernel,feng2020accountable}).  In the
special case of a linear kernel function, the kernel-based method
studied in this paper reduces to the classical least-squares temporal
difference (LSTD) method~\cite{Sut88,
  sutton2018reinforcement,bradtke1996linear}.

In terms of papers that provide guarantees on statistical estimation
in non-parametric settings, early work by Ormoneit and
Sen~\cite{ormoneit2002kernel} studied the use of local-averaging
kernel methods for approximating value functions; they proved various
types of asymptotic consistency results.  Munos and
Szepesvari~\cite{MunSze08} studied methods for fitted value iteration
(FVI) under various types of $\ell_p$-norms; under metric entropy
conditions on the function space, they proved various types of
consistency results, but without providing sharp or minimax-optimal
guarantees.  In later work, Farahmand et
al.~\cite{farahmand2016regularized} studied a class of regularized
procedures for both policy evaluation and policy optimization.  Their
analysis is attractive in allowing for quite general function classes,
with reproducing kernel Hilbert spaces being an important special
case.  They provided guarantees under bounds on the sup-norm metric
entropy of the function classes at hand, and for certain function
classes, they argued that their bounds achieved the optimal scaling in
sample size $\numobs$.  Farahmand et al.  also conjectured that it
should be possible to prove similar guarantees using metric entropy
conditions in the $\distr$-norm, and indeed, in the special case of
RKHS classes, one consequence of our results is to confirm this
conjecture.  A more recent line of work has studied variants of fitted
Q-iteration (FQI) using neural network approximation, and provided
statistical guarantees under different notions of smoothness.  For
example, Fan et al.~\cite{fan2020theoretical} exploited the H\"older
smoothness of the range of Bellman operator to derive bounds on
estimation error; Nguyen-Tang et al.~\cite{nguyen2021sample}
approximated deep ReLU networks using Besov classes; and Long et
al.~\cite{long20212} analyzed two-layer neural networks based on neural
tangent kernels or Barron spaces. All these works contribute to the
understanding of empirical success of deep reinforcement learning.

A notable feature of much past work is while it provides bounds on
statistical error, it does not carefully track the dependence on the
(effective) horizon and model dynamics, and the variance of the
Bellman residual.  As we argue in this paper, understanding how
non-parametric procedures depend on the latter quantities is
essential, as they are the ingredients that actually distinguish value
function estimation from a typical (static) prediction problem, with
ordinary non-parametric regression being the archetypal example.  In
order to reveal this dependence, the analysis of this paper makes use
of empirical process techniques~\cite{vandeGeer,wainwright2019high}
that have proven successful for analyzing kernel ridge regression and
related estimators
(e.g.,~\cite{Zhang2005b,yang2017randomized,RasWaiYu12}).  Essential
for obtaining sharp rates is the local Rademacher complexity, which
has an explicit expression in terms of the eigenvalues of the kernel
integral operator~\cite{mendelson2002geometric}; see Chapters 12 and
13 in the book~\cite{wainwright2019high} for more details.

It is also worth noting that recent years have witnessed considerable
progress in understanding policy evaluation in off-policy settings,
and/or providing guarantees that have optimal instance-dependent
rates.  This work can be separated into work that is either
asymptotic~\cite{jiang2016doubly,kallus2019efficiently,kallus2020double}
and
non-asymptotic~\cite{pananjady2019value,khamaru2020temporal,xie2019towards,yin2020asymptotically}
in nature.  In this non-asymptotic setting, much of this work is
focused on either the tabular case, or the simpler setting of linear
function approximation, as opposed to the non-parametric cases of
interest here.  We note that our results do depend on the problem
instance, but this instance-dependence is not (yet) as sharp as that
established in the simpler setting of tabular
problems~\cite{pananjady2019value,khamaru2020temporal}.

This paper also makes connections to the large body of work on
instrumental variable (IV) methods
(e.g.,~\cite{Whi82,NewPow03,Woo10}).  It is known that the
least-squares temporal difference (LSTD) estimate can be derived as a
classical linear IV estimate~\cite{bradtke1996linear}.  More
generally, the kernel-based procedures in this paper correspond to a
non-parametric form of an instrumental variable method.  While
portions of our analysis are specific to reinforcement learning, we
suspect that our techniques can be adapted so as to provide guarantees
for other non-parametric IV estimates.


\paragraph{Our contributions:}

Consistency of any statistical estimator is certainly a desirable
requirement.  A more ambitious goal, and a centerpiece of
high-dimensional statistics, is to give a more refined non-asymptotic
characterization, one which tracks not only sample size but also other
structural properties of the problem.  In the context of policy
evaluation for Markov reward processes with discount factor $\discount
\in (0,1)$, such structural properties include: (a) the complexity of
the population-level value function $\thetastar$ that is being
estimated; (b) the ``richness'' of the function class used for
approximation relative to the stationary measure of the Markov chain;
(c) the effective horizon \mbox{$\EffHorizon \defn (1 -
  \discount)^{-1}$,} which measures the typical scale over which the
discounted reward process evolves; and (d) the underlying noise
function, given by the variance of the Bellman residual.  The latter
two properties are of particular interest, since they distinguish the
dynamic nature of value function estimation from a standard problem of
static non-parametric estimation.

The main contribution of this paper is to give a precise
characterization, including both matching upper and lower bounds, on
how a well-tuned version of the kernel-based LSTD estimate depends on
all of these structural parameters.  Notably, our characterization is
instance-dependent, in that the bounds vary considerably depending on
the structure of $\thetastar$ and the associated variance of the
Bellman residual error, along with the eigenvalues of the kernel
integral operator, which vary as a function of both the kernel
function class, and the stationary measure of the Markov chain.  En
route to doing so, we provide specific guidance on how the
regularization parameter, essential for non-parametric methods such as
those based on RKHSs, should be chosen.

\Cref{thm:ub} provides two types of non-asymptotic bounds on the
estimation error of a regularized kernel LSTD estimate: a ``slow''
rate and a ``fast'' rate.  These two guarantees differ in the way that
the inherent noise of the problem is measured.  While the ``slow''
guarantee holds for any sample size $\numobs$, the guarantee is based
on a crude measure of the noise level, based on bounds on the sup-norm
and Hilbert norm of the population-level value function.  The second
``fast'' guarantee holds only once the sample size exceeds a certain
threshold, but depends on the variance of the Bellman residual error,
which is a fundamental quantity for the problem.  Indeed, in our
second main result, stated as \Cref{thm:lb}, we study the best
performance of any procedure of two particular sub-classes of MRPs,
and prove lower bounds that match the ``fast'' rates from
\Cref{thm:ub} in terms of all relevant problem-dependent quantities.
These matching upper and lower bounds establish the optimality of our
procedure.

Our theory applies to a fairly general class of kernel functions in
arbitrary dimension, with the rates depending on the eigenvalues
$\{\mu_j\}_{j=1}^\infty$ of the induced kernel operator.  It is
important to note that these eigenvalues depend not just on the
kernel, but also on the stationary distribution of the Markov chain.
Let us briefly highlight some interesting predictions made by our
theory regarding how the optimal $L^2(\distr)$-error should scale with
the effective horizon $\EffHorizon = (1 - \discount)^{-1}$.  One
special case, of interest in its own right, are kernels and stationary
distributions for which these eigenvalues decay at a polynomial rate,
say $\mu_j \asymp (1/j)^{2 \alpha}$ for some $\alpha > 1/2$.  In
Section~\ref{SecSimulations}, we construct a ``hard'' ensemble of MRPs
for which our theory---both upper and lower bounds---guarantees that
for a fixed sample size, the squared $L^2(\distr)$-error should grow as
$\EffHorizon^{\frac{6 \alpha +2}{2 \alpha + 1}}$.  In the limit as
$\alpha \rightarrow +\infty$, the kernel class becomes a parametric
function class, and the horizon dependence becomes the familiar cubic
one $\EffHorizon^3$.  However, for genuinely non-parametric classes
where $\alpha$ is relatively small, the dependence on the effective
horizon is much milder---e.g., it scales as $\EffHorizon^{8/3}$ for a
kernel with $\alpha = 1$.  This reveals the interesting phenomenon
that non-parametric forms of value estimation exhibit milder horizon
dependence.  Moreover, since our theory is instance-dependent via the
variance of Bellman residual, we can show that global minimax
predictions are often conservative.  In particular, we also construct
an ``easy'' ensemble for which the scaling in horizon is much milder,
given by $\EffHorizon^{\frac{4 \alpha}{2 \alpha + 1}}$.


\subsection{Paper organization and notation}
	
The remainder of the paper is structured as follows.  We begin in
\Cref{SecBackground} by introducing background on Markov reward
processes and policy estimation, along with reproducing kernel Hilbert
spaces and the kernel LSTD estimate analyzed in this paper.
\Cref{SecMainResults} is devoted to the statement of our main results,
along with discussion of some of their consequences.

\Cref{thm:ub} provides two finite-sample upper bounds and ranges of
regularization to achieve them. \Cref{thm:lb} establishes matching
minimax lower bounds over two MRP
sub-classes. \Cref{SecSimulations} exhibits numerical
experiments with synthetic data as an illustration of our theoretical
predictions. \Cref{SecProof} contains the proofs of
\Cref{thm:ub,thm:lb}. We conclude with a discussion in
\Cref{SecConclusion}.

\paragraph{Notation:}  For any event $\Event$, we use
$\mathds{1}\{ \Event \}$ to denote the $0-1$-valued indicator
function.  We use $C$, $c$, $c_0$ etc. to denote universal constants
whose numerical values may vary from line to line. For any $D
\in \Int_+$, denote $[D] \defn \{ 1,2,\ldots,D \}$. Given a
distribution $\distr$, let $\munorm{\cdot}$ denote the $\Lmu$-norm,
which is defined as $\munorm{f}^2 \defn \int f^2 \distr(\dx)$ for $f
\in \Lmu$.  Notation $\supnorm{\cdot}$ represents the uniform bound
given by $\supnorm{f} \defn \sup_{\state \in \StateSp} |f(\state)|$.
For two measures $p, q$ with $p$ absolutely continuous with respect to
$q$, we take their Kullback–Leibler (KL) divergence $\kull{p}{q} \defn
\Exp_p \big[ \log \big( \frac{\diff p}{\diff q} \big) \big]$, along
with the $\chi^2$-divergence $\chi^2\kulldiv{p}{q} \defn \Exp_q \big[
  \big( \frac{\diff p}{\diff q} - 1 \big)^2 \big]$.


\section{Background and problem set-up}
\label{SecBackground}

In this section, we provide background prior to formulating the kernel
estimator to be analyzed.  We begin by formulating the value function
estimation problem more precisely in \Cref{sec:problem_formulation}.
\Cref{sec:LSTD} is devoted to background on reproducing kernel Hilbert
spaces (RKHSs), along with a description of the kernel \emph{least
squares temporal difference} (LSTD) estimator.


\subsection{Problem formulation}
\label{sec:problem_formulation}

A discounted Markov reward process, denoted by $\MRP(\TransOp,
\reward, \discount)$, consists of the combination of a Markov chain, a
discount factor $\discount \in (0,1)$, along with a reward function
$\reward$. In the infinite-horizon discounted setting studied here,
the Markov chain is homogeneous, defined on a state space $\StateSp$
with a transition kernel \mbox{$\TransOp: \StateSp \times \StateSp
  \rightarrow \Real$}.  The reward function $\reward: \StateSp
\rightarrow \Real$ models the reward associated with each given state,
and for some specified discount factor $\discount \in (0,1)$, our goal
is to maximize the expected discount sum of all future rewards.  More
precisely, we define the \emph{value function} $\Vstar: \StateSp
\rightarrow \real$ via
\begin{align}
\Vstar(\state) & \defn \Exp\big[ \sum_{h=0}^{\infty} \discount^h \,
  \reward(\State_h) \mid \State_0 = \state \big],
\end{align}
where the expectation is taken over a trajectory $(\state, \State_1,
\State_2, \ldots)$ from the Markov chain governed by the transition
kernel $\TransOp$. The existence of the value function $\Vstar$ is
guaranteed by mild assumptions such as the boundedness of reward
$\reward$.  For future reference, we note that the value function
$\Vstar$ is the solution to the Bellman fixed point equation
\begin{align}
  \label{eq:Bellman_star}
  \Vstar(\state) =  \reward(\state) +
  \discount \; \Exp_{\Statetwo \mid \state} \, \Vstar(\Statetwo) \,
  \qquad \text{for any $\state \in \StateSp$}.
\end{align}

In this paper, we study the problem of estimating the value function
$\Vstar$ on the basis of samples from the Markov chain, when the
reward function $\reward$ and discount factor $\discount \in (0,1)$
are given.\footnote{As we discuss, our results can be easily extended
to the setting with an unknown reward function $\reward$; so as to bring
the essential challenges into clear focus, we take it as known for the
bulk of our development.}  Throughout our discussion, we consider the
i.i.d. observation model, where the dataset consists of $\numobs$
i.i.d. sample pairs \mbox{$\{ (\state_i, \statetwo_i)
  \}_{i=1}^{\numobs} \subset \StateSp \times \StateSp$}.  We let
$\distr$ be any stationary distribution of the Markov chain
$\TransOp$. The sample pair $(\state_i, \statetwo_i)$ is generated by
\begin{align}
\state_i \sim \distr, \qquad \text{and} \qquad \statetwo_i \sim
\TransOp(\cdot \mid \state_i).
\end{align}
The joint distribution induced by the pair $(\distr, \TransOp)$
corresponds to the stationary joint distribution over consecutive
state pairs in the Markov chain.

Given an estimate $\thetahat$ of the value function, we measure its
error in the squared-$\Lmu$-norm
\begin{align}
\munorm{\thetahat - \Vstar}^2 \defn \Exp_{\State \sim \distr} \big[
  \big( \thetahat(\State) - \Vstar(\State) \big)^2 \big],
\end{align}
where $\distr$ is the population distribution of samples $\{ \state_i
\}_{i=1}^{\numobs}$.  In simple cases---such as the tabular setting,
in which the state space $\StateSp$ is a finite set---policy
evaluation is a parametric problem, since the value function can be
encoded as a vector with one entry per state.

Of interest to us in this paper are problems with ``richer'' state
spaces, for which estimating the value function is more challenging,
and often non-parametric in nature.  In such settings, it is standard
to seek approximate solutions of the Bellman operator, via the notion
of a \emph{projected fixed point}
(e.g.,~\cite{bertsekas2011dynamic,tsitsiklis1997analysis,yu2010error,mou2020optimal}).
Given a convex class of functions $\Gclass$ closed in $\Lmu$, the
projection operator $\proj: \Lmu \rightarrow \Gclass$ is given by
\begin{align}
\proj (f) \defn \argmin_{g \in \Gclass} \, \munorm{g - f} \qquad
\text{for any function $f \in \Lmu$}.
\end{align}
We then seek a solution to the projected fixed point equation
\begin{align}
\label{eq:Bellman_project}
\thetastar = \proj \big( \Bellman(\thetastar) \big)
\end{align}
where $\Bellman(\thetastar)(\state) \defn \reward(\state) + \discount
\; \Exp_{\Statetwo \mid \state} \, \thetastar(\Statetwo)$ is the
Bellman operator.  Since the Bellman operator is
contractive\footnote{This fact is a consequence of the choice
$\discount \in (0,1)$ and the non-expansiveness of the transition
  operator on $\Lmu$, due to the stationarity of $\distr$.} in the
$\Lmu$-norm and $\Pi$ is non-expansive, this fixed point equation has
a unique solution.

When the approximating function class $\Gclass$ is chosen to be the
linear span of fixed features, then this approach leads to the
least-squares temporal difference (LSTD) method.  In this paper, our
primary focus is more flexible function classes, as defined by
reproducing kernel Hilbert spaces.  Let us now describe this approach.

	
\subsection{Kernel least-squares temporal differences}
\label{sec:LSTD}

Reproducing kernel Hilbert spaces (RKHSs) provide a fertile ground for
developing non-parametric estimators.  In this paper, we analyze a
standard RKHS-based estimate in reinforcement learning, known as the
kernel least-squares estimate, which we now introduce.  We begin with
some basic background on reproducing kernel Hilbert spaces; see the
books~\cite{Gu02,BerTho04,wainwright2019high} for more details.

An RKHS is a particular type of Hilbert space of real-value functions
$f$ with domain $\StateSp$.  As a Hilbert space, the RKHS has an inner
product $\hilin{f}{g}$ along with the associated norm $\hilnorm{f}$.
The distinguishing property of an RKHS is the existence of a symmetric
kernel function $\KerFun: \StateSp \times \StateSp \rightarrow \real$
that acts as the representer of evaluation.  In particular, for each
$\state \in \StateSp$, the function $z \mapsto \Ker(z, x)$ belongs to
the Hilbert space, and moreover we have
\begin{align}
\hilin{\Ker(\cdot, \state) }{f} = f(\state) \quad \text{for all $f \in
  \RKHS$}.
\end{align}
In order to simplify notation, in much of our development, we adopt
the shorthand $\Rep{\state} = \Ker(\cdot, \state)$ for this
\emph{representer of evaluation}.

The population-level kernel LSTD estimate $\thetastar$ is, by
definition, equal to the projected fixed
point~\eqref{eq:Bellman_project} with the choice $\Gclass = \RKHS$.
Since $\RKHS$ is a reproducing kernel Hilbert space, this fixed point
has a more explicit expression in terms of certain operators defined
on the Hilbert space.  In particular, the covariance and
cross-covariance operators are defined as
\begin{align}
\label{eq:def_CovCr}
\CovOp \defn \Exp_{\State \sim \distr}[\Rep{\State} \otimes
  \Rep{\State}] \quad \text{and} \quad \CrOp \defn \Exp_{(\State,
 \Statetwo) \sim \distr \times \TransOp} [\Rep{\State} \otimes
  \Rep{\Statetwo}] \, .
\end{align}
By construction, the covariance operator $\CovOp(f)$, when applied to
some $f \in \RKHS$, has the property that $\hilin{g}{\CovOp(f)} =
\Exs_{\State \sim \distr}[g(\State) f(\State)]$, with a similar
property for the cross-covariance operator.  In terms of these
operators, the population-level kernel LSTD fixed point must
satisfy\footnote{In writing this equation, we have assumed that the
reward function $\reward$ belongs to the Hilbert space; if not, it
should be replaced by the projection $\Pi(\reward)$.}  the fixed point
relation
\begin{align}
\label{eq:Bellman_project_op}
\CovOp \, \thetastar = \CovOp \, \reward + \discount \; \CrOp \,
\thetastar \, .
\end{align}
When $\RKHS$ is generated by a linear kernel, then the associated
Hilbert space is simply the span of a finite set of features, and
equation~\eqref{eq:Bellman_project_op} defines the population version
of the least-squares temporal difference (LSTD) estimate.  Of more
interest to us in this paper is the estimate defined by richer classes
of kernel functions.

The population-level estimate $\thetastar$ depends on the unknown
operators $\CovOp$ and $\CrossOp$.  In order to obtain an estimator,
we need to replace these unknown quantities with data-dependent
versions.  In this paper, we analyze the \emph{regularized kernel LSTD
estimate} $\thetahat$ given by the solution to the equation
\begin{align}
  \label{eq:def_thetahat}
  \big( \CovOphat + \ridge \IdOp \big) \, \thetahat = \big(\CovOphat +
  \ridge \IdOp\big) \, \reward + \discount \; \CrOphat \, \thetahat.
\end{align}
where $\ridge > 0$ is a user-defined regularization parameter, $\IdOp$
is the identity operator on the Hilbert space, and we have defined the
empirical operators
\begin{align*}
  \CovOphat \mydefn \frac{1}{\numobs} \sum_{i=1}^\numobs
  \Phi_{\state_i} \otimes \Phi_{\state_i}, \quad \text{and} \quad
  \CrOphat \mydefn \frac{1}{\numobs} \sum_{i=1}^\numobs
  \Phi_{\state_i} \otimes \Phi_{\statetwo_i}.
\end{align*}
Note that equation~\eqref{eq:def_thetahat} is a fixed point equation
in the (possibly infinite-dimensional) Hilbert space.  However, as a
consequence of the representer theorem~\cite{KimWah71}, this fixed
point relation can be formulated as an $\numobs$-dimensional linear
system involving kernel matrices.  See \Cref{lemma:matrix} in
\Cref{sec:matrix} for this computationally efficient representation,
which we use in our experiments.

Consider the empirical estimate $\thetahat$ as an estimate of the
unknown value function $\Vstar$.  The error $\munorm{\thetahat -
  \Vstar}$ can be decomposed as
\begin{align}
  \munorm{\thetahat - \Vstar} \leq \underbrace{\munorm{\thetahat -
      \thetastar}}_{\text{Estimation error}} +
  \underbrace{\munorm{\thetastar - \Vstar}}_{\text{Approximation
      error}}.
\end{align}
The approximation error in this decomposition has been studied in past
work, and there are various ways to bound it
(e.g.,~\cite{bertsekas2011dynamic,tsitsiklis1997analysis}); see the
papers~\cite{yu2010error,mou2020optimal} for some refined and optimal
results.

In this paper, our main interest is to study the statistical
estimation error $\munorm{\thetahat - \thetastar}$, and to
characterize its behavior as a function of sample size and structural
properties of the MRP and RKHS.  The eigenvalues of the kernel
integral operator play an important role here; in particular, under
relatively mild conditions (required to satisfy Mercer's theorem, and
assumed here), the kernel function admits a decomposition of the form
\begin{align}
\Ker(x, z) & = \sum_{j=1}^\infty \mu_j \phi_j(x) \phi_j(z),
\end{align}
where $\{\mu_j\}_{j=1}^\infty$ are a non-negative sequence of
eigenvalues, and $\{\phi_j \}_{j=1}^\infty$ are the kernel
eigenfunctions, orthonormal in $\Lmu$.  As we show, the statistical
estimation error is controlled by a kernel complexity function that
depends on the rate at which the eigenvalues decay.


\section{Main results}
\label{SecMainResults}
	
We now turn to the statement of our main results, along with some
discussion of their consequences.  \Cref{SecUpper} is devoted to upper
bounds on the $\Lmu$-error of kernel LSTD estimator, whereas
\Cref{SecLower} provides minimax lower bounds, applicable to any
estimator.


\subsection{Non-asymptotic upper bounds on kernel LSTD}
\label{SecUpper}

Our first main result provides a non-asymptotic upper bound on the
$\Lmu$-error of the kernel LSTD estimator.  We begin by stating the
assumptions under which this upper bound holds.  First, we assume that
the kernel function is uniformly bounded, in the sense that
\begin{align}
  \label{as:b}      
  \sup_{\state \in \StateSp} \sqrt{\Ker(\state,\state)} & \leq \bou
\end{align}
for some finite constant $\bou$.  Note that any continuous kernel
function over a compact domain $\StateSp$ satisfies this condition;
moreover, even on unbounded domains, various standard kernels (e.g.,
Gaussian, Laplacian etc.) satisfy this condition.  

In addition, one of our results---namely, a so-called ``fast
rate''---requires a bound on the sup-norm of the kernel eigenfunctions
$\{\base_j\}_{j=1}^\infty$: that is, we assume that
\begin{align}
  \label{EqnKerEigBound}
\max_{j \geq 1} \|\base_j\|_\infty \leq \unibou \qquad \mbox{for some
  finite quantity $\unibou$.}
\end{align}
For example, any convolutional kernel has eigenfunctions given by the
Fourier basis, and so satisfies this condition.  In the examples that
follow the theorem, we provide additional examples of kernels that
have bounded eigenfunctions.\\

Central to our analysis is a certain inequality, one that arises from
a localized analysis of the empirical process defined by the kernel
class.  The idea of localization is needed in order to obtain optimal
results for standard (non-dynamic) prediction problems; see Chapters
13 and 14 in the book~\cite{wainwright2019high} for background,
including specifics on kernel ridge regression (\S 13.4.2).  Our use
of localization here identifies very clearly how the structural
properties of the Markov reward process determine the statistical
accuracy of the estimate.  In particular, the key ingredients in this
analysis are the following:
\begin{description}
\item[Kernel and stationary distribution:] The kernel function $\Ker$
  interacts with the MRP's stationary distribution $\distr$ so as
  to determine the eigenvalues $\{\eig_j\}_{j=1}^\infty$ of the
  kernel-integral operator.
\item[Effective horizon:] The discount factor $\discount \in (0, 1)$
  enters via the effective horizon $\EffHorizon \defn \tfrac{1}{1 -
    \discount}$.
\item[Structural properties of fixed point:] The structural properties
  of the projected fixed point $\thetastar$ are captured by a
  user-defined radius $\newrad$ such that
  \begin{align}
 \label{eq:def_R}
\newrad & \geq \max \big \{ \hilnorm{\thetastar - \reward}, \tfrac{2
  \supnorm{\thetastar}}{\bou} \big \}.
\end{align}  
\item[Bellman residual variance:] Playing the role of the noise level
  is the variance of the Bellman residual error, when evaluated at
  $\thetastar$.  It is given by
  \begin{align}
\label{EqnBellmanResVar}    
\stdfun^2(\thetastar) & \defn \Exs\Big[ \big(\thetastar(\State) -
  \reward(\State) - \discount \thetastar(\Statetwo)\big)^2 \Big],
\end{align}
where $(\State, \Statetwo)$ are successive samples from the Markov
chain, with the starting state $\State$ drawn according to the
stationary distribution.
\end{description}


\subsubsection{Kernel-based critical inequality}

We now turn to the critical inequality that determines the estimation
error of the kernel LSTD estimate.  It is an inequality that involves
the kernel eigenvalues $\{\mu_j\}_{j=1}^\infty$, the radius $\newrad$,
and the discount $\discount$ via the effective horizon
$\EffHorizon(\discount) = (1 - \discount)^{-1}$.  More precisely, we
consider positive solutions $\delta > 0$ to the
\emph{$\SpecialConstant$-based critical inequality}
\begin{align}
  \label{eq:criticalineq}
\tag{$\CI(\SpecialConstant)$} \qquad \KerComplex(\delta) \defn
\sqrt{\sum_{j=1}^{\infty} \min \Big \{ \frac{\eig_j}{\delta^2}, 1 \Big
  \}} \leq \underbrace{\frac{\sqrt{\numobs} \; \newrad}{
    \EffHorizon(\discount) \; \SpecialConstant}}_{\mbox{Slope (SNR)} }
\, \delta \; ,
\end{align}
where $\SpecialConstant > 0$ is a parameter to be specified.  Note
that the function on the left-hand side is decreasing in $\delta$,
whereas the right-hand side is linear in $\delta$ with the indicated
slope.  Consequently, inequality~\eqref{eq:criticalineq} has a unique
smallest positive solution, which we denote by
$\delcrit(\SpecialConstant)$.  To be clear, in addition to depending
on the sample size $\numobs$ and $\SpecialConstant$, this smallest
positive solution also depends on the eigenvalues as well as the pair
$(\newrad, \discount)$, but we suppress this dependence so as to
simplify notation.

To be clear, the relevance of the kernel complexity function
$\KerComplex$ on the left-hand side~\eqref{eq:criticalineq} is
well-known from past work on kernel ridge regression; in particular,
it arises from an analysis of the local Rademacher complexity of a
kernel class (e.g.,~\cite{mendelson2002geometric,wainwright2019high}).
Equally important for understanding kernel-based LSTD methods are the
structural parameters on the right-hand of the critical inequality; as
our results show, these choices capture precisely how the statistical
estimation error of kernel LSTD methods depend on various aspects of
the problem structure.

Since the critical inequality~\eqref{eq:criticalineq} plays a central
role in our analysis, it is worth gaining intuition for how different
components of the MRP affect the solution
$\delcrit(\SpecialConstant)$.  Panel (a) in \Cref{FigKernelComplex}
illustrates the basic geometry of the critical inequality.
\begin{figure}[h]
  \begin{center}
    \begin{tabular}{ccc}
      \widgraph{0.45\textwidth}{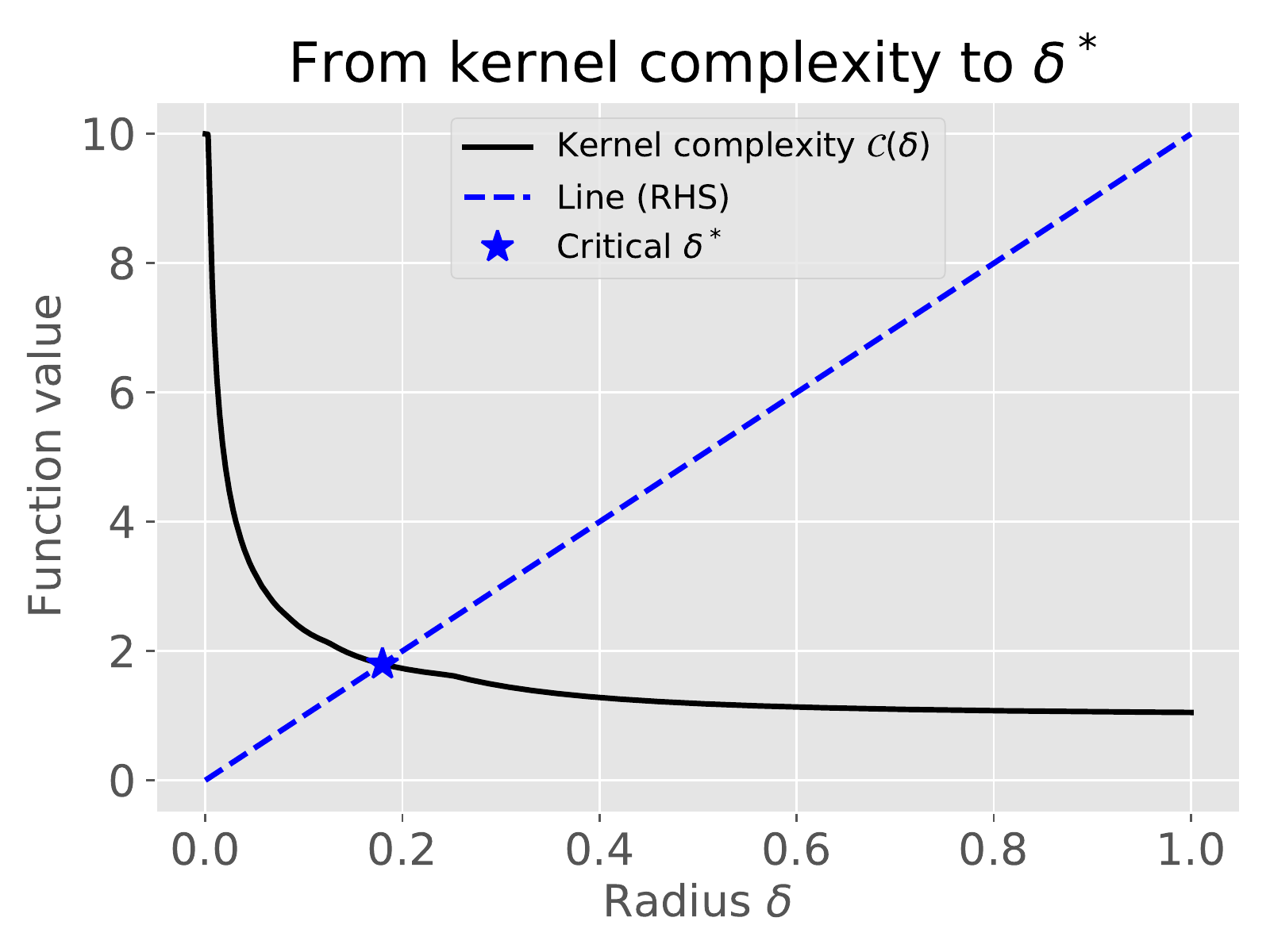} &&
      \widgraph{0.45\textwidth}{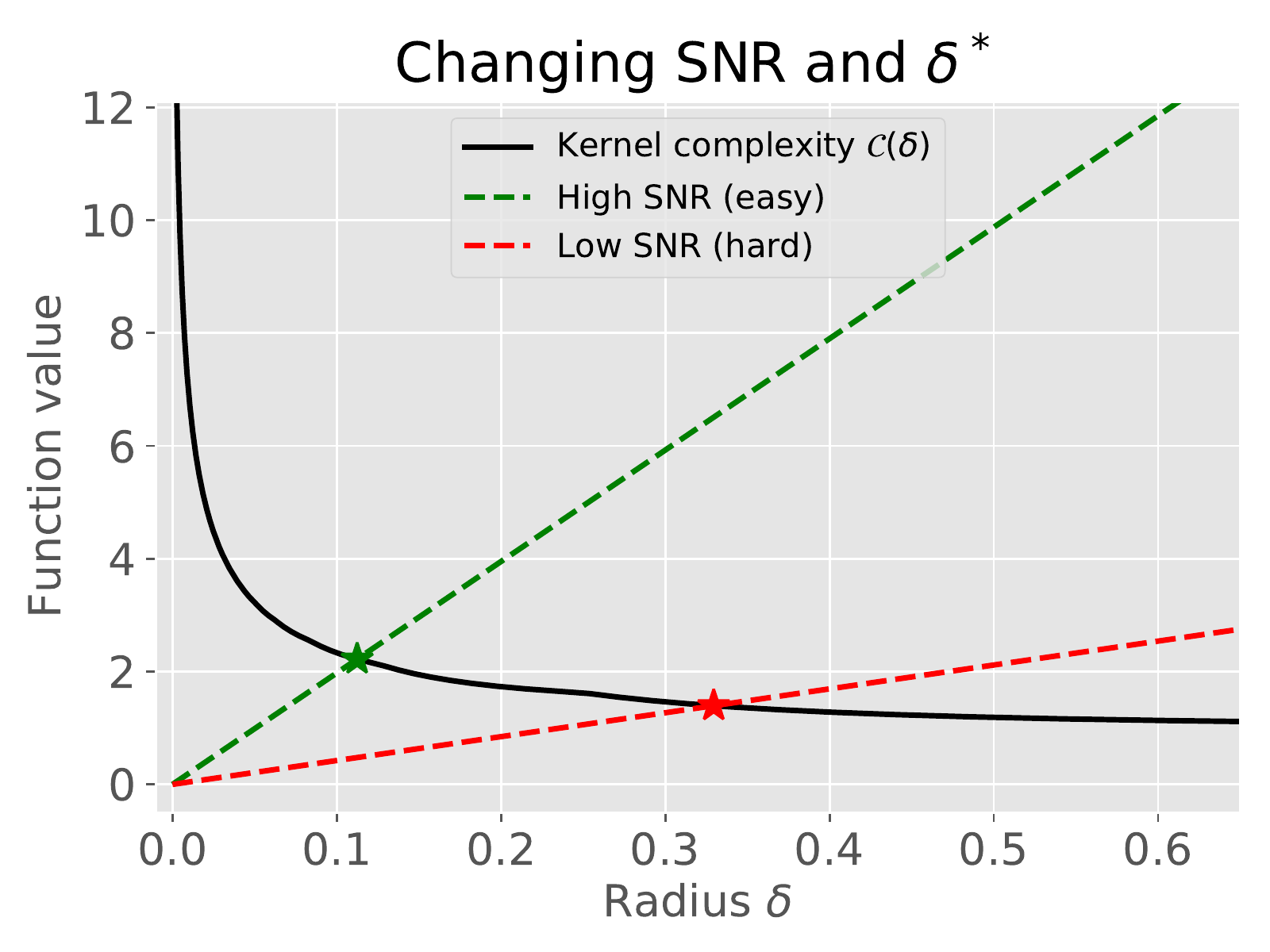} \\
      (a) && (b)
    \end{tabular}
    \caption{Illustrations of the structure of the critical
      inequality~\eqref{eq:criticalineq}.  (a) Plots of the kernel
      complexity $\delta \rightarrow \KerComplex(\delta)$ on the
      left-hand side, along with the linear function on the right-hand
      side.  The critical $\delta^* = \delcrit(\SpecialConstant)$ is
      found at the intersection of this curve and line as marked in a
      blue star.  (b) Effects of changing the slope of the right-hand
      side line, which corresponds to a type of signal-to-noise ratio
      (SNR).  As the SNR decreases, leading to a harder problem, the
      critical $\delta^*$ shifts rightwards to larger values.}
    \label{FigKernelComplex}
  \end{center}
\end{figure}
One instance of the kernel complexity function $\delta \mapsto
\KerComplex(\delta)$, obtained from a kernel with $1$-polynomial
decaying eigenvalues (see equation~\eqref{EqnPolyDecay} in the
sequel), is plotted in black.  Note that this function is
monotonically decreasing in $\delta$.  The dotted blue line
corresponds to the right-hand side, obtained for a particular value of
the slope parameter.  The critical radius $\delta^* \equiv
\delcrit(\SpecialConstant)$, obtained at the intersection of the
kernel complexity of this line, is marked with a star.

The slope on the right-hand side of the
inequality~\eqref{eq:criticalineq} corresponds to a type of
signal-to-noise ratio (SNR).  Panel (b) in \Cref{FigKernelComplex}
shows the effect of changing this SNR parameter.  As the SNR
decreases---so that the slope decreases---the fixed point $\delta^*$
shifts rightward to larger values.  One consequence of our analysis is
that we are able to show precisely the rate at which these leftward
and rightward shifts in the statistical estimation error occur, as a
function of the MRPs structural parameters (in addition to the sample
size $\numobs$).


\subsubsection{Non-asymptotic upper bounds}

With this set-up and intuition in place, let us now turn to the
statement of our non-asymptotic upper bounds on the quality of the
kernel LSTD estimate.  We provide two guarantees, both of which
involve solutions to the the critical inequality~\ref{eq:criticalineq}
but with different choices of $\SpecialConstant$.  In each case, the
tightest bound is afforded by $\delcrit(\SpecialConstant)$.  We make
two different choices of $\SpecialConstant$. First, we establish a
bound, one that holds for all sample sizes, with the choice
$\SpecialConstant = \bou \newrad$.  We then prove a sharper result,
one that holds for a finite sample size that is suitably lower
bounded, and involves setting $\SpecialConstant = \unibou
\stdfun(\thetastar)$, where $\stdfun^2(\thetastar)$ is the variance of
the Bellman residual error~\eqref{EqnBellmanResVar}.

Both parts of our theorem guarantee that the kernel LSTD estimator
satisfies a bound of the form
\begin{align}
\label{EqnMetaBound}  
    \munorm{\thetahat - \thetastar}^2 & \leq \plaincon_1 \newradsq \:
    \left \{ \delta^2 + \frac{\ridge}{1-\discount} \right \}
  \end{align}
with probability at least $1 - 2 \exp \big(-\frac{\plaincon_2 \numobs
  \delta^2 \SqDis^2}{\bou^2}\big)$, where $(c_1, c_2)$ are universal
constants.  The two parts differ in the allowable settings of $\delta$
and $\ridge$ for which the bound~\eqref{EqnMetaBound} holds.

\begin{theorem}[Non-asymptotic upper bounds]
\label{thm:ub}
There is a universal constant $c_0$ such that:
\begin{enumerate}
\item[(a)] \underline{Slow rate:} Under the kernel boundedness
  condition~\eqref{as:b}, the bound~\eqref{EqnMetaBound} holds for any
  solution \mbox{$\delta = \delta(\numobs, \newrad, \discount, \bou)$}
  to the critical inequality $\CI(\bou \newrad)$ and any $\ridge \geq
  \plaincon_0 \delta^2 \SqDis$.

\item[(b)] \underline{Fast rate:} Suppose in addition that the kernel
  eigenfunctions are uniformly bounded~\eqref{EqnKerEigBound}.  Let
  $\delcrit(\unibou\stdfun(\thetastar))$ be the smallest solution to
  the critical inequality $\CI(\unibou \stdfun(\thetastar))$, and
  suppose that $\numobs$ is large enough to ensure that
  \begin{align}
\label{EqnNbound}    
\newrad^2 \delcritsq( \unibou \stdfun(\thetastar)) & \leq
\frac{\unibou \, \stdfun^2(\thetastar)}{200 \SqDis \, \sqrt{\numobs}}.
\end{align}
Then the bound~\eqref{EqnMetaBound} holds for any solution $\delta =
\delta(\numobs, \newrad, \discount, \stdfun(\thetastar))$ to the
critical inequality $\CI(\unibou \stdfun(\thetastar))$ and any
\mbox{$\ridge \geq \plaincon_0 \delta^2 \SqDis$.}
\end{enumerate}
\end{theorem}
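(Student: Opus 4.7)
The plan is a regularized, kernelized adaptation of the standard two-step recipe for projected fixed points: first derive a deterministic error identity that isolates a mean-zero empirical-process term and a regularization bias, then control the empirical process by a localized argument driven by the critical inequality~\eqref{eq:criticalineq}. Subtracting the population equation~\eqref{eq:Bellman_project_op} from the empirical equation~\eqref{eq:def_thetahat} yields
\begin{align*}
\big(\CovOphat + \ridge\,\IdOp - \discount\,\CrOphat\big)\big(\thetahat - \thetastar\big) \;=\; \Ztil \;-\; \ridge\,\big(\thetastar - \reward\big),
\end{align*}
where $\Ztil \defn \tfrac{1}{\numobs}\sum_{i=1}^{\numobs} \Rep{\state_i}\,\big(\reward(\state_i) + \discount\,\thetastar(\statetwo_i) - \thetastar(\state_i)\big)$ is a centered RKHS-valued empirical mean; centeredness follows by rewriting the population fixed point~\eqref{eq:Bellman_project_op} as $\Exp[\Rep{\State}(\reward(\State) + \discount\,\thetastar(\Statetwo) - \thetastar(\State))] = 0$. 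Each summand has RKHS norm of order $\bou\newrad$ via~\eqref{eq:def_R} and variance $\stdfun^2(\thetastar)$ by definition~\eqref{EqnBellmanResVar}.

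Using the identity $\munorm{f}^2 = \hilin{f}{\CovOp f}$, the quantity to be bounded is a quadratic form in $\thetahat - \thetastar$. To invert the error identity, I would first establish an operator lower bound of the form
\[
\hilin{g}{\big(\CovOphat + \ridge\,\IdOp - \discount\,\CrOphat\big)g} \;\gtrsim\; (1-\discount)\,\hilin{g}{\big(\CovOp + \ridge\,\IdOp\big)g},
\]
valid with high probability uniformly over $g$ in an RKHS ball of radius of order $\newrad$. Since $\hilin{g}{\CovOphat g}$ is the empirical $\Lmu$-norm squared and $\hilin{g}{\CrOphat g}$ is the empirical cross-moment $\tfrac{1}{\numobs}\sum_i g(\state_i)g(\statetwo_i)$, the population version of this inequality reduces to the non-expansiveness of the transition operator in $\Lmu$ (i.e.\ Cauchy–Schwarz combined with stationarity), and the empirical version follows by Bernstein-type concentration for $\CovOphat - \CovOp$ and $\CrOphat - \CrOp$. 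Combining with the error identity gives
\[
\munorm{\thetahat - \thetastar}^2 + \ridge\,\hilnorm{\thetahat - \thetastar}^2 \;\lesssim\; \tfrac{1}{(1-\discount)^2}\,\hilin{(\CovOp+\ridge\,\IdOp)^{-1}\Ztil}{\Ztil} \;+\; \tfrac{\ridge}{1-\discount}\,\newradsq.
\]

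The remaining stochastic term is then controlled by a localized empirical process. A standard peeling/self-bounding argument reduces it to a uniform bound on $\sup_{g}|\hilin{g}{\Ztil}|$ over $g$ in the intersection of an RKHS ball of radius $\newrad$ with an $\Lmu$-ball of radius $\delta$. Local-Rademacher theory (e.g.~\cite{mendelson2002geometric} and Chapter~13 of~\cite{wainwright2019high}) gives an expected supremum of order $\KerComplex(\delta)/\sqrt{\numobs}$ multiplied by the per-sample scale of $\Ztil$. For the slow rate~(a), I would use the crude uniform scale $O(\bou\newrad)$ and a Hoeffding-type tail bound; balancing against $\KerComplex(\delta)$ gives exactly the slope $\sqrt{\numobs}\newrad/(\EffHorizon\,\bou\newrad)$ of~\eqref{eq:criticalineq} with $\SpecialConstant = \bou\newrad$. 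For the fast rate~(b), the uniform eigenfunction bound~\eqref{EqnKerEigBound} upgrades this to a Talagrand/Bernstein inequality whose variance proxy is $\stdfun^2(\thetastar)$ rather than $(\bou\newrad)^2$, so the effective noise scale collapses to $\unibou\,\stdfun(\thetastar)$ and the critical slope becomes $\sqrt{\numobs}\newrad/(\EffHorizon\,\unibou\,\stdfun(\thetastar))$. The sample-size condition~\eqref{EqnNbound} is precisely what makes the variance regime dominate the residual multiplicative deviations between empirical and population $\Lmu$-norms on the localized class. Choosing $\ridge \asymp \delta^2(1-\discount)$ then balances the bias $\tfrac{\ridge}{1-\discount}\newradsq$ against the noise and produces the stated bound~\eqref{EqnMetaBound}.

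The main obstacle I anticipate is the non-self-adjointness of $\CovOphat - \discount\,\CrOphat$, reflecting the familiar non-symmetric structure of TD updates. A naive operator-concentration argument would lose the linear-in-$(1-\discount)$ factor in the operator lower bound, which would inflate the effective-horizon scaling in~\eqref{EqnMetaBound} beyond the $\EffHorizon^2$ appearing there; extracting the correct $(1-\discount)$-dependence requires exploiting the Bellman contraction in $\Lmu$ (rather than in the RKHS) and carefully pairing the concentration of $\CovOphat$ and $\CrOphat$ so that cross terms cancel. A secondary, delicate issue specific to the fast rate is carrying the instance-dependent variance $\stdfun^2(\thetastar)$ through the empirical-process step, for which the uniform eigenfunction bound~\eqref{EqnKerEigBound} is tailor-made: it provides the empirical-to-population norm equivalence on the localized RKHS class that is needed to replace the worst-case noise $\bou\newrad$ by $\unibou\,\stdfun(\thetastar)$.
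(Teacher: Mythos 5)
Your first step (subtracting the population and empirical fixed-point equations and identifying $\Ztil = \CovOphat(\reward-\thetastar) + \discount\CrOphat\thetastar$ as a centered RKHS-valued average) is exactly the paper's starting point, and your observation that $\Exp\big[\Rep{\State}\big(\reward(\State)+\discount\thetastar(\Statetwo)-\thetastar(\State)\big)\big]=0$ is the key structural fact. Where you diverge is in how you ``invert'': you aim for a uniform multiplicative operator lower bound
\[
\hilin*{g}{\big(\CovOphat + \ridge\IdOp - \discount\CrOphat\big)g} \;\gtrsim\; (1-\discount)\,\hilin*{g}{(\CovOp+\ridge\IdOp)g}
\]
over an RKHS ball, and then treat the resulting bias--variance decomposition much as in kernel ridge regression. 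The paper instead takes the inner product with $\DeltaHat$ and isolates the operator fluctuation as an \emph{additive} remainder $\Term_3 = \hilin*{\DeltaHat}{(\GamOp-\GamOpHat)\DeltaHat}$ (\Cref{lemma:decomp}), keeping $\ridge\hilnorm{\DeltaHat}^2$ explicit and at full strength, and showing in \Cref{lemma:T3} that $\Term_3 \leq c\SqDis\delcritsq\{\hilnorm{\DeltaHat}^2+\newradsq\} + \tfrac{1}{2}\specfun^2(\DeltaHat)$ --- a \emph{multiplicative-plus-additive} bound whose $\tfrac{1}{2}\specfun^2$ piece is absorbed by the left side.

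That difference is not cosmetic, and the purely multiplicative lower bound as you state it has a gap. Without localization in $\Lmu$, the fluctuation $\hilin*{g}{(\GamOpHat-\GamOp)g}$ over the whole ball $\{\hilnorm{g}\leq\newrad\}$ is not small relative to $(1-\discount)\big[\munorm{g}^2 + \ridge\hilnorm{g}^2\big]$: with $\ridge \asymp (1-\discount)\delcritsq$, the right-hand side is $\asymp (1-\discount)\delcritsq\newradsq$ when $\munorm{g}\lesssim\delcrit\newrad$, which is exactly the order of the fluctuation that the localized analysis produces --- so there is no slack and one needs an additive error term, as in \Cref{lemma:T3}. Moreover, because you fold $\ridge\IdOp$ inside the lower bound, the factor $(1-\discount)$ also deflates the ridge term; if you then use Cauchy--Schwarz plus Young to absorb $\ridge\hilin*{\DeltaHat}{\reward-\thetastar}$, the surviving coefficient in front of $\ridge\hilnorm{\DeltaHat}^2$ is of order $(1-\discount)-\tfrac{1}{2}$, which is negative for $\discount>\tfrac{1}{2}$. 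Tracking the algebra through, your route produces a bias term of order $\tfrac{\ridge}{(1-\discount)^2}\newradsq$ rather than the $\tfrac{\ridge}{1-\discount}\newradsq$ appearing in~\eqref{EqnMetaBound}. The paper avoids both issues by (i) never deflating $\ridge\hilnorm{\DeltaHat}^2$ and (ii) working with $\specfun^2(\DeltaHat) = \hilin*{\DeltaHat}{\GamOp\DeltaHat}$, which builds the Bellman contraction $\specfun^2(f)\geq(1-\discount)\munorm{f}^2$ into the left side once and for all.

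Your treatment of the noise term is in the right spirit --- after reducing the quadratic form $\hilin*{(\CovOp+\ridge\IdOp)^{-1}\Ztil}{\Ztil}$ to a supremum over the intersection of RKHS and $\Lmu$ balls, you would be doing precisely what the paper's \Cref{lemma:T1} (via the localized process $Z_\numobs(t)$, symmetrization, Ledoux--Talagrand contraction, Talagrand concentration) does, including the split between the crude scale $\bou\newrad$ for the slow rate and the variance scale $\unibou\stdfun(\thetastar)$ (enabled by the eigenfunction bound~\eqref{EqnKerEigBound}) for the fast rate. Minor quantitative slips aside (each summand $\Rep{\state_i}\nu_i$ has RKHS norm $\lesssim \bou^2\newrad$, not $\bou\newrad$), your identification of the role of condition~\eqref{EqnNbound} --- guaranteeing that the variance regime dominates the empirical-to-population $\Lmu$-norm deviation on the localized class --- matches the paper's use of it in \Cref{lemma:tcrittwo}. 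To close the argument, replace the multiplicative operator lower bound with the additive-plus-multiplicative form of \Cref{lemma:T3}, and keep the $\ridge\hilnorm{\DeltaHat}^2$ term out of the $(1-\discount)$-deflation; the rest of your plan then lines up with the paper's proof.
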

\noindent The proof of this result, given in \Cref{SecProofThmUB},
involves first proving a ``basic inequality'' that is satisfied by the
error $\thetahat - \thetastar$.  We then use empirical process theory
and concentration inequalities to establish high probability bounds on
the terms in this basic inequality.\\

\subsection{A simpler bound and some corollaries}

It should be noted that the bounds in \Cref{thm:ub} hold if we set
$\delta = \delcrit$, corresponding to the smallest positive solution
to the critical inequality~\ref{eq:criticalineq}, along with $\ridge =
\plaincon_0 \SqDis \delcritsq$.  We are then guaranteed to have
\begin{align}
\label{EqnSimplerUpper}
\munorm{\thetahat - \thetastar}^2 & \leq \underbrace{\plaincon_1 (1 +
  \plaincon_0)}_{\defn c'} \, \newradsq \delcritsq
\end{align}
with probability at least $1 - 2 \exp \big(- \tfrac{\plaincon_2
  \numobs \delcritsq \SqDis^2}{\bou^2} \big)$.  Let us consider some
examples of this simpler upper bound to illustrate.

\subsubsection{Linear kernels and standard LSTD}
\label{SecExaLinear}

We begin by considering the special case of a linear kernel, in which
case the kernel LSTD estimate reduces to the classical linear LSTD
estimate.  Given a $\usedim$-dimensional feature map of the form
$\varphi: \StateSp \rightarrow \real^\usedim$, let us consider linear
value functions $\theta(\state) \defn \inprod{\theta}{\varphi(\state)}
\; = \; \sum_{j=1}^\usedim \theta_j \varphi_j(\state)$.  Here we have
overloaded the notation in letting $\theta \in \real^\usedim$ denote a
parameter vector.  Similarly, we write the reward function as
$\reward(\state) = \inprod{\reward}{\varphi(\state)}$ for some vector
$\reward \in \real^\usedim$.

In this case, the Hilbert space can be identified with $\real^\usedim$
equipped with the Euclidean inner product as the Hilbert inner
product, and the vector $\varphi(\state) \in \real^\usedim$ plays the
role of the representer of evaluation.  Note that we have
$\hilnorm{\thetastar - \reward} = \|\thetastar - \reward\|_2$, and
since $\Ker(\state, \statenew) =
\inprod{\varphi(\state)}{\varphi(\statenew)}$, the covariance operator
takes the form $\CovOp = \Exp \big[ \varphi(\State)
  \varphi(\State)^\top \big] \; = \; \sum_{j=1}^\usedim \mu_j v_j
v_j^{\top}$, a $\usedim$-dimensional symmetric positive semidefinite
matrix with eigenvalues $\{\eig_j\}_{j=1}^\usedim$, and eigenvectors
$\{v_j\}_{j=1}^\usedim$.  We have $\Ker(\state,\statenew) =
\inprod{\varphi(\state)}{\varphi(\statenew)}$, and so
\begin{align*}
  \bou = \sup_{\state \in \StateSp} \sqrt{\Ker(\state,\state)} =
  \max_{\state} \|\varphi(\state)\|_2 \quad \mbox{and} \quad \unibou =
  \sup_{\state \in \StateSp} \max_{j \geq 1}
  |\inprod{v_j}{\varphi(\state)}|.
\end{align*}

We now study the structure of the critical
inequality~\ref{eq:criticalineq}, and derive two bounds
for the standard LSTD estimate.  Both bounds are of the form
\begin{align}
  \label{EqnLinearBound}
  \munorm{\thetahat - \thetastar}^2 & = \Exp \big[ \inprod{ \thetahat
      - \thetastar}{\varphi(\State)}^2 \big] \leq
  \simpleub (\SpecialConstant) \defn \plaincontwo
  \frac{\SpecialConstant^2}{\SqDis^2} \; \frac{\usedim}{\numobs}
\end{align}
for different choices of $\SpecialConstant$, and hold with probability
at least $1 - 2 \exp \big(- \tfrac{\plaincon_2 \numobs
  \simpleub(\SpecialConstant) \SqDis^2}{\bou^2 \newrad^2} \big)$.  We
summarize as follows:
\begin{corollary}[Linear kernels and standard LSTD]
  \label{CorLinear}
For the linear kernel and associated standard LSTD estimate:
\begin{enumerate}
\item[(a)] For any sample size $\numobs$, the
  bound~\eqref{EqnLinearBound} holds with  
\begin{subequations}  
  \begin{align}
\label{EqnLinearSlow}    
\simpleub (\bou \newrad) & = \plaincontwo \frac{\bou^2
  \newrad^2}{\SqDis^2} \; \frac{\usedim}{\numobs}
\end{align}

\item[(b)] For a sample size lower bounded as $\sqrt{\numobs} \geq
  \frac{200 \unibou \usedim}{1-\discount}$, the
  bound~\eqref{EqnLinearBound} holds with
\begin{align}  
\label{EqnLinearFast}        
\simpleub \big(\unibou \stdfun(\thetastar) \big) & = \plaincontwo
\frac{\unibou^2 \stdfun^2(\thetastar)}{\SqDis^2} \;
\frac{\usedim}{\numobs}.
\end{align}
\end{subequations}
\end{enumerate}
\end{corollary}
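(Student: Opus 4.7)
The plan is to specialize Theorem~\ref{thm:ub} to the linear-kernel setting, exploiting finite dimensionality to bound the kernel complexity function trivially.

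First, I would note that a $d$-dimensional linear feature map yields a covariance operator $\CovOp = \Exp[\varphi(\State)\varphi(\State)^\top]$ that has at most $d$ nonzero eigenvalues $\mu_1,\ldots,\mu_d$, with all higher-indexed eigenvalues equal to zero. Consequently the kernel complexity function admits the elementary uniform bound
\begin{equation*}
\KerComplex(\delta) \;=\; \sqrt{\sum_{j=1}^{\infty} \min\Big\{\tfrac{\mu_j}{\delta^2},\,1\Big\}} \;\leq\; \sqrt{d}
\qquad \text{for every } \delta>0,
\end{equation*}
since each summand is at most $1$ and only $d$ terms are nonzero. Any $\delta>0$ satisfying $\sqrt{d} \leq \frac{\sqrt{\numobs}\,\newrad}{\EffHorizon\,\SpecialConstant}\,\delta$ therefore satisfies $\CI(\SpecialConstant)$, which gives the explicit feasible choice
\begin{equation*}
\delcritsq(\SpecialConstant) \;\leq\; \frac{\SpecialConstant^2\, d}{(1-\discount)^2 \newradsq\, \numobs}.
\end{equation*}

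Next I would substitute this estimate into the simplified bound~\eqref{EqnSimplerUpper}. With $\ridge = \plaincon_0(1-\discount)\delcritsq$, the conclusion of Theorem~\ref{thm:ub} reads $\munorm{\thetahat-\thetastar}^2 \leq c'\newradsq\delcritsq$, which collapses to $c'\,\frac{\SpecialConstant^2 d}{(1-\discount)^2\numobs} = \simpleub(\SpecialConstant)$ after absorbing $c'$ into $\plaincontwo$. The probability statement is obtained by substituting $\delcritsq = \simpleub(\SpecialConstant)/(c'\newradsq)$ into the exponent $\exp\!\big(-\plaincon_2 \numobs \delcritsq(1-\discount)^2/\bou^2\big)$, producing exactly the failure probability $2\exp\!\big(-\plaincon_2 \numobs\simpleub(\SpecialConstant)(1-\discount)^2/(\bou^2\newradsq)\big)$ stated in the corollary (with an appropriately redefined universal constant).

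For part (a), I would apply Theorem~\ref{thm:ub}(a) with $\SpecialConstant = \bou\newrad$, which requires no sample-size hypothesis and directly yields \eqref{EqnLinearSlow}. For part (b), I would apply Theorem~\ref{thm:ub}(b) with $\SpecialConstant = \unibou\stdfun(\thetastar)$, but first must verify the sample-size condition~\eqref{EqnNbound}. Using $\newradsq\delcritsq \leq \unibou^2\stdfun^2(\thetastar)d/[(1-\discount)^2 \numobs]$, condition~\eqref{EqnNbound} reduces to
\begin{equation*}
\frac{\unibou^2\stdfun^2(\thetastar)\,d}{(1-\discount)^2\numobs} \;\leq\; \frac{\unibou\,\stdfun^2(\thetastar)}{200\,(1-\discount)\sqrt{\numobs}},
\end{equation*}
which rearranges exactly to the corollary's hypothesis $\sqrt{\numobs}\geq 200\unibou d/(1-\discount)$. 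This then yields~\eqref{EqnLinearFast}.

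There is no real obstacle here: the entire argument is an exercise in bookkeeping once the trivial bound $\KerComplex(\delta)\leq\sqrt{d}$ is in hand. The only conceptual point worth emphasizing is that finite feature dimension is precisely what collapses the generally non-parametric critical radius (driven by the decay profile of the eigenvalues) to the parametric rate $d/\numobs$, while the SNR-dependent prefactor $\SpecialConstant^2/(1-\discount)^2$ is inherited unchanged from the critical inequality~\eqref{eq:criticalineq}.
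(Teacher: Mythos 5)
Your proposal is correct and follows essentially the same route as the paper's own proof: bound the kernel complexity by $\sqrt{d}$ (using the fact that the linear kernel has at most $d$ nonzero eigenvalues), solve the critical inequality for a feasible $\delta$, substitute into the simplified bound~\eqref{EqnSimplerUpper}, and verify the sample-size condition~\eqref{EqnNbound} for part~(b) by the same algebraic rearrangement. The only cosmetic difference is that you phrase the critical radius as an upper bound (a feasible $\delta$) rather than the paper's slightly looser ``the smallest $\delta$ is given by''; both are fine since Theorem~\ref{thm:ub} applies to any solution of the critical inequality.
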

\begin{proof}
For any $\delta > 0$, we have $
  \sum_{j=1}^{\usedim} \min \big\{ \frac{\eig_j}{\delta^2}, 1 \big\} \leq \usedim$.  Consequently, the critical
inequality~\ref{eq:criticalineq} is satisfied as long as $
\sqrt{\usedim} \leq \tfrac{\sqrt{\numobs} \newrad \,
  \SqDis}{\SpecialConstant} \, \delta$.  The smallest $\delta =
\delta(\SpecialConstant)$ is given by
\begin{align*}
\newrad^2 \delta^2(\SpecialConstant) & = \frac{
  \SpecialConstant^2}{\SqDis^2} \frac{\usedim}{\numobs}.
\end{align*}
Setting $\SpecialConstant = \bou \newrad$ yields the
claim~\eqref{EqnLinearSlow}.

As for the faster rate claimed in the bound~\eqref{EqnLinearFast}, we
need to check when the requirement of \Cref{thm:ub}(b)---in particular
the sample size condition~\eqref{EqnNbound}---is satisfied. In this
case, we have $\newrad^2 \delcritsq(\unibou\stdfun(\thetastar)) \leq
\frac{\unibou^2 \stdfun^2(\thetastar)}{\SqDis^2}
\frac{\usedim}{\numobs}$, so that in order to satisfy the
bound~\eqref{EqnNbound}, it suffices to have $\sqrt{\numobs} \geq
\frac{200 \unibou \usedim}{1-\discount}$.  The
claim~\eqref{EqnLinearFast} then follows.
\end{proof}


\subsubsection{Kernels with $\alpha$-polynomial decay}
\label{SecExaSpline}

Let us now consider a ``richer'' class of kernel functions, for which
the kernel estimator is truly non-parametric. In particular, let us consider the
class of kernels that satisfy the \emph{$\alpha$-polynomial decay
condition}
\begin{align}
\label{EqnPolyDecay}  
\eig_j & \leq c \, j^{-2 \alpha} \quad \mbox{for some exponent $\alpha
  > \tfrac{1}{2}$.}
\end{align}
There are many examples of kernels used in practice that satisfy a
decay condition of this form, including the Laplacian kernel
$\Ker(x,x') = \exp(-\|x - x'\|_1)$, as well as various types of
Sobolev and spline kernels that are used in non-parametric regression
and density estimation.  See Chapters 12 and 13 in the
book~\cite{wainwright2019high} for more details on such kernels.

Let us study the structure of the critical
inequality~\ref{eq:criticalineq} for kernels whose eigenvalues satisfy
the $\alpha$-polynomial decay condition~\eqref{EqnPolyDecay}.  We
derive two bounds, both of which are of the form
\begin{align}
  \label{EqnSplineBound}
  \munorm{\thetahat - \thetastar}^2 & \leq \simpleub(\SpecialConstant)
  \defn \newrad^2 \; \underbrace{\plaincontwo \Big(
    \frac{\SpecialConstant^2}{\newrad^2 \SqDis^2} \frac{1}{\numobs}
    \Big)^{\tfrac{2 \alpha}{2 \alpha +
        1}}}_{\delta^2(\SpecialConstant)} \; = \; c \,
  \newrad^{\frac{2}{2 \alpha +1}} \Big(
  \frac{\SpecialConstant^2}{\SqDis^2} \frac{1}{\numobs} \Big)^{\frac{2
      \alpha}{2 \alpha+1}},
\end{align}
for different choices of $\SpecialConstant$, and hold with probability
at least $1 - 2 \exp \big(- \tfrac{\plaincon_2 \numobs
  \delta^2(\SpecialConstant) \SqDis^2}{\bou^2} \big)$.

\begin{corollary}
\label{CorSpline}
\begin{enumerate}
\item[(a)] For any sample size $\numobs$, the
  bound~\eqref{EqnSplineBound} holds with  
\begin{subequations}  
  \begin{align}
\label{EqnSplineSlow}    
\simpleub(\bou \newrad) & = \plaincontwo \newrad^2 \Big( \frac{\bou^2}{\SqDis^2}
\frac{1}{\numobs} \Big)^{\tfrac{2 \alpha}{2 \alpha + 1}}.
\end{align}

\item[(b)] Suppose that the sample size $\numobs$ is large enough to
  ensure that $\newrad^2 \delcritsq(\stdfun(\thetastar)) \leq
  \frac{\unibou \, \stdfun^2(\thetastar)}{\SqDis \, \sqrt{\numobs}}$.
  Then the bound~\eqref{EqnSplineBound} holds with
\begin{align}  
\label{EqnSplineFast}        
\simpleub \big(\unibou \stdfun(\thetastar) \big) & = \plaincontwo \newrad^2
\Big(
\frac{\unibou^2 \stdfun^2(\thetastar)}{\newrad^2 \SqDis^2}
\frac{1}{\numobs} \Big)^{\tfrac{2 \alpha}{2 \alpha + 1}}.
\end{align}
\end{subequations}
\end{enumerate}
\end{corollary}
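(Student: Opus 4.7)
The plan is to mirror the strategy used for \Cref{CorLinear}: I will bound the kernel complexity $\KerComplex(\delta)$ explicitly under the polynomial decay assumption~\eqref{EqnPolyDecay}, solve the resulting critical inequality~\ref{eq:criticalineq} for the two choices $\SpecialConstant=\bou\newrad$ and $\SpecialConstant=\unibou\stdfun(\thetastar)$, and then invoke \Cref{thm:ub} in each case.

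First I would establish a clean upper bound on the kernel complexity. Splitting the sum at the index $J \defn \lceil \delta^{-1/\alpha} \rceil$ where the two terms in $\min\{\eig_j/\delta^2,1\}$ balance, polynomial decay~\eqref{EqnPolyDecay} yields
\begin{align*}
\sum_{j=1}^{\infty} \min \Big\{ \frac{\eig_j}{\delta^2}, 1 \Big\}
\;\leq\; J + \frac{c}{\delta^2}\sum_{j>J} j^{-2\alpha}
\;\leq\; \plaincontil\, \delta^{-1/\alpha},
\end{align*}
where $\plaincontil$ depends only on $\alpha$ and the constant $c$ in~\eqref{EqnPolyDecay}; hence $\KerComplex(\delta)\leq \sqrt{\plaincontil}\,\delta^{-1/(2\alpha)}$. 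Substituting into $\CI(\SpecialConstant)$ and solving for the smallest $\delta$ that satisfies $\sqrt{\plaincontil}\,\delta^{-1/(2\alpha)} \leq \tfrac{\sqrt{\numobs}\,\newrad\,\SqDis}{\SpecialConstant}\,\delta$ gives
\begin{align*}
\delcritsq(\SpecialConstant) \;\asymp\; \Big(\frac{\SpecialConstant^2}{\newrad^2\,\SqDis^2\,\numobs}\Big)^{\!2\alpha/(2\alpha+1)},
\end{align*}
which is exactly $\delta^2(\SpecialConstant)/\newrad^{2/(2\alpha+1)}$ in the notation of~\eqref{EqnSplineBound}, after absorbing constants into $\plaincontwo$.

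For part (a), I would then apply \Cref{thm:ub}(a) with the above $\delta=\delcrit(\bou\newrad)$ and choose $\ridge = \plaincon_0 \SqDis\,\delcritsq$; the simpler bound~\eqref{EqnSimplerUpper} then yields $\munorm{\thetahat-\thetastar}^2 \leq c'\,\newradsq\,\delcritsq(\bou\newrad)$, which upon substitution of the closed-form expression above gives~\eqref{EqnSplineSlow}. For part (b), I first need to verify that the sample-size hypothesis of \Cref{thm:ub}(b), namely $\newrad^2\delcritsq(\unibou\stdfun(\thetastar)) \leq \tfrac{\unibou\,\stdfun^2(\thetastar)}{200\SqDis\sqrt{\numobs}}$, is implied by the hypothesis stated in the corollary; this is the only step that requires checking, and it reduces to a straightforward rearrangement using the explicit expression for $\delcritsq(\unibou\stdfun(\thetastar))$ derived above. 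Once this holds, \Cref{thm:ub}(b) with the choice $\ridge=\plaincon_0\SqDis\,\delcritsq(\unibou\stdfun(\thetastar))$ delivers the fast-rate bound~\eqref{EqnSplineFast}.

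The main obstacle I anticipate is purely bookkeeping: tracking the $\alpha$-dependent constants through the split-and-bound argument for $\KerComplex(\delta)$ and ensuring that they are absorbed consistently into $\plaincontwo$ so that the sample-size threshold in part (b) matches the one written in the statement. No new probabilistic or functional-analytic machinery is needed beyond \Cref{thm:ub}; the corollary is essentially a computation that specializes the critical inequality to polynomial eigendecay.
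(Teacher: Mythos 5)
Your proposal is correct and follows essentially the same route as the paper's own proof: split the sum defining $\KerComplex(\delta)$ at an index of order $\delta^{-1/\alpha}$ (the paper takes the largest $k$ with $\delta^2 \leq c k^{-2\alpha}$, you take $J=\lceil\delta^{-1/\alpha}\rceil$, which is the same up to a constant), bound the tail by an integral to get $\KerComplex(\delta)\lesssim \delta^{-1/(2\alpha)}$, solve the critical inequality to obtain $\delcritsq(\SpecialConstant)\asymp\big(\tfrac{\SpecialConstant^2}{\newrad^2\SqDis^2 n}\big)^{2\alpha/(2\alpha+1)}$, and then invoke \Cref{thm:ub} with $\ridge=\plaincon_0\SqDis\delcritsq$. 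Your observation that part (b) requires checking the sample-size hypothesis~\eqref{EqnNbound} against the one stated in the corollary is exactly the step the paper also carries out (using $\tfrac{2\alpha}{2\alpha+1}>\tfrac12$ to argue the threshold is finite); the bookkeeping you flag as the only remaining work is indeed the only remaining work.
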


\begin{proof}
Let us find a solution to the critical
inequality~\ref{eq:criticalineq} for a kernel satisfying the
$\alpha$-polynomial decay condition~\eqref{EqnPolyDecay}.  Let $k$ be
the largest positive integer such that $\delta^2 \leq c k^{-2
  \alpha}$.  With this choice, we have
\begin{align*}
\sqrt{\sum_{j=1}^{\infty} \min \Big\{ \frac{\eig_j}{\delta^2}, 1 \Big\}} & \leq \sqrt{ k +
  \frac{c}{\delta^2} \sum_{j=k+1}^\infty j^{-2 \alpha }}.
\end{align*}
Now we have
\begin{align*}
  \sum_{j=k+1}^\infty j^{-2 \alpha} & \leq \int_{k}^\infty t^{-2
    \alpha} dt \; \leq \; \frac{1}{2 \alpha - 1} (1/k)^{2 \alpha - 1}.
\end{align*}
Consequently, there is a universal constant $c'$, depending only on
$\alpha$, such that the critical inequality~\ref{eq:criticalineq} will
be satisfied for a $\delta > 0$ such that $c' 
\delta^{- \tfrac{1}{2\alpha}} \; \leq \sqrt{\numobs} \, \frac{\newrad
  \SqDis}{\SpecialConstant} \delta$.  Solving this inequality yields
that
\begin{align*}
\delta^2 & \asymp \Big( \frac{\SpecialConstant^2}{\newrad^2
  \SqDis^2} \frac{1}{\numobs} \Big)^{\tfrac{2 \alpha}{2 \alpha
    + 1}}
\end{align*}
satisfies the critical inequality~\ref{eq:criticalineq}.

Putting together the pieces, we conclude that there is a universal
constant $c$ such that
\begin{align}
  \label{EqnNewBound}
\munorm{\thetahat - \thetastar}^2 & \leq c \newrad^2 \Big(
\frac{\SpecialConstant^2}{\newrad^2 \SqDis^2} \frac{1}{\numobs}
\Big)^{\frac{2 \alpha}{2 \alpha+1}} \; = \; c \, \newrad^{\frac{2}{2
    \alpha +1}} \Big( \frac{\SpecialConstant^2}{\SqDis^2}
\frac{1}{\numobs} \Big)^{\frac{2 \alpha}{2 \alpha+1}}
\end{align}
with high probability.  This bound holds with $\SpecialConstant = \bou
\newrad$ for all sample sizes, and it holds with $\SpecialConstant =
\unibou \stdfun(\thetastar)$ once the sample size is sufficiently
large to ensure that
\begin{align*}
  \newrad^2 \delcritsq(\stdfun(\thetastar)) & \asymp \newrad^{\frac{2}{2
      \alpha + 1}} \Big( \frac{ \unibou^2
    \stdfun^2(\thetastar)}{\SqDis^2} \frac{1}{\numobs}
  \Big)^{\frac{2 \alpha}{2 \alpha+1}} \; \lesssim \;
  \frac{\unibou \, \stdfun^2(\thetastar)}{\SqDis \, \sqrt{\numobs}}.
\end{align*}
Since $\tfrac{2 \alpha}{2 \alpha + 1} > \tfrac{1}{2}$, this bound will
hold once $\numobs$ exceeds a finite threshold.
\end{proof}

\subsection{Some illustrative simulations}
\label{SecSimulations}

Some simulations are useful in illustrating the predictions of our
theory, and most concretely the sharpness of
Corollary~\ref{CorSpline}.  In particular, from the
bound~\eqref{EqnSplineBound}, the error depends on the eigenvalue
exponent $\alpha$ from equation~\eqref{EqnPolyDecay} in two distinct
ways.  On one hand, the dependence on the effective horizon $H =
\tfrac{1}{1-\discount}$ worsens as the exponent $\alpha$ increases.
On the other hand, the dependence on the inverse sample size
$(1/\numobs)$---corresponding to how quickly the estimation error
vanishes---improves as $\alpha$ increases.  Corollary~\ref{CorSpline}
makes very explicit predictions about these dependencies, and the
sharpness of these predictions can be verified empirically.

In order to do so, we constructed three different kernels $\Ker_i$, $i
= 1, 2, 3$ with eigenvalues $\{\eig_j\}_{j=1}^\infty$ decaying as
\begin{align}
\eig_j(\Ker_i) & = \begin{cases} j^{-6/5} & \mbox{for $i = 1$} \\
j^{-2} & \mbox{for $i = 2$} \\
 \exp \big( - (j-1)^2 \big) & \mbox{for $i = 3$.}
  \end{cases}
\end{align}
Note that $\Ker_1$ has $\alpha$-polynomial decay~\eqref{EqnPolyDecay}
with $\alpha = 3/5$, $\Ker_2$ with $\alpha = 1$, and the exponential
decay of $\Ker_3$ can be viewed as a limiting case $\alpha = +\infty$.

In parallel, we constructed two different probability transition
functions that allowed us to vary the dependence of the radius
$\newrad$ and the Bellman residual variance $\stdfun^2(\thetastar)$ on
the effective horizon.
\begin{description}
\item[``Hard'' ensemble]: The transition function underlying our hard
  ensemble is constructed so that
  \begin{align*}
    \newrad \asymp \stdfun^2(\thetastar) \asymp
    \frac{1}{1-\discount},
  \end{align*}
  where the notation $\asymp$ means bounded above and below by
  constants independent of $\discount$.
\item[``Easy'' ensemble:] For our easy ensemble, we construct the
  probability transition matrix and rewards so that both $\newrad$ and
  $\stdfun^2(\thetastar)$ remain of constant order as $\discount$ is
  varied.
\end{description}
See \Cref{AppSimDetails} for more details on these constructions.  In
all cases, we implemented the kernel LSTD estimate using the
regularization parameter $\ridge = c (1 - \discount) \delcritsq$ for a
fixed constant $c = 0.01$.


\subsubsection{Dependence on sample size}

We begin by studying the dependence of the kernel LSTD estimator on
the sample size.  For any kernel with $\alpha$-polynomial
decay~\eqref{EqnPolyDecay}, Corollary~\ref{CorSpline} predicts that
the mean-squared error should decay as
\begin{align}
  \munorm{\thetahat - \thetastar}^2 & \asymp \left(
  \frac{1}{\numobs} \right)^{\frac{2 \alpha}{2 \alpha +1}},
\end{align}
where, for this particular comparison, we disregard other terms that
are independent of the sample size $\numobs$.  This decay rate is a
standard one in the context of non-parametric
regression~\cite{Stone82,wainwright2019high}, so to be expected here
as well.

\begin{figure}[h]
  \begin{center}
\begin{tabular}{ccc}
\widgraph{0.45\textwidth}{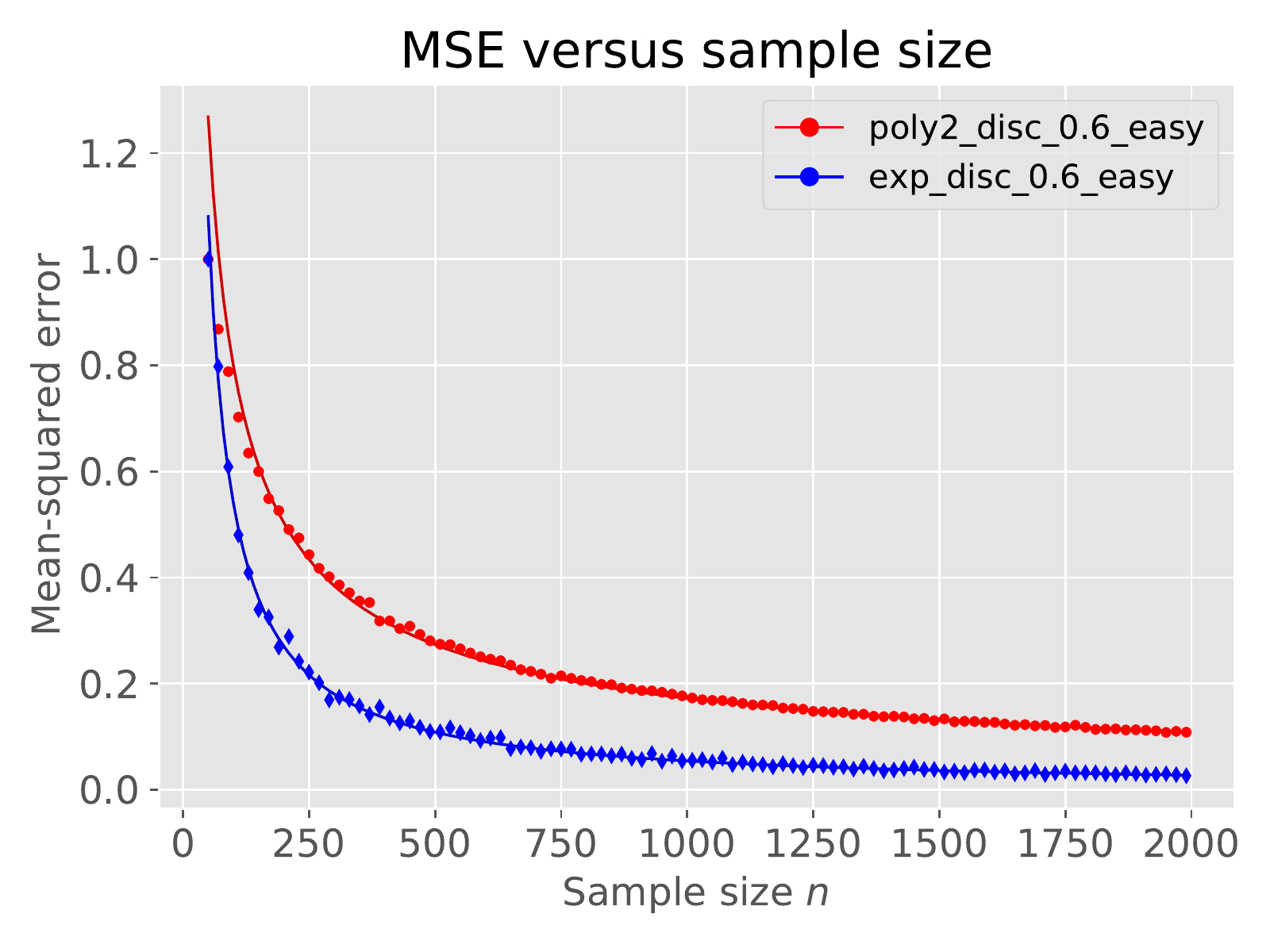} &&
\widgraph{0.45\textwidth}{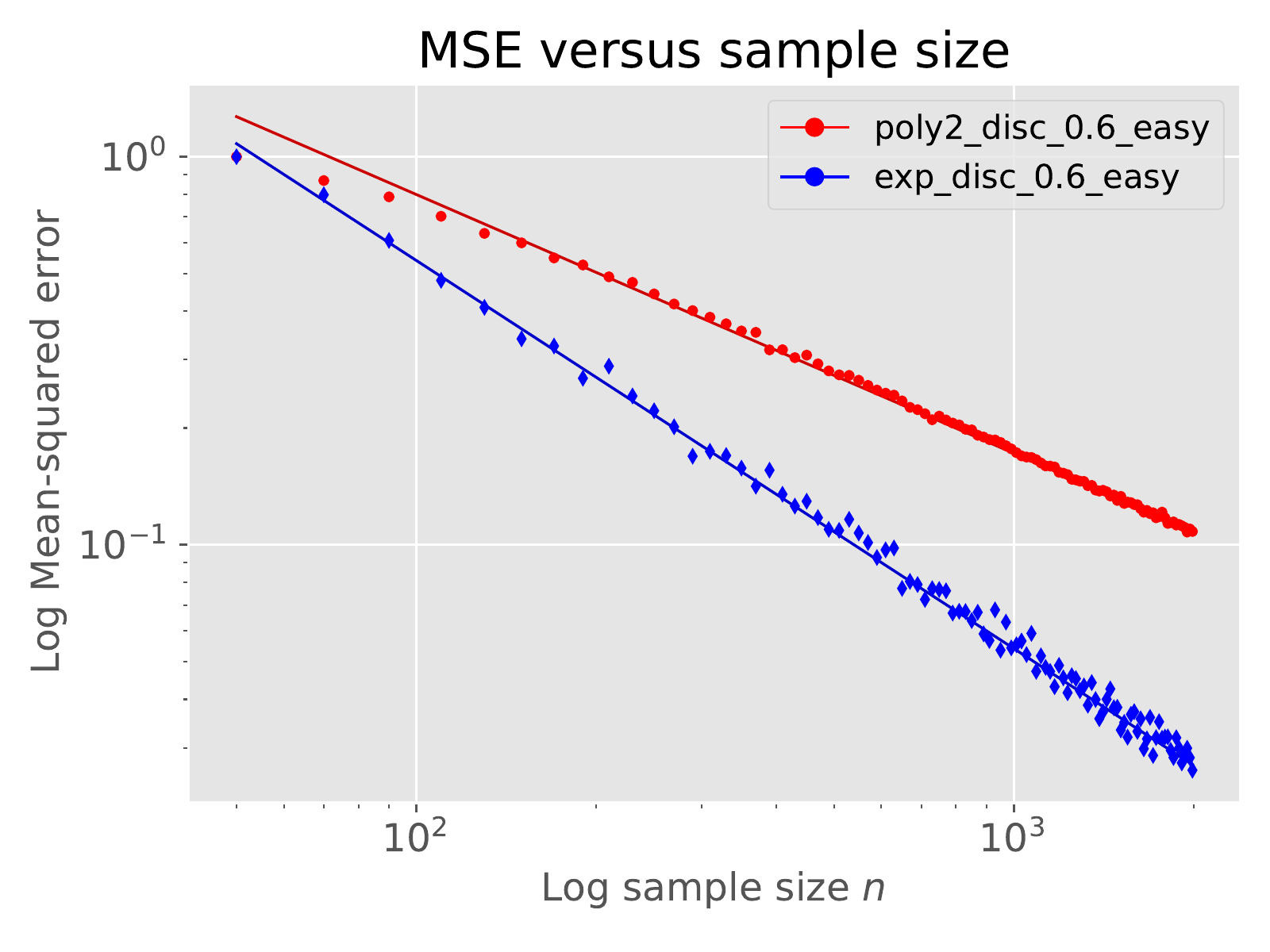} \\
(a) && (b)
\end{tabular}
    \caption{Plots of the mean-square error $\Exs \norm{\thetahat -
        \thetastar}_{\distr}^2$ versus the sample size $\numobs$ for two
      different kernels.  For each point (for each curve on each
      plot), the MSE was approximated by taking a Monte Carlo average
      over $T = 2000$ trials with the sample standard deviations shown
      as error bars.  Our theory predicts that the mean-squared error
      should drop off as $\big(\frac{1}{\numobs} \big)^\specexp$ for
      an exponent $\specexp > 0$ determined by the kernel.  Solid
      curves correspond to these theoretical predictions. (a) MSE
      versus sample size on ordinary scale.  Our theory predicts that
      (disregarding logarithmic factors), the MSE should scale as
      $\big(\frac{1}{\numobs} \big)$ for the exponential kernel
      $\Ker_3$, and as $\big(\frac{1}{\numobs} \big)^{2/3}$ for the
      $1$-polynomial decaying kernel $\Ker_2$.  Note that these
      theoretical predictons align very well with the empirical
      behavior.  (b) Plots of the same data on a log-log scale,
      showing the expected linear relationship between log MSE and log
      sample size.}
\label{FigNsampFits}
  \end{center}
\end{figure}

\subsubsection{Dependence on effective horizon}

In our second simulation study, we examine the behavior of the
$\Lmu$-error as a function of the effective horizon $\EffHorizon \defn
\frac{1}{1-\discount}$.  For kernels with eigenvalues that exhibit
$\alpha$-polynomial decay, our theory---in particular via the
bound~\eqref{EqnSplineFast} from Corollary~\ref{CorSpline}---gives
specific predictions about this dependence as well.
\begin{itemize}
\item With the probablity transitions from the ``hard'' ensemble, it
  can be shown that $\newrad \asymp \stdfun^2(\thetastar) \asymp
  \EffHorizon =\tfrac{1}{1 - \discount}$.  As a consequence, our
  theory predicts that for a fixed sample size $\numobs$, we should
  observe the following scaling
  \begin{subequations}
    \begin{align}
      \label{EqnSlopeA}
  \munorm{\thetahat - \thetastar}^2 & \asymp \EffHorizon^{\frac{2 (3
      \alpha + 1)}{2 \alpha + 1}}.
    \end{align}      
\item With the probability transitions from our ``easy'' ensemble, for
  which $\newrad \asymp \stdfun^2(\thetastar) \asymp 1$, the predicted
  slope of this linear scaling is
  \begin{align}
    \label{EqnSlopeB}
  \munorm{\thetahat - \thetastar}^2 & \asymp \EffHorizon^{\frac{4
      \alpha}{2 \alpha + 1}}.
  \end{align}
  \end{subequations}
\end{itemize}
See \Cref{AppSimDetails} for the calculations of both of these
theoretical predictions.  Note that predictions for the kernel
$\Ker_3$, with its exponentially decaying values, can be obtained as a
limiting case with $\alpha \rightarrow +\infty$.

\begin{figure}[h]
  \begin{center}
    \begin{tabular}{ccc}
      \widgraph{0.45\textwidth}{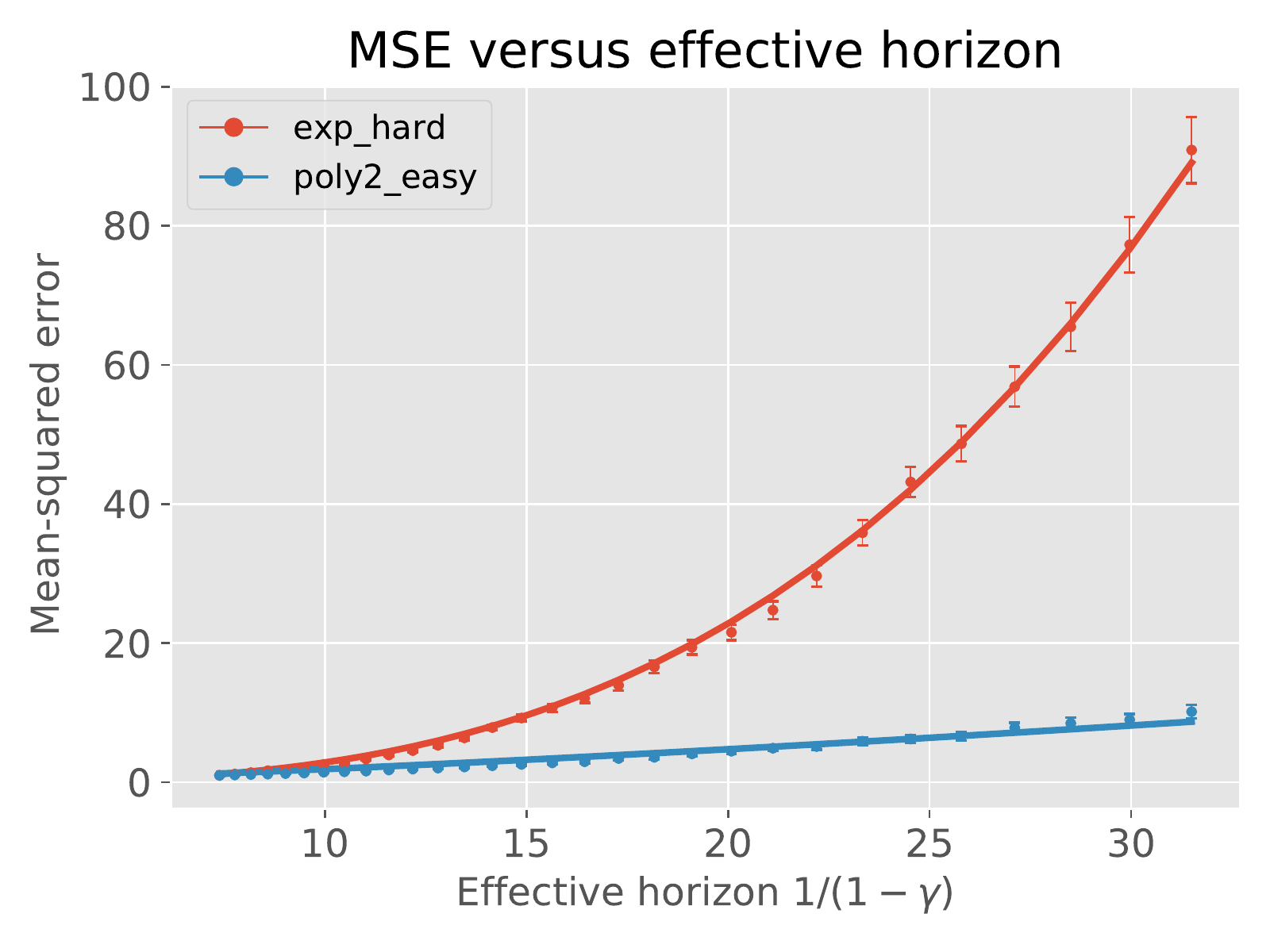} &&
      \widgraph{0.45\textwidth}{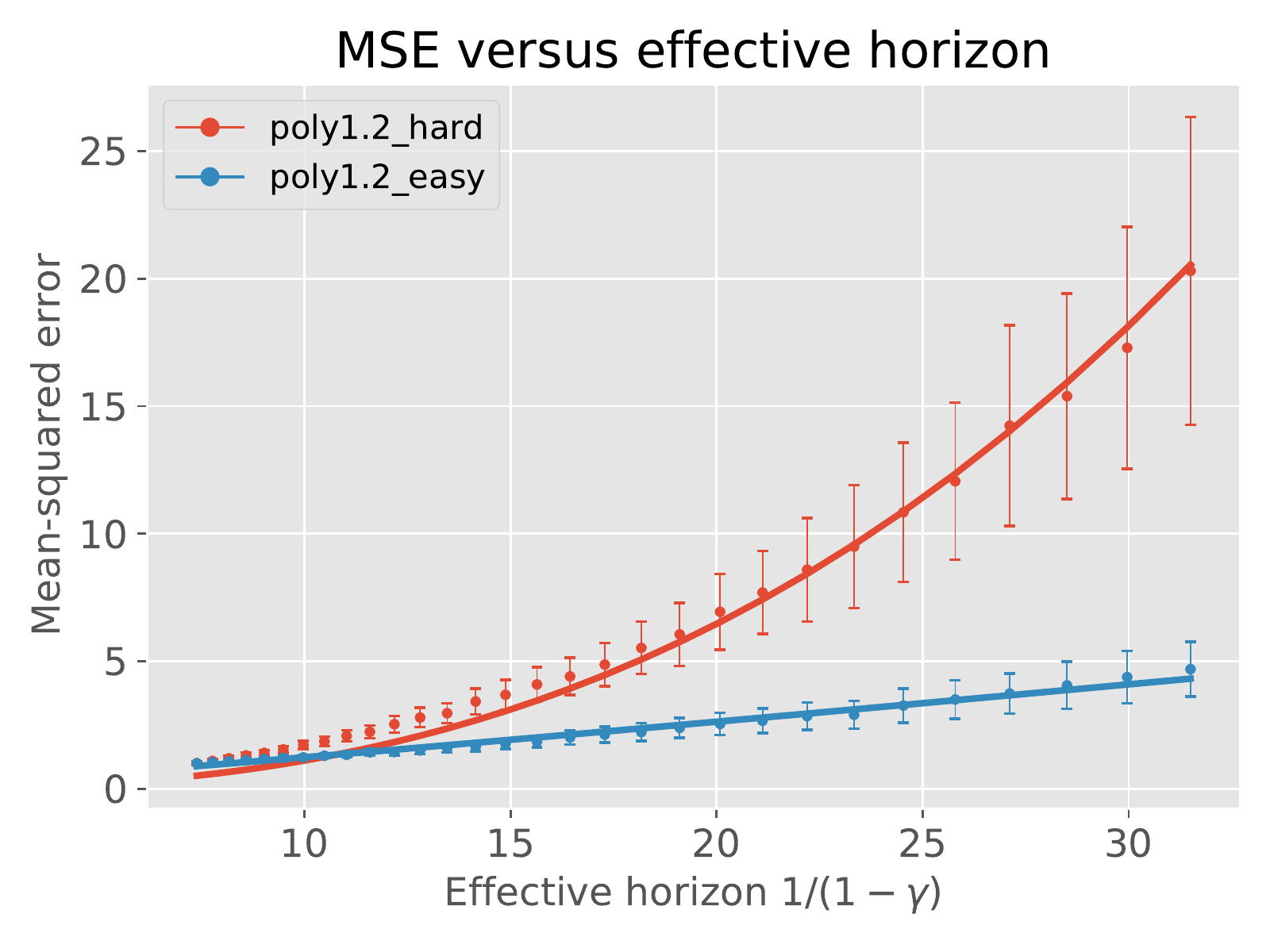} \\
      (a) && (b)
    \end{tabular}
    \caption{ Plots of the mean-square error $\Exs\norm{\thetahat
        - \thetastar}_{\distr}^2$ versus the effective horizon $\EffHorizon =
      \tfrac{1}{1-\discount}$ for different ensembles of problems.
      For each point (on each curve in each plot), the MSE was
      approximated by taking a Monte Carlo average over $T = 1000$
      trials; sample standard deviations are shown as error bars.  Our
      theory predicts that the MSE should grow as a function of the
      form $\EffHorizon^\eta$, where the exponent $\eta > 0$ is
      determined by the kernel and the ensemble type.  Solid curves
      correspond to these theoretical predictions. (a) Plots comparing
      exponential decay kernel $\Ker_3$ under the ``hard'' ensemble to
      the \mbox{$1$-polynomial} decay kernel $\Ker_3$ under the
      ``easy'' ensemble.  Theory predicts that the MSE scales as
      $\EffHorizon^3$ and $\EffHorizon^{1.33}$ in these two cases
      respectively; as shown, these theoretical predictions agree well
      with the empirical results.  (b) Plots comparing the behavior of
      the kernel $\Ker_1$ (with $0.6$-polynomial decay) under the
      ``hard'' ensemble versus the ``easy'' ensemble.  Theory predicts
      that the MSE should scale as $\EffHorizon^{2.55}$ and
      $\EffHorizon^{1.09}$ in these two cases.}  
    \label{FigDiscountFits}
  \end{center}
\end{figure}


\subsection{Minimax lower bounds}
\label{SecLower}

Thus far, we have established some upper bounds on the performance of
a specific estimator.  To what extent are these bounds improvable?  In
order to answer this question, it is natural to investigate the
fundamental (statistical) limitations of kernel-based value function
estimation.  In this section, we do so by deriving some minimax lower
bounds on the behavior of any procedures for estimating the value
function.

Minimax lower bounds are obtained by assessing the performance of any
estimator in a uniform sense over a particular class of problems.  In
particular, for classes of MRPs $\MRPclass$ to be defined, we prove
lower bounds of the following type.  For a given MRP instance $\MRP$,
we assume that we observe a dataset $\{ (\state_i, \statetwo_i) \}_{i=1}^\numobs$
of $\numobs$ i.i.d. samples generated from the given MRP.  An
estimator $\thetahat$ of the value function is any measurable function
of the data mapping into $\Real^{\StateSp}$.  For suitable classes
$\MRPclass$ indexed by pairs of parameters $(
\newradbarreward, \stdbarreward)$, we prove that the squared-$\Lmu$ error of any
estimator, when measured in a uniform sense over the family, is lower
bounded as $\plaincon_1 \newradbarreward^2 \delcrit^2$.  Here
$\plaincon_1 > 0$ is a universal constant, and the error parameter
$\delcrit$ 
is determined in same way as the critical
inequality~\eqref{eq:criticalineq} that specifies our upper bounds;
see equation~\eqref{eq:CI_lb} for the precise definition.


\subsubsection{Families of MRPs and regular kernels}

We begin by describing the families of MRPs over which we prove
minimax lower bounds.  In all of our constructions, both the reward
function $\reward$ and the optimal value function $\thetastar$ are
members of a Hilbert space with a set of eigenfunctions
$\{\base_j\}_{j=1}^\infty$, and a sequence of eigenvalues $\{\eig_j
\}_{j=1}^\infty$ that vary as part of the construction.  In all
cases, our construction ensures that the eigenfunction
bound~\eqref{EqnKerEigBound} holds with $\unibou = 2$, along with the
kernel being trace class.  In particular, we have
\begin{subequations}
  \begin{align}
\label{EqnCondA}    
\max \limits_{j \geq 1} \supnorm{\base_j} \leq \unibou = 2, \quad
\mbox{and} \quad \sum_{j=1}^\infty \eig_j \leq \tfrac{\bou^2}{4}.
\end{align}
Note that
  these conditions imply that
\begin{align*}
 \sup_{\state \in \StateSp} \sqrt{\Ker(\state,\state)} = \sup_{\state
   \in \StateSp} \Big( \sum_{j=1}^\infty \eig_j \base_j^2(\state)
 \Big)^{1/2} \leq \bou,
\end{align*}
so that the $\bou$-boundedness condition~\eqref{as:b} from our upper
bound holds.  In addition, our families of MRPs are also defined by
the constraints
\begin{align}
\label{EqnCondB}  
  \max
  \big\{ \hilnorm{\thetastar - \reward}, \tfrac{2
  	\supnorm{\thetastar}}{\bou} \big\} \leq \newradbarreward, \quad \mbox{and} \quad \stdfun(\thetastar) \leq \stdbarreward.
\end{align}
\end{subequations}
We say that a family $\MRPclass$ of MRPs is
\emph{$(\newradbarreward,
\stdbar)$-valid} if its members satisfy the bound~\eqref{EqnCondB},
along with the conditions~\eqref{EqnCondA}.

So as to match our upper bounds, we prove lower bounds that involve an
error term $\delcrit$ defined as the smallest positive solution to the
inequality
\begin{align}
  \label{eq:CI_lb}
  \sqrt{\sum_{j=1}^{\infty} \min \big\{ \frac{\eig_j}{\delta^2}, 1
    \big\}} & \leq \sqrt{\numobs} \; \frac{\newradbar \, \SqDis}{2
    \stdbar } \; \delta.
\end{align}
From past work on kernel ridge regression~\cite{yang2017randomized},
it is known that such lower bounds cannot hold for kernels with
eigenvalues that decay in pathological ways.  The notion of a regular
kernel, which we define here, precludes such pathology.  For a given
$\delcrit$, the associated statistical dimension $\statdim \equiv
\statdim(\delcrit)$ is given by $\statdim(\delcrit) \defn \max\big\{ j
\mid \eig_j \geq \delcritsq \big\}$.  The kernel is regular if there
is a universal constant $c$ such that
\begin{align}
\label{eq:kernel_reg}
\Big\{ \frac{ 2 \stdbar }{\newradbar \, \SqDis} \Big\}^2
\statdim \geq c \; \numobs \, \delcritsq.
\end{align}
Standard kernels, including the linear kernel and more general kernels
with eigenvalues that decay at a polynomial or exponential rate, are
all regular.


\subsubsection{Statement of bounds}

With this set-up, we are now ready to state our minimax lower bounds.
For a given \emph{$(\newradbarreward, \stdbar)$-valid} family of MRPs,
we say that the lower bound $\LB(\newradbarreward, \stdbarreward,
\delcrit)$ holds if
\begin{align}
\label{EqnLB}  
\inf_{\thetahat} \sup_{\MRP \in \MRPclass(
  \newradbarreward, \stdbarreward)} \Prob_\MRP \Big( \munorm{\thetahat -
  \thetastar}^2 \; \geq \; \plaincon_1 \; \newradbarreward^2
\delcrit^2 \Big) \geq \plaincon_2. \quad \tag{$\LB(\newradbarreward,  \stdbarreward, \delcrit)$}
\end{align}
In this statement, the quantities $(\plaincon_1, \plaincon_2)$ are
universal constants.

We prove minimax lower bounds in two regimes of parameters
$(\newradbarreward, \stdbarreward)$, depending on how these parameters
scale with the effective horizon $\tfrac{1}{1-\discount}$.  In Regime
A, this scaling is linear in the effective horizon---namely
\begin{subequations}
\begin{align}
\label{EqnRegimeA}  
\newradbarreward \geq \tfrac{1}{6\SqDis} \max\big\{
\tfrac{\discount}{\sqrt{\eig_1}}, \tfrac{2}{\bou} \big\}, \quad \mbox{and} \quad \stdbarreward^2 \in \Big[ {\tfrac{1+\discount}{5\SqDis}},
{\tfrac{1+\discount}{1-\discount}}\Big].
\end{align}
In Regime B, by contrast, both of these quantities can be order one
with the effective horizon---viz.
\begin{align}
\label{EqnRegimeB}  
\newradbarreward \geq \max\big\{ \tfrac{1}{2 \sqrt{\eig_1}},
\tfrac{2}{\discount \bou} \big\}, \quad \mbox{and} \quad \stdbarreward^2 \in \big(\tfrac{1}{8}, 1].
\end{align}
\end{subequations}
We discuss the motivation for considering these two regimes following
the statement of our bounds.
\begin{theorem}[Minimax lower bounds]
  \label{thm:lb}
  \begin{enumerate}
  \item[(a)] For any pair $(\newradbarreward, \stdbarreward)$ in
    Regime A~\eqref{EqnRegimeA}, there is a $(\newradbarreward, \stdbarreward)$-valid family of MRPs such that the lower bound
    $\LB(\newradbarreward, \stdbarreward, \delcrit)$ holds for any
    sample size $\numobs$ such that
    \begin{subequations}
    \begin{align}
      \label{eq:ncond}
      \newradbarreward^2 \delcrit^2 \leq \frac{2 \; \unibou
        \stdbarreward^2}{\SqDis^{\frac{3}{2}} \sqrt{\numobs}}.
    \end{align}
  \item[(b)] Consider any pair $(\stdbarreward, \newradbarreward)$ in
    Regime B~\eqref{EqnRegimeB}, and suppose that the eigensequence
    satisfies \mbox{$\min \limits_{3 \leq j \leq \statdim} \big\{
      \sqrt{\eig_{j-1}} - \sqrt{\eig_j} \big\} \geq
      \frac{\delcrit}{2\statdim}$.}  Then there is a $( \newradbarreward, \stdbarreward)$-valid family of MRPs such that the lower bound
    $\LB(\newradbarreward, \stdbarreward, \delcrit)$ holds for a
    sample size $\numobs$ large enough such that
\begin{align}
  \label{eq:ncond_2}
\newradbarreward^2 \delcrit^2 \leq \frac{12 \; \unibou
  \stdbarreward^2}{\SqDis \sqrt{\numobs}} \qquad \text{and} \qquad
\newradbar \delcrit \leq 10 \unibou\stdbar \, \big( 1 -
\tfrac{\eig_2}{\eig_1} \big) \, \min\Big\{\frac{\unibou
  \stdbar/(\sqrt{\eig_1}\newradbar)}{\SqDis^2\log \numobs },
\frac{\sqrt{\eig_1}}{\bou} \Big\}.
\end{align}
    \end{subequations}
 \end{enumerate}
\end{theorem}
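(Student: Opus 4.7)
The plan is to establish both lower bounds via the classical reduction of minimax estimation to multiple hypothesis testing, followed by Fano's inequality. For each regime I would construct a finite family $\{\MRP_\tau\}_{\tau \in \Upsilon}$ of MRPs meeting the validity conditions~\eqref{EqnCondA}--\eqref{EqnCondB}, with projected fixed points $\{\theta^*_\tau\}$ forming a packing in the $\Lmu$-norm at scale $\newradbar \delcrit$ and with pairwise KL between the $\numobs$-sample data distributions small compared to $\log |\Upsilon|$. Standard Fano machinery then yields a constant-order lower bound on the probability that any estimator suffers squared error at least $\plaincon \newradbar^2 \delcrit^2$.

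For the packing, I would take the first $\statdim = \statdim(\delcrit)$ kernel eigenfunctions and apply the Gilbert--Varshamov bound to produce codewords $\{v^\tau\}_{\tau \in \Upsilon} \subset \{-1, +1\}^{\statdim}$ of cardinality $\log |\Upsilon| \geq \statdim/8$ with pairwise Hamming distance at least $\statdim/8$. Setting $\theta^*_\tau = \thetabar + \alpha \sum_{j=1}^{\statdim} v^\tau_j \base_j$ with $\alpha \asymp \newradbar \delcrit/\sqrt{\statdim}$, and with a common offset $\thetabar$ whose magnitude is calibrated to the regime (large in Regime A, zero in Regime B), yields separation $\munorm{\theta^*_\tau - \theta^*_{\tau'}} \gtrsim \newradbar \delcrit$ while keeping $\hilnorm{\theta^*_\tau} \lesssim \newradbar$. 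The uniform eigenfunction bound $\supnorm{\base_j} \leq \unibou$ together with the kernel-regularity hypothesis~\eqref{eq:kernel_reg} produces $\supnorm{\theta^*_\tau} \leq \bou \newradbar/2$, so every validity condition is met.

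To realize the packing as projected Bellman fixed points, I would take $\TransOp_\tau$ to be a doubly-stochastic perturbation of i.i.d.\ resampling from $\distr$:
\begin{align*}
\TransOp_\tau(\diff y \mid x) \; = \; \distr(\diff y) \, \Big( 1 + \sum_{j=1}^{\statdim} a_{\tau, j} \base_j(x) \base_j(y) \Big).
\end{align*}
This preserves the stationary distribution $\distr$ across all $\tau$, and each $\base_k$ is an eigenfunction of $\TransOp_\tau$ with eigenvalue $a_{\tau, k}$. Tuning the coefficients $\{a_{\tau, k}\}$ (jointly with the reward) places the projected fixed point at the packing element $\theta^*_\tau$, and a direct calculation gives
\begin{align*}
\stdfun^2(\theta^*_\tau) \; = \; \discount^2 \Big( \munorm{\theta^*_\tau}^2 - \sum_{k} a_{\tau, k}^2 \langle \theta^*_\tau, \base_k \rangle_\distr^2 \Big), \qquad \kull{\Prob_\tau^\numobs}{\Prob_{\tau'}^\numobs} \; \lesssim \; \numobs \sum_{k} (a_{\tau, k} - a_{\tau', k})^2,
\end{align*}
so the variance, the KL, and the $\Lmu$-separation are all controlled by the tunable magnitudes $\{a_{\tau, k}\}$. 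The sample-size conditions~\eqref{eq:ncond} and~\eqref{eq:ncond_2} are exactly the regimes in which $\{a_{\tau, k}\}$ can be chosen simultaneously to hit the packing separation, respect the variance budget $\stdfun \leq \stdbarreward$, and keep $\numobs \sum_{k}(a_{\tau,k}-a_{\tau',k})^2 \lesssim \statdim$.

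The main obstacle is precisely this joint calibration. In Regime A~\eqref{EqnRegimeA}, where $\stdbarreward^2 \asymp (1-\discount)^{-1}$, the variance budget is generous and the doubly-stochastic construction above works directly; the $(1-\discount)^{3/2}$ factor in~\eqref{eq:ncond} reflects how small transition perturbations get amplified through the resolvent $(I - \discount \TransOp_\tau)^{-1}$ into the fixed point. Regime B~\eqref{EqnRegimeB}, with $\stdbarreward^2 = O(1)$, is more delicate: the tight variance constraint forces the perturbations to act in directions where $\theta^*_\tau$ is small, and the eigenvalue-gap hypothesis $\min_{3 \leq j \leq \statdim}(\sqrt{\eig_{j-1}} - \sqrt{\eig_j}) \geq \delcrit/(2\statdim)$ is what permits such direction-selection with sufficient conditioning across the $\statdim$ eigen-components. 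The extra $\log \numobs$ factor in~\eqref{eq:ncond_2} arises from a union bound needed to guarantee that the perturbed transition density $1 + \sum_j a_{\tau,j}\base_j(x)\base_j(y)$ remains non-negative uniformly across the $\numobs$ sampled states, which is the additional technical condition needed to execute the Fano calculation in the small-variance regime.
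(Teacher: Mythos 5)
Your high-level strategy matches the paper's: reduce to multiway testing via Fano, pack $\{0,1\}^{\statdim-1}$, and trade off $\Lmu$-separation of the value functions against pairwise KL between the data distributions. The packing and Fano bookkeeping are fine. Your KL and variance formulas for the doubly-stochastic construction are also correct as stated. However, the specific transition model you propose breaks down in an essential way, and the explanation you give of the $\log n$ factor is wrong.

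The crux of the difficulty is that your tensor-product transition
$\TransOp_\tau(dy \mid x) = \distr(dy)\big(1 + \sum_{j} a_{\tau,j}\base_j(x)\base_j(y)\big)$
is \emph{diagonal} in the spectral basis. Since each $\base_j$ is an eigenfunction of $\TransOp_\tau$ with eigenvalue $a_{\tau,j}$, the fixed point decouples across coordinates: the $j$-th coefficient is simply $\theta^*_{\tau,j} = r_j / (1 - \discount a_{\tau,j})$. To produce a packing of size $e^{\Omega(\statdim)}$ with pairwise separation $\gtrsim \newradbar\delcrit$ you therefore need $r_j \neq 0$ for every $j \leq \statdim$, and the sensitivity $\partial \theta^*_j / \partial a_j = r_j\discount/(1-\discount a_j)^2$ must be large enough that a KL budget of order $\statdim$ translates into the required separation. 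Working out the scaling (using $\statdim \asymp \numobs\delcrit^2\newradbar^2\SqDis^2/\stdbar^2$ from the critical inequality) forces $|r_j| \gtrsim \SqDis^{1/2}$ and $1 - \discount a_j \lesssim \SqDis$ in \emph{every} one of the $\statdim$ active directions. But then $\reward = \sum_j r_j\base_j$ has $\supnorm{\reward} \gtrsim \SqDis^{1/2}\sqrt{\statdim}$ (and typically $\gtrsim \SqDis^{1/2}\statdim$ since all Walsh functions equal $1$ at the origin), which violates the Regime A constraint $\supnorm{\reward} \leq 1$ for any nontrivial $\statdim$. If instead you keep the reward supported on a single direction, the value function cannot be perturbed in the remaining $\statdim - 1$ directions at all. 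The paper's block-diagonal (interval-localized) construction evades exactly this: the reward is $\rewardA = W_1$, a single eigenfunction, yet because $\TransOp_m$ couples coordinates, local two-state perturbations push the value function into $\statdim$ different Walsh directions simultaneously while keeping the supnorm of the reward trivially bounded.

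Separately, your explanation that the $\log \numobs$ factor in~\eqref{eq:ncond_2} "arises from a union bound needed to guarantee that the perturbed transition density remains non-negative uniformly across the $\numobs$ sampled states" is not correct. Non-negativity of a transition density is a pointwise property of the model, not of the dataset, and cannot be ensured by a union bound over samples. In the paper the $\log\numobs$ factor comes from an entirely different place: in Regime B the stationary distribution $\distrm{m}$ varies across the family, so the covariance operator and hence the kernel eigenfunctions are perturbed. Controlling $\supnorm{\base_j(\TransOp_m)} \leq 2$ requires an arrowhead-matrix perturbation argument (\Cref{lemma:arrowhead}) whose $\ell_1$-norm eigenvector bounds produce a harmonic sum $\sum_i 1/(j-i) \lesssim 1 + \log \numobs$. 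Your construction (keeping $\distr$ stationary for all $\tau$) sidesteps this perturbation entirely, which looks like a simplification but in fact means you never encounter the obstacle the condition is designed to resolve — further evidence that the two constructions are not interchangeable.
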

\noindent See \Cref{SecProofThmLB} for the proof of \Cref{thm:lb}. \\

The main take-away from this result is the following: by comparing the
bounds in \Cref{thm:lb} with the achievable rate from
\Cref{thm:ub}(b), we see that the kernel LSTD estimator is an optimal
procedure.  More precisely, it achieves the minimax-optimal scaling
$\newradbarreward^2 \delcrit^2$ of the squared-$\Lmu$ norm. As we
discuss below, there are some differences in the minimum sample size
required for the bounds to be valid, with the lower bound requirements
being less stringent than our upper bounds from \Cref{thm:ub}. \\

A few high-level comments on the proof: it is based on the Fano method
for proving minimax lower bounds (see Chapter 15 in the
book~\cite{wainwright2019high} for background).  This method involves
constructing a family of MRP instances that are ``well-separated'',
and arguing that any method with relatively low estimation error is
capable of solving a multi-way testing problem defined over this
family.  Our construction of the family of MRPs is relatively simple;
each instance has state space $\StateSp = [0, 1)$, and the kernels
  are designed with eigenfunctions defined by the Walsh basis.

\paragraph{Some differences:}  Our upper and lower bounds
differ in terms of their required lower bounds on sample size; as we
discuss in \Cref{AppLower}, the requirements of the lower bounds in
\Cref{thm:lb} are milder than our corresponding condition for the
kernel LSTD estimate.  Apart from the sample size conditions,
\Cref{thm:lb} also requires the kernel regularity
condition~\eqref{eq:kernel_reg}, along with the eigensequence
condition in part (b).  As we discuss in more detail in
\Cref{AppLower}, this conditions are relatively mild, and satisfied by
various kernels used in practice (including any kernel with
eigenvalues that exhibit $\alpha$-polynomial
decay~\eqref{EqnPolyDecay}).

\paragraph{Regimes of $(\newradbarreward, \stdbarreward)$:}  Let us
now discuss the two regimes of parameters.
\bcar
\item
The scalings in Regime A~\eqref{EqnRegimeA} arise naturally when we
assume only that the reward function is uniformly bounded---say
$\supnorm{\reward} \leq 1$.  In this case, by the law of total
variance~\cite{sobel1982variance}, we have the bound
$\stdfun^2(\thetastar) \leq \tfrac{2}{1-\discount}$, and there exist
MRPs for which this $(1-\discount)^{-1}$ is achieved, consistent with
the first inclusion in condition~\eqref{EqnRegimeA}.  In terms of the
choice of $\newradbarreward$, we can construct MRPs with bounded
reward functions such that $\munorm{\thetastar-\reward} \lesssim
\frac{1}{1-\discount}$ and $\supnorm{\thetastar} \lesssim
\frac{1}{1-\discount}$.  With these scalings, the constraint on
$\newradbarreward$ in condition~\eqref{EqnRegimeA} is satisfied.
\item Turning to Regime B~\eqref{EqnRegimeB}, it corresponds to a
  class of problems for which estimation is much easier.  Instances
  with this scaling arise when we impose a constraint of the form
  $\discount\munorm{\thetastar} \leq 1$.  This constraint ensures that
  $\stdfun^2(\thetastar) \leq 1$ because the variance is dominated by
  the second moment.  As for the parameter $\newradbarreward$, the
  RKHS norm is connected with the $\Lmu$-norm via inequality
  $\hilnorm{\thetastar-\reward} \geq \frac{1}{\sqrt{\eig_1}}
  \munorm{\thetastar-\reward}$.  Therefore, we can ensure that the
  constraint $\newradbar \geq \max\big\{ \hilnorm{\thetastar -
    \reward}, \tfrac{2\supnorm{\thetastar}}{\bou} \big\}$ holds by
  constructing MRPs with $\munorm{\thetastar-\reward} \lesssim 1$ and
  $\supnorm{\thetastar} \lesssim \frac{1}{\discount}$.  With these
  choices, we can ensure that $\newradbar \gtrsim \max\big\{
  \frac{1}{\sqrt{\eig_1}}, \frac{1}{\discount\bou} \big\}$, as
  required in the definition~\eqref{EqnRegimeB}.
\ecar


\section{Proofs}
\label{SecProof}

We now turn to the proofs of our main results.  \Cref{SecProofThmUB}
is devoted to the proofs of the upper bounds stated in \Cref{thm:ub},
whereas \Cref{SecProofThmLB} contains the proofs of the lower bounds
stated in \Cref{thm:lb}.


\subsection{Proof of Theorem~\ref{thm:ub}}
\label{SecProofThmUB}
	
The proof of the finite-sample upper bounds stated in \Cref{thm:ub}
consists of three steps. First, we use the definition of the estimator
to derive a basic inequality to give an upper bound on the the squared
$\Lmu$ error. Then we use techniques from empirical process theory and
concentration of measure to upper bound the terms on the right-hand
side of this inequality.  Finally, we exploit this analysis to choose
the regularization parameter $\ridge$ in a manner that yields an
optimal trade-off between the bias and variance terms.

	
\subsubsection{The building blocks}

Recall that $\thetahat$ denotes our estimate, whereas $\thetastar$
denotes the population-level kernel LSTD solution.  We begin our
analysis by deriving an inequality that must be satisfied by the the
error $\Deltahat = \thetahat - \thetastar$.  We state our results in
terms of the functional
\begin{align}
\label{EqnDefnSpecfun}  
\specfun(f) & \defn \Big( \Exs[f^2(\State) - \discount f(\State) f(\Statetwo)]
\Big)^{1/2},
\end{align}
where $(\State, \Statetwo)$ are successive states sampled from the
Markov chain, with $\State$ drawn according to the stationary
distribution.  As shown in the proof of \Cref{lemma:decomp} below, it
follows from the Cauchy-Schwarz inequality that we always have the
lower bound
\begin{align}
\label{EqnUsefulLower}  
\Exs[f^2(\State) - \discount f(\State) f(\Statetwo)] & \geq \SqDis
\munorm{f}^2 \; \geq \; 0.
\end{align}
so that our definition of $\specfun$ is meaningful.

\paragraph{A basic inequality on the error:}
We begin by stating an inequality that must be satisfied by the
error. It lies at the foundation of our analysis:
\begin{lemma}[Basic inequality]
\label{lemma:decomp}
The error $\Deltahat = \thetahat - \thetastar$ satisfies the
inequality
\begin{align}
  \label{eq:decomp}
  \SqDis \munorm{\Deltahat}^2 \stackrel{(i)}{\leq}
  \specfun^2(\Deltahat) & \stackrel{(ii)}{=} \Big \{ \sum_{j=1}^3
  \Term_j \Big \} - \ridge \hilnorm{\Deltahat}^2,
\end{align}
where
\begin{subequations}
  \begin{align}
    \Term_1 = & \hilin[\big]{\Deltahat}{ \CovOphat(\reward -
      \thetastar) + \discount \CrOphat \thetastar}, \\
\Term_2 = & \ridge \hilin[\big]{\Deltahat}{\reward - \thetastar}, \\
\Term_3 = & \hilin[\big]{\Deltahat}{(\GamOp - \GamOpHat) \Deltahat},
  \end{align}
where $\GamOp = \CovOp - \discount \CrOp$ and $\GamOpHat = \CovOphat -
\discount \CrOphat$.
\end{subequations}
\end{lemma}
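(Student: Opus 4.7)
The plan is to establish (i) and (ii) separately; both are essentially algebraic, modulo one use of Cauchy--Schwarz at the very start. For (i), I will exploit the fact that under the stationary law $\distr$, both marginals of $(\State,\Statetwo)$ coincide with $\distr$. Consequently, by Cauchy--Schwarz applied to the joint expectation,
\begin{align*}
\Exs\bigl[f(\State)f(\Statetwo)\bigr] \;\leq\; \sqrt{\Exs[f^2(\State)]}\,\sqrt{\Exs[f^2(\Statetwo)]} \;=\; \munorm{f}^2,
\end{align*}
so that $\specfun^2(f) = \munorm{f}^2 - \discount\,\Exs[f(\State)f(\Statetwo)] \geq \SqDis\,\munorm{f}^2$. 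Applied to $f = \Deltahat$, this yields inequality (i) and simultaneously justifies that $\specfun^2(\Deltahat)$ is non-negative.

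For (ii), I would first translate $\specfun^2$ into an RKHS inner-product using the reproducing property $\langle f,\Rep{\State}\rangle_\RKHS = f(\State)$. Concretely, $\Exs[f^2(\State)] = \langle f,\CovOp f\rangle_\RKHS$ and $\Exs[f(\State)f(\Statetwo)] = \langle f,\CrOp f\rangle_\RKHS$ by the defining property of $\CovOp$ and $\CrOp$ in~\eqref{eq:def_CovCr}. Hence $\specfun^2(f) = \langle f,\GamOp f\rangle_\RKHS$ for $\GamOp = \CovOp - \discount \CrOp$. Adding and subtracting the empirical operator gives the natural split
\begin{align*}
\specfun^2(\Deltahat) \;=\; \hilin{\Deltahat}{\GamOpHat\,\Deltahat} \;+\; \hilin{\Deltahat}{(\GamOp - \GamOpHat)\Deltahat} \;=\; \hilin{\Deltahat}{\GamOpHat\,\Deltahat} + \Term_3,
\end{align*}
so it remains to show that $\hilin{\Deltahat}{\GamOpHat \Deltahat} = \Term_1 + \Term_2 - \ridge \hilnorm{\Deltahat}^2$.

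The final step uses the defining fixed-point equation~\eqref{eq:def_thetahat} for $\thetahat$. Rearranging that equation gives
\begin{align*}
\GamOpHat\,\thetahat \;=\; (\CovOphat - \discount \CrOphat)\thetahat \;=\; \CovOphat \reward + \ridge(\reward - \thetahat),
\end{align*}
so that $\GamOpHat \Deltahat = \CovOphat(\reward - \thetastar) + \discount \CrOphat \thetastar + \ridge(\reward - \thetahat)$. Taking the Hilbert inner product with $\Deltahat$ and writing $\reward - \thetahat = (\reward - \thetastar) - \Deltahat$ produces exactly $\Term_1 + \Term_2 - \ridge \hilnorm{\Deltahat}^2$. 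Combining with the previous display gives (ii). I do not foresee a genuine obstacle here: every step is either a standard RKHS identity, a trivial use of Cauchy--Schwarz, or direct manipulation of the defining equation for $\thetahat$.
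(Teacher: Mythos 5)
Your proof is correct and follows essentially the same approach as the paper: both hinge on the reproducing-property identity $\specfun^2(f) = \hilin{f}{\GamOp f}$, on manipulating the empirical fixed-point equation~\eqref{eq:def_thetahat}, and on an elementary bound $\Exs[f(\State)f(\Statetwo)] \leq \munorm{f}^2$ using equality of marginals under $\distr$. The only differences are cosmetic---you work forward from the empirical equation to compute $\GamOpHat\Deltahat$ directly rather than first subtracting the population fixed-point equation as the paper does, and you invoke Cauchy--Schwarz where the paper invokes Young's inequality.
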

\noindent See \Cref{sec:proof:lemma:decomp} for the proof of this
claim.


\paragraph{Controlling the terms:}

Our next step is to derive upper bounds on the three terms on the
right-hand side of our basic inequality~\eqref{eq:decomp}.  Recall
that $\hilnorm{\thetastar - \reward} \leq \newrad$ by assumption. 

The quantity $\Term_2$ is easily handled: we have
\begin{align}
  \label{eq:T2}
  \Term_2 \stackrel{(i)}{\leq} \ridge \hilnorm{\Deltahat}
  \hilnorm{\reward - \thetastar} \stackrel{(ii)}{\leq}
  \frac{\ridge}{2} \left \{ \hilnorm{\Deltahat}^2 + \newrad^2 \right
  \},
\end{align}
where step (i) follows from the Cauchy-Schwarz inequality, and step
(ii) follows from the Fenchel-Young inequality.

As for the terms $\Term_1$ and $\Term_3$, we state some auxiliary
lemmas that bound them with high probability.
\begin{lemma}
  \label{lemma:T1}
  Let $\delcrit = \delcrit(\SpecialConstant)$ for either
  $\SpecialConstant = \bou \newrad$ or $\SpecialConstant = \unibou
  \stdfun(\thetastar)$.  There are are universal constants
  $(\plaincon, c')$ such that
\begin{align}
 \label{eq:T1}
 \Term_1 \leq  \plaincon \, \SqDis \; \delcritsq \, \Big
 \{\hilnorm{\Deltahat}^2 + \newrad^2 \Big \} + \plaincon \, \newrad \, \SqDis \; \delcrit
 \munorm{\Deltahat}
  \end{align}
with probability at least $1 - \exp \big ( - \plaincontwo \tfrac{\numobs
  \delcritsq \SqDis^2}{\bou^2} \big )$. 
\end{lemma}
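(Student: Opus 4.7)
The plan is to reduce $\Term_1$ to a centered empirical process over a suitable RKHS class, and then control it via a localized complexity argument tied to the critical inequality~\eqref{eq:criticalineq}.

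\textbf{Reduction to a centered empirical process.} Using the reproducing property $\hilin{\widehat{\Delta}}{\Rep{\state_i}} = \widehat{\Delta}(\state_i)$, we can rewrite $\Term_1 = \tfrac{1}{\numobs}\sum_{i=1}^\numobs \widehat{\Delta}(\state_i)\, \xi_i$, where $\xi_i \defn \reward(\state_i) - \thetastar(\state_i) + \discount\thetastar(\statetwo_i)$. The population fixed-point equation~\eqref{eq:Bellman_project_op} asserts $\CovOp(\reward - \thetastar) + \discount\CrOp\thetastar = 0$, which for any $f \in \RKHS$ is equivalent to $\Exs[f(\State)\,\xi(\State,\Statetwo)] = 0$. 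Thus $\Term_1(f)$ is a centered empirical average for each deterministic $f \in \RKHS$. Since $\widehat{\Delta}$ is data-dependent, I will instead bound $\Term_1$ uniformly over shells \mbox{$\Fclass(s,t) \defn \{f \in \RKHS : \hilnorm{f} \leq s,\; \munorm{f} \leq t\}$} and then combine the shell bounds through dyadic peeling over $(s,t) = (\hilnorm{\widehat{\Delta}}, \munorm{\widehat{\Delta}})$.

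\textbf{Localized complexity bound.} By symmetrization and conditioning on the $\xi_i$'s, the supremum of $\Term_1$ over $\Fclass(s,t)$ is controlled by the local Rademacher complexity, which for a reproducing kernel satisfies
\begin{align*}
\mathcal{R}_\numobs\bigl(\Fclass(s,t)\bigr) \;\lesssim\; \frac{1}{\sqrt{\numobs}}\sqrt{\sum_{j\geq 1} \min\{s^2 \eig_j,\, t^2\}}.
\end{align*}
The right-hand side equals, up to rescaling, $t\,\KerComplex(t/s)$. After multiplying by the effective noise scale $\SpecialConstant$ and invoking the critical inequality~\eqref{eq:criticalineq} at $\delta = \delcrit$, a shell-wise bound of the form $\sup_{f \in \Fclass(s,t)} |\Term_1(f)| \lesssim \newrad\,\SqDis\,\delcrit\, t + \SqDis\,\delcritsq\, s^2$ emerges. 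Peeling dyadically in $s$ above and below the base radius $\newrad$ then produces the first term $\plaincon\,\SqDis\,\delcritsq\,\{\hilnorm{\widehat{\Delta}}^2 + \newrad^2\}$ in~\eqref{eq:T1}, while the linear dependence on $t$ gives the second term $\plaincon\,\newrad\,\SqDis\,\delcrit\,\munorm{\widehat{\Delta}}$.

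\textbf{Noise scales and main obstacle.} The two choices $\SpecialConstant = \bou\newrad$ and $\SpecialConstant = \unibou\stdfun(\thetastar)$ correspond to two ways of bounding the size of the $\xi_i$'s. For the slow rate we use the deterministic bound $|\xi_i| \leq \bou\newrad(1 + \discount/2)$, which follows by combining kernel boundedness~\eqref{as:b}, the definition~\eqref{eq:def_R} of $\newrad$, and the reproducing inequality $\supnorm{f} \leq \bou\hilnorm{f}$; a sub-Gaussian concentration argument on the Rademacher sum then yields the tail bound. For the fast rate, we instead exploit $\Exs[\xi^2] = \stdfun^2(\thetastar)$ together with the eigenfunction bound~\eqref{EqnKerEigBound}, which allows pointwise deviations to be controlled by $\unibou\munorm{f}$. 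Talagrand's concentration inequality then yields a Gaussian term governed by $\unibou\stdfun(\thetastar)$, with a Bernstein remainder dominated by this Gaussian term precisely under the sample-size threshold~\eqref{EqnNbound}. The chief technical obstacle is this fast-rate Talagrand step: one must decompose $f$ in the covariance eigenbasis and carefully balance the contribution of high-index eigenfunctions through $\KerComplex$, so that $\stdfun(\thetastar)$, rather than a crude sup-norm surrogate, emerges as the effective noise scale.
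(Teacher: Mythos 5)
Your overall roadmap matches the paper's: rewrite $\Term_1$ as a centered empirical process in $\svar_i = \reward(\state_i) - \thetastar(\state_i) + \discount\thetastar(\statetwo_i)$, control the supremum over a localized ball via the kernel complexity $\KerComplex$ and the critical inequality, and then concentrate via Talagrand. The two-parameter dyadic peeling you propose over $(s,t)$ and the paper's single-parameter rescaling argument (the paper fixes $\hilnorm{f}\leq\newrad$ and handles $\hilnorm{\Deltahat}>\newrad$ by a homogeneity/rescaling reduction in \Cref{lemma:Aevent:tail}) are essentially interchangeable, and both reach the same bound.

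There is, however, one genuine structural gap in your fast-rate plan. You propose to feed the variance proxy $\unibou\,\stdfun(\thetastar)$ directly into Talagrand, arguing that the resulting Bernstein remainder is absorbed ``precisely under the sample-size threshold~\eqref{EqnNbound}.'' But \Cref{lemma:T1} does \emph{not} carry the sample-size condition~\eqref{EqnNbound}; that condition is only needed for \Cref{lemma:T3}, where the process is quadratic in $f$ and one must control $\sup_{f}\|f\|_\infty$ over the localized ball. The reason your route would need the condition is exactly the issue you flag: to get a variance bound $\Exp[f^2(\State)\svar^2]\lesssim \unibou^2\stdfun^2(\thetastar)\cdot(\text{something small})$ you would have to bound $\|f\|_\infty$ over the shell, which requires~\eqref{EqnNbound}. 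The paper avoids this entirely by \emph{decoupling} the two roles of the noise: Talagrand is applied with the crude bounds $\|g\|_\infty\leq\bou^2\newrad^2$ and $\Exp[g^2]\leq\bou^2\newrad^2\ucritsq$ (no eigenfunction assumption, no sample-size restriction, see \Cref{lemma:Ztail}), because the concentration radius $\SqDis\ucritsq$ is small enough for these crude bounds to suffice. The quantity $\stdfun(\thetastar)$ enters only through the \emph{mean} $\Exp[Z_\numobs(\tcrit)]$, bounded in \Cref{lemma:tcritone} by decomposing the supremum in the kernel eigenbasis and using $\Exp[\xi^2\base_j^2(\State)]\leq\unibou^2$. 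This gives the $\unibou\,\stdfun(\thetastar)$ scale in the critical inequality without ever needing a pointwise control on $f$, and hence without the sample-size threshold. If you want your version of \Cref{lemma:T1} to hold without an extraneous hypothesis, you should adopt this decoupling: variance-adapted Rademacher bound on the mean, crude Talagrand for the deviations.

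A smaller point: for the slow rate you describe the concentration step as ``a sub-Gaussian concentration argument on the Rademacher sum,'' but since the supremum is over an infinite-dimensional localized ball, you need a concentration inequality for the supremum of the empirical process itself (Talagrand / bounded-differences for suprema), not sub-Gaussian concentration of a single Rademacher sum.
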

\noindent See \Cref{sec:proof:lemma:T1} for the proof of this
claim. \\

\begin{lemma}
  \label{lemma:T3}
(a) With the choice $\delcrit = \delcrit(\bou \newrad)$ there are
  universal constants $(c, c')$ such that
\begin{align}
\label{eq:T3}
\Term_3 \leq \plaincon \, \SqDis \;  \delcritsq \, \Big \{
\hilnorm{\Deltahat}^2 + \newrad^2 \Big \} + \frac{\specfun^2(\Deltahat)}{2},
\end{align}
  with probability at least $1 - \exp \big( - \plaincontwo
  \frac{\numobs \delcritsq \SqDis}{\bou^2} \big)$. \\
(b) If, in addition, the sample size condition~\eqref{EqnNbound}
  holds, then the same bound holds with \mbox{$\delcrit =
    \delcrit(\unibou \stdfun(\thetastar))$.}
\end{lemma}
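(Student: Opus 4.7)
My starting observation is that $T_3$ can be rewritten as a quadratic empirical process. Using the reproducing property for $\CovOp,\CrOp$ and their empirical counterparts, one obtains
\begin{align*}
T_3 = \hilin{\Deltahat}{(\GamOp-\GamOpHat)\Deltahat}
= \Exs\bigl[g_{\Deltahat}(\State,\Statetwo)\bigr]
- \tfrac{1}{\numobs}\sum_{i=1}^{\numobs} g_{\Deltahat}(\state_i,\statetwo_i),
\end{align*}
where $g_f(x,y)\defn f^2(x)-\discount f(x)f(y)$. Crucially, $\Exs g_f=\specfun^2(f)$, and $g_f$ is quadratically homogeneous in $f$. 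By the lower bound \eqref{EqnUsefulLower}, we also have $\specfun^2(f)\geq\SqDis\,\munorm{f}^2$, which will let us convert an $\specfun$-localization into an $L^2(\distr)$-localization.

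Next, since $\Deltahat$ is data-dependent, I rescale it into a fixed class. Define $\bar{f}\defn\Deltahat/(\hilnorm{\Deltahat}+\newrad)$, so that $\hilnorm{\bar f}\leq 1$ and $T_3=(\hilnorm{\Deltahat}+\newrad)^2\bigl[\Exs g_{\bar f}-\tfrac{1}{\numobs}\sum g_{\bar f}\bigr]$. It then suffices to establish a uniform bound of the peeling-type form
\begin{align*}
\sup_{\substack{\hilnorm{f}\leq 1\\\specfun(f)\leq s}}\Bigl|\Exs g_f-\tfrac{1}{\numobs}\sum_i g_f(\state_i,\statetwo_i)\Bigr|
\;\lesssim\; \SqDis\,\delcritsq + \delcrit\, s,
\end{align*}
simultaneously for every dyadic scale $s\in[\delcrit,\bou]$. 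This is where empirical process theory enters: I symmetrize, apply a contraction argument to strip the quadratic structure (bounding $f^2$ and $f\cdot f$-type terms in terms of $\supnorm{f}\cdot|f|$), and reduce to the local Rademacher complexity of the unit ball of $\RKHS$ intersected with an $\specfun$-ball. The latter is controlled---via the $\distr$-norm domination from \eqref{EqnUsefulLower}---by the kernel complexity function $\KerComplex$, so the sub-root fixed point $\delcrit$ governing $\CI(\SpecialConstant)$ emerges naturally. To convert expected suprema to high-probability statements I invoke a Talagrand/Bernstein concentration inequality.

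For part (a), the sup-norm envelope comes from $\|f\|_{\infty}\leq\hilnorm{f}\bou\leq\bou$, giving variance at most $\bou^2\specfun^2(f)$ and $L^\infty$ norm at most $(1+\discount)\bou^2$; plugging into Talagrand yields the rate controlled by $\delcrit(\bou\newrad)$. For part (b), I replace the crude envelope with one that uses the eigenfunction bound~\eqref{EqnKerEigBound}: expanding $f=\sum_j a_j\base_j$ with $\sum a_j^2/\eig_j\leq 1$ and truncating at the statistical-dimension level $\statdim(\delcrit)$ lets me replace $\bou$ by $\unibou\stdfun(\thetastar)$ in the noise multiplier, while the tail beyond $\statdim(\delcrit)$ contributes $O(\delcritsq)$. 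The sample-size hypothesis~\eqref{EqnNbound} is precisely what is needed to guarantee that the Bernstein variance term dominates the $\supnorm{\cdot}$ term at scale $\delcrit$, so the localized rate $\delcrit(\unibou\stdfun(\thetastar))$ is attained.

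Finally, reversing the normalization gives
\begin{align*}
T_3 \;\leq\; c\,\SqDis\,\delcritsq\bigl\{\hilnorm{\Deltahat}^2+\newrad^2\bigr\}
+ c\,\delcrit\,\SqDis\,\bigl(\hilnorm{\Deltahat}+\newrad\bigr)\specfun(\Deltahat)/\sqrt{\SqDis},
\end{align*}
and I absorb the cross term into $\tfrac{1}{2}\specfun^2(\Deltahat)$ via Young's inequality, throwing the residual into the $\delcritsq\{\hilnorm{\Deltahat}^2+\newrad^2\}$ piece. I expect the chief technical obstacle to be the peeling/localization argument for the quadratic process $g_f$ in part (b): handling the $\discount f(X)f(Y)$ cross term so that the $L^2(\distr\times\TransOp)$ variance is genuinely of order $\stdfun^2(\thetastar)$ rather than $\bou^2$ requires the eigenfunction uniform bound~\eqref{EqnKerEigBound} to be used inside the truncation, which is the same mechanism that drives the sample-size condition~\eqref{EqnNbound}.
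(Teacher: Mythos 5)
Your plan matches the paper's proof in its essentials: the same quadratic functional $g_f(x,y)=f^2(x)-\discount\,f(x)f(y)$, localization over the Hilbert ball intersected with an $\specfun$-ball, symmetrization plus Ledoux--Talagrand contraction reducing to the kernel complexity function, and Talagrand concentration for the high-probability statement.

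Two remarks on where your route diverges. First, the paper avoids dyadic peeling by exploiting the degree-two homogeneity of $\hilin{f}{(\GamOp-\GamOpHat)f}$: if any $f\in\RKHS$ violates the target inequality, a suitable rescaling of $f$ lands in the localized class $\{h:\hilnorm{h}\leq\newrad,\ \specfun(h)\leq\ucrit\}$ while still violating the inequality, so the bad event reduces to a single tail bound on $\Ztil_\numobs(\ucrit)$ with no union over scales. Your peeling argument would also work but requires more bookkeeping to avoid spurious logarithms. Second, in part (b) you describe the eigenfunction bound and sample-size condition as making the $L^2(\distr\times\TransOp)$ variance of $g_f$ genuinely of order $\stdfun^2(\thetastar)$; that is not quite the mechanism. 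The paper instead refines the \emph{sup-norm envelope} of $f$ over the localized ellipse, proving it is of order $\unibou\stdfun(\thetastar)$ under condition~\eqref{EqnNbound}, and uses this smaller Lipschitz constant inside the Ledoux--Talagrand contraction when bounding the \emph{expectation} of $\Ztil_\numobs$. The Talagrand concentration step retains the crude bounds $\supnorm{g}\lesssim\bou^2\newrad^2$ and $\Exp[g^2]\lesssim\bou^2\newrad^2\ucritsq$ identically in parts (a) and (b). The quantity $\stdfun^2(\thetastar)$ bounds nothing in $T_3$ directly; it enters only as the parameter $\SpecialConstant$ in the critical inequality, and the argument closes because the envelope $\unibou\stdfun(\thetastar)$ cancels the matching factor in the denominator of the slope of $\CI(\unibou\stdfun(\thetastar))$. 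With these adjustments your outline is sound.
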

\noindent See \Cref{sec:proof:lemma:T3} for the proof of this claim.


\subsubsection{Putting together the pieces}
\label{sec:choice_ridge}

We now put together the pieces in order to complete the proof of
\Cref{thm:ub}.  In particular, we use \Cref{lemma:T1,lemma:T3} to
bound the terms $\{ \Term_j \}_{j=1}^3$ on the right hand side of the
bound~\eqref{eq:decomp} from \Cref{lemma:decomp}. Applying all of
these bounds and combining all the terms, we find that with probablity
at least $1 - 2 \exp\big( - c' \frac{\numobs \delcritsq \SqDis^2}{\bou^2}
\big)$, we have
\begin{align*}
\specfun^2(\Deltahat) \leq & \underbrace{\plaincon \SqDis \delcritsq \Big \{\hilnorm{\Deltahat}^2 + \newrad^2
	\Big \} + \plaincon \newrad \SqDis \,
  \delcrit \munorm{\Deltahat}}_{\mbox{\small{Bound on $\Term_1$}}} +
\underbrace{\frac{\ridge}{2} \Big \{\hilnorm{\Deltahat}^2 + \newrad^2
  \Big \}}_{\mbox{\small{Bound on $\Term_2$}}} \\ & + \underbrace{\plaincon
  \SqDis \, \delcritsq \Big \{\hilnorm{\Deltahat}^2 + \newrad^2
  \Big \} + \tfrac{\specfun^2(\Deltahat)}{2}}_{\mbox{\small{Bound on
      $\Term_3$}}} - \ridge \hilnorm{\Deltahat}^2.
\end{align*}
Re-arranging terms yields
\begin{align*}
  \tfrac{1}{2} \specfun^2(\Deltahat) & \leq c \newrad
  \, \SqDis \, \delcrit \munorm{\Deltahat} +
  \hilnorm{\Deltahat}^2 \Big \{ 2 \plaincon \SqDis \delcritsq -
  \frac{1}{2} \ridge \Big \} + \newrad^2 \Big \{ 2 \plaincon
  \SqDis \delcritsq + \frac{1}{2} \ridge \Big \}.
\end{align*}
Setting $\ridge \geq 4 \plaincon \SqDis  \delcritsq$
ensures that the second term is negative.  Combining with the lower
bound $\specfun^2(\Deltahat) \geq \SqDis
\munorm{\Deltahat}^2$, we find that
\begin{align*}
  \frac{1-\discount}{2} \munorm{\Deltahat}^2 & \leq \plaincon \newrad
  \SqDis \, \delcrit \munorm{\Deltahat} + \ridge \newrad^2.
\end{align*}
By the Fenchel-Young inequality, we have
\begin{align*}
\plaincon \newrad \, \SqDis \, \delcrit \munorm{\Deltahat} + \ridge
\newrad^2 & \leq \tfrac{1-\discount}{4} \munorm{\Deltahat}^2 + \plaincon^2 \newrad^2 \, \SqDis \, \delcritsq + \ridge \newrad^2.
\end{align*}
Putting together the pieces, we conclude that there is a universal
constant $\cbar$ such that
\begin{align*} 
\munorm{\Deltahat}^2 & \leq \cbar \newrad^2 \left \{ \delcrit^2 + \frac{\ridge}{1-\discount} \right \},
\end{align*}
as claimed. This concludes the proof of \Cref{thm:ub} for $\delta =
\delcrit$.

We note that all of the same steps actually hold for any $\delta \geq
\delcrit$, so that the bound given in the theorem is also valid.


\subsection{Proof of Theorem~\ref{thm:lb}}
\label{SecProofThmLB}

We now turn to the proof of the minimax lower bounds stated in
\Cref{thm:lb}. 
In \Cref{sec:def_MRPclass}, we explicitly define the MRP families $\MRPclassA$ and $\MRPclassB$. 
\Cref{sec:lb_Fano} then provides a
high-level overview of the proof structure which works for both Regimes A and B.
\Cref{sec:2stateMRP,sec:RKHS,sec:baseMRP,sec:familyofMRP} are devoted to the detailed arguments, including the constructions of
RKHSs $\RKHSA$ and $\RKHSB$ and MRP
instances in model families $\MRPclassA$ and
$\MRPclassB$.


\subsubsection{Full specification of the minimax lower bound} \label{sec:def_MRPclass}

We begin with the full specification of minimax lower bound \ref{EqnLB}, regarding the set-up of problem instance $\MRP$ and definitions of MRP family $\MRPclass$.

We first precisely define a problem instance $\MRP$, especially the
roles of stationary distribution therein.  In either Regime A or B, we fix an
RKHS $\RKHS = \RKHSA$ or $\RKHSB$ and a reward function $\reward =
\rewardA$ or $\rewardB$ such that $\reward \in \RKHS$, and then
consider MRPs of the form $\MRP(\TransOp, \reward, \discount)$. Throughout the proof of lower bounds, we let
$\distr(\TransOp)$ be the stationary distribution associated with
transition kernel $\TransOp$ and always use notation $\distr$ to denote the Lebesgue measure. The stationary distribution
$\distr(\TransOp)$ plays multiple roles.  First, the observation pairs
$\{ (\state_i, \statetwo_i) \}_{i=1}^{\numobs}$ are generated by
drawing $\state_i$ from distribution $\distr(\TransOp)$, and then the
successor state $\statetwo_i$ from the probability transition.  Note
moreover that $\munorm{\cdot}$ in equation~\eqref{EqnLB} is an
abbreviation of the
$L^2\big(\distr(\TransOp)\big)$-norm. Specifically, we measure the
estimation error by
\begin{align*}
\norm{\thetahat - \thetastar}_{\distr(\TransOp)}^2 \defn
\Exp_{\distr(\TransOp)} \big[ \big( \thetahat(\State) -
  \thetastar(\State) \big)^2 \big].
\end{align*}
Finally, the covariance operator $\CovOp(\TransOp)$ is induced by
$\distr(\TransOp)$, i.e. \mbox{$\CovOp(\TransOp) =
  \Exp_{X \sim \distr(\TransOp)} [ \Rep{\State} \otimes \Rep{\State} ]$} with
$\Rep{\State}$ denoting the representer of evaluation. Below, we take
$\big\{ \big( \eig_j(\TransOp), \base_j(\TransOp) \big)
\big\}_{j=1}^{\infty}$ as the eigenpairs associated with
$\CovOp(\TransOp)$.

\begin{subequations} \label{eq:def_MRPclass}
As alluded to above, the lower bounds require precise definitions of
the MRP families over which they hold.  We define two collections of
problem instances $\MRPclassA(\newradbar, \stdbar)$ and
$\MRPclassB(\newradbar, \stdbar)$ that are considered in Regimes A and
B respectively.  In Regime A, we suppose the reward function
$\rewardA$ is uniformly bounded, i.e. $\supnorm{\rewardA} \leq 1$, and
define a $(\newradbar, \stdbar)$-valid MRP family
\begin{multline}
\label{eq:def_MRPclassA}
\MRPclassA \equiv \MRPclassA ( \newradbar, \stdbar ) \equiv \MRPclassA
\big(\newradbar, \stdbar, \{ \eig_j \}_{j=1}^{\infty}; \rewardA,
\discount, \RKHSA \big) \\
\defn \big\{ \text{MRP $\MRP(\TransOp, \rewardA, \discount)$} \mid
\text{(i) The value function $\thetastar \in \RKHSA$ and
  inequalities~\eqref{EqnCondB} hold.}  \\
\text{(ii) The eigenpairs satisfy $\eig_j(\TransOp) = \eig_j$ for $j =
  1,2,\ldots$ and $\sup\nolimits_{j \in \Int_+}\supnorm{
    \base_j(\TransOp) } \leq 2$.}  \big \}.
\end{multline}
In parallel, the $(\newradbar, \stdbar)$-valid MRP family $\MRPclassB$
is given by
\begin{multline}
\label{eq:def_MRPclassB}
\MRPclassB \equiv \MRPclassB ( \newradbar, \stdbar ) \equiv
\MRPclassB \big(\newradbar, \stdbar, \{ \eig_j
\}_{j=1}^{\infty}; \rewardB, \discount, \RKHSB \big) \\ \defn
\big\{ \text{MRP $\MRP(\TransOp, \rewardB, \discount)$} \mid
\text{(i) $\discount \norm{\thetastar}_{\distr(\TransOp)} \leq
  1$. (ii) $\thetastar \in \RKHSB$ and
  inequalities~\eqref{EqnCondB} hold.} \\ \text{(iii) The
  eigenpairs satisfy $\eig_j(\TransOp) \leq \eig_j$ for any $j
  \geq 2$ and $\sup\nolimits_{j \in \Int_+} \supnorm{
    \base_j(\TransOp) } \leq 2$.} \big\}.
\end{multline}
\end{subequations}        
The major differences between MRP families $\MRPclassA$ and
$\MRPclassB$ are the regularity conditions and the eigenvalue
constraints.  In Regime A, the reward function $\rewardA$ is properly
normalized so that \mbox{$\supnorm{\rewardA} \leq 1$}, whereas we
impose an upper bound on the value function norm $\discount
\norm{\thetastar}_{\distr(\TransOp)}$ in \mbox{Regime B}.  Besides, in Regime A,
the pre-specified parameters $\{ \eig_j \}_{j=1}^{\infty}$ in
definition~\eqref{eq:def_MRPclassA} are exactly the eigenvalues for
instances in family $\MRPclassA$. In contrast, in Regime B, $\{ \eig_j \}_{j=1}^{\infty}$ are approximations
of eigenvalues in the definition~\eqref{eq:def_MRPclassB} of family
$\MRPclassB$.
	
As a point of clarification, we recall that the critical inequality
\eqref{eq:CI_lb} is defined by pre-specified constants $\{ \eig_j
\}_{j=1}^{\infty}$, not the eigenvalues $\{ \eig_j(\TransOp)
\}_{j=1}^{\infty}$. In other words, the critical radius $\delcrit$ in
the minimax lower bound \ref{EqnLB} fully depends on the pre-specified
parameters of the MRP families $\MRPclassA$ and $\MRPclassB$.
Moreover, in Regime B, the lower bound \ref{EqnLB} further implies
\mbox{$\norm{\thetahat - \thetastar}_{\distr(\TransOp)}^2 \geq
  \plaincon_1 \, \newradbar^2 \delcritsq(\TransOp)$}, where $\delcrit(\TransOp)$ is the critical radius induced by eigenvalues $\{
\eig_j(\TransOp) \}_{j=1}^{\infty}$. See \Cref{AppLower} for a proof of this claim.


\subsubsection{High-level overview}
\label{sec:lb_Fano}

We provide a high-level overview of the proof structure.  The main
argument is based on Fano's method.  As in the standard use of Fano's
method for proving minimax bounds~\cite{wainwright2019high}, a key
step is the construction of an ensemble of value estimation problems
that are ``well-separated''.  In particular, we construct a collection
$\{\MRP_m \}_{m=1}^\PackNum$ of MRP instances, all of which share the
same state space $\StateSp = [0,1)$ and reward function $\reward$.
  Let $\distr$ denote the Lebesgue measure over $\StateSp$, and let
  $\TransOp_m$, $\thetastar_m$ and $\distrm{m}\equiv
  \distr(\TransOp_m)$ denote (respectively) the transition kernel,
  value function and stationary distribution associated with $\MRP_m$.
  Let $\TransOp_m^{1:\numobs}$ be the distribution of data $\{
  (\state_i, \statetwo_i) \}_{i=1}^\numobs$ when the ground-truth
  model is $\MRP_m$.

Suppose that an index $J$ is uniformly distributed over $[\PackNum]$
and observations $\{ (\state_i, \statetwo_i) \}_{i=1}^\numobs$ are
generated i.i.d. from $\MRP_J$.  Given this set-up, an application of
Fano's method (cf. \S 15.3.2 in the book~\cite{wainwright2019high} for
details) yields the lower bound
\begin{align*}
\inf_{\thetahat} \max_{\mdagger \in [\PackNum]} \Prob_{\mdagger}\Bigg[
  \munorm[\big]{\thetahat - \thetastar_\mdagger} \geq \frac{1}{2}
  \min_{m \neq \mprime} \munorm[\big]{\thetastar_{m} -
    \thetastar_{\mprime}} \Bigg] \geq 1 - \frac{\log 2 + \max
  \nolimits_{m, \mprime \in [\PackNum]}
  \kull[\big]{\TransOp_{m}^{1:\numobs}}{
    \TransOp_{\mprime}^{1:\numobs}}}{\log \PackNum} \, .
\end{align*}
Moreover, by further assuming that
\begin{align} \label{eq:density}
\frac{\diff \distrm{m}}{\diff \distr}(\state) \geq \frac{1}{2} \qquad \text{for all $\state \in \StateSp$ and $m \in [\PackNum]$},
\end{align} 
we connect the $\Lmu$ error with the $L^2( \distrm{m})$
error via inequality $\mumnorm{m}{\thetahat -
  \thetastar_m} \geq \frac{1}{\sqrt{2}}\munorm{\thetahat -
  \thetastar}$. It then follows that
\begin{align}
\label{eq:Fano}
\inf_{\thetahat} \max_{\mdagger \in [\PackNum]} \Prob_{\mdagger}\Bigg[
  \mumnorm[\big]{\mdagger}{\thetahat - \thetastar_\mdagger} \geq
  \frac{1}{2\sqrt{2}} \min_{m \neq \mprime}
  \munorm[\big]{\thetastar_{m} - \thetastar_{\mprime}} \Bigg] \geq 1 -
\frac{\log 2 + \!\!\!  \max \limits_{m, \mprime \in [\PackNum]}
  \kull[\big]{\TransOp_{m}^{1:\numobs}}{
    \TransOp_{\mprime}^{1:\numobs}}}{\log \PackNum} \, .
\end{align}

Exploiting the Fano inequality so as to obtain a ``good'' lower bound
involves constructing a suitable family of models.  Recalling the
statistical dimension $\statdim$.  In our proof of either Regime A or
B, we establish the existence of a family $\{ \MRP_m
\}_{m=1}^\PackNum$ with log cardinality $\log \PackNum \geq
\frac{\statdim}{10}$, and such that
\begin{subequations}
	\label{EqnInformal}  
	\begin{align}
	\label{EqnInformalKL}  
	\max_{m, \mprime \in [\PackNum]}
        \kull[\big]{\TransOp_{m}^{1:\numobs}}{\TransOp_{\mprime}^{1:\numobs}}
        & \leq \frac{\statdim}{40} \qquad \text{and} \\
	\label{EqnInformalL2}    
	\min_{m \neq \mprime} \munorm[\big]{\thetastar_m -
		\thetastar_{\mprime}} & \geq c_1' \sqrt{\plaincon} \; \newrad \delcrit
	\end{align}
\end{subequations}
where $c_1'$ is a universal constant and $\plaincon$ is given in
condition~\eqref{eq:kernel_reg}.  See Lemmas~\ref{lemma:lb_KL}
and~\ref{lemma:lb_gap} at the end of \Cref{sec:familyofMRP} for the
precise statement of these claims.  \\

Given these claims, we can combine the pieces to prove \Cref{thm:lb}.
Given the condition~\eqref{EqnInformalKL} and the bound $\log \PackNum
\geq \frac{\statdim}{10}$, we have $\frac{1}{\log \PackNum} \max_{m,
  \mprime \in [\PackNum]}
\kull[\big]{\TransOp_{m}^{1:\numobs}}{\TransOp_{\mprime}^{1:\numobs}}
\leq \frac{1}{4}$. Additionally, given that $\statdim \geq 10$, it
holds that $\log \PackNum \geq \frac{\statdim}{10} \geq 1$ and
therefore $\frac{\log 2}{\log \PackNum} \leq \log 2$. Combining these
inequalities, we find that the right hand side of
inequality~\eqref{eq:Fano} is larger than a positive constant $\big\{
1 - \frac{1}{4} - \log 2 \big\}$. We then substitute the minimum value
function distance $\min_{m \neq \mprime} \munorm[\big]{\thetastar_m -
  \thetastar_{\mprime}}$ in the left hand side of
inequality~\eqref{eq:Fano} by its lower bound in the
inequality~\eqref{EqnInformalL2}. This completes the high-level
overview of the proof of \Cref{thm:lb}. \\

With this perspective in place, the remaining steps---and the
technically challenging portion of the argument---should be clear.  In
particular, the remainder of our argument involves:
\begin{itemize}
\item constructing two reproducing kernel Hilbert spaces, denoted by
  $\RKHSA$ and $\RKHSB$, along with two subsets
  $\{\MRP_m\}_{m=1}^{\PackNum}$ belonging to either $\MRPclassA$ or
  $\MRPclassB$.
\item verifying that both groups of the MRP instances satisfy the
  claims~\eqref{eq:density},~\eqref{EqnInformalKL}
  and~\eqref{EqnInformalL2}.
\end{itemize}


\subsubsection{Construction of simple two-state MRPs}
\label{sec:2stateMRP}

\begin{figure}[t]
	\renewcommand*\thesubfigure{\alph{subfigure}}
	\begin{center}
		\begin{subfigure}{0.5 \linewidth}
			\centering
                        \widgraph{\linewidth}{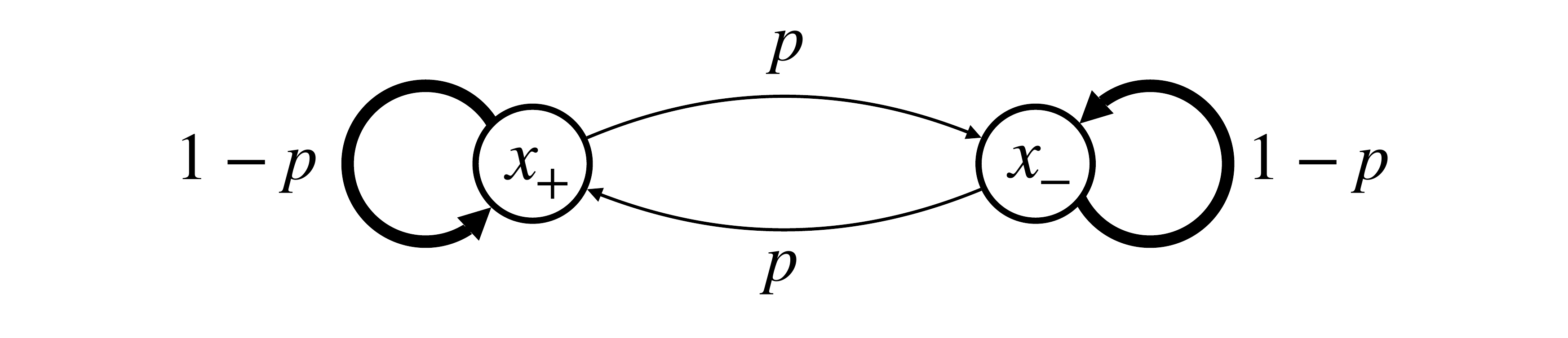} \caption{Base
                          Markov chain
                          $\bPbase(\parap)$.} \label{fig:2stateMRP_0}
		\end{subfigure} \\ \vspace{.5em}
		\begin{subfigure}{0.45 \linewidth}
			\widgraph{\linewidth}{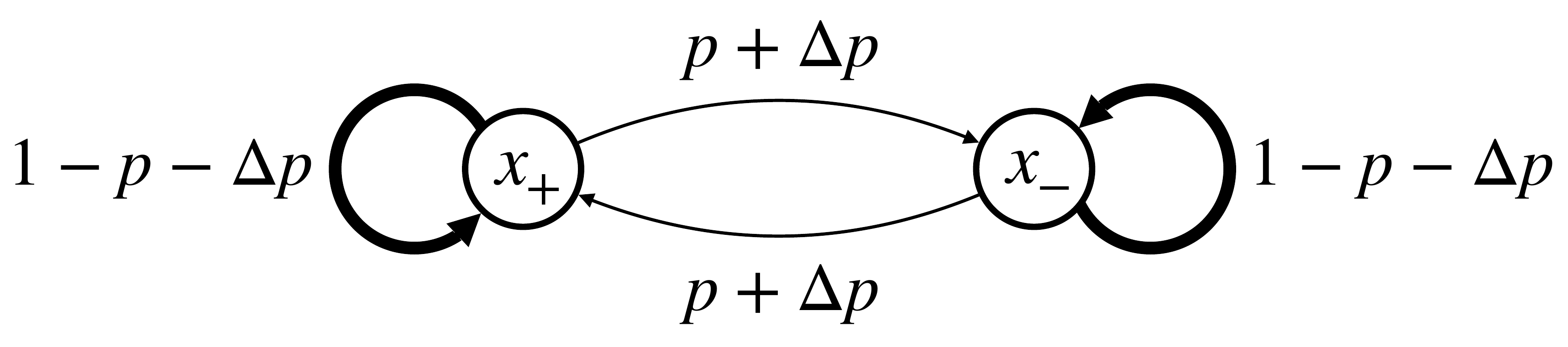} \caption{Construction
                          of $\bPA(\parap, \Deltap)$.}
		\end{subfigure}
		\hspace{2em}
		\begin{subfigure}{0.45 \linewidth}
			\widgraph{\linewidth}{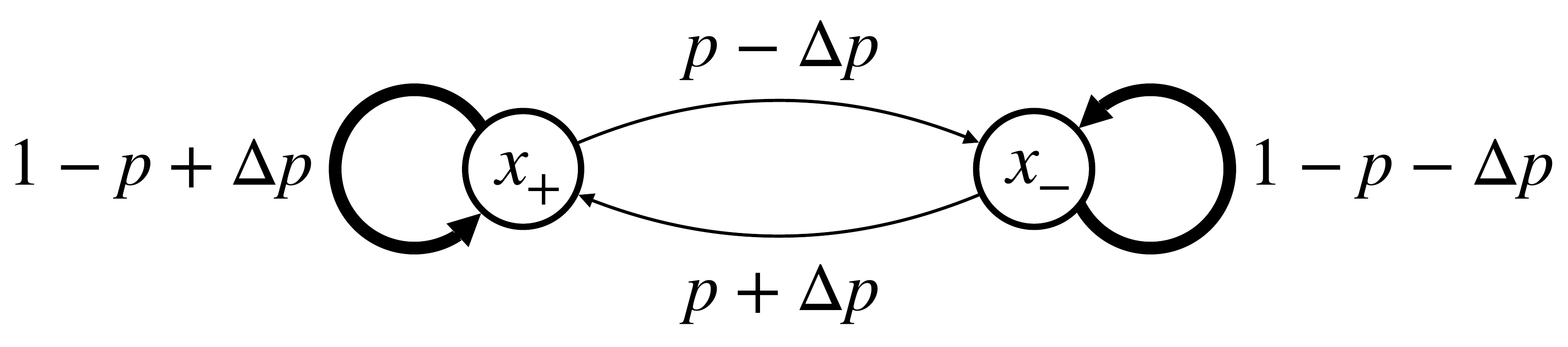} \caption{Construction
                          of $\bPB(\parap, \Deltap)$.}
		\end{subfigure}
		\caption{Two-state MRP instances $\bPbase$, $\bPr$ and
                  $\bPt$. The MRP instances are parameterized by
                  scalars $\parap \in \big[ 0, \tfrac{1}{2} \big]$ and
                  $\Deltap \in [-\parap,\parap]$. The parameters are
                  chosen so as to ensure that all edges are labeled
                  with valid probabilities.}
		\label{fig:2stateMRP}
	\end{center}
\end{figure}

As a warm-up, our first step is to construct a very simple two-state
Markov chain and two perturbed variants of it. Each variant is the
basic building block that underlies our full-scale ``hard'' instances
in $\MRPclassA$ or $\MRPclassB$.  Denote the states by $\xplus$ and
$\xneg$.  Given a scalar $\parap \in \big[0, \tfrac{1}{2}\big]$, the
base Markov chain is defined by the $2 \times 2$ transition matrix
\begin{subequations}
\begin{align}
\label{eq:def_2stateMRP_0}
	\bPbase \equiv \bPbase(\parap) \defn \begin{pmatrix} 1 - \parap &
          \parap \\ \parap & 1 - \parap \end{pmatrix}.
\end{align}
See \Cref{fig:2stateMRP_0} for an illustration of the transition
dynamics of the base model.

We further define the perturbed variants $\bPr$ and $\bPt$ of model
$\bPbase$.  In addition to the parameter $\parap$, we introduce
another scalar $\Deltap \in [-\parap, \parap]$. The Markov chains are
constructed as follows:
\begin{align}
  \label{eq:def_2stateMRP}
  \bPr \equiv \bPr(\parap, \Deltap) \defn \begin{pmatrix} 1 \!-\!  \parap
    \!-\! \Deltap & \parap + \Deltap \\ \parap + \Deltap & 1 \!-\!
    \parap \!-\! \Deltap \end{pmatrix}, ~~ \bPt \equiv \bPt(\parap,
  \Deltap) \defn \begin{pmatrix} 1 \!-\!  \parap \!+\! \Deltap &
    \parap - \Deltap \\ \parap + \Deltap & 1 \!-\! \parap \!-\!
    \Deltap \end{pmatrix}.
\end{align}
\end{subequations}
Panels (b) and (c) in Figure~\ref{fig:2stateMRP} represent these two
processes respectively.

Consider a reward function $\br$ given by $\br(\xplus) \defn \reward$
and $\br(\xneg) \defn -\reward$ where $\reward \in \Real$ is a
scalar. Let $\bthetabase$, $\bthetar$ and $\bthetat$ be the value
functions associated with transition kernels $\bPbase$, $\bPr$ and
$\bPt$. Then $\bthetar$ and $\bthetat$ can be viewed as perturbations
of $\bthetabase$ in two different directions. Specifically, we perform
calculations and find that $\bthetabase =
(1-\discount+2\discount\parap)^{-1} \br$ and the differences $\Delta
\bthetar = \bthetar - \bthetabase$ and $\Delta\bthetat = \bthetat -
\bthetabase$ satisfy the relations
\begin{align}
  \label{eq:diff_theta}
\Delta\bthetar (\xplus) = - \Delta\bthetar (\xneg) \qquad \text{and}
\qquad \Delta \bthetat (\xplus) = \Delta \bthetat(\xneg).
\end{align}	
In the sequel, we construct full-scale MRP instances using the Markov
chains $\bPr$ and $\bPt$.


\subsubsection{Construction of MRPs over state space $\StateSp = [0,1)$}
\label{sec:baseMRP}

In this part, we assemble $\numint$ different two-state Markov chains
$\{ \bP^{(k)} \}_{k=1}^\numint$ into a full-scale model $\TransOp$
over state space $\StateSp = [0,1)$.  In our constructions, matrix
  $\bP^{(k)}$ takes the form of $\bPr\big(\parap, \Deltap^{(k)}\big)$
  in Regime A and $\bPt\big(\parap, \Deltap^{(k)}\big)$ in Regime B,
  where the parameters $\numint$, $\parap$ and
  $\{\Deltap^{(k)}\}_{k=1}^{\numint}$ will be specified later.

We evenly partition the state space $\StateSp = [0,1)$ into $2
  \numint$ intervals
\begin{align}
\label{eq:def_partition}
\interval_+^{(k)} \defn \big[\tfrac{k-1}{2\numint},
  \tfrac{k}{2\numint} \big) \quad \text{and} \quad \interval_-^{(k)}
  \defn \big[\tfrac{1}{2} + \tfrac{k-1}{2\numint}, \tfrac{1}{2} +
    \tfrac{k}{2\numint} \big) \qquad \text{for $k =
      1,2,\ldots,\numint$.}
\end{align}
For each index $k \in [\numint]$, the dynamics of $\TransOp$ on
intervals $\interval_+^{(k)}$ and $\interval_-^{(k)}$ follow the local
model $\bP^{(k)}$.  With a slight abuse of notation, we denote the two
states of Markov chain $\bP^{(k)}$ by $\xplus$ and $\xneg$ for any $k
\in [\numint]$.  The transition kernel $\TransOp$ is then defined as
\footnote{A technical side-comment: note that the Markov
chain~\eqref{eq:def_p} is not ergodic.  However, this issue can be
remedied with a slight modification of the transition kernel
$\TransOp$. Let $\widetilde{\distr}$ be a stationary distribution of
$\TransOp$.  We fix a number $\epsilon \in (0,1)$.  At each time step,
let the Markov chain follow $\TransOp$ with probability $1-\epsilon$,
and transit to a next state according to $\widetilde{\distr}$ with
probability $\epsilon$.  This procedure defines a new transition
kernel $\widetilde{\TransOp}(\cdot \mid \state) \defn \epsilon
\widetilde{\distr}(\cdot) + (1-\epsilon) \TransOp(\cdot \mid \state)$,
which induces a new Markov chain that is ergodic, and has a unique
stationary distribution.  Since $\epsilon > 0$ can be chosen
arbitrarily close to zero, we can recover statements about the
original model in this way.  The $\epsilon$-modification would induce
unnecessary clutter, so that we focus on model~\eqref{eq:def_p} in the
following discussion.}
\begin{align}
\label{eq:def_p}
\TransOp(\statetwo \mid \state) \defn \begin{cases} 2 \numint \; {\bf
    P}^{(k)}(\xplus \mid \xplus) \quad & \text{if $\state, \statetwo
    \in \interval_+^{(k)}$}, \\ 2 \numint \; {\bf P}^{(k)}(\xneg \mid
  \xplus) \quad & \text{if $\state \in \interval_+^{(k)}$, $\statetwo
    \in \interval_-^{(k)}$}, \\ 2 \numint \; {\bf P}^{(k)}(\xneg \mid
  \xneg) \quad & \text{if $\state, \statetwo \in \interval_-^{(k)}$},
  \\ 2 \numint \; {\bf P}^{(k)}(\xplus \mid \xneg) \quad & \text{if
    $\state \in \interval_-^{(k)}$, $\statetwo \in
    \interval_+^{(k)}$}, \\ 0 \qquad & \text{otherwise}.  \end{cases}
\end{align}
\Cref{fig:MRPembedding} illustrates our construction of model $\TransOp$.

In the full-scale MRP $\MRP(\TransOp, \reward, \discount)$, we take a
reward function
\begin{align}
  \label{eq:def_reward}
  \reward(\state) \defn \reward \, \big( \mathds{1}\big\{ \state \in
         [0, \tfrac{1}{2}) \big \} - \mathds{1}\big\{ \state \in
           [\tfrac{1}{2}, 1) \big \} \big)
\end{align}
with a scalar $\reward \in \Real$. By our construction, the transition kernel
$\TransOp$ and reward function $\reward$ produce a value function
$\thetastar$ that is piecewise constant over intervals
$\interval_+^{(k)}$ and $\interval_-^{(k)}$. Moreover, we have
\begin{align} \label{eq:def_theta}
  \thetastar(\state) = \begin{cases} \btheta^{(k)}(\xplus) \quad
    & \text{if $\state \in \interval_+^{(k)}$},
    \\ \btheta^{(k)}(\xneg) \quad & \text{if $\state \in
      \interval_-^{(k)}$},
  \end{cases}
\end{align}
where $\btheta^{(k)} \defn (\bI - \discount \bP^{(k)})^{-1} \br \in
\Real^2$ is the value vector given by transition matrix $\bP^{(k)}$
and reward vector $\br = [\reward, -\reward]^{\top}$.

We consider the form of the full-scale value function $\thetastar$
when taking $\bP^{(k)} = \bPbase(\parap), \bPr\big(\parap,
\Deltap^{(k)}\big)$ or $\bPt\big(\parap, \Deltap^{(k)}\big)$. If we
set $\bP^{(k)} = \bPbase(\parap)$, then the value function is given by
\begin{align}
\label{eq:def_thetabase}
\thetastar(\state) = \thetastar_0(\state) \defn (1 - \discount +
2\discount \parap)^{-1} \, \reward(\state).
\end{align}
We refer to $\thetastar_0$ as the base value function.  If $\bP^{(k)}
= \bPr\big(\parap, \Deltap^{(k)}\big)$, then due to
equation~\eqref{eq:diff_theta}, we have $\thetastar = \thetastar_0 +
\Delta\thetastar$ with function $\Delta\thetastar$ satisfying
$\Delta\thetastar(\state) = - \Delta\thetastar(\state+\tfrac{1}{2})$
for any $\state \in [0, \tfrac{1}{2})$. When $\bP^{(k)} =
  \bPt\big(\parap, \Deltap^{(k)}\big)$, the value function
  $\thetastar$ admits a decomposition $\thetastar = \thetastar_0 +
  \Delta\thetastar$ with $\Delta\thetastar(\state) =
  \Delta\thetastar(\state+\tfrac{1}{2})$ for any $\state \in [0,
    \tfrac{1}{2})$.  In the following, we construct function spaces
    $\RKHSA$ and $\RKHSB$ of which the elements possess these
    properties.

\begin{figure}[h]
	\renewcommand*\thesubfigure{\alph{subfigure}}
	\begin{center}
		\begin{subfigure}{0.8\linewidth}
			\centering
                        \widgraph{0.8\linewidth}{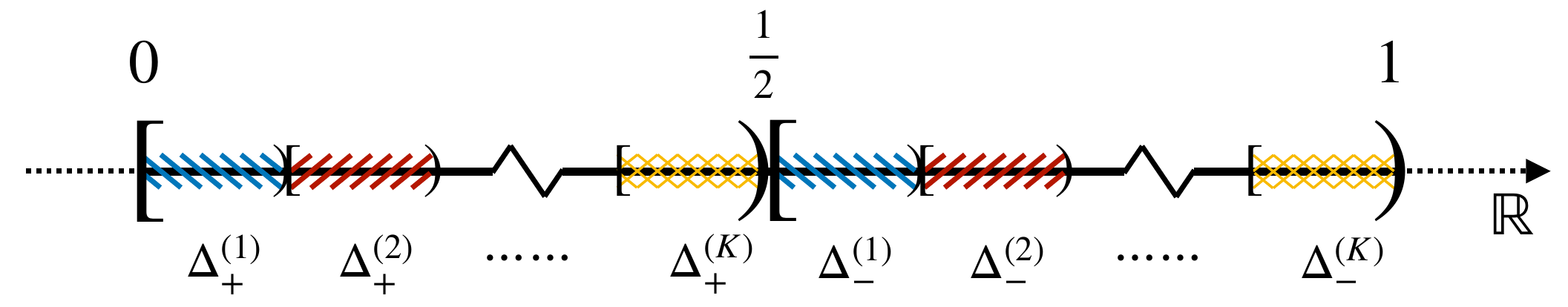} \caption{Partition
                          of state space $\StateSp = [0,1)$.}
		\end{subfigure} \\ \vspace{.5em}
		\begin{subfigure}{0.8\linewidth}
			\centering
                        \widgraph{0.7\linewidth}{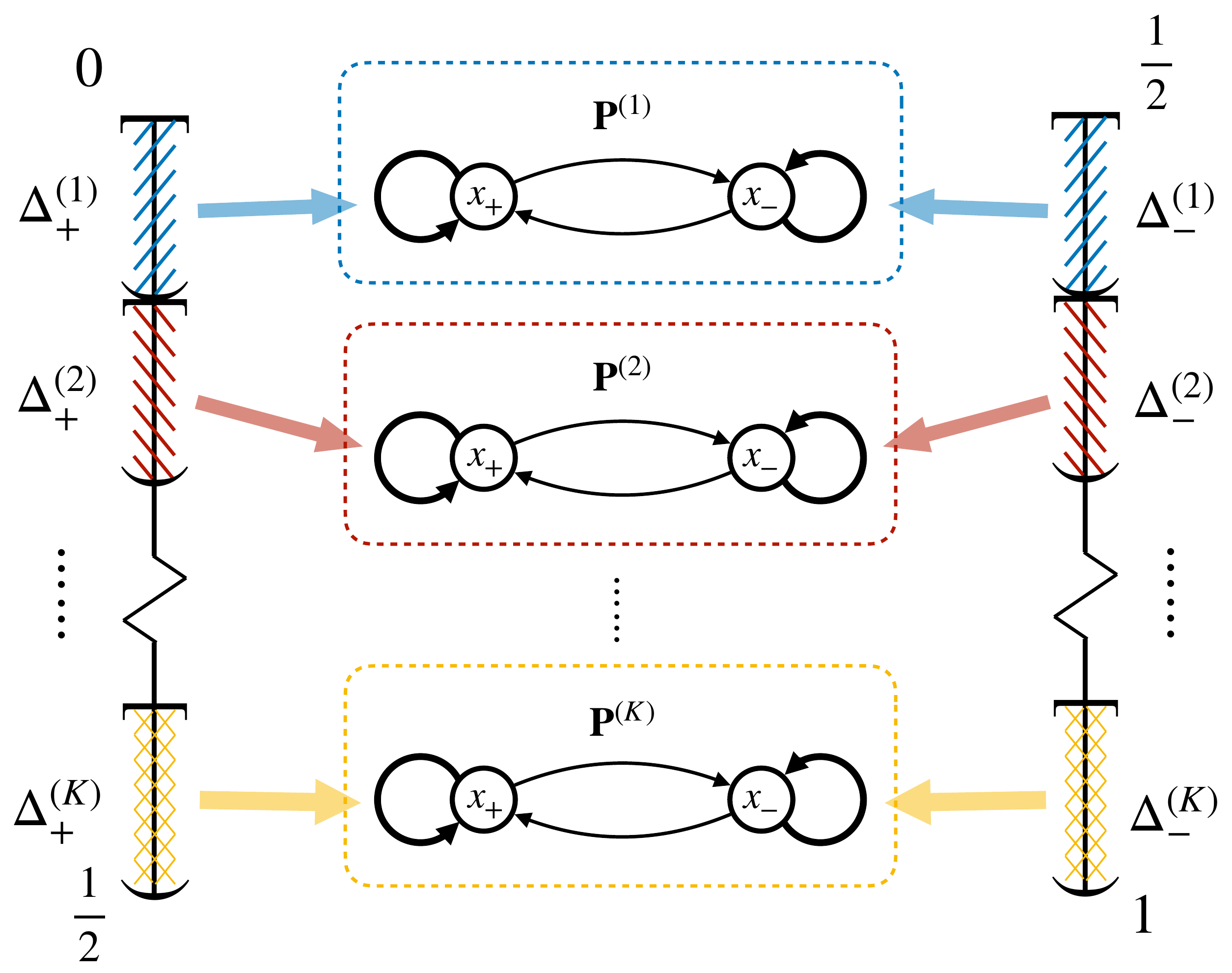}
                        \caption{Construction of transition kernel
                          $\TransOp$ over state space $\StateSp =
                          [0,1)$ using 2-state Markov chains
                            $\bP^{(1)}, \bP^{(2)}, \ldots,
                            \bP^{(K)}$.}
		\end{subfigure}
		\caption{Embedding of two-state Markov chains $\{
                  \bP^{(k)} \}_{k=1}^\numint$ into state space
                  $\StateSp = [0,1)$. Up: partition of state space
                    $\StateSp$ into intervals $\{ \interval_+^{(k)},
                    \interval_-^{(k)} \}_{k=1}^\numint$. Bottom: the
                    transitions on intervals $\interval_+^{(k)}$ and
                    $\interval_-^{(k)}$ follow a local Markov chain
                    $\bP^{(k)}$.}
		\label{fig:MRPembedding}
	\end{center}
	\vspace{2em}
\end{figure}


\subsubsection{Constructing the Hilbert spaces
  $\RKHSA$ and $\RKHSB$}
\label{sec:RKHS}

\begin{figure}[!ht]
  \begin{center}
    \widgraph{.48\linewidth}{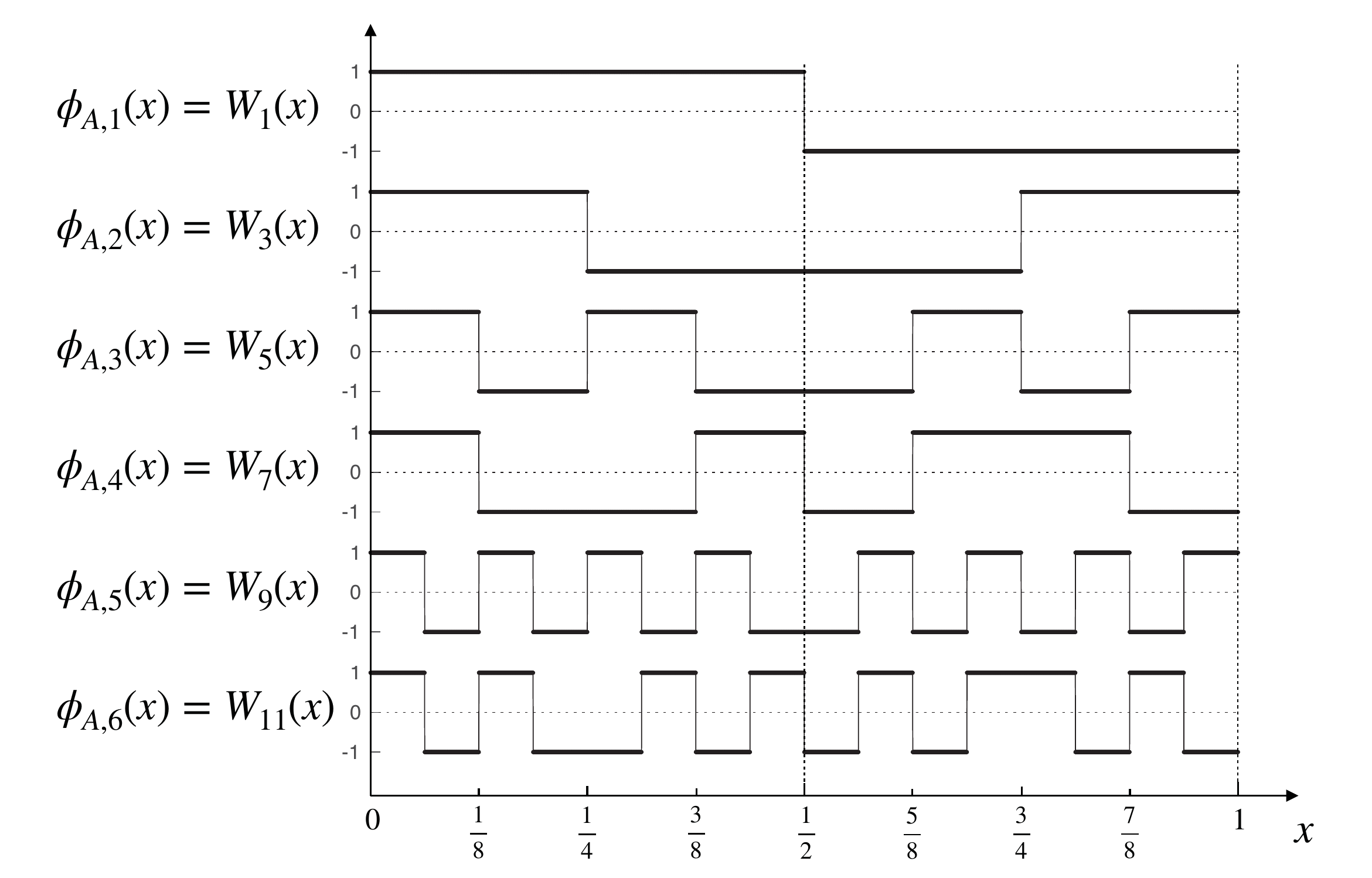}
    \widgraph{.48\linewidth}{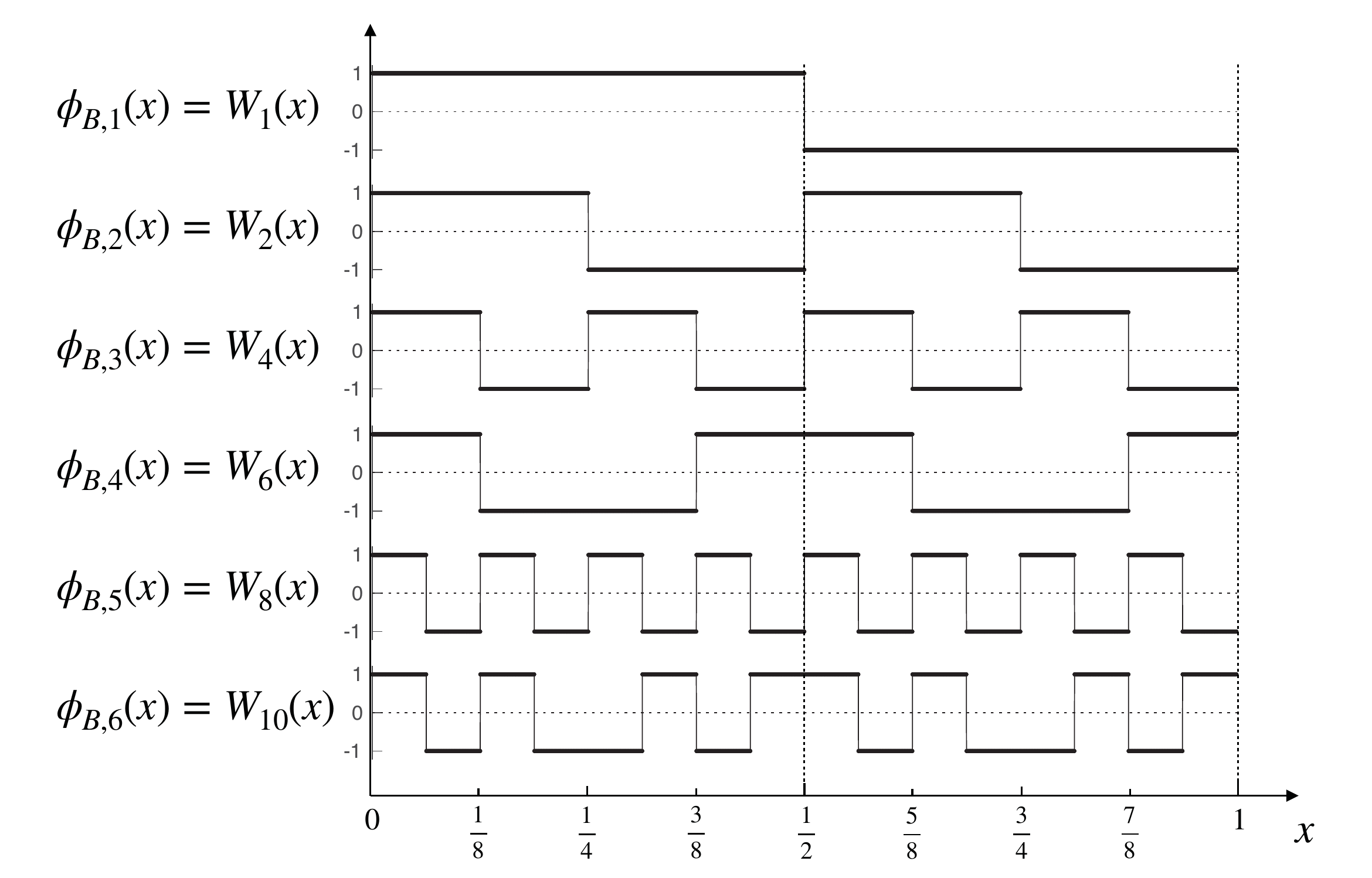}
    \caption{Construction of functions $\{ \baseA{j}
      \}_{j=1}^{\infty}$ and $\{ \baseB{j} \}_{j=1}^{\infty}$.  Left:
      Basis functions $\baseA{1}, \baseA{2}, \ldots,
      \baseA{6}$. Right: Basis functions $\baseB{1}, \baseB{2},
      \ldots, \baseB{6}$. }
    \label{fig:Walsh}
  \end{center}
	\vspace{1em}
\end{figure}

Given a sequence $\{\eig_j \}_{j=1}^\infty$ of non-negative numbers,
we construct two RKHSs $\RKHSA$ and $\RKHSB$ of functions with domain
$\StateSp = [0,1)$, such that both the associated kernels have
  eigenvalues $\{\eig_j\}_{j=1}^\infty$ under the Lebesgue measure
  $\distr$.  Our construction is designed to produce kernels that are
  especially amenable to analysis, and can easily connect to the MRPs
  defined in equation~\eqref{eq:def_p}.  In particular, we leverage
  the Walsh system, an orthonormal basis of $\Lmu$ that can represent
  discrete functions conveniently.  For any $j \in \Natural$, the
  $j$-th Walsh function is given by
\begin{align*}
W_j(\state) \defn (-1)^{\sum_{i=0}^{\infty} k_i x_{i+1}} \quad
\text{for $j = \sum_{i=0}^{\infty} k_i 2^i, \ \state = x_0 +
  \sum_{i=1}^{\infty} \state_i 2^{-i}$ with $k_i, \state_i \in \{ 0, 1
  \}$ and $x_0 \in \Int$.}
\end{align*}
Specifically, the first Walsh function takes the form $W_1(\state) =
\mathds{1}\big\{ \state \in [0, \tfrac{1}{2}) \big\} -
  \mathds{1}\big\{ \state \in [\tfrac{1}{2}, 1) \big\}$.

Below we construct two groups of functions $\{ \baseA{j}
\}_{j=1}^{\infty}$ and $\{ \baseB{j} \}_{j=1}^{\infty}$ that are bases
of $\RKHSA$ and $\RKHSB$ respectively:
\begin{subequations}
  \label{eq:def_basis}
  \begin{align}
    & \baseA{j}(\state) \; \defn \; W_{2j-1}(\state) = W_{j-1}(2x) \;
    W_1(\state) \qquad \text{for $j = 1,2,3,\ldots$} \qquad
    \text{and} \label{eq:phi_r} \\
    & \baseB{j}(\state) \; \defn \; \begin{cases} W_1(\state) \qquad &
      \text{if $j = 1$}, \\
      W_{2(j-1)}(\state) = W_{j-1}(2x) \qquad & \text{if $j =
        2,3,\ldots$}. \end{cases} \label{eq:phi_theta}
\end{align}
\end{subequations}
See \Cref{fig:Walsh} for an illustration of the top $6$ basis
functions in each group.  Based on $\{ \baseA{j} \}_{j=1}^{\infty}$
and $\{ \baseB{j} \}_{j=1}^{\infty}$, we define the kernel functions
$\KerA$ and $\KerB$ as
\begin{align} \label{eq:def_Ker} \Ker_{\iota}(\state, \statenew) \;\defn\; \sum_{j=1}^{\infty} \; \eig_j \; \base_{\iota,j}(\state) \, \base_{\iota,j}(\statenew) \qquad \text{for $\iota = A \text{ or } B$} \end{align}
and let $\RKHSA$ and $\RKHSB$ be the RKHSs induced by $\KerA$ and $\KerB$.

The function classes $\{ \baseA{j} \}_{j=1}^{\infty}$ and $\{
\baseB{j} \}_{j=1}^{\infty}$ above are both orthonormal in
$\Lmu$. Indeed, we have $\int_{\StateSp} \base_{\iota, i}(\state)
\base_{\iota, j}(\state) \distr(\dx) = \mathds{1}\{ i = j \}$ for
$\iota = A \text{ or } B$ and any $i,j \in \Int_+$. Hence, the kernels
$\KerA$ and $\KerB$ have eigenpairs $\{ (\eig_j, \baseA{j})
\}_{j=1}^{\infty}$ and $\{ (\eig_j, \baseB{j}) \}_{j=1}^{\infty}$
associated with the Lebesgue measure $\distr$. \\

Our choice of bases $\{ \baseA{j} \}_{j=1}^{\infty}$ and $\{ \baseB{j}
\}_{j=1}^{\infty}$ is especially tailored to the MRP construction in
\Cref{sec:baseMRP}.  Suppose $\numint$ is a power of $2$ and let $\{
\interval_+^{(k)}, \interval_-^{(k)} \}_{k=1}^\numint$ be a partition
of state space $\StateSp = [0,1)$ given in
  equation~\eqref{eq:def_partition}. Then for any function $f$ that is piecewise constant with respect to
  the partition and satisfies $f(\state) = - f(\state+\tfrac{1}{2})$
  for any $\state \in \big[ 0, \tfrac{1}{2} \big)$, it can always be linearly expressed by functions $\{ \baseA{1},
  \ldots, \baseA{\numint} \}$. Similarly, the
    function set $\{ \baseB{2}, \ldots, \baseB{K} \}$ is capable of
    representing any discrete function $f$ that is adapted to the
    partition and satisfies $f(\state) = f(\state+\tfrac{1}{2})$ for
    any $\state \in \big[ 0, \tfrac{1}{2} \big)$.

	
\subsubsection{Two families of MRPs} \label{sec:familyofMRP}

We now construct a family $\{\MRP_m\}_{m=1}^\PackNum$ of MRP instances
using the transition kernel and reward function defined in
equations~\eqref{eq:def_p}~and~\eqref{eq:def_reward}, with value functions belonging to either $\RKHSA$ or $\RKHSB$. Recall our
definition $\statdim = \max\big\{ j \mid \eig_j \geq \delcritsq
\big\}$ of the effective dimension (at sample size $\numobs$) of the
underlying kernel class. Consider the Boolean hypercube
$\{0,1\}^{\statdim-1}$, and let $\{ \alphabold_m \}_{m=1}^\PackNum$ be
a $\tfrac{1}{4}$-(maximal) packing of it with respect to the
(rescaled) Hamming metric
\begin{align}
  \label{eq:def_Hamming}
\rho_H( \alphabold, \alphabold') \defn \frac{1}{\statdim - 1}
\sum_{k=1}^{\statdim - 1} \mathds{1}\{\alpha_k \neq \alpha_k' \}.
\end{align}
It is known from standard results on metric entropy (e.g., see Example
5.3 in the book~\cite{wainwright2019high}) that there exists such a
set with log cardinality lower bounded as $\log \PackNum \geq
\frac{\statdim}{10}$.  Using this packing of the Boolean hypercube, we
now show how to construct the MRP instance $\MRP_m$ based on the
binary vector $\alphabold_m$. \\

In either Regime A or B, the MRP instances
$\{\MRP_m\}_{m=1}^{\PackNum}$ share the same reward function
$\reward(\state) = \rewardA(\state)$ or $\rewardB(\state)$. Each model
$\MRP_m$ has a transition kernel $\TransOp_m$ that lies within a
neighborhood of a base Markov chain $\TransOp_0$. The difference between
$\TransOp_m$ and $\TransOp_0$ is encoded by vector $\alphabold_m$.
Specifically, we pick a transition kernel $\TransOp_m$ such that the
difference in value functions $\thetastar_m - \thetastar_0$ is a
linear combination of functions $\{ \baseA{j} \}_{j=2}^{\statdim}$ or $\{
\baseB{j} \}_{j=2}^{\statdim}$, with vector
$\alphabold_m$ determining the linear coefficients. Here, $\thetastar_0$ is the base value
function given by equation~\eqref{eq:def_thetabase}.

In our constructions below, we take \mbox{$\numint \defn 2^{\lceil
    \log_2 \statdim \rceil}$}. It is ensured that the functions $\{
\baseA{j} \}_{j=1}^{\statdim}$ and $\{ \baseB{j} \}_{j=1}^{\statdim}$
are piecewise constant with respect to the partition $\big\{
\interval_+^{(k)}, \interval_-^{(k)} \big\}_{k=1}^\numint$.  Recall from definition~\eqref{eq:def_p}
that transition kernel $\TransOp_m$ is determined by local models $\{
\bP_m^{(k)} \}_{k=1}^\numint$. In the sequel, we specify the choices
of $\{ \bP_m^{(k)} \}_{k=1}^\numint$ so that the value function
$\thetastar_m$ has the desired form.


\paragraph{Regime A:}

We first construct MRP instances $\{ \MRP_m \}_{m=1}^{\PackNum}$ that
belong to the model class $\MRPclassA$. In order that the regularity
condition $\supnorm{\rewardA} \leq 1$ holds, we simply set parameter
$\reward \defn 1$ in equation~\eqref{eq:def_reward} so that the reward
function $\rewardA(\state) = W_1(\state)$.

In our design of the transition kernel $\TransOp_m$, the local Markov
chains are set as \mbox{$\bP_m^{(k)} \defn \bPr\big( \parap,
  \Deltap_m^{(k)} \big)$} where $\bPr$ is given in
equation~\eqref{eq:def_2stateMRP} and the parameter $\parap$ is chosen as $\parap \defn
\tfrac{3\SqDis}{\discount}$.  We remark that the uniform distribution
$\distr$ is stationary under model $\TransOp_m$, so we pick
$\distrm{m} = \distr(\TransOp_m) = \distr$.  We take parameters $\{ \Deltap_m^{(k)} \}_{k=1}^K$ such that the value
function $\thetastar_m$ of MRP $\MRP_m$ satisfies
\begin{align}
  \label{eq:def_thetam}
  \thetastar_m = \thetastar_0 -
  \frac{2\discount}{(1-\discount+2\discount\parap)^2} \; f_m,
\end{align}
where $\thetastar_0 = (1-\discount+2\discount\parap)^{-1} W_1$ and
\begin{align}
  \label{eq:def_fm}
  f_m (\state) \defn \sqrt{\frac{\parap \, (1-\parap)}{120 \,
      \numobs}} \; \sum_{j=2}^{\statdim} \alpha_m^{(j-1)} \,
  \baseA{j}(\state).
\end{align}
In order to do so, we set
\begin{align} \label{eq:def_tau}
	\Deltap_m^{(k)} & \defn \frac{1 - \discount + 2
          \discount\parap}{1 - \discount + 2 \discount \parap - 2
          \discount f_m(\state_k)} \; f_m(\state_k) \qquad \text{for $m \in
          [\PackNum]$ and $k \in [K]$}
\end{align}
in the local Markov chain $\bP_m^{(k)} = \bPr\big( \parap,
\Deltap_m^{(k)} \big)$. Recall from equation~\eqref{eq:def_theta} that $\thetastar_m(\state) = \btheta_m^{(k)}(\xplus) = -
\btheta_m^{(k)}(\xneg)$ for any $\state \in
\interval_+^{(k)}$, where $\btheta_m^{(k)} \in \Real^2$ is the value vector induced by model
$\bP_m^{(k)}$ and reward vector $\br=[1,-1]^{\top}$. Under our choice of $\Deltap_m^{(k)}$ in equation~\eqref{eq:def_tau}, the value function $\thetastar_m$ has the desired form as in equation~\eqref{eq:def_thetam}.


\paragraph{Regime B:}

We now construct MRP instances $\{ \MRP_m \}_{m=1}^{\PackNum}$ in family $\MRPclassB$.  In this scenario, we take the parameter
$\parap \defn \frac{1}{8}$ in local models $\bP_m^{(k)} \defn
\bPt\big( \parap, \Deltap_m^{(k)} \big)$ and $\reward \defn \parap +
\tfrac{1-\discount}{2\discount}$ in the
definition~\eqref{eq:def_reward} of reward function so that $\rewardB(\state) = \big( \parap +
\tfrac{1-\discount}{2\discount} \big) \, W_1(\state)$.
Moreover, we set \mbox{$\Deltap_m^{(k)} \defn f_m(\state_k)$} in model $\bP_m^{(k)} =
\bPt\big( \parap, \Deltap_m^{(k)} \big)$,
where
\begin{align}
	\label{eq:def_fm_2}
	f_m (\state) \defn \frac{\parap}{25 \sqrt{\numobs}} \;
	\sum_{j=2}^{\statdim} \alpha_m^{(j-1)} \,
	\baseB{j}(\state)
\end{align}
and $\state_k$ is any point in
interval $\interval_k^+$ or $\interval_k^-$.
The value function $\thetastar_m$ then satisfies
\begin{align}
\label{eq:def_thetam_2}
\thetastar_m = \thetastar_0 + \frac{1}{1-\discount} \; f_m\, .
\end{align}
We observe that in this
case, the transition kernel $\TransOp_m$ has a stationary distribution
\begin{align}
\label{eq:def_mum}
\distrm{m}(\state) \defn
\begin{cases} 
1 + \frac{f_m(\state)}{\parap} & \qquad \text{if $\state \in \interval_k^+$},
\\ 1 - \frac{f_m(\state)}{\parap} & \qquad \text{if $\state \in \interval_k^-$}.
\end{cases}
\end{align}
The measure $\distrm{m}$ is not the uniform distribution $\distr$;
however, our construction ensures that \mbox{$\frac{\diff
    \distrm{m}}{\diff \distr}(\state) \geq \tfrac{1}{2}$.} See
\Cref{append:density_2} for the proof of this claim. \\

\noindent We claim that both of our constructions yield MRPs that
belong to the desired classes:
\begin{lemma}
 \label{lemma:welldefn}
The previously described constructions yield MRP instances $\MRP_m$
such that \mbox{$\{ \MRP_m \}_{m=1}^\PackNum \subset
  \MRPclassA$} in Regime A, and \mbox{$\{ \MRP_m
  \}_{m=1}^\PackNum \subset \MRPclassB$} in Regime B.
\end{lemma}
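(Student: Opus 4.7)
The plan is to check, instance by instance, every condition that defines membership in $\MRPclassA$ or $\MRPclassB$: (i) RKHS membership of $\reward$ and $\thetastar_m$; (ii) agreement of the eigenpairs of the covariance operator $\CovOp(\TransOp_m)$ with the prescribed $\{(\eig_j,\base_j)\}_{j\geq 1}$ in the required sense (equality in Regime A, upper bound for $j\geq 2$ in Regime B); (iii) the eigenfunction bound $\sup_j \supnorm{\base_j(\TransOp_m)} \leq 2$; (iv) the structural bounds $\max\{\hilnorm{\thetastar_m - \reward}, 2\supnorm{\thetastar_m}/\bou\} \leq \newradbar$ and $\stdfun(\thetastar_m) \leq \stdbar$; and in Regime B only, also (v) $\discount\norm{\thetastar_m}_{\distrm{m}} \leq 1$.

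For Regime A, the crucial simplifying observation is that each local matrix $\bPr(\parap,\Deltap_m^{(k)})$ is doubly stochastic, so the uniform distribution on $[0,1)$ is stationary and $\distrm{m} = \distr$. Since $\{\baseA{j}\}_{j=1}^\infty$ is a sub-system of the Walsh basis, it is $\Lmu$-orthonormal, so $\CovOp(\TransOp_m)$ has exactly the prescribed eigenpairs and $\supnorm{\baseA{j}} \leq 1 \leq 2$. The memberships $\rewardA = \baseA{1} \in \RKHSA$ and $\thetastar_m \in \mathrm{span}\{\baseA{j}\}_{j=1}^\statdim \subset \RKHSA$ are immediate. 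For the Hilbert-norm bound, I would expand $\thetastar_m - \rewardA$ in $\{\baseA{j}\}$ and use $\hilnorm{\baseA{j}}^2 = \eig_j^{-1}$; the choice $\parap = 3\SqDis/\discount$, the sample-size requirement~\eqref{eq:ncond}, and $\sum_{j\leq \statdim}\eig_j^{-1}\leq \statdim/\delcritsq$ together yield $\hilnorm{\thetastar_m-\rewardA} \lesssim 1/\SqDis \lesssim \newradbar$. The sup-norm bound $2\supnorm{\thetastar_m}/\bou \lesssim 1/\SqDis \lesssim \newradbar$ is immediate from $|\baseA{j}|\leq 1$. For the variance, since $\thetastar_m$ satisfies the Bellman equation pointwise in the constructed MRP, $\stdfun^2(\thetastar_m) = \discount^2\Exp[\operatorname{Var}(\thetastar_m(\Statetwo)\mid \State)]$; on each pair $\interval_\pm^{(k)}$ this reduces to the two-point variance $(\parap+\Deltap_m^{(k)})(1-\parap-\Deltap_m^{(k)})(\thetastar_m(\xplus)-\thetastar_m(\xneg))^2 \asymp \parap/(1-\discount+2\discount\parap)^2 \asymp 1/\SqDis$, matching the range of $\stdbar^2$ prescribed in~\eqref{EqnRegimeA}.

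For Regime B, the membership checks go through with $\rewardB = (\parap+\SqDis/(2\discount))\baseB{1}$ and $\thetastar_m - \thetastar_0 = (1-\discount)^{-1}f_m$ supported on $\mathrm{span}\{\baseB{j}\}_{j=2}^\statdim$. The new ingredient is $\distrm{m} \neq \distr$; however, $\diff\distrm{m}/\diff\distr = 1 + g_m$ where $g_m(\state) = f_m(\state) W_1(\state)/\parap$ is a linear combination of $\{\baseA{j}\}_{j=2}^\statdim$ (using the Walsh identity $W_1\cdot\baseB{j} = \baseA{j}$) and hence satisfies $g_m(\state+\tfrac12) = -g_m(\state)$, while $\baseB{j}$ for $j\geq 2$ is invariant under the half-shift. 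Consequently $\baseB{k}\baseB{l}$ is even for $k,l\geq 2$ and integrates to zero against the odd $g_m$; this gives $\Exp_{\distrm{m}}[\baseB{k}\baseB{l}] = \mathds{1}\{k=l\}$ for $k,l\geq 2$. Feeding this identity into the Courant--Fischer formula $\eig_j(\TransOp_m) = \min_{\dim S = j-1}\max_{f\perp_\RKHS S,\,\hilnorm{f}=1}\Exp_{\distrm{m}}[f^2]$ with the test subspace $S = \mathrm{span}\{\sqrt{\eig_k}\baseB{k}\}_{k=1}^{j-1}$ yields $\eig_j(\TransOp_m)\leq \eig_j$ for all $j\geq 2$. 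The checks of $\hilnorm$ and $\supnorm$ replicate Regime A with $\parap = 1/8$, and $\discount\norm{\thetastar_m}_{\distrm{m}}\leq 1$ splits by the triangle inequality into $\discount\norm{\thetastar_0}_{\distrm{m}} = \tfrac12$ (a direct calculation using the chosen $\reward$ and $\parap$) plus a perturbation $\discount(1-\discount)^{-1}\norm{f_m}_{\distrm{m}} \lesssim \discount\parap\sqrt{\statdim}/(\sqrt{\numobs}\SqDis)$, negligible under~\eqref{eq:ncond_2}.

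The main obstacle is the eigenvalue upper bound in Regime B; the parity argument above---exploiting that $g_m$ is odd under $\state\mapsto\state+\tfrac12$ while $\baseB{k}\baseB{l}$ for $k,l\geq 2$ is even---is what makes it tractable without an explicit diagonalization of $\CovOp(\TransOp_m)$. A secondary delicacy is obtaining a \emph{two-sided} scaling $\stdfun(\thetastar_m)\asymp\SqDis^{-1/2}$ in Regime A, since the upper bound must fit inside the class constraint~\eqref{EqnRegimeA} while a matching lower bound is what drives the subsequent Fano-step separation $\min_{m\neq\mprime}\munorm{\thetastar_m-\thetastar_{\mprime}} \gtrsim \newradbar\delcrit$.
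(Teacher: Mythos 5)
Your Regime~A analysis and your Regime~B eigenvalue argument are both sound. In fact, the Courant--Fischer route to $\eig_j(\TransOp_m)\le\eig_j$ for $j\ge 2$ is a clean alternative to what the paper does; both rest on the same parity identity $\Exp_{\distrm{m}}[\baseB{k}\baseB{l}]=\mathds{1}\{k=l\}$ for $k,l\ge 2$, but the paper instead writes the Gram form of $\CovOp(\TransOp_m)$ in the RKHS-orthonormal basis $\{\sqrt{\eig_j}\baseB{j}\}_{j=1}^{\statdim}$ as an arrowhead matrix and invokes Cauchy interlacing. Your min-max argument is a transparent and slightly shorter way to get the same eigenvalue inequality.

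There is, however, a genuine gap. Membership in $\MRPclassB$ also requires the eigenfunction bound $\sup_{j\in\Int_+}\supnorm{\base_j(\TransOp_m)}\le 2$, which you list at the outset as item (iii) but never prove for Regime~B. This is not vacuous: since $\distrm{m}\neq\distr$, the eigenfunctions of $\CovOp(\TransOp_m)$ are \emph{not} the Walsh functions $\baseB{j}$ but rather nontrivial mixtures of $\baseB{1}$ with $\{\baseB{j}\}_{j\ge 2}$, precisely because the Gram form is arrowhead rather than diagonal. Courant--Fischer controls Rayleigh quotients, hence eigenvalues, but gives no handle on the coefficients of the resulting eigenvectors and therefore no sup-norm control on the eigenfunctions. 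The paper handles this via the explicit arrowhead eigenvector formula
\[
\bv_j \;\propto\; \bigl(1,\; (\eigtil_j\bI-\bD)^{-1}\bx\bigr)
\]
together with the eigengap condition $\min_{3\le j\le\statdim}\{\sqrt{\eig_{j-1}}-\sqrt{\eig_j}\}\ge\delcrit/(2\statdim)$ imposed in \Cref{thm:lb}(b), showing that $\|\bv_j\|_1\le 2$. That eigengap hypothesis is precisely there to make this step work, and your write-up neither uses it nor needs it anywhere else, which is a signal that the corresponding piece of the argument is missing. Without bounding $\supnorm{\base_j(\TransOp_m)}$, the inclusion $\{\MRP_m\}_{m=1}^{\PackNum}\subset\MRPclassB$ is not established.

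Two minor remarks. First, your triangle-inequality treatment of $\discount\mumnorm{m}{\thetastar_m}\le 1$ gives a looser intermediate bound than the paper's exact Pythagorean split (the cross term vanishes by the same parity cancellation), but it still closes under~\eqref{eq:ncond_2}, so this is fine. Second, your closing remark about needing a two-sided scaling of $\stdfun(\thetastar_m)$ belongs to \Cref{lemma:lb_gap}, not to this lemma; for membership in the class one only needs the upper bound $\stdfun(\thetastar_m)\le\stdbar$.
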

\noindent We prove the Regime A claim in \Cref{append:lb_welldefn},
and the Regime B claim in \Cref{append:lb_welldefn_2}.

\vspace{1em}


We now need to establish upper bounds on the pairwise KL divergences,
and lower bounds on the pairwise $\Lmu$-distances, as stated
informally in equations~\eqref{EqnInformal}.  The precise statements
are as follows:
\begin{lemma}
\label{lemma:lb_KL}
For either of the two classes ($\MRPclassA$ in Regime A, or
$\MRPclassB$ in Regime B), our construction ensures that
\begin{align}
  \kull[\big]{\TransOp_{m}^{1:\numobs}}{\TransOp_{\mprime}^{1:\numobs}}
  \leq \frac{\statdim}{40} \qquad \text{for any $m, \mprime \in
    [\PackNum]$}.
    \end{align}
\end{lemma}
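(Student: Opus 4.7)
The plan is to (i) decompose the joint KL into a per-sample contribution via chain rule and tensorization, (ii) reduce the per-sample KL to a sum of local Bernoulli KL terms governed by the perturbations $\Delta p_m^{(k)}$, and (iii) convert these contributions to $L^2(\distr)$ norms of $f_m - f_{\mprime}$ using orthonormality of the Walsh basis. The chain-rule step is immediate: since the $\numobs$ pairs are i.i.d.\ and each is generated by $\state \sim \distrm{m}$ followed by $\statetwo \sim \TransOp_m(\cdot \mid \state)$,
\[
\kull{\TransOp_m^{1:\numobs}}{\TransOp_{\mprime}^{1:\numobs}} = \numobs \Big[ \kull{\distrm{m}}{\distrm{\mprime}} + \Exp_{\state \sim \distrm{m}} \kull{\TransOp_m(\cdot \mid \state)}{\TransOp_{\mprime}(\cdot \mid \state)} \Big],
\]
so it suffices to bound each bracketed piece by $O(\statdim/\numobs)$.

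In Regime A the marginal piece vanishes because $\distrm{m} = \distrm{\mprime} = \distr$ by construction. For the conditional piece, both rows of $\bPr(\parap, \Delta p)$ put mass $\parap+\Delta p$ on the opposite state, so the elementary inequality $\kull{\mathrm{Ber}(q_1)}{\mathrm{Ber}(q_2)} \leq (q_1 - q_2)^2/[q_2(1-q_2)]$, together with $\parap + \Delta p_{\mprime}^{(k)} \asymp \parap$ under~\eqref{eq:def_tau}, reduces matters to bounding a universal multiple of $[\parap(1-\parap)]^{-1} \Exp_{\state \sim \distr}\bigl[(f_m - f_{\mprime})^2\bigr]$. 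Since $f_m - f_{\mprime} = \sqrt{\parap(1-\parap)/(120 \numobs)} \sum_{j=2}^{\statdim} (\alpha_m^{(j-1)} - \alpha_{\mprime}^{(j-1)}) \baseA{j}$ and the $\baseA{j}$ are orthonormal in $\Lmu$, the integral equals $[\parap(1-\parap)/(120\numobs)] \sum_{j=2}^{\statdim}(\alpha_m^{(j-1)} - \alpha_{\mprime}^{(j-1)})^2 \leq \parap(1-\parap)(\statdim-1)/(120\numobs)$. Multiplying through by $\numobs$ yields a bound well below $\statdim/40$, leaving slack to absorb the $O(f_m/(1-\discount))$ second-order term in~\eqref{eq:def_tau} using the sample-size hypothesis~\eqref{eq:ncond}.

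In Regime B the conditional piece is handled the same way, now with $\Delta p_m^{(k)} = f_m(\state_k)$ directly and $f_m$ as in~\eqref{eq:def_fm_2}; the Walsh calculation again yields an $O(\statdim/\numobs)$ contribution. The new ingredient is the marginal KL, which I control by $\kull{\distrm{m}}{\distrm{\mprime}} \leq \chi^2\kulldiv{\distrm{m}}{\distrm{\mprime}}$ combined with the pointwise lower bound $\distrm{\mprime} \geq 1/2$ established in~\eqref{eq:density}. Since~\eqref{eq:def_mum} gives $|\distrm{m}(\state) - \distrm{\mprime}(\state)| = |f_m(\state) - f_{\mprime}(\state)|/\parap$, the $\chi^2$-divergence is bounded by $2\parap^{-2}\Exp_{\distr}[(f_m - f_{\mprime})^2]$, which by the same Walsh-orthonormality argument is at most $2(\statdim-1)/(625 \numobs)$. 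Summed with the conditional contribution and multiplied by $\numobs$, this gives the claimed $\statdim/40$ bound with substantial room to spare. The main obstacle in both regimes is the careful bookkeeping of constants---the explicit prefactors $1/120$ and $1/25$ in~\eqref{eq:def_fm}--\eqref{eq:def_fm_2}, the ratios $\parap/(1-\parap)$, and the second-order correction in~\eqref{eq:def_tau}---and verifying that they compose into the stated constant $1/40$ once~\eqref{eq:ncond}--\eqref{eq:ncond_2} and the range restrictions on $\parap$ (which keep $\parap + \Delta p \in (0,1)$) are in force.
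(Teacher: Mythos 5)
Your decomposition is a genuine alternative to the paper's.  The paper bounds $\kull{\TransOp_{m'}^{1:n}}{\TransOp_m^{1:n}}$ by $n$ times the $\chi^2$-divergence between the \emph{joint} laws of $(\state,\statetwo)$, then splits that joint $\chi^2$ across the $K$ two-state blocks; in Regime~B it packages the stationary-distribution mismatch into the joint matrices $\bF_{\iota}^{(k)}=\diag(\bmu_{\iota}^{(k)})\bP_{\iota}^{(k)}$.  You instead apply the chain rule $\kull{P_{XY}}{Q_{XY}}=\kull{P_X}{Q_X}+\Exp_{P_X}[\kull{P_{Y\mid X}}{Q_{Y\mid X}}]$ \emph{before} passing to $\chi^2$-type bounds, which makes it explicit that in Regime~A the marginal piece is identically zero (shared uniform stationary law) and in Regime~B the marginal contribution is the new term.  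Both routes funnel through exactly the same ingredients: the Bernoulli $\chi^2$ inequality $\kull{\mathrm{Ber}(q_1)}{\mathrm{Ber}(q_2)}\le (q_1-q_2)^2/[q_2(1-q_2)]$, the density lower bound $\distrm{m}\ge\tfrac12$, and the Walsh-orthonormality identity $\munorm{f_m-f_{m'}}^2\propto \|\alphabold_m-\alphabold_{m'}\|_2^2/n$; in Regime~A the two decompositions essentially collapse to the same computation, while in Regime~B they are genuinely different, and your marginal/conditional split is arguably more interpretable.  One caveat: your claim in Regime~A of ``slack to spare'' is optimistic---the exposed constant budget is the factor $3$ in $3/120 = 1/40$, and the paper's bookkeeping (the factor $2$ from $(\Deltap_{m'}^{(k)}-\Deltap_m^{(k)})^2\le 2(f_{m'}-f_m)^2$ times $8/7$ from the denominator lower bound, giving $16/7<3$) fills essentially all of it, so you do need the $\supnorm{f_m}\le\parap/9$ and $|\Deltap_m^{(k)}|\le\parap/8$ bounds from \Cref{append:lb_welldefn}, not merely qualitative absorption; in Regime~B your accounting does leave real slack.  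Modulo that constant-tracking, the proof is sound.
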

\noindent See \Cref{append:lb_KL} and \Cref{append:lb_KL_2},
respectively, for the proofs corresponding to the classes
$\MRPclassA$ and $\MRPclassB$.
\begin{lemma}
\label{lemma:lb_gap}
Our construction ensures that there exists a universal constant
$\plaincon_1'$ such that
\begin{align}
\label{eq:gap>=}
\min_{m \neq \mprime} \munorm[\big]{\thetastar_m -
  \thetastar_{\mprime}} \geq \plaincon_1' \sqrt{\plaincon} \;
\newradbarreward \delcrit.
\end{align}
The claim holds for both $\{ \MRP_m \}_{m=1}^{\PackNum} \subset
\MRPclassA$ and $\{ \MRP_m \}_{m=1}^{\PackNum} \subset
\MRPclassB$.
\end{lemma}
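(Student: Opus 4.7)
\textbf{Proof plan for Lemma~\ref{lemma:lb_gap}.} The plan is to exploit the fact that in both regimes, the difference $\thetastar_m - \thetastar_{\mprime}$ is a constant multiple (depending only on the discount parameters) of the function $f_m - f_{\mprime}$, whose $\Lmu$-norm is controlled by the packing property of $\{\alphabold_m\}_{m=1}^{\PackNum}$ in the Hamming metric. I would proceed regime by regime.

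\textbf{Regime A:} Using the representation~\eqref{eq:def_thetam}, the difference is
\begin{align*}
\thetastar_m - \thetastar_{\mprime} & = -\frac{2\discount}{(1-\discount+2\discount\parap)^2}\,(f_m - f_{\mprime}).
\end{align*}
Since $\{\baseA{j}\}_{j \geq 1}$ are orthonormal in $\Lmu$ (with $\distr$ Lebesgue), and $\alpha_m^{(j-1)} - \alpha_{\mprime}^{(j-1)} \in \{-1,0,1\}$, the squared differences satisfy
\begin{align*}
\munorm{f_m - f_{\mprime}}^2 = \frac{\parap(1-\parap)}{120\numobs}\,(\statdim-1)\,\rho_H(\alphabold_m,\alphabold_{\mprime}) \geq \frac{\parap(1-\parap)(\statdim-1)}{480\numobs}
\end{align*}
by the $\tfrac{1}{4}$-packing property. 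I would then substitute $\parap = 3\SqDis/\discount$ (which implies $1-\discount+2\discount\parap = 7\SqDis$ and $\parap(1-\parap) \asymp \SqDis/\discount$) to obtain $\munorm{\thetastar_m-\thetastar_{\mprime}}^2 \gtrsim \frac{\discount\,(\statdim-1)}{\SqDis^3\numobs}$.

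\textbf{Regime B:} The computation is parallel, starting from~\eqref{eq:def_thetam_2}, which gives $\thetastar_m - \thetastar_{\mprime} = \tfrac{1}{1-\discount}(f_m - f_{\mprime})$. Orthonormality of $\{\baseB{j}\}_{j\geq 2}$ and the packing bound yield $\munorm{f_m - f_{\mprime}}^2 \geq \parap^2(\statdim-1)/(2500\numobs)$. Plugging in $\parap = 1/8$ gives $\munorm{\thetastar_m - \thetastar_{\mprime}}^2 \gtrsim \frac{\statdim-1}{\numobs\,\SqDis^2}$.

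\textbf{Converting to the target form:} It remains to rewrite these lower bounds in terms of $\newradbar^2\delcrit^2$. By the kernel regularity condition~\eqref{eq:kernel_reg}, $\numobs\delcritsq \leq \frac{4\stdbar^2}{\plaincon\,\newradbar^2\SqDis^2}\,\statdim$, so
\begin{align*}
\newradbar^2\delcritsq \leq \frac{4\stdbar^2}{\plaincon\,\numobs\,\SqDis^2}\,\statdim.
\end{align*}
In Regime A, the constraint $\stdbar^2 \leq \tfrac{1+\discount}{1-\discount} \leq \tfrac{2}{\SqDis}$ converts this into $\newradbar^2\delcritsq \lesssim \frac{\statdim}{\plaincon\,\numobs\,\SqDis^3}$, which matches the order of our lower bound on $\munorm{\thetastar_m-\thetastar_{\mprime}}^2$. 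In Regime B, $\stdbar^2 \leq 1$ gives $\newradbar^2\delcritsq \lesssim \frac{\statdim}{\plaincon\,\numobs\,\SqDis^2}$, again matching. Taking square roots in either case yields the inequality~\eqref{eq:gap>=} with constant $\plaincon_1'\sqrt{\plaincon}$ for a universal $\plaincon_1'>0$.

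\textbf{Main obstacle.} The computation itself is essentially bookkeeping; the only substantive care is in tracking the horizon-dependent prefactors ($\discount/(1-\discount)$ in Regime~A versus the trivial prefactor in Regime~B) and verifying that the bound on $\stdbar$ available in each regime is exactly what is needed to convert $\statdim/\numobs$-scalings into $\newradbar^2\delcritsq$-scalings via the regularity condition~\eqref{eq:kernel_reg}. A subtle point is that $\statdim-1 \geq \statdim/2$ for $\statdim \geq 2$ (with $\statdim \geq 10$ already assumed in the Fano argument), so replacing $\statdim-1$ by $\statdim$ costs only a universal constant.
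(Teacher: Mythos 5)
Your proposal is correct and follows essentially the same route as the paper's proof in Appendices~\ref{append:lb_gap} and~\ref{append:lb_gap_2}: orthonormality of the basis plus the $\tfrac14$-Hamming packing property to lower bound $\munorm{f_m - f_{\mprime}}$, the scalar prefactor relating $\thetastar_m - \thetastar_{\mprime}$ to $f_m - f_{\mprime}$ in each regime, and then the regularity condition~\eqref{eq:kernel_reg} (with the regime-specific bound on $\stdbar$) to convert $\statdim/\numobs$ into $\newradbar^2\delcritsq$. The only cosmetic difference is that the paper factors the horizon dependence through an intermediate inequality $\eta \gtrsim \stdbar/(1-\discount)$ before invoking regularity, whereas you apply the bound on $\stdbar^2$ directly at the last step; the algebra is the same.
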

\noindent This claim is proved in
Appendices~\ref{append:lb_gap}~and~\ref{append:lb_gap_2} for
$\MRPclassA$ and $\MRPclassB$ respectively.


\section{Discussion}
\label{SecConclusion}

In this paper, we have analyzed the performance of a regularized
kernel-based least-squares temporal difference (LSTD) estimator for
policy evaluation.  Our main contribution was to prove non-asymptotic
upper bounds on the statistical estimation error, along with guidance
for the choices of the regularization parameter required to achieve
such bounds.  Notably, our upper bounds depend on the problem
structure via the sample size, the effective horizon, the eigenvalues
of the kernel operator, and the variance of the Bellman residual.  As
we show, the bounds show a wide range of behavior as these different
structural components are altered.  Moreover, we prove a matching
minimax lower bounds over distinct subclasses of problems that
demonstrate the sharpness of our upper bounds.

Our study leaves open a number of intriguing questions; let us mention
a few of them here to conclude.  First, although our bounds are
instance-dependent, this dependence is not as refined as recent
results in the simpler tabular and linear function
settings~\cite{khamaru2020temporal,mou2020optimal}.  In particular,
our current results do not explicitly track the mixing properties of
the transition kernel, which should enter in any such refined
analysis.  Second, the analysis of this paper was carried out under
i.i.d. assumptions on transition sampling model.  However, in
practice, the data may be collected from Markov chain trajectories or
adaptive experiments and the transition pairs are no longer
independent. It would be interesting to see how the dependence in data
affects sample complexity of policy evaluation.  Third, this paper
assumes that samples are drawn from the stationary distribution of the
Markov chain; in practice, such data may not be available, so that it
is interesting to consider extensions of this kernel LSTD estimator
suitable for the off-policy setting.  Last, the results in this paper
use the $\Lmu$-norm to quantify the error.  In applications of policy
evaluation, other error metrics may be of interest, including
pointwise errors ($\big| \thetahat(\state) - \thetastar(\state) \big|$
for a fixed state $\state \in \StateSp$), or sup-norm guarantees
($\supnorm{\thetahat - \thetastar}$).  These are interesting
directions for future study.


\subsection*{Acknowledgements}
This work was partially supported by NSF-DMS grant 2015454, NSF-IIS
grant 1909365, NSF-FODSI grant 202350, and DOD-ONR Office of Naval
Research N00014-21-1-2842 to MJW.


\appendix

\section{Details of simulations}
\label{AppSimDetails}

In this appendix, we provide the details of the families of MRPs used
for the simulation results in Section~\ref{SecSimulations}.


\subsection{Families of MRPs}

We constructed families of MRPs all with state space $\StateSp =
[0,1)$. In all cases, the reward function takes the form
  \begin{subequations}
  \label{eq:exp}
  \begin{align}
    \label{eq:exp_reward}
    \reward(\state) & \defn \mathds{1}\big\{ \state \in \big[0,\tfrac{1}{2}\big)
      \big\} - \mathds{1}\big\{ \state \in \big[ \tfrac{1}{2}, 1 \big) \big
        \},
  \end{align}
  whereas the transition operator is given by
\begin{align}  
    \label{eq:exp_TransOp}
          \TransOp(\statetwo \mid \state) & \defn
	\begin{cases}
	  2 (1-\parap), & \! \text{if $\state, \statetwo \in \big[0, \tfrac{1}{2}
              \big)$ or $\state, \statetwo \in \big[ \tfrac{1}{2}, 1 \big)$}, \\
     2 \parap, & \!  \text{if $\begin{cases} \state \in
         \big[0,\tfrac{1}{2}\big) \\ \statetwo \in
           \big[\tfrac{1}{2},1\big) \end{cases} \!\!\!\!\!\!$ or
       $\begin{cases} \state \in \big[\tfrac{1}{2},1\big) \\ \statetwo \in
           \big[0,\tfrac{1}{2}\big). \end{cases}$}
	\end{cases}
\end{align}
\end{subequations}
\begin{figure}[h]
  \begin{center}
    \includegraphics[keepaspectratio, width=0.25\textwidth]{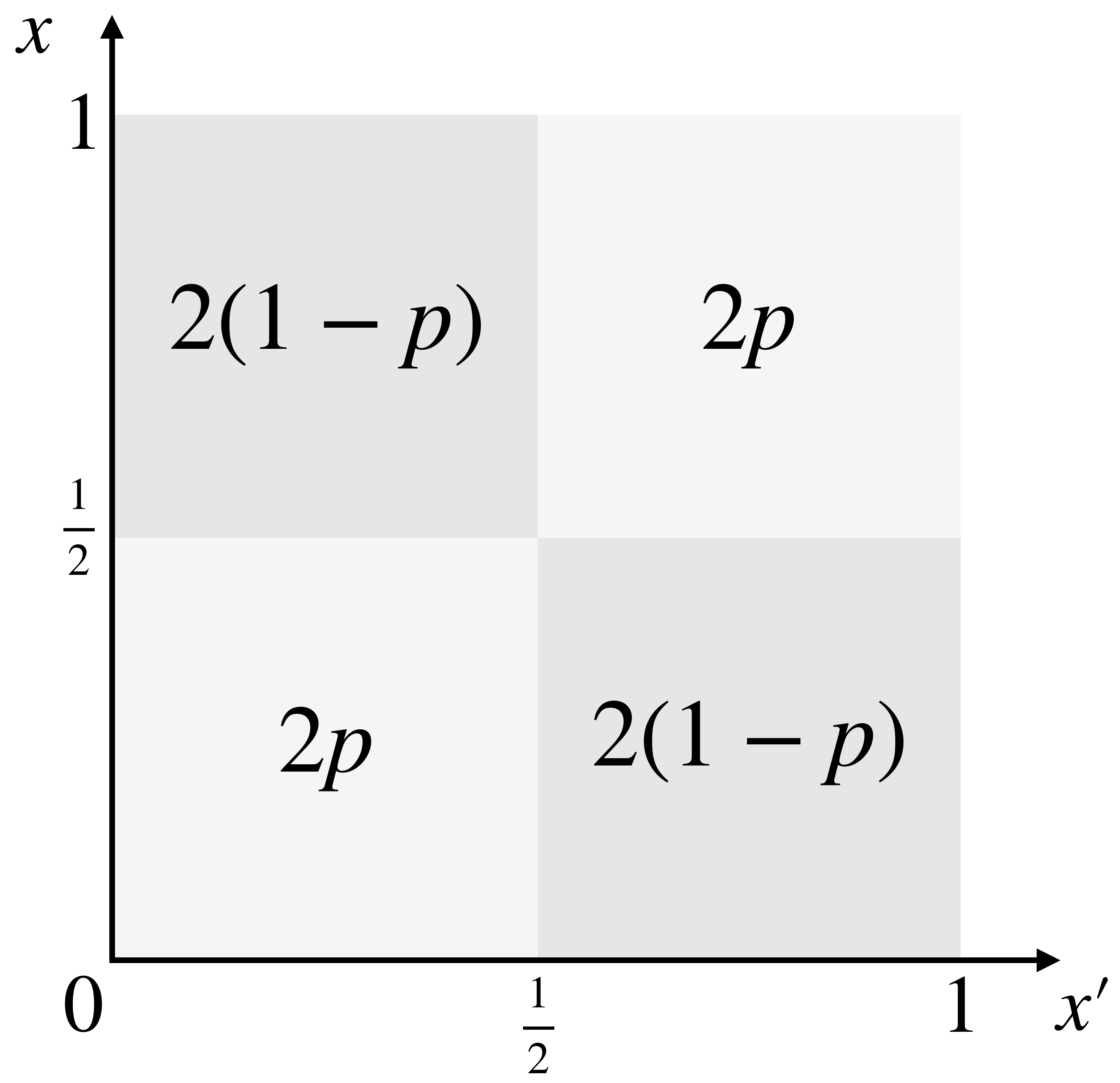}
    \caption{The density of data $\{ (\state_i, \statetwo_i)
      \}_{i=1}^{\numobs}$.} \label{fig:ex_density}
    \vspace{-1em}
  \end{center}
\end{figure}
See Figure~\ref{fig:ex_density} for an illustration of the structure
of this transition function.  By construction, the uniform
distribution $\mu$ is stationary.  The samples $\{ (\state_i, \statetwo_i)
\}_{i=1}^{\numobs}$ were i.i.d. drawn from the pair $(\distr,
\TransOp)$.  The two ensembles of probability transitions (Ensembles A
and B) used in our simulations are distinguished by the choice $\parap
\in \big\{ \frac{1}{4}, \frac{1-\discount}{\discount} \big\}$.
	
In addition to the two ensembles of transition functions, our
experiments involve comparisons between three different kernels, all
of which were constructed based on the Walsh system.  Let $W_j: [0,1)
  \rightarrow \{ -1, 1 \}$ be the $j$-th Walsh function.  For each $i
  = 1, 2, 3$, we define a kernel $\Ker_i: \StateSp \times \StateSp
  \rightarrow \Real$ as
\begin{subequations}  
\begin{align}
  \label{eq:exp_Ker}
  \Ker_i(\state, \statenew) & \defn \sum_{j=1}^{\infty} \eig_j(\Ker_i) \; W_{j-1}(\state)
  W_{j-1}(\statenew).
\end{align}
This choice ensures that each $\Ker_i$ has $\{W_{j-1}\}_{j=1}^\infty$
as its eigenfunctions.  We choose the associated kernel eigenvalues as
\begin{align}
\label{EqnKerEigApp}  
  \eig_j(\Ker_i) & = \begin{cases}  j^{-6/5} & \mbox{for $i = 1$} \\
    j^{-2} & \mbox{for $i = 2$} \\
    \exp \big( - (j-1)^2 \big) & \mbox{for $i = 3$.}
  \end{cases}
\end{align}
\end{subequations}
Let $\RKHS_i$ be the RKHS associated with kernel $\Ker_i$.

\paragraph{Calculations of predicted slopes:}

Let us now calculate the theoretically predicted slopes given in
equations~\eqref{EqnSlopeA} and~\eqref{EqnSlopeB}.  Note that
Corollary~\ref{CorSpline}(b)---see in particular the
bound~\eqref{EqnNewBound}---predicts that the $\Lmu$ error
should scale as
\begin{align}
\label{EqnSlopeScaling}  
\newrad^{\frac{1}{2 \alpha +1}} \Big( \frac{\unibou^2
  \stdfun^2(\thetastar)}{\SqDis^2} \frac{1}{\numobs} \Big)^{\frac{
    \alpha}{2 \alpha+1}}.
\end{align}
Since $\unibou = 1$ for our construction, in order to understand the
scaling with the effective horizon, we need to calculate the
quantities $\stdfun^2(\thetastar)$ and $\newrad$.  Some calculations
show that the value function $\thetastar$ is given by
\begin{align}
\thetastar(\state) & = \tfrac{1}{1 - \discount + 2 \discount \parap} \big(
\mathds{1}\big\{ \state \in \big[0,\tfrac{1}{2}\big) \big\} -
  \mathds{1}\big\{ \state \in \big[\tfrac{1}{2},1\big) \big\} \big).
\end{align}
Consequently, we can see that $\reward, \thetastar \in \RKHS_i$ for $i
= 1,2,3$.

Moreover, we find that variance term $\stdfun^2(\thetastar)$ takes the
form
\begin{subequations}
\begin{align}
\stdfun^2(\thetastar) & = \frac{4 \discount^2 \, \parap (1 -
  \parap)}{(1-\discount+2\discount\parap)^2}.
\end{align}
For each RKHS $\RKHS_i$, the radius $\newrad(\Ker_i)$ is given by
\begin{align}
 \newrad(\Ker_i) & \defn \max \big\{ \norm{\thetastar -
   \reward}_{\RKHS_i}, \tfrac{2 \supnorm{\thetastar}}{\bou(\Ker_i)}
 \big\} = \tfrac{1}{1-\discount+2\discount\parap}\max\Big\{
 \tfrac{\discount(1-2\parap)}{\sqrt{\eig_1(\Ker_i)}},
 \tfrac{2}{\bou(\Ker_i)} \Big\}
\end{align}
with $\bou(\Ker_i) = \sqrt{\sum_j \eig_j(\Ker_i)}$.
\end{subequations}

For the choice $\parap = \frac{1-\discount}{\discount}$, it can be
seen that both $\stdfun^2(\thetastar)$ and $\newrad(\Ker_i)$ scale as
$\tfrac{1}{1 - \discount}$.  Substituting these scalings into
equation~\eqref{EqnSlopeScaling} (and retaining only the dependence on
the effective horizon) yields
\begin{align*}
\Big(\frac{1}{1 - \discount}\Big)^{\tfrac{1}{2 \alpha + 1}} \; \Big(
\frac{1}{(1 - \discount)^3} \Big)^{\tfrac{\alpha}{2 \alpha + 1}} \; =
\; \Big(\frac{1}{1 - \discount} \Big)^{\tfrac{3 \alpha + 1}{2 \alpha +
    1}},
\end{align*}
as claimed in equation~\eqref{EqnSlopeA}.  

For the choice $\parap = \tfrac{1}{4}$, both $\stdfun^2(\thetastar)$
and $\newrad(\Ker_i)$ remain bounded as the effective horizon grows,
so that the corresponding slope is $\Big( \frac{1}{(1 - \discount)^2}
\Big)^{\tfrac{\alpha}{2 \alpha + 1}} \; = \; \Big(\frac{1}{1 -
  \discount} \Big)^{\frac{2 \alpha}{2 \alpha + 1}}$, as claimed in
equation~\eqref{EqnSlopeB}.


\section{Technical results for \Cref{thm:ub}}
\label{append:EmpiricalProcess}

In this part, we prove the technical lemmas that underlie the proof of
\Cref{thm:ub}.  \Cref{sec:proof:lemma:decomp} is devoted to the proof
of \Cref{lemma:decomp}, which provides the basic inequality on the
error.  \Cref{sec:proof:lemma:T1,sec:proof:lemma:T3} are devoted,
respectively, to the proof of \Cref{lemma:T1,lemma:T3} that are used
in the proof of \Cref{thm:ub}.  Recall that these two lemmas provide
high-probability upper bounds on the quantities $T_1$ and $T_3$,
respectively, as defined in \Cref{lemma:decomp}.


\subsection{Proof of Lemma~\ref{lemma:decomp}}
\label{sec:proof:lemma:decomp}
Recall that the estimate $\thetahat$ is defined by the estimating
equation~\eqref{eq:def_thetahat}, whereas the actual population-level estimate
$\thetastar$ satisfies
equation~\eqref{eq:Bellman_project_op}.  Subtracting these two equations
yields
\begin{align*}
\big( \CovOphat + \ridge \IdOp \big) \, \thetahat - \CovOp \thetastar = \big( \CovOphat + \ridge \IdOp - \CovOp \big)
\reward + \discount \big(\CrOphat \thetahat - \CrOp \thetastar\big).
  \end{align*}
We substitute $\thetahat = \thetastar + \Deltahat$ to find
  \begin{align*}
    \big( \CovOphat + \ridge \IdOp - \discount \CrOphat \big) \Deltahat = \big(\CovOphat + \ridge \IdOp -
    \CovOp)(\reward - \thetastar \big) + \discount \big(\CrOphat - \CrOp\big) \thetastar.
  \end{align*}
Again making use of equation~\eqref{eq:Bellman_project_op}, we have $\CovOp(\reward -
\thetastar) + \discount \CrOp \thetastar = 0$, which implies that
  \begin{align*}
    \big( \CovOphat + \ridge \IdOp - \discount \CrOphat \big) \Deltahat = \big( \CovOphat + \ridge \IdOp \big)
    (\reward - \thetastar) + \discount \CrOphat \thetastar.
  \end{align*}
Taking the Hilbert inner product of both sides with $\Deltahat$ then
yields
\begin{align}
    \label{eq:decomp1}
    \hilin[\big]{\Deltahat}{\big( \CovOphat + \ridge \IdOp - \discount \CrOphat \big) \Deltahat} =\hilin[\big]{\Deltahat}{\big( \CovOphat + \ridge \IdOp \big) (\reward -
      \thetastar) + \discount \CrOphat \thetastar}.
\end{align}
The left hand side of equation~\eqref{eq:decomp1} can then be written
as
  \begin{align*} 
    \hilin[\big]{\Deltahat}{\big( \CovOphat + \ridge \IdOp - \discount \CrOphat \big)
      \Deltahat} = \hilin[\big]{\Deltahat}{ (\CovOp - \discount \CrOp)
      \Deltahat} + \ridge \hilnorm{\Deltahat}^2 +
    \hilin[\big]{\Deltahat}{(\GamOpHat - \GamOp) \Deltahat},
  \end{align*}
  where $\GamOp \defn \CovOp - \discount \CrOp$, and $\GamOpHat \defn
  \CovOphat - \discount \CrOphat$.
The right hand side of equation~\eqref{eq:decomp1} satisfies
  \begin{align*} 
    \hilin[\big]{\Deltahat}{\big( \CovOphat + \ridge \IdOp \big) (\reward -
    	\thetastar) + \discount \CrOphat \thetastar} = \hilin[\big]{\Deltahat}{
      \CovOphat(\reward - \thetastar) + \discount \CrOphat \thetastar}
    + \ridge \hilin[\big]{ \Deltahat}{ \reward - \thetastar}.
  \end{align*}
In this way, we reduce equation~\eqref{eq:decomp1} to
\begin{multline}
    \label{eq:decomp3}
 \specfun^2(\Deltahat) \; = \; \hilin[\big]{\Deltahat}{(\CovOp -
   \discount \CrOp) \Deltahat} = \hilin[\big]{\Deltahat}{\CovOphat
   (\reward - \thetastar) + \discount \CrOphat \thetastar} \\
    + \ridge \hilin[\big]{\Deltahat}{\reward - \thetastar} +
    \hilin[\big]{\Deltahat}{(\GamOp - \GamOpHat) \Deltahat} - \ridge
    \hilnorm{\Deltahat}^2.
\end{multline}
We have thus established equality
(ii) in equation~\eqref{eq:decomp} from the lemma statement.

It remains to prove the lower bound (i) in equation~\eqref{eq:decomp}.
Letting $\State \sim \distr$ and $\Statetwo \sim \TransOp(\cdot \mid \State)$, we can
write
\begin{align*}
  \Exp[f(\State)f(\Statetwo)] = \hilin[\big]{ f}{ \CrOp f } \quad \text{and} \quad
  \Exp [f^2(\State)] = \hilin[\big]{ f}{ \CovOp f }.
\end{align*}
Consequently, by applying Young's inequality, we find that
\begin{align}
\label{EqnCrossCovInequal}  
    \underbrace{\Exp [f(\State) f(\Statetwo)]}_{\hilin{f}{\CrOp f}} & \leq
    \frac{1}{2} \Big \{ \Exp[f^2(\State)] + \Exp [f^2(\Statetwo)] \Big \} \; = \;
    \underbrace{\Exp [f^2(\State)]}_{\hilin{f}{\CovOp f}},
  \end{align}
where the equality follows since $\State$ and $\Statetwo$ have the same marginal
distributions, due to the stationarity of $\distr$. 
This completes the proof of \Cref{lemma:decomp}.


\subsection{Proof of Lemma~\ref{lemma:T1}}
\label{sec:proof:lemma:T1}

Define the i.i.d. random variables $\svar_i = \reward(\state_i) -
\thetastar(\state_i) + \discount \thetastar(\statetwo_i)$.  Since $\CovOp
\thetastar = \CovOp \reward + \discount \CrOp \thetastar$, we can
write $\Term_1$ as
\begin{align*}
\Term_1 & = \frac{1}{\numobs} \sum_{i=1}^\numobs \Big(\Deltahat(\state_i)
\svar_i - \Exs[\Deltahat(\state_i) \svar_i] \Big).
\end{align*}
For scalars $t > 0$, we define the family of random variables
\begin{align}
Z_\numobs(t) & \defn \sup_{ \substack{ \munorm{f} \leq t
    \\ \hilnorm{f} \leq \newrad}} \Big| \frac{1}{\numobs}
\sum_{i=1}^\numobs \big(f(\state_i) \svar_i - \Exs[f(\state_i) \svar_i] \big)
\Big|,
\end{align}
and let $\tcrit > 0$ be the smallest positive solution to the
inequality
\begin{align*}
  \Exs[Z_\numobs(t)] \leq \SqDis \, \frac{t^2}{4}.
\end{align*}
Our first step is to relate $\tcrit$ to the critical radius $\delcrit$
involved in \Cref{thm:ub}.
\begin{lemma}
\label{lemma:tcritone}  
There is a universal constant $\plaincon_0$ such that,
for any $\SpecialConstant \in \{ \bou \newrad, \unibou
\stdfun(\thetastar) \}$, we have
\begin{align}
\label{eq:tcritone}    
\tcrit & \leq \ucrit(\SpecialConstant) \defn \plaincon_0 \; \newrad \;
\delcrit(\SpecialConstant).
\end{align}
\end{lemma}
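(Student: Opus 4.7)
The plan is to bound the expected supremum $\Exs[Z_\numobs(t)]$ via standard empirical process machinery -- symmetrization followed by a localized Rademacher complexity bound for RKHS classes -- and then match the resulting inequality against the kernel-based critical inequality $\CI(\SpecialConstant)$. The two settings of $\SpecialConstant$ are handled by the same template but differ in how the noise multipliers $\svar_i = \reward(\state_i) - \thetastar(\state_i) + \discount\,\thetastar(\statetwo_i)$ are controlled.

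First, by symmetrization I dominate $\Exs[Z_\numobs(t)]$ (up to a universal constant) by the expected supremum of the Rademacher-weighted process
\[
\Exs \sup_{f \in \Fclass(t)} \Big| \frac{1}{\numobs} \sum_{i=1}^\numobs \epsilon_i\, \svar_i\, f(\state_i) \Big|,
\]
where $\Fclass(t) \defn \{f \in \RKHS : \hilnorm{f} \leq \newrad,\, \munorm{f} \leq t\}$ and $\epsilon_i$ are i.i.d.\ Rademacher. Expanding $f$ in the kernel eigenbasis and applying Parseval's identity yields the standard local Rademacher bound
\[
\Exs \sup_{f \in \Fclass(t)} \Big| \frac{1}{\numobs} \sum_{i=1}^\numobs \epsilon_i\, f(\state_i) \Big| \;\lesssim\; \frac{t\, \KerComplex(t/\newrad)}{\sqrt{\numobs}},
\]
with $\KerComplex$ exactly the kernel complexity function on the left-hand side of~\eqref{eq:criticalineq}. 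The remaining work is to transfer this bound from the pure Rademacher process to the weighted one, and to verify that the resulting inequality is a rescaled version of $\CI(\SpecialConstant)$.

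For the slow rate ($\SpecialConstant = \bou \newrad$), the reproducing property of $\RKHS$ together with the definition~\eqref{eq:def_R} of $\newrad$ gives the almost-sure bound $|\svar_i| \leq \supnorm{\reward - \thetastar} + \discount\, \supnorm{\thetastar} \leq \bou\, \hilnorm{\reward - \thetastar} + \bou\, \tfrac{2\supnorm{\thetastar}}{\bou} \leq c\, \bou\, \newrad$. Kahane's contraction principle (applied conditionally on the data) then allows me to factor this sup-norm out, producing $\Exs Z_\numobs(t) \lesssim \bou\, \newrad \cdot t\, \KerComplex(t/\newrad)/\sqrt{\numobs}$. Substituting $\delta = t/\newrad$ and requiring the right-hand side to be at most $\SqDis\, t^2/4$ is, up to constants, the critical inequality $\CI(\bou \newrad)$; invoking the elementary monotonicity fact that the smallest positive solution $\delta^\star(s)$ to $\KerComplex(\delta) \leq s\delta$ satisfies $\delta^\star(s/c) \leq c\,\delta^\star(s)$ for any $c \geq 1$, this yields $\tcrit \leq c_0\, \newrad\, \delcrit(\bou\, \newrad)$.

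For the fast rate ($\SpecialConstant = \unibou\, \stdfun(\thetastar)$), the bound $|\svar_i| \leq c\,\bou\,\newrad$ is far too coarse and must be replaced by one in which the dominant noise scale is the Bellman residual standard deviation $\stdfun(\thetastar)$. The plan is to apply a variance-aware concentration bound in the spirit of Talagrand/Bousquet, whose leading term scales with $\stdfun(\thetastar)\cdot t \KerComplex(t/\newrad)/\sqrt{\numobs}$ rather than $\max_i |\svar_i|$ times the same Rademacher term. The main obstacle -- and the place where the eigenfunction bound~\eqref{EqnKerEigBound} enters -- is controlling the residual term, which is driven by $\supnorm{f}$ for $f \in \Fclass(t)$. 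Using $\max_j \supnorm{\base_j} \leq \unibou$, I will derive a localized sup-norm inequality of the form
\[
\supnorm{f} \;\lesssim\; \unibou\, \big( \sqrt{\statdim}\, \munorm{f} + \newrad\, \delcrit(\unibou\,\stdfun(\thetastar)) \big) \quad \text{for } f \in \Fclass(t),
\]
with $\statdim$ the statistical dimension from~\eqref{eq:kernel_reg}; combined with the sample-size condition~\eqref{EqnNbound}, this makes the residual term negligible compared to $\SqDis\, t^2/4$. A final application of the same slope-rescaling argument then produces $\tcrit \leq c_0\, \newrad\, \delcrit(\unibou\, \stdfun(\thetastar))$, completing the lemma.
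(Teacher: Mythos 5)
Your slow-rate argument is essentially the paper's: bound $|\svar_i|\le 2\bou\newrad$, symmetrize, apply contraction to strip the bounded multipliers, reparameterize by $g=f/\newrad$, and match against $\CI(\bou\newrad)$. That part is fine.

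Your fast-rate plan, however, heads in the wrong direction. First, \Cref{lemma:tcritone} holds for $\SpecialConstant=\unibou\stdfun(\thetastar)$ \emph{without} the sample-size condition~\eqref{EqnNbound}; that condition is only invoked in \Cref{lemma:tcrittwo}, which concerns the quadratic process $\Ztil_\numobs$. You are conflating the two lemmas: the localized sup-norm inequality $\supnorm{f}\lesssim \unibou(\sqrt{\statdim}\munorm{f}+\cdots)$ you propose is exactly the kind of ingredient the paper needs for $\Ztil_\numobs$ (where one must contract a quadratic in $f$), but it plays no role here. Second, Talagrand/Bousquet concentration gives tail bounds \emph{around} an expectation — it does not help you bound $\Exs[Z_\numobs(t)]$ itself, which is the actual quantity defining $\tcrit$. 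The paper's route is much more direct: after symmetrization, set $\xi_i=\varepsilon_i\svar_i/\stdfun(\thetastar)$ so the multipliers have zero mean and unit variance, factor $\stdfun(\thetastar)$ out, reparameterize by $g=f/\newrad$, expand $g=\sum_j g_j\base_j$, apply the weighted Cauchy--Schwarz bound $|\sum_j g_j a_j|\le \big(2\sum_j\min\{\tcrit^2/\newrad^2,\eig_j\}a_j^2\big)^{1/2}$ on the localized ellipse, then pass $\Exs$ under the square root by Jensen and use $\Exs[\xi^2\base_j^2(\State)]\le\unibou^2$ (this is where the eigenfunction bound enters). This yields $\Exs[Z_\numobs(\tcrit)]\lesssim \stdfun(\thetastar)\newrad\cdot\frac{\newrad\SqDis}{\unibou\stdfun(\thetastar)}\{\delcritsq+\delcrit\tcrit/\newrad\}$ directly, and matching against $\SqDis\tcrit^2/4$ gives $\tcrit\lesssim\newrad\delcrit$ with no sample-size restriction. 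You would need to replace your variance-aware concentration + sup-norm machinery with this elementary variance computation on the Rademacher process to close the argument.
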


The remainder of the proof applies to both $\ucrit(\bou \newrad)$ or
$\ucrit(\unibou \stdfun(\thetastar))$ without any differences, so we
adopt the generic notation $\ucrit$ for either. Our next step is to
use $\ucrit$ to define an event that allows us to establish the claim
of \Cref{lemma:T1}.  For a given $f \in \RKHS$, we say that inequality
\ref{EqnDefnIf} holds when
\begin{align}
\label{EqnDefnIf}
\Big| \frac{1}{\numobs} \sum_{i=1}^\numobs \big(f(\state_i) \svar_i -
\Exs[f(\state_i) \svar_i] \big) \Big| & \; \geq \; \SqDis \, \ucritsq \max \Big\{ 1, \tfrac{\hilnorm{f}}{\newrad} \Big\} + \SqDis \, \ucrit \,
\munorm{f} \; . \tag{$I(f)$}
\end{align}
Here $\plaincon > 0$ is a universal constant to be specified as part
of the proof.  Now consider the event \mbox{$\Aevent \defn \big \{
  \mbox{$\exists$ $f \in \RKHS$ s.t \ref{EqnDefnIf} holds} \big \}$.}  Note
that conditioned on $\Aevent^c$, we have the bound
\begin{align*}
\Term_1 & \leq \; \SqDis \, \ucritsq \max \Big\{ 1, \tfrac{\hilnorm{\Deltahat}}{\newrad} \Big\} + \SqDis \, \ucrit \,
\munorm{\Deltahat} \\ & \leq \plaincon_0^2 \, \SqDis \, \delcritsq \; \Big \{\hilnorm{\Deltahat}^2 + \newrad^2
\Big \} + \plaincon_0 \, \SqDis \, \newrad \, \munorm{\Deltahat} \delcrit, 
\end{align*}
as desired.

Consequently, the remainder of our proof is directed at bounding
$\Prob[\Aevent]$.  We do so by relating the event $\Aevent$ to a tail
event associated with the random variable $Z_\numobs(\ucrit)$.  In
particular, we make the following claim:
\begin{lemma}
\label{lemma:Aevent:tail}  
We have the upper bound
\begin{align}
  \Prob[\Aevent] & \; \leq \; \Prob \big[ Z_\numobs(\ucrit) \geq
    \SqDis \, \ucrit^2 \big].
  \end{align}
\end{lemma}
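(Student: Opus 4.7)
The plan is to prove the bound by a rescaling (or ``peeling'') argument: given any function $f \in \RKHS$ witnessing $\Aevent$, I will construct from it a function $g$ lying in the feasible set $\{g : \munorm{g} \le \ucrit,\ \hilnorm{g} \le \newrad\}$ that defines the supremum $Z_\numobs(\ucrit)$, and whose empirical process value is at least $\SqDis \, \ucritsq$. Once this is done, the set inclusion $\Aevent \subseteq \{Z_\numobs(\ucrit) \geq \SqDis \, \ucritsq\}$ is immediate and the probability bound follows.

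Concretely, suppose that $f \in \RKHS$ satisfies inequality \ref{EqnDefnIf}. I will set
\begin{equation*}
c \; \defn \; \max \Big\{ 1, \; \tfrac{\munorm{f}}{\ucrit}, \; \tfrac{\hilnorm{f}}{\newrad} \Big\}, \qquad g \; \defn \; f/c.
\end{equation*}
By construction, $\munorm{g} \leq \ucrit$ and $\hilnorm{g} \leq \newrad$, so $g$ is admissible in the supremum defining $Z_\numobs(\ucrit)$. Multiplying inequality \ref{EqnDefnIf} through by $1/c$ gives
\begin{equation*}
\Big| \tfrac{1}{\numobs} \sum_{i=1}^\numobs \big( g(\state_i) \svar_i - \Exs[g(\state_i) \svar_i] \big) \Big| \; \geq \; \tfrac{1}{c} \Big\{ \SqDis \, \ucritsq \max\Big\{ 1, \tfrac{\hilnorm{f}}{\newrad} \Big\} + \SqDis \, \ucrit \, \munorm{f} \Big\}.
\end{equation*}

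The key step is then to verify that the right-hand side above is at least $\SqDis \, \ucritsq$ in all three cases for the value of $c$. If $c = 1$ this is trivial from the first summand. If $c = \hilnorm{f}/\newrad \geq 1 \geq \munorm{f}/\ucrit$, then $\max\{1, \hilnorm{f}/\newrad\} = \hilnorm{f}/\newrad$, and the first summand alone gives $\tfrac{1}{c}\SqDis \ucritsq \cdot \hilnorm{f}/\newrad = \SqDis \ucritsq$. If $c = \munorm{f}/\ucrit \geq 1$, then the second summand alone gives $\tfrac{\ucrit}{\munorm{f}}\SqDis \ucrit \munorm{f} = \SqDis \ucritsq$. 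In all cases the rescaled empirical process at $g$ is at least $\SqDis \ucritsq$, so $Z_\numobs(\ucrit) \geq \SqDis \ucritsq$ on the event $\Aevent$.

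I do not expect any real obstacle here: the lemma is essentially a bookkeeping statement about how the ``structured'' two-term deviation bound in \ref{EqnDefnIf} rescales into the flat bound defining $Z_\numobs$. The only point requiring care is getting the case analysis for the normalization constant $c$ exactly right, so that the two terms $\SqDis\ucritsq\max\{1,\hilnorm{f}/\newrad\}$ and $\SqDis\ucrit\munorm{f}$ each separately ``absorb'' the correct component of $c$ in the regimes they dominate. The substantive work (bounding $\Exs[Z_\numobs(\ucrit)]$ by $\SqDis\ucritsq/4$ via the localized Rademacher analysis underlying \Cref{lemma:tcritone}, and then concentrating $Z_\numobs(\ucrit)$ around its mean) happens outside this lemma.
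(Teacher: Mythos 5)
Your proof is correct and essentially the same as the paper's: the paper peels in two stages (first normalize $\hilnorm{\cdot}$ down to $\newrad$, then normalize $\munorm{\cdot}$ down to $\ucrit$), whereas you collapse both rescalings into a single division by $c = \max\{1, \munorm{f}/\ucrit, \hilnorm{f}/\newrad\}$; the two constructions yield the same final function and both exploit the homogeneity of the empirical process together with the deliberate structure of inequality $I(f)$. One cosmetic remark: the clause ``$c = \hilnorm{f}/\newrad \geq 1 \geq \munorm{f}/\ucrit$'' is stated slightly too restrictively (one only needs $\munorm{f}/\ucrit \leq \hilnorm{f}/\newrad$, not $\leq 1$), but your argument only uses $\max\{1, \hilnorm{f}/\newrad\} = \hilnorm{f}/\newrad$, which holds throughout that case.
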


\noindent Our final lemma provides control on the upper tail of
$Z_\numobs(\ucrit)$.
\begin{lemma}
\label{lemma:Ztail}
There is a universal constant $\plaincon_1$ such that
  \begin{align}
 \Prob \big[Z_\numobs(\ucrit) \geq \SqDis \, \ucritsq \big] & \leq \exp \big(
 -\plaincon_1 \, \numobs \, \tfrac{\ucritsq \, \SqDis^2}{\bou^2 \newrad^2}\big) \; = \; \exp
 \big( -\plaincon_1 \plaincon_0^2 \, \tfrac{\numobs \delcritsq \,
   \SqDis^2}{\bou^2}\big).
  \end{align}
\end{lemma}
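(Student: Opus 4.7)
\textbf{Proof plan for Lemma~\ref{lemma:Ztail}.} The plan is to combine a bound on the expectation $\Exp[Z_\numobs(\ucrit)]$ with Talagrand/Bousquet-type concentration for suprema of empirical processes. The overall strategy is standard in the localized empirical process literature (see e.g. Chapter 14 of \cite{wainwright2019high}), but two pieces need to be put in place carefully.

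First, I would argue that for every $t \geq \tcrit$ one has $\Exp[Z_\numobs(t)] \leq \SqDis t^2/4$. The reason is that the constraint set $\{f \in \RKHS : \munorm{f} \leq t,\ \hilnorm{f} \leq \newrad\}$ is star-shaped around $0$, so the map $t \mapsto \Exp[Z_\numobs(t)]/t$ is non-increasing (a Jensen-type argument via rescaling), which upgrades the defining inequality $\Exp[Z_\numobs(\tcrit)] \leq \SqDis\tcritsq/4$ to all $t \geq \tcrit$. Since $\ucrit \geq \tcrit$ by Lemma~\ref{lemma:tcritone}, this yields $\Exp[Z_\numobs(\ucrit)] \leq \SqDis \ucritsq/4$, so to bound the tail at level $\SqDis \ucritsq$ it suffices to control the deviation $Z_\numobs(\ucrit) - \Exp[Z_\numobs(\ucrit)]$ at scale $\tfrac{3}{4}\SqDis\ucritsq$.

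Second, I would apply Bousquet's inequality to the empirical process indexed by the (countable-by-separability) class $\Fclass = \{(\state,\state') \mapsto f(\state)\svar\ :\ \munorm{f}\leq \ucrit,\ \hilnorm{f}\leq \newrad\}$, where $\svar = \reward(\state)-\thetastar(\state)+\discount\thetastar(\state')$. The two ingredients are a uniform envelope and a uniform variance bound. For the envelope: from the reproducing property, $\supnorm{f} \leq \bou \hilnorm{f} \leq \bou\newrad$, and from the conditions $\supnorm{\thetastar}\leq \bou\newrad/2$ together with $\hilnorm{\reward-\thetastar}\leq \newrad$ we obtain $\supnorm{\svar} \lesssim \bou\newrad$, giving an envelope $B \lesssim \bou^2\newrad^2$. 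For the variance, $\Exp[f(\State)^2\svar^2] \leq \supnorm{\svar}^2 \munorm{f}^2 \lesssim \bou^2\newrad^2 \ucritsq$. Plugging these into Bousquet's inequality with deviation parameter $s = \tfrac{3}{4}\SqDis\ucritsq$ yields
\begin{align*}
\Prob[Z_\numobs(\ucrit) \geq \SqDis\ucritsq]
\;\leq\; \exp\!\Big(-\tfrac{c\,\numobs\,\SqDis^2\ucrit^4}{\bou^2\newrad^2\ucritsq + \bou^2\newrad^2\,\SqDis\ucritsq}\Big)
\;\leq\; \exp\!\Big(-c'\,\tfrac{\numobs\,\SqDis^2\,\ucritsq}{\bou^2\newrad^2}\Big),
\end{align*}
using $\SqDis \leq 1$ to collapse the two terms in the denominator. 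Substituting $\ucritsq = \plaincon_0^2 \newrad^2 \delcritsq$ from Lemma~\ref{lemma:tcritone} gives the claim.

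The main obstacle is arranging the constants so that the linear ($B s$) term in Bousquet's inequality does not dominate; this is why the $\svar$-envelope computation must only contribute a factor of $\bou\newrad$ (not $\bou\newrad\cdot\EffHorizon$), which in turn is why the definition~\eqref{eq:def_R} of $\newrad$ includes both $\hilnorm{\thetastar-\reward}$ and the sup-norm term $\tfrac{2\supnorm{\thetastar}}{\bou}$. A minor point to verify is measurability/separability so that Bousquet's inequality applies to the continuous supremum, but this follows from the continuity of $f \mapsto f(\state)\svar$ in $\hilnorm{\cdot}$ on a separable RKHS.
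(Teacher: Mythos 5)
Your proposal is correct and matches the paper's proof in essentially every step: bound $\Exp[Z_\numobs(\ucrit)]$ by $\SqDis\ucritsq/4$ using $\ucrit \geq \tcrit$ together with the star-shapedness of the index set, then apply Talagrand/Bousquet concentration with the envelope $\supnorm{g} \lesssim \bou^2\newrad^2$ and variance $\Exp[g^2] \lesssim \bou^2\newrad^2\ucritsq$, and finally substitute $\ucritsq = \plaincon_0^2\newrad^2\delcritsq$. The only cosmetic difference is that the paper writes the tail event as $\{Z_\numobs(\ucrit) \geq 2\Exp[Z_\numobs(\ucrit)] + \SqDis\ucritsq/2\}$ and sets $s = \plaincon_1 \ucritsq\SqDis^2/(\bou^2\newrad^2)$ in the additive form of Talagrand's inequality, whereas you work directly with the deviation $\tfrac{3}{4}\SqDis\ucritsq$; these are equivalent.
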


\noindent Combining \Cref{lemma:Aevent:tail,lemma:Ztail} yields the
conclusion of \Cref{lemma:T1} with $\plaincon^\prime = \plaincon_1 \plaincon_0^2$. \\

\noindent It remains to prove our three auxiliary lemmas, and we prove
\Cref{lemma:tcritone,lemma:Aevent:tail,lemma:Ztail} in
\Cref{sec:proof:lemma:tcritone,sec:proof:lemma:Aevent:tail,sec:proof:lemma:Ztail},
respectively.


\subsubsection{Proof of Lemma~\ref{lemma:tcritone}}
\label{sec:proof:lemma:tcritone}

By definition of $\tcrit$, we have $\SqDis \, \tfrac{\tcritsq}{4} =
\Exs[Z_\numobs(\tcrit)]$.  Consequently, we can prove the claim by
upper bounding the expectation.  Let $\{\varepsilon_i\}_{i=1}^\numobs$
be an i.i.d. sequence of Rademacher variables, independent of $\{ (\state_i,
\statetwo_i) \}_{i=1}^\numobs$. From a standard symmetrization argument, we
have
\begin{align*}
  \Exp[Z_\numobs(\tcrit)] & \leq 2 \; \Exp \Bigg[ \sup_{\substack{
        \munorm{f} \leq \tcrit \\ \hilnorm{f} \leq \newrad }} \Big|
    \frac{1}{\numobs} \sum_{i=1}^\numobs \varepsilon_i f(\state_i) \svar_i
    \Big| \Bigg].
\end{align*}

\paragraph{Proof for $\delcrit(\bou \newrad)$:}
We begin by proving the claim when $\SpecialConstant = \bou \newrad$.
Note that for any $(\state_i, \statetwo_i)$, we have
\begin{align*}
|\svar_i| & = \big| \reward(\state_i) - \thetastar(\state_i) + \discount \thetastar(\statetwo_i) \big| \; \leq \; \bou \hilnorm{\thetastar - \reward} +
\supnorm{\thetastar} \; \leq \; 2 \bou \newrad,
\end{align*}
using the definition of $\newrad$.  Consequently, by the
Ledoux-Talagrand contraction, we have
\begin{align*}
\SqDis \, \frac{\tcritsq}{4} \; = \; \Exp[Z_\numobs(\tcrit)] \; \leq \; 4 \bou
\newrad \; \Exp \Bigg[ \sup_{\substack{ \munorm{f} \leq \tcrit
      \\ \hilnorm{f} \leq \newrad }} \Big| \frac{1}{\numobs}
  \sum_{i=1}^\numobs \varepsilon_i f(\state_i) \Big| \Bigg] &
\stackrel{(i)}{=} 4 \bou \newrad^2 \; \Exp \Bigg[ \sup_{\substack{
      \munorm{g} \leq \tcrit/\newrad \\ \hilnorm{g} \leq 1 }} \Big|
  \frac{1}{\numobs} \sum_{i=1}^\numobs \varepsilon_i g(\state_i) \Big|
  \Bigg] \\
& \stackrel{(ii)}{\leq} 4 \bou \newrad^2 \, \frac{\newrad
    \SqDis}{\bou \newrad} \, \Big \{ \delcritsq + \frac{\tcrit
    \delcrit}{\newrad} \Big \} \\
& = 4 \newrad^2 \, \SqDis \, \Big \{ \delcritsq + \frac{\tcrit
    \delcrit}{\newrad} \Big \} 
\end{align*}
where equality (i) follows by reparameterizing the supremum in terms
of the rescaled functions $g = f/\newrad$, and inequality (ii) follows
from the definition of $\delcrit(\bou \newrad)$.  This implies that there is a universal constant $\plaincon_0$ such
that $\tcrit \leq \plaincon_0 \: \newrad \: \delcrit$, as
claimed.

\paragraph{Proof for $\delcrit(\unibou \stdfun(\thetastar))$:}
In this case, we begin by observing that
\begin{align*}
\Exp[Z_\numobs(\tcrit)] & \leq 2 \stdfun(\thetastar) \; \Exp \Bigg[
  \sup_{\substack{ \munorm{f} \leq \tcrit \\ \hilnorm{f} \leq \newrad
  }} \Big| \frac{1}{\numobs} \sum_{i=1}^\numobs f(\state_i) \xi_i \Big|
  \Bigg]
\end{align*}
where the variables $\xi_i = \tfrac{ \varepsilon_i \svar(\state_i,
  \statetwo_i)}{\stdfun(\thetastar)}$ have zero mean and unit
variance.

We now reparameterize the supremum in terms of the rescaled functions
$g = f/\newrad$, so that $\hilnorm{g} \leq 1$ and $\munorm{g} \leq
\tcrit/\newrad$.  In this way, we find that
\begin{align*}
\SqDis \, \frac{\tcritsq}{4} \; = \; \Exs[Z_\numobs(\tcrit)] & \; \leq \; \big( 2
\stdfun(\thetastar) \newrad \big) \; \Exp \Bigg[ \sup_{\substack{
      \munorm{g} \leq \tcrit/\newrad \\ \hilnorm{g} \leq 1 }} \Big|
  \frac{1}{\numobs} \sum_{i=1}^\numobs g(\state_i) \xi_i \Big| \Bigg].
\end{align*}
Recall that $g \in \RKHS$ can be written in the form $g =
\sum_{j=1}^\infty g_j \base_j$ for some coefficients
$\{g_j\}_{j=1}^\infty$ such that $\munorm{g}^2 = \sum_{j=1}^\infty
g_j^2$, and $\hilnorm{g}^2 = \sum_{j=1}^\infty \tfrac{g_j^2}{\eig_j}$.
Consequently, for any $g$ involved in the supremum, we have
\begin{align*}
 \Exp \Bigg[ \sup_{\substack{ \munorm{g} \leq \tcrit/\newrad
       \\ \hilnorm{g} \leq 1 }} \Big| \frac{1}{\numobs}
   \sum_{i=1}^\numobs g(\state_i) \xi_i \Big| \Bigg] & = \Exp \Bigg[
   \sup_{\substack{ \munorm{g} \leq \tcrit/\newrad \\ \hilnorm{g} \leq
       1 }} \bigg| \sum_{j=1}^\infty g_j \Big( \frac{1}{\numobs}
   \sum_{i=1}^\numobs \xi_i \base_j(\state_i) \Big) \bigg| \Bigg] \\
& \leq \Exp \Bigg[ \bigg\{ 2 \sum_{j=1}^\infty \min \big\{
       \tfrac{\tcritsq}{\newrad^2}, \eig_j \big\} \Big( \frac{1}{\numobs}
       \sum_{i=1}^\numobs \xi_i \base_j(\state_i) \Big)^2 \bigg\}^{1/2} \Bigg]
     \\
     & \leq \sqrt{ \frac{2}{\numobs} \sum_{j=1}^\infty \min \big\{
     \tfrac{\tcritsq}{\newrad^2}, \eig_j \big\} \Exs\big[\xi^2 \base^2_j(\State)\big] } \; .
\end{align*}
Since $\Exs[\xi^2] = 1$ and $\base^2_j(\State) \leq \unibou^2$ by assumption,
we have $\Exs\big[\xi^2 \base^2_j(\State)\big] \leq \unibou^2$.  Thus, we have
established that
\begin{align*}
\Exp \Bigg[ \sup_{\substack{ \munorm{g} \leq \tcrit/\newrad
      \\ \hilnorm{g} \leq 1 }} \Big| \frac{1}{\numobs}
  \sum_{i=1}^\numobs g(\state_i) \xi_i \Big| \Bigg] & \leq
\unibou \sqrt{\frac{2}{\numobs} \sum_{j=1}^\infty \min \big\{
	\tfrac{\tcritsq}{\newrad^2}, \eig_j \big\} } \; \leq \; \sqrt{2} \unibou
\, \frac{\newrad \SqDis}{\unibou \stdfun(\thetastar)} \, \Big \{
\delcrit^2 + \frac{\delcrit \tcrit}{\newrad} \Big \},
\end{align*}
where the final inequality follows from the definition of $\delcrit =
\delcrit(\unibou \stdfun(\thetastar))$.

Putting together all the pieces, we have
\begin{align*}
  \SqDis \, \frac{\tcritsq}{4} \leq \Exs[Z_\numobs(\tcrit)] & \leq \big(2
  \stdfun(\thetastar) \newrad \big) \, \frac{\sqrt{2} \, \newrad \,
    \SqDis}{\stdfun(\thetastar)} \, \Big \{ \delcrit^2 +
  \frac{\delcrit \tcrit}{\newrad} \Big \} \; = \; 2 \sqrt{2} \; \newrad^2 \, (1-
  \discount) \, \Big \{ \delcrit^2 + \frac{\delcrit \tcrit}{\newrad} \Big
  \}.
\end{align*}
This implies that there is a universal
constant $\plaincon_0$ such that $\tcrit \leq \plaincon_0 \: \newrad \: \delcrit$, as claimed.


\subsubsection{Proof of Lemma~\ref{lemma:Aevent:tail}}
\label{sec:proof:lemma:Aevent:tail}

First, we claim that if \ref{EqnDefnIf} holds for any function, then we can
find a function $g$ with $\hilnorm{g} \leq \newrad$ such that
\begin{align} \label{EqnAevent1} \Big| \frac{1}{\numobs} \sum_{i=1}^\numobs g(\state_i) \svar_i \Big| & \; \geq \; \SqDis \, \big\{ \ucrit^2 + \ucrit \, \munorm{g} \big\}. \end{align}
  Indeed, if $\hilnorm{f} \leq \newrad$, then we are done.
Otherwise, we define the rescaled function $g =
\tfrac{\newrad}{\hilnorm{f}} f$, and note that it also belongs to the
Hilbert space, and satisfies $\hilnorm{g} = \newrad$.  Moreover, since
$f$ satisfies \ref{EqnDefnIf}, we have
\begin{align*}
  \Big| \frac{1}{\numobs} \sum_{i=1}^\numobs g(\state_i) \svar_i \Big| \; =
  \; \frac{\newrad}{\hilnorm{f}} \, \Big| \frac{1}{\numobs}
  \sum_{i=1}^\numobs f(\state_i) \svar_i \Big| & \geq
  \frac{\newrad}{\hilnorm{f}} \, \Big\{ \SqDis \, \ucritsq \max \Big\{ 1, \tfrac{\hilnorm{f}}{\newrad} \Big\} + \SqDis \, \ucrit \,
  \munorm{f} \Big\} \; \\ & =
  \; \SqDis \, \big\{ \ucrit^2 + \ucrit \, \munorm{g} \big\}.
\end{align*}

Next, we claim that we can also find a function $h$ such that, in
addition, satisfies the bound $\munorm{h} \leq \ucrit$ and 
\begin{align} \label{EqnKentoJumping}
	\Big| \frac{1}{\numobs} \sum_{i=1}^\numobs h(\state_i) \svar_i \Big| & \; \geq \; \SqDis \, \ucritsq.
\end{align}
If the
function $g$ constructed above satisfies $\munorm{g} \leq \ucrit$,
then we are done.  Otherwise, we set $h = \tfrac{\ucrit}{\munorm{g}}
g$. Note that $h \in \RKHS$ satifies $\hilnorm{h} \leq
\hilnorm{g} = \newrad$ and $\munorm{h} = \ucrit$.  Moreover, since
$g$ satisfies inequality~\eqref{EqnAevent1}, we have
\begin{align*}
\Big| \frac{1}{\numobs} \sum_{i=1}^\numobs h(\state_i) \svar_i \Big| \; =
\; \frac{\ucrit}{\munorm{g}} \, \Big| \frac{1}{\numobs}
\sum_{i=1}^\numobs g(\state_i) \svar_i \Big| & \; \geq \;
\frac{\ucrit}{\munorm{g}} \, \SqDis \, \max \big\{ \ucrit^2 + \ucrit \, \munorm{g} \big\} \; \geq \;
\SqDis \, \ucritsq.
\end{align*}
Consequently, we have shown that if the event $\Aevent$ holds, then we
can find a function $h$ such that $\hilnorm{h} \leq \newrad$ and
$\munorm{h} \leq \ucrit$, and such that the lower
bound~\eqref{EqnKentoJumping} holds.  The existence of this $h$
implies that $Z_\numobs(\ucrit) \geq \SqDis \, \ucritsq$, which shows
that $\Aevent \subset \{Z_\numobs(\ucrit) \geq \SqDis \, \ucritsq \}$,
as claimed.

\subsubsection{Proof of Lemma~\ref{lemma:Ztail}}
\label{sec:proof:lemma:Ztail}

By definition of $\ucrit$ from \Cref{lemma:tcritone}, we have
$\ucrit \geq \tcrit$, and hence
\begin{align*}
\Exs[Z_\numobs(\ucrit)] \; \leq \; \SqDis \; \ucrit \, \frac{\tcrit}{4} \; \leq \;
\SqDis \, \frac{\ucritsq}{4},
\end{align*}
using the definition of $\tcrit$.  Our next step is to prove that
there is a universal constant $\plaincon_1$ such that
\begin{align}
  \label{EqnZconc}
  \Prob \Big[Z_\numobs(\ucrit) \geq 2 \, \Exp[Z_\numobs(\ucrit)] +
   \SqDis \, \tfrac{\ucritsq}{2} \Big] \; \leq \; \exp \big( - \plaincon_1 \numobs \, \tfrac{
    \ucritsq \SqDis^2}{\bou^2 \newrad^2} \big) \, . 
\end{align}
The statement given in the lemma follows by combining these two
claims.

It remains to prove the tail bound~\eqref{EqnZconc}.  By definition,
the random variable $Z_\numobs(\tcrit)$ corresponds to the supremum of
an empirical process in terms of functions of the form
\begin{align*}
  g(\state_i, \statetwo_i) & = f(\state_i) \underbrace{\Big \{ \big(\reward(\state_i) -
    \thetastar(\state_i) \big) + \discount \thetastar(\statetwo_i) \Big
    \}}_{\svar(\state_i, \statetwo_i)}
\end{align*}
where $f$ varies, while satisfying the constraints $\munorm{f} \leq
\ucrit$ and $\hilnorm{f} \leq \newrad$.  In order to establish
concentration for this supremum, we can apply Talagrand's theorem
(cf. Theorem~3.27 in the book~\cite{wainwright2019high}).  Doing so
requires us to bound $\|g\|_\infty$, as well as $\Exp[g^2]$,
uniformly over the relevant function class.

Recall that our definition of $\bou$ ensures that $\|h\|_\infty \leq
\bou \, \hilnorm{h}$ for any $h \in \RKHS$.  Consequently, we have
\begin{align*}
  \sup_{\state, \statetwo} |\svar(\state, \statetwo)| & \leq \bou \hilnorm{\thetastar -
    \reward} + \supnorm{\thetastar} \; = \bou \newrad, \quad
  \mbox{and} \quad \|g\|_\infty = \sup_{\state, \statetwo} \big| f(\state) \svar(\state, \statetwo)
  \big| \leq \|f\|_\infty \bou \newrad \leq \bou^2 \newrad^2,
\end{align*}
where we have used the fact that $\supnorm{f} \leq \bou \hilnorm{f}
\leq \bou \newrad$.  On the other hand, we have
\begin{align*}
\munorm{g}^2 & = \Exp \big[ f^2(\State) \svar^2(\State, \Statetwo) \big] \; \leq \;
\bou^2 \newrad^2 \Exp[ f^2(\State) \big] \; \leq \;  \bou^2 \newrad^2
\ucritsq.
\end{align*}
By Talagrand's theorem (cf. equation (3.86) in the
book~\cite{wainwright2019high}), there are universal constants $\plaincon_2$, $\plaincon_3$ such that
\begin{align*}
  \Prob \Big[Z_\numobs(\ucrit) \geq 2 \, \Exp[Z_\numobs(\ucrit)] + \plaincon_2
   \bou \newrad \ucrit \sqrt{s} + \plaincon_3 \bou^2 \newrad^2 s \Big] &
  \leq \exp(-\numobs s).
\end{align*}
Setting $s = \plaincon_1 \tfrac{\ucritsq \SqDis^2}{\bou^2 \newrad^2}$ for a
sufficiently small constant $\plaincon_1$ yields the claim in
equation~\eqref{EqnZconc}.


\subsection{Proof of Lemma~\ref{lemma:T3}}
\label{sec:proof:lemma:T3}

Recall our definitions of the operator $\GamOp = \CovOp - \discount
\CrOp$, as well as its empirical version $\GamOpHat = \CovOphat -
\discount \CrOphat$, as given in \Cref{lemma:decomp}.  Recall the
functional $\specfun^2(f) = \Exp [f^2(\State) - \discount f(\State) f(\Statetwo)]$, as
previously defined in equation~\eqref{EqnDefnSpecfun}.  For each $t > 0$, define
the random variable
\begin{align}
\label{EqnDefnZtil}
\Ztil_\numobs(t) & \defn \sup_{ \substack{ \specfun(f) \leq t
    \\ \hilnorm{f} \leq \newrad}} \big|\hilin{f}{(\GamOpHat -
  \GamOp)f} \big|,
\end{align}
and let $\tcrit > 0$ be the smallest positive solution to the
inequality
\begin{align}
  \label{EqnDefnTcritTwo}
\Exp[\Ztil_\numobs(t)] \; \leq \; \frac{t^2}{8}.
\end{align}
We begin by relating this critical radius $\tcrit$ to our original
radius $\delcrit$:
\begin{lemma}
\label{lemma:tcrittwo}  
There is a universal constant $\plaincon_0$ such that
\begin{align}
  \tcrit \leq \ucrit \defn \plaincon_0 \, \newrad \sqrt{1- \discount} \;
  \delcrit(\bou \newrad).
\end{align}
If, in addition, the sample size condition~\eqref{EqnNbound} holds,
then the same bound holds with $\delcrit(\unibou
\stdfun(\thetastar))$.
\end{lemma}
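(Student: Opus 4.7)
The plan is to bound the expected supremum $\Exs[\Ztil_\numobs(t)]$ by a localized Rademacher complexity of the kernel ball, and then identify the resulting inequality as a rescaled version of the critical inequality $\CI(\bou\newrad)$. First I would apply the standard symmetrization inequality to obtain
\begin{align*}
\Exs[\Ztil_\numobs(t)] \;\leq\; 2\,\Exs\!\!\!\sup_{\substack{\specfun(f)\leq t\\\hilnorm{f}\leq\newrad}}\!\bigg|\frac{1}{\numobs}\sum_{i=1}^\numobs \varepsilon_i \big(f^2(\state_i) - \discount\, f(\state_i)f(\statetwo_i)\big)\bigg|,
\end{align*}
where $\{\varepsilon_i\}$ is an i.i.d.\ Rademacher sequence, and decompose the right-hand side into the $f^2$ piece and the bilinear piece.

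For the $f^2$ piece, since $\supnorm{f}\leq \bou\hilnorm{f}\leq \bou\newrad$, the map $u\mapsto u^2$ is $(2\bou\newrad)$-Lipschitz on the range of $f$, so Ledoux--Talagrand contraction bounds its Rademacher complexity by $4\bou\newrad$ times the Rademacher complexity of the kernel ball itself. For the bilinear piece I would condition on $\{\statetwo_i\}_{i=1}^\numobs$ and view the multipliers $f(\statetwo_i)$ (uniformly bounded by $\bou\newrad$) as fixed Lipschitz coefficients; a second application of the contraction principle for multiplier processes yields the same factor. Next, inequality~\eqref{EqnUsefulLower} converts the $\specfun$-constraint to $\munorm{f}\leq t/\sqrt{1-\discount}$, after which the standard localized Rademacher bound (e.g.~\cite{mendelson2002geometric,wainwright2019high}) gives
\begin{align*}
\Exs\!\!\!\sup_{\substack{\munorm{f}\leq s\\ \hilnorm{f}\leq\newrad}}\!\bigg|\frac{1}{\numobs}\sum_{i=1}^\numobs \varepsilon_i f(\state_i)\bigg|\;\leq\; c\,\sqrt{\frac{1}{\numobs}\sum_{j\geq 1}\min\{s^2,\,\newrad^2\eig_j\}}\;=\;c\,\newrad\,\frac{\eta\,\KerComplex(\eta)}{\sqrt{\numobs}},
\end{align*}
with $\eta = s/\newrad = t/(\newrad\sqrt{1-\discount})$. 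Putting the pieces together,
\begin{align*}
\Exs[\Ztil_\numobs(t)]\;\leq\; c'\,\bou\,\newrad^2\,\frac{\eta\,\KerComplex(\eta)}{\sqrt{\numobs}},
\end{align*}
and setting this equal to $t^2/8 = \newrad^2(1-\discount)\eta^2/8$ rearranges to $\KerComplex(\eta)\leq c''\sqrt{\numobs}(1-\discount)\eta/\bou$, which is exactly $\CI(\bou\newrad)$ up to an absolute constant. Hence $\eta \asymp \delcrit(\bou\newrad)$ and $\tcrit = \newrad\sqrt{1-\discount}\,\eta \leq c_0\newrad\sqrt{1-\discount}\,\delcrit(\bou\newrad)$, proving part~(a).

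For part~(b), the plan is to run the same symmetrization-and-contraction program, but to replace the uniform contraction factor $\bou\newrad$ by the noise-level $\unibou\stdfun(\thetastar)$, in close analogy with the second half of the proof of~\Cref{lemma:tcritone}. The key refinement is a Bernstein-type variance bound: expanding $g_f(\state,\statetwo) = f^2(\state) - \discount f(\state)f(\statetwo)$ and using the inequality $\Exs[f(\State)f(\Statetwo)]\leq\munorm{f}^2$ from~\eqref{EqnCrossCovInequal}, one obtains $\Exs[(f(\State)-\discount f(\Statetwo))^2]\leq 2\specfun^2(f)\leq 2t^2$, and then the eigenfunction bound $\max_j\supnorm{\base_j}\leq\unibou$ can be used to control $\Exs[\base_j^2(\State)]$ just as in the fast-rate analysis of~\Cref{lemma:tcritone}. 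The sample size condition~\eqref{EqnNbound} is precisely what is needed to ensure that the lower-order remainders produced by Bernstein-type (rather than bounded-difference) concentration remain dominated by the leading Rademacher term, so that the resulting critical inequality is $\CI(\unibou\stdfun(\thetastar))$ rather than $\CI(\bou\newrad)$. The main obstacle, I expect, will be the bilinear cross term $f(\state)f(\statetwo)$: unlike in~\Cref{lemma:tcritone}, there is no external noise variable $\svar_i$ supplying a clean variance proxy, so the $\stdfun(\thetastar)$-scale must be extracted from the quadratic form itself, most likely via a two-stage conditioning (first over $\{\statetwo_i\}$, then over $\{\state_i\}$) that preserves the eigenfunction-based variance control through the multiplier contraction.
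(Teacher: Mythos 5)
Your part~(a) argument is essentially identical to the paper's: symmetrize, split $g_f = f^2 - \discount f\cdot f'$ into the square and bilinear pieces, contract each using $\supnorm{f}\leq\bou\newrad$, convert the $\specfun$-constraint to an $\munorm$-constraint via~\eqref{EqnUsefulLower}, and match the resulting Rademacher complexity bound to the critical inequality $\CI(\bou\newrad)$. That part is fine.

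For part~(b), you correctly identify the core difficulty -- the quadratic form $f^2 - \discount ff'$ contains no noise variable, so $\stdfun(\thetastar)$ cannot be pulled out the way $\svar_i$ is pulled out in the proof of \Cref{lemma:tcritone} -- but your proposed resolution is not the one that works, and I don't think it can be made to work. You suggest extracting the $\stdfun(\thetastar)$-scale via a Bernstein-type variance bound and two-stage conditioning. The variance bound $\Exs[(f(\State)-\discount f(\Statetwo))^2]\leq 2t^2$ is indeed available (the paper uses essentially this in \Cref{lemma:Ztiltail}), but it involves the scale $t$, not $\stdfun(\thetastar)$, and conditioning on $\{\statetwo_i\}$ still leaves multipliers of size $\supnorm{f}$, so you never get below $\bou\newrad$ in the contraction factor this way. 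The actual mechanism in the paper is different: the sample size condition~\eqref{EqnNbound} is used to establish a \emph{sup-norm bound on the localized ellipse itself}, namely that every $f\in\Ellipse$ satisfies
\begin{align*}
\supnorm{f} \;\leq\; \beta \;\defn\; \frac{\unibou\,\stdfun(\thetastar)}{128}\Big\{1 + \frac{\tcrit}{\delcrit\newrad\sqrt{1-\discount}}\Big\}.
\end{align*}
This is proved by expanding $f = \sum_j f_j\base_j$, using the two constraints on $f$ to get $\supnorm{f}\leq\newrad\unibou\sqrt{2\sum_j\min\{\tcrit^2/(\newrad^2\SqDis),\eig_j\}}$, and then invoking the definition of $\delcrit(\unibou\stdfun(\thetastar))$ together with~\eqref{EqnNbound}. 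Once the uniform sup-norm bound $\beta$ is in hand, the argument is literally the same as part~(a) with $\bou\newrad$ replaced by $\beta$, and the extra factor $(1 + \tcrit/(\delcrit\newrad\sqrt{1-\discount}))$ is absorbed by the self-bounding structure of the resulting inequality in $\tcrit$. So~\eqref{EqnNbound} does not control ``lower-order Bernstein remainders''; it is what makes the localized functions pointwise small on the $\unibou\stdfun(\thetastar)$ scale, which is the genuinely missing idea in your sketch.
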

\noindent See \Cref{sec:proof:lemma:tcrittwo} for the proof of this
claim. \\

With this set-up, the remainder of proof has a structure similar to
that of \Cref{lemma:T1}. We say that a function $f \in \RKHS$
satisfies inequality \ref{EqnDefnJf} if
\begin{align}
\label{EqnDefnJf}
\big|\hilin{f}{(\GamOpHat - \GamOp) f}\big| \; \geq \; \ucritsq \; \max \Big \{ 1,
\tfrac{\hilnorm{f}^2}{\newrad^2} \Big \} + \frac{\specfun^2(f)}{2}. \tag{$J(f)$}
\end{align}
Now consider the event \mbox{$\Bevent \defn \big \{ \mbox{$\exists$ $f
    \in \RKHS$ s.t \ref{EqnDefnJf} holds} \big \}$.}  Note that conditioned on
$\Bevent^c$, we have the bound
\begin{align*}
\big|\hilin{\Deltahat}{(\GamOpHat - \GamOp) \Deltahat}\big| & \leq \ucritsq \,
\max \Big \{ 1, \tfrac{\hilnorm{\Deltahat}^2}{\newrad^2} \Big \} +
\frac{\specfun^2(\Deltahat)}{2} \; \leq \plaincon_0^2 \, \delcritsq \, \SqDis \,
\max \Big \{ \newrad^2, \hilnorm{\Deltahat}^2 \Big \} +
\frac{\specfun^2(\Deltahat)}{2}
\end{align*}
where the second inequality follows from the definition of $\ucrit$
given in equation~\eqref{EqnDefnTcritTwo}.  This is the bound claimed
in the statement of \Cref{lemma:T3}.  Consequently, it suffices to
bound the probability $\Prob[\Bevent]$.

We begin by upper bounding the probability of $\Prob[\Bevent]$ in
terms of the tail behavior of the random variable
$\Ztil_\numobs(\ucrit)$ as follows:
\begin{lemma}
\label{lemma:Bevent:tail}  
We have the upper bound
\begin{align}
  \Prob[\Bevent] & \leq \Prob \Big[ \Ztil_\numobs(\ucrit) \geq
    \tfrac{\ucritsq}{2} \Big].
\end{align}
\end{lemma}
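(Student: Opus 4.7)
The plan is to establish the deterministic set inclusion $\Bevent \subset \{\Ztil_\numobs(\ucrit) \geq \ucritsq/2\}$, from which the probability bound follows by monotonicity. This parallels the proof of Lemma~\ref{lemma:Aevent:tail}, but is slightly cleaner because both sides of the defining inequality $J(f)$ scale quadratically under $f \mapsto \alpha f$, whereas in $I(f)$ the left-hand side was only linear in $f$. Concretely, assuming some $f \in \RKHS$ witnesses $\Bevent$, I will produce a function $h \in \RKHS$ satisfying $\hilnorm{h} \leq \newrad$, $\specfun(h) \leq \ucrit$, and $|\hilin{h}{(\GamOpHat - \GamOp) h}| \geq \ucritsq/2$; by the definition of $\Ztil_\numobs(\ucrit)$ in equation~\eqref{EqnDefnZtil}, such an $h$ immediately forces $\Ztil_\numobs(\ucrit) \geq \ucritsq/2$.

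The construction of $h$ proceeds by two successive rescalings. First, if $\hilnorm{f} > \newrad$, I set $g = (\newrad/\hilnorm{f})\,f$. Since both $|\hilin{\cdot}{(\GamOpHat - \GamOp)\cdot}|$ and $\specfun^{2}(\cdot)$ are quadratic while the max-term in $J(f)$ collapses from $\hilnorm{f}^{2}/\newrad^{2}$ down to $1$ after the rescaling, $g$ will satisfy the cleaner bound
\[
|\hilin{g}{(\GamOpHat - \GamOp) g}| \;\geq\; \ucritsq + \tfrac{1}{2}\specfun^{2}(g), \qquad \hilnorm{g}=\newrad.
\]
Second, if $\specfun(g) > \ucrit$, I set $h = (\ucrit/\specfun(g))\,g$; the contraction factor is at most $1$, so $\hilnorm{h} \leq \newrad$ is preserved, and multiplying the previous display by $(\ucrit/\specfun(g))^{2}$ gives
\[
|\hilin{h}{(\GamOpHat - \GamOp) h}| \;\geq\; \Big(\tfrac{\ucrit}{\specfun(g)}\Big)^{\!2}\Big(\ucritsq + \tfrac{1}{2}\specfun^{2}(g)\Big) \;\geq\; \tfrac{1}{2}\,\Big(\tfrac{\ucrit}{\specfun(g)}\Big)^{\!2}\specfun^{2}(g) \;=\; \tfrac{1}{2}\ucritsq.
\]
If either rescaling is vacuous because the relevant norm already respects its constraint, that stage is simply skipped.

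I do not anticipate a real technical obstacle here; the argument is a direct two-stage peeling in the spirit of Lemma~\ref{lemma:Aevent:tail}. The subtlety that must be handled carefully is the bookkeeping around the factor $\tfrac{1}{2}$ attached to $\specfun^{2}(f)$ in the definition of $J(f)$: the additive slack $\ucritsq$ inside $J(f)$ is consumed by the first rescaling when $\hilnorm{f} > \newrad$, and it is the $\tfrac{1}{2}\specfun^{2}$ term that remains to absorb the second rescaling and still deliver the target threshold $\ucritsq/2$ rather than a strictly smaller constant. Once this bookkeeping is verified, the set inclusion, and hence the lemma, is immediate.
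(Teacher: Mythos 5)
Your two-stage peeling argument is exactly the paper's proof: first rescale by $\newrad/\hilnorm{f}$ to reduce the $\max$-term and obtain $|\hilin{g}{(\GamOpHat-\GamOp)g}| \geq \ucritsq + \tfrac{1}{2}\specfun^2(g)$, then rescale by $\ucrit/\specfun(g)$ and drop the $\ucritsq$ summand to land on the threshold $\ucritsq/2$. The bookkeeping you flag is handled correctly, and the resulting set inclusion $\Bevent \subset \{\Ztil_\numobs(\ucrit) \geq \ucritsq/2\}$ gives the claim.
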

\noindent See \Cref{sec:proof:lemma:Bevent:tail} for the proof. \\
    
\noindent Our second lemma provides control on the upper tail of
$\Ztil_\numobs(\ucrit)$.
\begin{lemma}
  \label{lemma:Ztiltail}
There is a universal constant $\plaincon_1$ such that
\begin{align}
  \Prob \Big[ \Ztil_\numobs(\ucrit) \geq \tfrac{\ucritsq}{2} \Big] &
  \leq \; \exp \big(-\plaincon_1 \numobs \tfrac{\ucritsq}{\bou^2 \newrad^2} \big)
  \; = \; \exp \big(-\plaincon_1 \plaincon_0^2 \, \tfrac{\numobs \delcrit^2
    \SqDis}{\bou^2} \big).
\end{align}
\end{lemma}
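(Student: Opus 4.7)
The plan is to combine three ingredients: a scaling (peeling-style) argument that transfers control on the expectation from $\tcrit$ to $\ucrit$, uniform envelope and variance bounds for the relevant empirical process, and a one-shot application of Talagrand's concentration inequality. The fact that $\hilin{f}{(\GamOpHat - \GamOp)f}$ is a quadratic function of $f$ is what makes the scaling argument go through; Lemma~\ref{lemma:tcrittwo} supplies the required inequality $\ucrit \geq \tcrit$ which is the trigger for that argument.

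First I would establish the expectation bound $\Exp[\Ztil_\numobs(\ucrit)] \leq \ucritsq/8$. For any $f$ with $\specfun(f) \leq \ucrit$ and $\hilnorm{f} \leq \newrad$, the rescaled function $\tilde{f} \defn (\tcrit/\ucrit)f$ satisfies $\specfun(\tilde{f}) \leq \tcrit$ and $\hilnorm{\tilde{f}} \leq (\tcrit/\ucrit)\newrad \leq \newrad$ (using $\ucrit \geq \tcrit$), so $\tilde{f}$ is feasible for $\Ztil_\numobs(\tcrit)$. Bilinearity gives $|\hilin{f}{(\GamOpHat - \GamOp)f}| = (\ucrit/\tcrit)^2 |\hilin{\tilde{f}}{(\GamOpHat - \GamOp)\tilde{f}}|$; taking the sup and then expectation yields $\Exp[\Ztil_\numobs(\ucrit)] \leq (\ucrit/\tcrit)^2 \Exp[\Ztil_\numobs(\tcrit)] \leq (\ucrit/\tcrit)^2 \cdot \tcritsq/8 = \ucritsq/8$.

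Next I would bound the envelope and variance of the empirical process. Write $\Ztil_\numobs(\ucrit) = \sup_f |\tfrac{1}{\numobs}\sum_i g_f(\state_i,\statetwo_i)|$ with $g_f(\state,\statetwo) = f^2(\state) - \discount f(\state) f(\statetwo) - \Exp[f^2(\State) - \discount f(\State) f(\Statetwo)]$. The reproducing property yields $\|f\|_\infty \leq \bou \hilnorm{f} \leq \bou \newrad$, so $\|g_f\|_\infty \leq 2(1+\discount)\bou^2\newrad^2 \leq 4 \bou^2 \newrad^2$. For the variance, factor $f^2(\state) - \discount f(\state)f(\statetwo) = f(\state)\bigl(f(\state) - \discount f(\statetwo)\bigr)$ to get $\Exp[g_f^2] \leq \|f\|_\infty^2 \Exp[(f(\State) - \discount f(\Statetwo))^2]$. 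A direct expansion combined with the identity $\specfun^2(f) = \Exp[f^2(\State)] - \discount \Exp[f(\State)f(\Statetwo)]$ and stationarity gives $\Exp[(f(\State) - \discount f(\Statetwo))^2] = (\discount^2 - 1)\munorm{f}^2 + 2\specfun^2(f) \leq 2\specfun^2(f) \leq 2 \ucritsq$. Hence $\sup_f \Exp[g_f^2] \leq 2 \bou^2 \newrad^2 \ucritsq$.

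Finally, I would apply Talagrand's concentration inequality (Theorem~3.27 in~\cite{wainwright2019high}) with envelope $b = 4\bou^2\newrad^2$ and variance $\sigma^2 = 2 \bou^2\newrad^2 \ucritsq$, taking deviation parameter $u = \ucritsq/4$. Since $2\Exp[\Ztil_\numobs(\ucrit)] \leq \ucritsq/4$, we get $\Prob[\Ztil_\numobs(\ucrit) \geq \ucritsq/2] \leq \exp\bigl(-\plaincon\, \min\{\numobs u^2/\sigma^2,\, \numobs u/b\}\bigr)$, and both terms in the minimum are of order $\numobs \ucritsq/(\bou^2\newrad^2)$, yielding the claim. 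Substituting $\ucritsq = \plaincon_0^2 \newrad^2 \SqDis \delcritsq$ from Lemma~\ref{lemma:tcrittwo} produces the alternative form $\exp(-\plaincon_1 \plaincon_0^2 \numobs \delcritsq \SqDis/\bou^2)$. The main delicate step is the variance bound, where the correct dependence on $\ucritsq$ (rather than a cruder $\munorm{f}^2$ or $\ucritsq/\SqDis$ bound) is essential to obtain the tail bound without a spurious factor of $\SqDis$.
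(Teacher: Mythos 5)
Your proof is correct and follows essentially the same path as the paper's: bound $\Exp[\Ztil_\numobs(\ucrit)]$ by $\ucritsq/8$ using $\ucrit \geq \tcrit$, bound the envelope by $\order(\bou^2\newrad^2)$ and the variance by $2\bou^2\newrad^2\ucritsq$ via the factorization $g(x,x') = f(x)(f(x)-\discount f(x'))$, then invoke Talagrand. Your one substantive refinement is the explicit quadratic-rescaling argument ($\tilde{f} = (\tcrit/\ucrit)f$ with $\Ztil_\numobs(\ucrit) \leq (\ucrit/\tcrit)^2 \Ztil_\numobs(\tcrit)$ pointwise) for the expectation step, which is cleaner than the paper's appeal to "standard properties of Rademacher complexities" and correctly exploits the degree-two homogeneity of the quadratic form; the variance bound is also verified exactly via the identity $\Exp[(f(X)-\discount f(X'))^2] = (\discount^2-1)\munorm{f}^2 + 2\specfun^2(f)$, which agrees with the paper's slightly looser chain of inequalities.
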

\noindent See \Cref{sec:proof:lemma:Ztiltail} for the proof of this
claim.


\subsubsection{Proof of Lemma~\ref{lemma:tcrittwo}}
\label{sec:proof:lemma:tcrittwo}

Define the random variables $\statenew_i = (\state_i, \statetwo_i)$ along with the
function \mbox{$g(\statenew_i) \defn f^2(\state_i) - \discount f(\state_i) f(\statetwo_i)$,}
and note that $\Ztil_\numobs(\tcrit)$ is a supremum of the empirical
process $\big\{ \tfrac{1}{\numobs} \sum_{i=1}^\numobs (g(\statenew_i) -
\Exs[g(\statenew_i)]) \big\}$ as $g$ varies as a function of $f$, and $f$ satisfies
the constraints $\hilnorm{f} \leq \newrad$ and $\specfun(f) \leq
\tcrit$.

By a standard symmetrization argument, we have
\begin{align*}
\Exp [ \Ztil_\numobs(\tcrit)] & = \Exp \Bigg[ \sup_{
    \substack{\specfun(f) \leq \tcrit \\ \hilnorm{f} \leq \newrad}} \Big|
  \frac{1}{\numobs} \sum_{i=1}^\numobs\big( g(\statenew_i) - \Exs[g(\Statenew)] \big)
  \Big| \Bigg] \; \leq \; 2 \Exp \Bigg[ \sup_{ \substack{\specfun(f)
      \leq \tcrit \\ \hilnorm{f} \leq \newrad}} \Big| \frac{1}{\numobs}
  \sum_{i=1}^\numobs \varepsilon_i g(\statenew_i) \Big| \Bigg],
\end{align*}
where $\{\varepsilon_i\}_{i=1}^\numobs$ is an i.i.d. sequence of
Rademacher variables.

Now by the lower bound~\eqref{EqnUsefulLower}, the constraint
$\specfun(f) \leq \tcrit$ implies that $\munorm{f} \leq
\tfrac{\tcrit}{\sqrt{1-\discount}}$.  Introducing the shorthand
$\Ellipse = \big \{ f \in \RKHS \mid \munorm{f} \leq
\frac{\tcrit}{\sqrt{1-\discount}}, \; \hilnorm{f} \leq \newrad \big
\}$, we have
\begin{align*}
\tfrac{1}{2} \Exp [ \Ztil_\numobs(\tcrit)] & \leq \Exp \Bigg[ \sup_{f
    \in \Ellipse} \Big| \frac{1}{\numobs} \sum_{i=1}^\numobs
  \varepsilon_i \big(f^2 (\state_i) - \discount f(\state_i) f(\statetwo_i) \big) \Big| \Bigg] \\
& \leq \Exp \Bigg[ \sup_{f \in \Ellipse} \Big| \frac{1}{\numobs}
  \sum_{i=1}^\numobs \varepsilon_i f^2(\state_i) \Big| \Bigg] + \Exp \Bigg[
  \sup_{f \in \Ellipse} \Big| \frac{1}{\numobs} \sum_{i=1}^\numobs
  \varepsilon_i f(\state_i) f(\statetwo_i) \Big| \Bigg].
\end{align*}

From this point, our proof diverges, depending on the
two choices of $\delcrit$.

\paragraph{Proof for $\delcrit(\bou \newrad)$:}
In this case, we use the fact that $\|f\|_\infty \leq \bou \newrad$.
Combined with the Ledoux-Talagrand contraction, we find that
\begin{align*}
  \tfrac{1}{2} \Exp [ \Ztil_\numobs(\tcrit)] & \leq 2 \bou \newrad
  \; \Exp \Bigg[ \sup_{f \in \Ellipse} \Big| \frac{1}{\numobs}
    \sum_{i=1}^\numobs \varepsilon_i f(\state_i) \Big| \Bigg] + 2 \bou
  \newrad \; \Exp \Bigg[ \sup_{f \in \Ellipse} \Big| \frac{1}{\numobs}
    \sum_{i=1}^\numobs \varepsilon_i f(\state_i) \Big| \Bigg] \\
& = 4 \bou \newrad \; \Exp \Bigg[ \sup_{f \in \Ellipse} \Big|
  \frac{1}{\numobs} \sum_{i=1}^\numobs \varepsilon_i f(\state_i) \Big|
  \Bigg].
\end{align*}
Define the rescaled ellipse $\Elltil \defn \tfrac{1}{\newrad} \Ellipse
\; = \; \big\{ h \in \RKHS \mid \munorm{h} \leq \frac{\tcrit}{\newrad
  \sqrt{1- \discount}}, \; \hilnorm{h} \leq 1 \big \}$.  By
construction, we have
\begin{align*}
\Exp \Bigg[ \sup_{f \in \Ellipse} \Big| \frac{1}{\numobs}
  \sum_{i=1}^\numobs \varepsilon_i f(\state_i) \Big| \Bigg] & = \newrad \; \Exp
\Bigg[ \sup_{h \in \Elltil} \Big| \frac{1}{\numobs} \sum_{i=1}^\numobs
  \varepsilon_i h(\state_i) \Big| \Bigg] .
\end{align*}
Finally, by definition of $\delcrit(\bou \newrad)$, we are guaranteed
that
\begin{align*}
\Exp \Bigg[ \sup_{h \in \Elltil} \Big| \frac{1}{\numobs}
  \sum_{i=1}^\numobs \varepsilon_i h(\state_i) \Big| \Bigg] & \leq
\frac{\newrad \SqDis}{\bou \newrad} \, \max \Big
\{ \delcritsq, \; \delcrit \tfrac{\tcrit}{\newrad \sqrt{1-\discount}}
\Big \}.
\end{align*}
Putting together the pieces, using the definition of $\tcrit$, we have
shown that
\begin{align*}
\frac{\tcrit^2}{8} \; = \; \Exp [ \Ztil_\numobs(\tcrit)] & \leq \big(4
\bou \newrad^2 \big) \; \frac{\newrad \SqDis}{\bou \newrad} \, \max \Big
\{ \delcritsq, \: \delcrit \tfrac{\tcrit}{\newrad \sqrt{1-\discount}}
\Big \} \; = 4 \newrad^2 \, \SqDis \, \max \Big \{ \delcritsq, \: \delcrit
\tfrac{\tcrit}{\newrad \sqrt{1-\discount}} \Big \}.
\end{align*}
This implies that there is a universal constant $\plaincon_0$ such that
$\tcritsq \leq \plaincon_0^2 \, \newrad^2 \, \SqDis \, \delcritsq$, as
claimed in the statement of the lemma.

\paragraph{Proof for $\delcrit(\unibou \stdfun(\thetastar))$:}
Recall the ellipse $\Ellipse = \big \{ f \in \RKHS \mid \munorm{f}
\leq \frac{\tcrit}{\sqrt{1-\discount}}, \; \hilnorm{f} \leq \newrad
\big \}$. We claim that it suffices to show that under the assumed
bound~\eqref{EqnNbound} on the sample size, we have
\begin{align}
\label{EqnSupNormBound}
\sup_{f \in \Ellipse} \supnorm{f} \leq \beta \defn \frac{\unibou
  \stdfun(\thetastar)}{128} \Big\{ 1 + \frac{\tcrit}{\delcrit \newrad
  \sqrt{1-\discount} } \Big \}.
\end{align}
Indeed, if this bound holds, then we can perform the Ledoux-Talagrand
contraction with the constraint $\supnorm{f} \leq \beta$, so as to conclude that
\begin{align*}
\tfrac{1}{2} \Exp [ \Ztil_\numobs(\tcrit)] & \leq 4 \beta \; \Exp \Bigg[
  \sup_{f \in \Ellipse} \Big| \frac{1}{\numobs} \sum_{i=1}^\numobs
  \varepsilon_i f(\state_i) \Big| \Bigg].
\end{align*}
Proceeding as before, we find that
\begin{align*}
\frac{\tcritsq}{16} \; = \; \tfrac{1}{2} \Exp [ \Ztil_\numobs(\tcrit)]
& \leq 4 \beta \; \frac{\newrad^2 \SqDis}{\unibou
  \stdfun(\thetastar)} \; \Big \{ \delcritsq + \frac{\delcrit
  \tcrit}{\newrad \sqrt{1-\discount}} \Big \} \\
& = \frac{\newrad^2 \SqDis}{32} \; \Big \{ 1 +
\frac{\tcrit}{\delcrit \newrad \sqrt{1-\discount}} \Big \} \; \Big \{
\delcritsq + \frac{\delcrit \tcrit}{\newrad \sqrt{1-\discount}} \Big
\} \\
& = \frac{\newrad^2 \SqDis}{32} \; \Big \{ \delcrit^2 + 2
\frac{\delcrit \tcrit}{\newrad \sqrt{1-\discount}} \Big \} +
\frac{\tcritsq}{32}.
\end{align*}
This bound implies that $\tcrit \leq \plaincon_0 \, \newrad \, \sqrt{1-\discount} \: \delcrit$,
as claimed.

Accordingly, let us prove the bound~\eqref{EqnSupNormBound}. Any $f
\in \Ellipse$ has the expansion $f = \sum_{j\geq 1} f_j \base_j$ for
some coefficients such that $\sum_{j=1}^\infty f_j^2 \leq
\tcritsq/\SqDis$ and $\sum_{j=1}^\infty f_j^2/\eig_j \leq
\newrad^2$.  Consequently, we have
\begin{align*}
  \supnorm{f} \; = \sup_{\state} \Big| \sum_{j=1}^\infty f_j \base_j(\state) \Big| & \leq
  \newrad \; \sup_{\state} \Big \{ 2 \sum_{j=1}^\infty \min \big\{
  \tfrac{\tcrit^2}{\newrad^2 \SqDis}, \eig_j \big\} \base_j^2(\state) \Big
  \}^{1/2} \\
 & \stackrel{(i)}{\leq} \newrad \unibou \; \Big \{ 2 \sum_{j=1}^\infty \min
  \big\{ \tfrac{\tcrit^2}{\newrad^2 \SqDis}, \eig_j \big\} \Big \}^{1/2} \\
& \stackrel{(ii)}{\leq} \newrad \unibou \; \sqrt{2 \numobs} \; \frac{\newrad
    \SqDis}{\unibou \stdfun(\thetastar)} \; \Big \{ \delcritsq +
  \frac{\delcrit \tcrit}{\newrad \sqrt{1-\discount}} \Big \} \\
& = \Big \{ \frac{\sqrt{2 \numobs} \newrad^2 \SqDis
      \delcritsq}{\unibou \stdfun^2(\thetastar)} \Big \} \; \unibou
    \stdfun(\thetastar) \Big \{ 1 + \frac{\tcrit}{\delcrit \newrad
      \sqrt{1-\discount} } \Big \} \\
& \stackrel{(iii)}{\leq} \frac{\unibou \stdfun(\thetastar)}{128} \Big
    \{ 1 + \frac{\tcrit}{\delcrit \newrad \sqrt{1-\discount} } \Big
    \},
\end{align*}
where step (i) uses the fact that $\|\base_j\|_\infty \leq \unibou$ by
assumption; step (ii) uses the definition of \mbox{$\delcrit =
  \delcrit(\unibou \stdfun(\thetastar))$;} and step (iii) follows from
the assumed bound~\eqref{EqnNbound}.


\subsubsection{Proof of Lemma~\ref{lemma:Bevent:tail}}
\label{sec:proof:lemma:Bevent:tail}

We first claim that if there is some $f \in \RKHS$ such that \ref{EqnDefnJf}
holds, then we can construct a function $g \in \RKHS$ such that
$\hilnorm{g} \leq \newrad$, and
\begin{align} \label{EqnBevent1}
	\big| \hilin{f}{(\GamOpHat - \GamOp) f} \big| \geq \ucritsq + \frac{\specfun^2(g)}{2}.
\end{align}
Indeed, if the given function $f$ satisfies $\hilnorm{f} \leq
\newrad$, then we are done.  Otherwise, we define the rescaled
function $g \defn \tfrac{\newrad \: f}{\hilnorm{f}} \in \RKHS$, which
satisfies $\hilnorm{g} = \newrad$.  Now observe that
\begin{align*}
\big| \hilin{g}{(\GamOpHat - \GamOp)g} \big| \; = \;
\frac{\newrad^2}{\hilnorm{f}^2} \, \big| \hilin{f}{(\GamOpHat - \GamOp)f}
\big| & \geq \frac{\newrad^2}{\hilnorm{f}^2} \, \Big \{ \ucritsq \max
\big \{ 1, \tfrac{\hilnorm{f}^2}{\newrad^2} \big \} +
\frac{\specfun^2(f)}{2} \Big \} \nonumber \\
%
& \geq \ucritsq +  \frac{\specfun^2(g)}{2},
\end{align*}
as claimed.

We now claim that that there must exist some function $h$ with
$\hilnorm{h} \leq \newrad$ and $\specfun(h) \leq \ucrit$ such that
$\big|\hilin{h}{(\GamOpHat - \GamOp) h}\big| \geq \frac{\ucritsq}{2}$.
Indeed, if the $g$ constructed above satisfies $\specfun(g) \leq
\ucrit$, then this function has the desired property.  Otherwise, we
may assume that $\specfun(g) > \ucrit$, and define $h =
\tfrac{\ucrit}{\specfun(g)} g \in \RKHS$. Observe that
$\hilnorm{h} \leq \hilnorm{g} = \newrad$, and $\specfun(h) = \ucrit$
by construction.  Moreover, since $g$ satisfies the lower
bound~\eqref{EqnBevent1}, we have
\begin{align*}
\big| \hilin{h}{(\GamOpHat - \GamOp)h} \big| \; = \;
\frac{\ucritsq}{\specfun^2(g)} \, \big| \hilin{g}{(\GamOpHat -
  \GamOp)g} \big| & \geq \frac{\ucritsq}{\specfun^2(g)} \, \Big \{
\ucritsq + \frac{\specfun^2(g)}{2} \Big \} \\
& \geq \frac{\ucritsq}{2}.
\end{align*}
Putting together the pieces, we have established that the event
$\Bevent$ is contained within the event $\big\{\Ztil_\numobs(\ucrit) \geq
\tfrac{\ucritsq}{2} \big\}$, as claimed.


\subsubsection{Proof of Lemma~\ref{lemma:Ztiltail}}
\label{sec:proof:lemma:Ztiltail}

As usual, we proceed by first bounding the mean
$\Exp[\Ztil_\numobs(\ucrit)]$, and then establishing concentration
around this mean.  Since $\ucrit \geq \tcrit$, by standard
properties of Rademacher complexities, we have  \begin{align}
    \label{EqnZtilMean}
    \Exp[\Ztil_\numobs(\ucrit)] & \leq \ucrit \, \frac{\tcrit}{8} \; \leq
    \frac{\ucritsq}{8}.
\end{align}
Consequently, in order to complete the proof, it suffices to show that
\begin{align}
\label{EqnZtilConc}    
\Prob\Big[\Ztil_\numobs(\ucrit) \geq 2 \, \Exp[\Ztil_\numobs(\ucrit)] +
  \tfrac{\ucritsq}{4}\Big] & \; \leq \; \exp \big(-\plaincon_1 \numobs \, \tfrac{\ucritsq}{\bou^2 \newrad^2} \big).
  \end{align}

Recall from the proof of \Cref{lemma:tcrittwo} that the random
variable $\Ztil_\numobs(\ucrit)$ is the supremum of an empirical
process defined by the random variables $\statenew = (\state, \statetwo)$ and functions of
the form \mbox{$g(\statenew) = f^2(\state) - \discount f(\state) f(\statetwo)$.}  In order to
apply Talagrand's concentration inequality, we need to bound
$\|g\|_\infty$ and $\Exp[g^2(\Statenew)]$ uniformly over the class.  We have
\begin{align*}
\|g\|_\infty \leq (1 + \discount) \, \|f\|_\infty^2 \; \leq 2 \bou^2
\newrad^2
\end{align*}
where the final inequality uses the facts that $\discount \leq 1$, and
$\|f\|_\infty \leq \bou \newrad$ for any function with $\hilnorm{f}
\leq \newrad$.  On the other hand, again using the fact that
$\|f\|_\infty \leq \bou \newrad$, we have
\begin{align*}
\Exp[g^2(\Statenew)] & = \Exp \Big[ f^2(\State) \big(f(\State) - \discount f(\Statetwo) \big)^2
  \Big] \\
  & \leq \bou^2 \newrad^2 \; \Exp \Big[f^2(\State) + \discount^2 f^2(\Statetwo) - 2
  \discount f(\State) f(\Statetwo) \Big] \\
 & \stackrel{(i)}{\leq} 2 \bou^2 \newrad^2 \; \underbrace{\Exp
  \Big[f^2(\State) - \discount f(\State) f(\Statetwo) \Big]}_{\specfun^2(f)} \;
\stackrel{(ii)}{\leq} \; 2 \bou^2 \newrad^2 \ucritsq,
\end{align*}
where inequality (i) uses the fact that $\Exp[f^2(\Statetwo)] = \Exp[f^2(\State)]$
and $\discount \leq 1$; and inequality (ii) uses the fact that
$\specfun^2(f) \leq \ucritsq$ for all functions $f$ in the relevant
class.

Consequently, by applying Talagrand's theorem (cf. equation (3.86) in
the book~\cite{wainwright2019high}), there are universal constants
$\plaincon_2, \plaincon_3$ such that
\begin{align*}
  \Prob \Big[\Ztil_\numobs(\ucrit) \geq 2 \, \Exp[\Ztil_\numobs(\ucrit)]
    + \plaincon_2 \bou \newrad \ucrit \sqrt{s} + \plaincon_3 \bou^2 \newrad^2 s
    \Big] & \leq \exp(-\numobs s).
\end{align*}
Setting $s = \plaincon_1 \tfrac{\ucritsq}{\bou^2 \newrad^2}$ for a
sufficiently small constant $\plaincon_1$ yields the claim.


\section{Auxiliary results for Theorem~\ref{thm:lb}}
\label{sec:proof_tablb}

This appendix is devoted to various auxiliary results associated with
\Cref{thm:lb}.  In \Cref{AppLower}, we discuss the sample size
requirements of the theorem, along with the conditions on the kernel
eigenvalues.  In the remaining subsections, we provide various
technical results used in the proof.  \Cref{append:lb} is devoted to
the analysis of MRP class $\MRPclassA$; whereas
\Cref{append:lb_2} concerns family $\MRPclassB$.


\subsection{Conditions of \Cref{thm:lb}}
\label{AppLower}

In this appendix, we discuss the requirements needed for our lower
bounds to be valid.  First, we claim that the sample size conditions
required for the lower bounds in \Cref{thm:lb}, parts (a) and (b), are
are weaker than the requirement~\eqref{EqnNbound} for the upper bounds
in \Cref{thm:ub}.  It is clear that the constraint in part (a) and the
first inequality in constraint~\eqref{eq:ncond_2} are looser than
bound~\eqref{EqnNbound}. Additionally, the second inequality in
condition~\eqref{eq:ncond_2} is easy to satisfy if the eigengap
$(1-\frac{\eig_2}{\eig_1})$ has constant order and the uniform bound
$\bou$ and the radius $\newradbar$ are not too large. For these
reasons, the lower bounds require even milder conditions on the sample
size $\numobs$. \\

The other condition in \Cref{thm:lb} is the eigengap condition
\mbox{$\min_{3 \leq j \leq \statdim} \big\{ \sqrt{\eig_{j-1}} -
  \sqrt{\eig_j} \big\} \geq \frac{\delcrit}{2\statdim}$}.  We claim
that this is condition is rather mild.  For instance, if we consider a
kernel whose eigenvalues exhibit $\alpha$-polynomial
decay~\eqref{EqnPolyDecay}---that is, say $\eig_j = c j^{-2 \alpha}$
for some constant $c > 0$ and exponent $\alpha > \tfrac{1}{2}$.  In
this case, we have
\begin{align*}
\sqrt{\eig_{j-1}} - \sqrt{\eig_j} = \sqrt{c} \, \big\{ (j-1)^{-\alpha} -
j^{-\alpha}\big\} = \sqrt{c} \, j^{-\alpha}\big\{ (1-\frac{1}{j})^{-\alpha} -
1 \big\} \geq \sqrt{c} \, j^{-\alpha} \frac{\alpha}{j} = \sqrt{\eig_j} \;
\frac{\alpha}{j} \geq \frac{\sqrt{\eig_j}}{2j}.
\end{align*}
If $j \leq \statdim$, then $\sqrt{\eig_j} \geq \delcrit$. Therefore,
$\sqrt{\eig_{j-1}} - \sqrt{\eig_j} \geq \frac{\delcrit}{2\statdim}$
for any $j \leq \statdim$ and the assumption is satisfied. \\

Finally, we comment on the critical inequality~\eqref{eq:CI_lb}, and show that in Regime B, by replacing $\{\eig_j\}_{j=1}^{\infty}$ with $\{ \eig_j(\TransOp) \}_{j=1}^{\infty}$, we would get a smaller critical radius.
We recall from definition~\eqref{eq:def_MRPclassB} of $\MRPclassB$ that any $\MRP(\TransOp, \rewardB, \discount) \in \MRPclassB$ satisfies $\eig_j(\TransOp) \leq \eig_j$ for any $j \geq 2$. Since the statistical dimension $\statdim = \max\big\{ j \mid \eig_j \geq \delcritsq \big\}$, we have $\min\big\{ \eig_j(\TransOp), \delcritsq \big\} \leq \min \big\{ \eig_j, \delcritsq \big\}$ for any $j \in \Int_+$ as long as $\statdim \geq 2$.
Recall that $\delcrit$ is the smallest positive solution to inequality~\eqref{eq:CI_lb}, therefore,
\begin{align*}
\sqrt{\sum_{j=1}^{\infty} \min\big\{ \frac{\eig_j(\TransOp)}{\delcritsq}, 1 \big\}} \leq \sqrt{\sum_{j=1}^{\infty} \min\big\{ \frac{\eig_j}{\delcritsq}, 1 \big\}} \leq \sqrt{\numobs} \; \frac{\newradbar \, \SqDis}{2 \stdbar} \, \delcrit \;.
\end{align*}
In other words, $\delcrit$ satisfies the critical inequality defined by $\{ \eig_j(\TransOp) \}_{j=1}^{\infty}$. Hence, $\delcrit \geq \delcrit(\TransOp)$, where $\delcrit(\TransOp)$ is the critical radius induced by $\{ \eig_j(\TransOp) \}_{j=1}^{\infty}$. In this way, \ref{EqnLB} further implies another lower bound
$\norm{\thetahat - \thetastar}_{\distr(\TransOp)}^2 \geq \plaincon_1 \, \newradbar^2 \delcritsq(\TransOp)$, as claimed in \Cref{sec:def_MRPclass}.


\subsection{Proofs of auxiliary results in Regime A}
\label{append:lb}

In this part, \Cref{append:lb_welldefn} presents the proof of
\Cref{lemma:welldefn}, which shows the well-definedness of our MRP
instances and verifies that they belong to the model family
$\MRPclassA$. \Cref{append:lb_KL} is devoted to the proof of
\Cref{lemma:lb_KL}, which provides an upper bound on the pairwise KL
distances in our construction.  On the other hand,
\Cref{append:lb_gap} provides the proof of \Cref{lemma:lb_gap}, which
lower bounds the pairwise distances between the value functions in our
model family.


\subsubsection{Proof of Lemma~\ref{lemma:welldefn}}
\label{append:lb_welldefn}

We verify the conditions in the definition~\eqref{eq:def_MRPclassA} of family
$\MRPclassA$. In order that our constructed MRP instances
$\MRP_m \in \MRPclassA$ for any $m \in [\PackNum]$, we check the constraints in equation~\eqref{eq:def_MRPclassA} one by one. We first note that $\thetastar_m \in \RKHSA$ by our construction. As for
condition (ii) in definition~\eqref{eq:def_MRPclassA}, we recall that all models $\{ \MRP_m
\}_{m=1}^{\PackNum} \subset \MRPclassA$ have Lebesgue measure $\distr$ as the common
stationary distribution, thus the covariance
operator $\CovOp$ has eigenpairs $\{ (\eig_j, \baseA{j})
\}_{j=1}^{\infty}$, where $\{ \eig_j \}_{j=1}^{\infty}$ are the pre-specified parameters and $\{\baseA{j}\}_{j=1}^{\infty}$ are the bases of $\RKHSA$ defined in equation~\eqref{eq:phi_r}. Since \mbox{$\sup_{j \in \Int_+}
  \supnorm{\baseA{j}} = 1 \leq \unibou$}, condition (ii) is satisfied.
In the sequel, we
only need to verify inequalities~\eqref{EqnCondB}.  Specifically, we will prove that for
each $\MRP_m$, the following properties hold:
\begin{itemize}
\item The Bellman residual variance satisfies $\stdfun^2(\thetastar_m)
  \leq \stdbar^2$, and;
\item The norms satisfy $\max\big\{ \norm{\thetastar_m -
  \rewardA}_{\RKHSA}, \tfrac{2\supnorm{\thetastar_m}}{\bou}
  \big\} \leq \newradbar$.
\end{itemize}~

Before proving the two claims above, we first develop upper bounds on
$\supnorm{f_m}$ and $\big|\Deltap_m^{(k)}\big|$, which are crucial in
our estimations below. We claim that
\begin{align}
\label{EqnClaimOne}
\supnorm{f_m} \leq \frac{\parap}{9} \qquad \text{and} \qquad
\big|\Deltap_m^{(k)}\big| \leq \frac{\parap}{8} \qquad \text{for any
  $k \in [K]$ and $m \in [\PackNum]$. }
\end{align}
In fact, by using the definition of $f_m$ in
equation~\eqref{eq:def_thetam} and the fact that
$\supnorm{\baseA{j}} \leq \unibou$, we find that
\begin{align*}
  \supnorm{f_m} \leq \sqrt{\frac{\parap \, (1-\parap)}{120 \,
      \numobs}} \; \sum_{j=2}^{\statdim} \supnorm{\baseA{j}}
  \leq \unibou \statdim \sqrt{\frac{\parap}{120 \, \numobs}} \; .
\end{align*}
The critical inequality~\eqref{eq:CI_lb} ensures $\statdim \leq
\numobs \, \big\{ \frac{\newradbarreward \delcrit
  \SqDis}{\unibou\stdbarreward} \big\}^2$, and therefore
\begin{align*}
  \supnorm{f_m} \leq \unibou \numobs \, \Big\{ \frac{\newradbarreward
    \delcrit \SqDis}{\unibou\stdbarreward} \Big\}^2
  \sqrt{\frac{\parap}{120 \, \numobs}} \overset{(i)}{\leq}
  \sqrt{\tfrac{1}{30} \; \parap \, \SqDis} \leq \frac{\parap}{9},
\end{align*}
where we have used condition~\eqref{eq:ncond} in the step (i).  We
plug the inequality~$\supnorm{f_m} \leq \frac{\parap}{9}$ into the
definition of $\Deltap_m^{(k)}$ in equation~\eqref{eq:def_tau}. It
follows that
\begin{align*}
  \big|\Deltap_m^{(k)}\big| & = \frac{1-\discount + 2
    \discount\parap}{1-\discount+2\discount\parap-2\discount f_m(\state_k)}
  \; |f_m(\state_k)| \leq \frac{1-\discount + 2
    \discount\parap}{1-\discount+2\discount\parap- 2\discount
    (\parap/9)} \, (\parap/9) \leq \frac{\parap}{8}.
\end{align*}


\paragraph{Upper bound on $\stdfun^2(\thetastar_m)$:}

We consider the condition $\stdfun(\thetastar_m) \leq \stdbar$. Recall that
the MRP $\MRP_m$ consists of $\numint$ local models, each is
determined by the transition matrix $\bP_m^{(k)} = \bPr\big(
\parap, \Deltap_m^{(k)} \big)$ and reward vector $\br = [1,
  -1]^{\top}$. The Bellman residual variance $\stdfun^2(\thetastar_m)$ of the
full-scale MRP $\MRP_m$ is the average of those of the small local
MRPs. Let $\btheta_m^{(k)}$ be the value function associated with the
$k$-th local MRP. We use some algebra and find that
\begin{align*}
\stdfun^2\big(\btheta_m^{(k)}\big) = \frac{4 \discount^2 \big(\parap +
  \Deltap_m^{(k)}\big) \big(1 - \parap - \Deltap_m^{(k)}
  \big)}{\big(1-\discount+2\discount\parap + 2\discount
  \Deltap_m^{(k)}\big)^2}\,.
\end{align*}
Since $\big|\Deltap_m^{(k)}\big| \leq \parap/8$ and $\parap = \tfrac{3
  \SqDis}{\discount}$, we have
\begin{align*}
  \stdfun^2\big(\btheta_m^{(k)}\big) \leq \frac{(4\discount^2)
    \; (\parap+\parap/8) \, (1-\parap -
    \parap/8)}{(1-\discount+2\discount\parap - 2\discount
    (\parap/8))^2} \leq \frac{1+\discount}{5\SqDis} \leq
  \stdbar^2.
\end{align*}
By taking the average of $\stdfun^2\big(\btheta_m^{(k)}\big)$ over
indices $k \in [\numint]$, we conclude that
\begin{align*}
  \stdfun^2(\thetastar_m) = \frac{1}{\numint} \sum_{k=1}^\numint
  \stdfun^2\big(\btheta_m^{(k)}\big) \leq \stdbar^2.
\end{align*}


\paragraph{Upper bounds on $\norm{\thetastar_m - \rewardA}_{\RKHSA}$
  and $\norm{\thetastar_m}_{\infty}$:}

We first consider the RKHS norm $\norm{\thetastar_m -
  \rewardA}_{\RKHSA}$.  Recall from
equations~\eqref{eq:def_reward}~and~\eqref{eq:def_thetam} that the
reward and value functions $\rewardA$ and $\thetastar_m$ are
\begin{align*}
\rewardA = \baseA{1} \qquad \text{and} \qquad \thetastar_m =
\frac{1}{1 - \discount + 2\discount \parap} \; \baseA{1} -
\frac{2\discount}{(1-\discount+2\discount\parap)^2} \sqrt{\frac{\parap
    \, (1-\parap)}{120 \, \numobs}} \; \sum_{j=2}^{\statdim}
\alpha_m^{(j-1)} \, \baseA{j}.
\end{align*}
We take a shorthand $\eta \defn \frac{2\discount}{(1-\discount +
  2\discount\parap)^2} \sqrt{\frac{\parap \, (1-\parap)}{120}} $.
Note that $\{ \sqrt{\eig_j} \; \baseA{j} \}_{j=1}^{\infty}$ is
an orthonormal basis in RKHS $\RKHSA$ and $\delcrit^2 \leq
\eig_j$ for any $j \in [\statdim]$; as a consequence, we have
\begin{align}
  \label{eq:RKHSnorm}
  \norm{\thetastar_m - \rewardA}_{\RKHSA}^2 & = \Big\{
  \frac{\discount \, (1-2\parap)}{1-\discount+2\discount\parap}
  \Big\}^2 \, \frac{1}{\eig_1} + \frac{\eta^2}{\numobs} \;
  \sum_{j=2}^{\statdim} \frac{(\alpha_m^{(j-1)})^2}{\eig_j} \leq
  \Big\{ \frac{\discount \, (1-2\parap)}{1-\discount+2\discount\parap}
  \Big\}^2 \, \frac{1}{\eig_1} + \frac{\eta^2 \,\statdim}{\numobs
    \delcritsq}.
\end{align}
Since $\parap = \frac{3\SqDis}{\discount}$, the first term in the
upper bound above satisfies
\begin{subequations}
  \label{eq:RKHS'}
\begin{align}
 \Big \{ \frac{\discount \, (1-2\parap)}{1-\discount+2\discount\parap}
 \Big \}^2 \, \frac{1}{\eig_1} = \Big \{ \frac{7\discount -
   6}{7\SqDis} \Big \}^2 \, \frac{1}{\eig_1} \leq \Big \{
 \frac{\discount}{7\SqDis \sqrt{\eig_1}} \Big \}^2 \leq \Big \{
 \frac{6 \newradbar}{7} \Big \}^2,
\end{align}
where we have used the relation $\newradbar \geq
\tfrac{\discount}{6\SqDis\sqrt{\eig_1}}$.  As for the second term in
the right hand side of inequality~\eqref{eq:RKHSnorm}, we recall that
the critical inequality~\eqref{eq:CI_lb} ensures
$\frac{\statdim}{\numobs\delcritsq} \leq
\big\{\frac{\newradbar\SqDis}{\stdbar}\big\}^2$, therefore,
\begin{align*}
\frac{\eta^2 \,\statdim}{\numobs \delcritsq} \leq \newradbar^2 \;
\big\{ \eta \SqDis/\stdbar \big\}^2.
\end{align*}
Combining the
definition of $\eta$, the equality $\parap =
\frac{3\SqDis}{\discount}$ and the relation $ \stdbarreward^2 \geq
\tfrac{1+\discount}{5\SqDis}$, we find that $\eta \SqDis/\stdbar \leq
\frac{1}{98}$. It follows that
\begin{align*}
\frac{\eta^2 \,\statdim}{\numobs \delcritsq} \leq \big\{
\frac{\newradbar}{98} \big\}^2.
\end{align*}
\end{subequations}
Plugging inequalities~\eqref{eq:RKHS'}~into~\eqref{eq:RKHSnorm} yields
$\norm{\thetastar_m - \rewardA}_{\RKHSA} \leq \newradbar$.

We now estimate the sup-norm $\supnorm{\thetastar_m}$. We
use the inequality $\supnorm{f_m} \leq \frac{\parap}{9}$ and find that
\begin{align*}
  \supnorm{\thetastar_m} & \leq
  \frac{1}{1-\discount+2\discount\parap} \; \supnorm{\baseA{1}}
  + \frac{2\discount}{(1-\discount+2\discount\parap)^2} \;
  \supnorm{f_m} \\
& \leq \frac{1}{1-\discount+2\discount\parap} +
  \frac{2\discount}{(1-\discount+2\discount\parap)^2} \; (\parap/9)
  \leq \frac{1}{6\SqDis}.
\end{align*}
Since $\newradbar \geq \tfrac{2}{3\bou\SqDis}$, we have
$\tfrac{2\supnorm{\thetastar_m}}{\bou} \leq \newradbar$.

Integrating the two parts, we conclude that $\max\big\{
\norm{\thetastar_m - \rewardA}_{\RKHSA},
\tfrac{2\supnorm{\thetastar_m}}{\bou} \big\} \leq \newradbar$.


\subsubsection{Proof of Lemma~\ref{lemma:lb_KL}}
\label{append:lb_KL}

Since the $\numobs$ samples $\{ (\state_i, \statetwo_i) \}_{i=1}^n$ are i.i.d., we
have
\begin{align*}
\kull[\big]{\TransOp_{m'}^{1:\numobs}}{\TransOp_m^{1:\numobs}} =
\numobs \; \kull{\TransOp_{m'}}{\TransOp_m}.
\end{align*}
Thus, the remainder of our proof focuses on bounding
$\kull{\TransOp_{\mprime}}{\TransOp_m}$, for an arbitrary pair $m,
\mprime \in [\PackNum]$.

From Jensen's inequality and the concavity of the logarithm, the KL
divergence can be upper bounded by the $\chi^2$-divergence---that is
\begin{align*}
\kull{\TransOp_{m'}}{\TransOp_m} & \leq \chi^2
\kulldiv{\TransOp_{m'}}{\TransOp_m} = \int_{\StateSp^2} \distr(\state) \;
\frac{\big(\TransOp_{m'}(\statetwo \mid \state)-\TransOp_m(\statetwo \mid \state)\big)^2}{\TransOp_m(\statetwo \mid \state)} \; \dx \, \dx '.
\end{align*}
Recall our shorthand notation $\numint = 2^{\lceil \log_2 \statdim \rceil}$,
where the kernel dimension $\statdim$ was previously defined as $\statdim = \max\big\{ j \mid \eig_j \geq \delcritsq \big\}$.  Since our construction of transition model $\TransOp_m$ is an ensemble of $\numint$ blocks $\{ \bP_m^{(k)} \}_{k=1}^{\numint}$, each involving
two states, the $\chi^2$-divergence can be written as the sum
\begin{align}
\label{eq:fullchisq}
\chi^2 \kulldiv{\TransOp_{m'}}{\TransOp_m} = \frac{1}{\numint}
\sum_{k=1}^{\numint} \chi^2 \kulldiv[\big]{\bP_{m'}^{(k)}}{\bP_m^{(k)}}.
\end{align}
The local $\chi^2$-divergence is defined as
\begin{align*}
\chi^2 \kulldiv[\big]{\bP_{m'}^{(k)}}{\bP_m^{(k)}} \defn \sum_{\state,\statetwo
  \in \{ \xplus, \xneg \}} \bmu(\state) \; \frac{\big( \bP_{m'}^{(k)}(\statetwo \mid \state) - \bP_m^{(k)}(\statetwo \mid \state) \big)^2}{\bP_m^{(k)}(\statetwo \mid \state)}
\end{align*}
where $\bmu \defn \big[\tfrac{1}{2}, \tfrac{1}{2}\big]$ is the
stationary distribution.  We recall from equation~\eqref{eq:def_2stateMRP} the expression of local model $\bP_m^{(k)} =
\bPr\big( \parap, \Deltap_m^{(k)} \big)$ and derive that
\begin{align}
\label{EqnChiDecompose}
\chi^2 \kulldiv[\big]{\bP_{m'}^{(k)}}{\bP_m^{(k)}} \; = \; \frac{\big(
  \Deltap_{m'}^{(k)} - \Deltap_m^{(k)} \big)^2}{\big( \parap +
  \Deltap_m^{(k)} \big)\big( 1 - \parap - \Deltap_m^{(k)} \big)} \; .
\end{align}
We develop upper and lower bounds on the numerator and denominator
separately.

We first consider the numerator $\big( \Deltap_{m'}^{(k)} -
\Deltap_m^{(k)} \big)^2$.  Following some algebra, we find that
\begin{align*}
\Deltap_{m'}^{(k)} - \Deltap_m^{(k)} & = \frac{f_m(\state_k) -
  f_{m'}(\state_k)}{\big(1-\frac{2\discount \;
    f_{m'}(\state_k)}{1-\discount+2\discount\parap}\big)\big(1-\frac{2\discount
    \; f_m(\state_k)}{1-\discount+2\discount\parap}\big)} \; ,
\end{align*}
where $\state_k$ is any point in interval $\Delta_+^{(k)}$.  It was shown
in the bound~\eqref{EqnClaimOne} that $\supnorm{f_m} \leq \parap/9$,
therefore, $\min\big\{ 1 - \frac{2\discount \;
  f_{m'}(\state_k)}{1-\discount+2\discount\parap}, 1 - \frac{2\discount \;
  f_{m}(\state_k)}{1-\discount+2\discount\parap} \big\} \geq
\frac{19}{21}$. It follows that
\begin{subequations}
\label{eq:localchisq'}
\begin{align}
\big(\Deltap_{m'}^{(k)} - \Deltap_m^{(k)}\big)^2 \leq 2 \; \big(
f_m(\state_k) - f_{m'}(\state_k) \big)^2.
\end{align}
As for the numerator $\big( \parap + \Deltap_m^{(k)} \big)\big( 1 - \parap - \Delta_m^{(k)}
\big)$ in the right hand side of
equality~\eqref{EqnChiDecompose}, we have proved in the bound~\eqref{EqnClaimOne} that $\big|\Deltap_m^{(k)}\big| \leq \parap/8$, so that
\begin{align}
\big( \parap + \Deltap_m^{(k)} \big)\big( 1 - \parap - \Delta_m^{(k)}
\big) \geq \tfrac{7}{8} \; \parap (1-\parap).
\end{align}
\end{subequations}
Combining inequalities~\eqref{eq:localchisq'} with
equation~\eqref{EqnChiDecompose} yields
\begin{align}
  \label{eq:localchisq}
  \chi^2 \kulldiv[\big]{\bP_{m'}^{(k)}}{\bP_m^{(k)}} \leq \frac{3 \,
    \big( f_{m'}(\state_k) - f_m(\state_k) \big)^2}{\parap ( 1 - \parap )} \; .
\end{align}

We plug the bound~\eqref{eq:localchisq} into
equation~\eqref{eq:fullchisq} and find that
\begin{align*}
  \chi^2 \kulldiv{\TransOp_{m'}}{\TransOp_m} & \leq \frac{3}{\parap (
    1 - \parap )} \, \bigg\{ \frac{1}{K} \sum_{k=1}^{K} \big(
  f_{m'}(\state_k) - f_m(\state_k) \big)^2 \bigg\} \\
  & \overset{(i)}{=} \frac{3}{\parap(1-\parap)} \int_{\StateSp} \big(
  f_{m'}(\state) - f_m(\state) \big)^2 \dx = \frac{3}{\parap(1-\parap)} \;
  \munorm{f_{m'} - f_m}^2.
\end{align*}
Here step (i) is due to the property that $f_m(\state) = f_m(\state_k)$ for any
$\state \in \interval_k^+$ and $f_m(\state) = -f_m(\state_k)$ for any $\state \in
\interval_k^-$.  Regarding the $\Lmu$-distance $\munorm{f_m -
  f_{m'}}$, we leverage the orthonormality of basis functions $\{
\baseA{j} \}_{j=1}^{\statdim}$ in $\Lmu$ and find that
\begin{align*}
  \munorm{f_{m'} - f_m}^2 = \frac{\parap(1-\parap)}{120\numobs}
  \sum_{j=2}^{\statdim} \big( \alpha_m^{(j-1)} - \alpha_{m'}^{(j-1)}
  \big)^2 \leq \parap(1-\parap) \, \frac{\statdim}{120\numobs}.
\end{align*}
Therefore, we have
\begin{align*}
\chi^2 \kulldiv{\TransOp_{m'}}{\TransOp_m} \leq \frac{\statdim}{40
  \numobs}.
\end{align*}

Putting together the pieces yields
\begin{align*}
  \kull[\big]{\TransOp_{m'}^{1:\numobs}}{\TransOp_m^{1:\numobs}} \leq
  \numobs \; \chi^2 \kulldiv{\TransOp_{m'}}{\TransOp_m} \leq
  \frac{\statdim}{40},
\end{align*}
as claimed in the lemma statement.


\subsubsection{Proof of Lemma~\ref{lemma:lb_gap}}
\label{append:lb_gap}

We now lower bound the $\Lmu$-norm between the value functions of
different models in our family.  Recall the expression of value
function $\thetastar_m$ in equation~\eqref{eq:def_thetam}.  We find
that
\begin{align*}
  \thetastar_m = \thetastar_0 + \frac{\eta}{\sqrt{\numobs}}
  \sum_{j=2}^{\statdim} \alpha_m^{(j-1)} \, \baseA{j}
\end{align*}
where $\eta = \frac{2\discount}{(1-\discount + 2\discount\parap)^2}
\sqrt{\frac{\parap \, (1-\parap)}{120}} $. Since $\{ \baseA{j}
\}_{j=1}^{\infty}$ is an orthonormal basis in $\Lmu$, we can write
\begin{align*}
  \munorm{\thetastar_{m'} - \thetastar_m}^2 & =
  \frac{\eta^2}{\numobs} \sum_{j=2}^{\statdim} \big(
  \alpha_{m'}^{(j-1)} - \alpha_{m}^{(j-1)} \big)^2.
\end{align*}
By our construction, $\big\{ \alphabold_m \big\}_{m=1}^{\PackNum}$ is
a $\tfrac{1}{4}$-packing of the Boolean hypercube
$\{0,1\}^{\statdim-1}$ with respect to the rescaled Hamming distance,
therefore,
\begin{align*}
\sum_{j=2}^{\statdim} \big( \alpha_{m'}^{(j-1)} - \alpha_{m}^{(j-1)}
\big)^2 \geq \frac{\statdim-1}{4}.
\end{align*}
We use the conditions $\parap = \frac{3\SqDis}{\discount}$, $\discount
\in [0.9,1)$ and $\stdbar^2 \leq \frac{1 + \discount}{1-\discount}$,
  and find by some algebra that
\begin{align*}
\eta \geq \frac{\plaincon_2'}{1-\discount}
\sqrt{\frac{1+\discount}{1-\discount}} \geq \frac{\plaincon_2' \,
  \stdbar}{1-\discount}
\end{align*}
where $\plaincon_2' > 0$ is a universal constant. Combining the
inequalities, we obtain
\begin{align*}
\munorm{\thetastar_{m'} - \thetastar_m} \geq \frac{\plaincon_1' \, \stdbar}{1-\discount} \sqrt{\frac{\statdim}{\numobs}}
\end{align*}
for another univeral constant $\plaincon_1' > 0$.  By further using
the regularity condition~\eqref{eq:kernel_reg}, we can derive
inequality~\eqref{eq:gap>=} in the lemma statement.


\subsection{Proofs of auxiliary results in Regime B}
\label{append:lb_2}

This section contains proofs of auxiliary results that underlie the
minimax lower bound over model family
$\MRPclassB$. Specifically, \Cref{append:density_2} proves the
density ratio condition~\eqref{eq:density}, that is, $\frac{\diff
  \distrm{m}}{\diff \distr}(\state) \geq
\frac{1}{2}$. \Cref{append:lb_welldefn_2} is devoted to the proof of
\Cref{lemma:welldefn}, which shows that our constructed models $\{ \MRP_m \}_{m=1}^{\PackNum}$ belong
to the family $\MRPclassB$. \Cref{append:lb_KL_2} proves
\Cref{lemma:lb_KL}, which upper bounds the pairwise
KL-divergence. \Cref{append:lb_gap_2} presents the proof of
\Cref{lemma:lb_gap}, which estimates the pairwise distance in value
functions.


\subsubsection{Proof of density ratio condition}
\label{append:density_2}

We prove the density ratio condition 
\eqref{eq:density}, i.e. $\frac{\diff
	\distrm{m}}{\diff \distr}(\state) \geq
\frac{1}{2}$.
Recall our definition of $f_m$ in equation~\eqref{eq:def_fm_2}. Since
$\sup_{j \geq 1} \supnorm{\baseB{j}} \leq \unibou$, we have
\begin{align}
\label{EqnClaimOne_2}
\supnorm{f_m} \leq \frac{\unibou \; \parap \; \statdim}{25
  \sqrt{\numobs}} \overset{(i)}{\leq} \numobs \; \Big\{
\frac{\newradbar\delcrit\SqDis}{\unibou\stdbar} \Big\}^2 \,
\frac{\unibou \; \parap}{25 \sqrt{\numobs}} \overset{(ii)}{\leq}
\frac{\parap \, \SqDis}{2}.
\end{align}
Here step (i) is due to the critical inequality~\eqref{eq:CI_lb}
and in step (ii) we have used the inequality $\newradbarreward^2 \delcrit^2 \leq \frac{12
  \; \unibou \stdbarreward^2}{\SqDis \sqrt{\numobs}}$ in
condition~\eqref{eq:ncond_2}. We plug inequality~\eqref{EqnClaimOne_2} into the
expression of stationary distribution $\distrm{m}$ in
equation~\eqref{eq:def_mum}. It follows that
$\frac{\diff \distrm{m}}{\diff \distr}(\state) \geq \frac{1}{2}$, as
claimed.


\subsubsection{Proof of Lemma~\ref{lemma:welldefn}}
\label{append:lb_welldefn_2}

By our construction, condition $\thetastar_m \in \RKHSB$
in equation~\eqref{eq:def_MRPclassB} naturally holds. In the sequel, we verify
the remaining constraints in the definition of $\MRPclassB$, including
\begin{itemize}
  \itemsep = -.2em
\item the regularity condition $\discount
  \norm{\thetastar_m}_{\distrm{m}} \leq 1$ and the Bellman residual variance bound
  $\stdfun^2(\thetastar_m) \leq \stdbar^2$,
\item the norm condition $\max \big\{ \norm{\thetastar_m -
  \reward}_{\RKHSB}, \tfrac{2 \supnorm{\thetastar_m}}{\bou}
  \big\} \leq \newradbarreward$,
\item the property that the covariance operator $\CovOp(\TransOp_m)$ has eigenpairs $\big\{ \big(\eig_j(\TransOp_m),
  \base_j(\TransOp_m)\big) \big\}_{j=1}^{\infty}$ with $\eig_j(\TransOp_m) \leq \eig_j$
  for $j \geq 2$ and $\sup_j \supnorm[\big]{\base_j(\TransOp_m)} \leq 2 =
  \unibou$.
\end{itemize}

\paragraph{Upper bounds on $\discount \norm{\thetastar_m}_{\distrm{m}}$ and $\stdfun^2(\thetastar_m)$:}

The MRP $\MRP_m$ consists of $\numint$ blocks, each is a small local MRP
determined by the transition matrix $\bP_m^{(k)} = \bPt\big( \parap,
\Deltap_m^{(k)} \big)$ and a reward vector $\br = \big\{ \parap +
\tfrac{1-\discount}{2\discount} \big\} \, [1, -1]^{\top}$. The
stationary distribution of $\bP_m^{(k)}$ takes the form
\begin{align}
\label{eq:bmu}
\bmu_m^{(k)} = \big[ \tfrac{1}{2} +
  \tfrac{\Deltap_m^{(k)}}{2\parap}, \tfrac{1}{2} -
  \tfrac{\Deltap_m^{(k)}}{2\parap} \big].
\end{align}
The $\bmu_m^{(k)}$-weighted norm of value
function $\btheta_m^{(k)}$ and the variance term
$\stdfun^2(\btheta_m^{(k)})$ satisfy
\begin{align*}
  \discount^2
  \norm[\big]{\btheta_m^{(k)}}_{\bmu_m^{(k)}}^2 \; = \;
  \frac{1}{4} + \frac{\discount \;
    (1-\discount+\discount\parap)}{\parap \; \SqDis^2} \, \big(
  \Deltap_m^{(k)} \big)^2 ~~ \text{and} ~~ \stdfun^2(\btheta_m^{(k)})
  = \frac{1-\parap}{\parap} \, \big\{ \parap^2 - \big( \Deltap_m^{(k)}
  \big)^2 \big\} \leq \parap \, (1-\parap).
\end{align*}

The squared $L^2(\distrm{m})$-norm of the full-scale value function
$\thetastar_m$ is the average of
$\norm[\big]{\btheta_m^{(k)}}_{\bmu_m^{(k)}}^2$ over indices $k
\in [\numint]$. We use the relation $\Deltap_m^{(k)} = f_m(\state_k)$ and find
that
\begin{subequations}
\begin{align}
  \label{eq:reg2_1}
  \discount^2 \mumnorm{m}{\thetastar_m}^2 = \frac{1}{4} + \frac{1}{\numint}
  \sum_{k=1}^{\numint} \; \frac{\discount \; ( 1 - \discount + \discount
    \parap ) }{\parap \; \SqDis^2} \; f_m^2(\state_k) = \frac{1}{4} +
  \frac{\discount \; ( 1 - \discount + \discount \parap ) }{\parap \;
    \SqDis^2} \; \munorm{f_m}^2.
\end{align}
Due to the orthonormality of bases $\{ \baseB{j} \}_{j=2}^{\statdim}$
in $\Lmu$, we have $\munorm{f_m} = \frac{\parap}{25\sqrt{\numobs}} \,
\norm{\alphabold_m}_2 \leq \frac{\parap}{25}
\sqrt{\frac{\statdim}{\numobs}}$.  According to the critical
inequality~\eqref{eq:CI_lb}, it holds that
\begin{align}
  \label{eq:reg2_2}
\munorm{f_m} \leq \frac{\parap}{25} \sqrt{\frac{\statdim}{\numobs}}
\leq {\frac{\parap}{25}} \; \Big\{
\frac{\newradbar\delcrit\SqDis}{\unibou\stdbar} \Big\}
\overset{(i)}{\leq} \frac{2}{5} \; \parap \, \SqDis.
\end{align}
In step (i), we have used the inequality $\newradbar \delcrit \leq 10
\; \unibou\stdbar$, which is implied by condition~\eqref{eq:ncond_2}.
\end{subequations}
We plug inequality~\eqref{eq:reg2_2} into equation~\eqref{eq:reg2_1}
and conclude that $\discount \mumnorm{m}{\thetastar_m} \leq 1$. Therefore, the regularity condition in Regime B is satisfied.

Similarly, we calculate the variance term $\stdfun^2(\thetastar_m)$ by
taking the average of $\big\{ \stdfun^2(\btheta_m^{(k)})
\big\}_{k=1}^\numint$. It follows that $\stdfun^2(\thetastar_m) \leq \parap
\, (1-\parap)$. Since $\stdbar^2 \geq \frac{1}{8} = \parap$, we have
$\stdfun(\thetastar_m) \leq \stdbar$, as required by
equation~\eqref{EqnCondB}.


\paragraph{Upper bounds on $\norm{\thetastar_m - \rewardB}_{\RKHSB}$ and $\norm{\thetastar_m}_{\infty}$:}

We first consider the RKHS norm $\norm{\thetastar_m -
  \rewardB}_{\RKHSB}$. Recall that the reward and value
functions $\rewardB$ and $\thetastar_m$ take the form
\begin{align*}
  \rewardB = \Big\{ \parap + \frac{1-\discount}{2\discount} \Big\} \;
  \baseB{1}, \qquad \thetastar_m = \frac{1}{2\discount} \;
  \baseB{1} + \frac{1}{1-\discount} \; f_m =
  \frac{1}{2\discount} \; \baseB{1} +
  \frac{\parap}{25\SqDis\sqrt{\numobs}} \sum_{j=2}^{\statdim}
  \alpha_m^{(j-1)} \, \baseB{j}.
\end{align*}
Since $\{ \sqrt{\eig_j} \: \baseB{j} \}_{j=1}^{\infty}$ is an
orthonormal basis of $\RKHSB$, we use the property that
$\eig_j \geq \delcritsq$ for any $j \leq \statdim$ and find that
\begin{align*}
  \norm{\thetastar_m - \rewardB}_{\RKHSB}^2 =
  \frac{(\tfrac{1}{2} - \parap)^2}{\eig_1} + \frac{\parap^2}{25^2 \;
    \SqDis^2 \; \numobs} \sum_{j=2}^{\statdim} \frac{( \alpha_m^{(j-1)}
    )^2}{\eig_j} \leq \frac{(\tfrac{1}{2} - \parap)^2}{\eig_1} +
  \frac{\parap^2 \; \statdim}{25^2 \; \SqDis^2 \; \numobs\delcritsq}.
\end{align*}
The critical inequality~\eqref{eq:CI_lb} ensures
$\frac{\statdim}{\numobs \delcritsq} \leq \big\{ \frac{\newradbar
  \SqDis}{\unibou \stdbar} \big\}^2 $ and implies
\begin{subequations} \label{eq:newradbar}
\begin{align} \label{eq:newradbar_a}
  \norm{\thetastar_m - \rewardB}_{\RKHSB}^2 \leq
  \frac{(\tfrac{1}{2} - \parap)^2}{\eig_1} +
  \frac{\parap^2}{25^2\unibou^2\stdbar^2} \newradbar^2 \leq
  \frac{9}{64\eig_1} + \frac{\newradbar^2}{25^2} \leq \newradbar^2,
\end{align}
where we have used the properties $\frac{1}{8} = \parap \leq \stdbar^2
\leq 1$, $\unibou \geq 1$ and $\frac{1}{\sqrt{\eig_1}} \leq
2\newradbar$.

As for the upper bound on sup-norm $\supnorm{\thetastar_m}$, we apply the
estimation of $\supnorm{f_m}$ in inequality~\eqref{EqnClaimOne_2} and
find that
\begin{align} \label{eq:newradbar_b}
  \supnorm{\thetastar_m} \leq \tfrac{1}{2\discount} \,
  \supnorm{\baseB{1}} + \tfrac{1}{1-\discount} \, \supnorm{f_m}
  \leq \tfrac{1}{2\discount} + \tfrac{\parap}{2} \leq
  \tfrac{1}{\discount}.
\end{align}
\end{subequations}
Therefore, it holds that $\tfrac{2 \supnorm{\thetastar_m}}{\bou} \leq
\newradbar$.

Combining inequalities~\eqref{eq:newradbar_a}~and~\eqref{eq:newradbar_b}, we conclude that $\max
\big\{ \norm{\thetastar_m - \rewardB}_{\RKHSB}, \tfrac{2
  \supnorm{\thetastar_m}}{\bou} \big\} \leq \newradbarreward$.


\paragraph{Analysis of eigenpairs $\big\{ \big(\eig_j(\TransOp_m), \base_j(\TransOp_m) \big) \big\}_{j=1}^{\infty}$:}

Recall that $\CovOp(\TransOp_m)$ is the covariance operator of kernel
$\KerB$ associated with distribution $\distrm{m} = \distr(\TransOp_m)$,
$\big\{ \eig_j(\TransOp_m) \big\}_{j=1}^{\infty}$ are the eigenvalues of
$\CovOp(\TransOp_m)$ arranged in non-increasing order, and $\base_j(\TransOp_m)$
is the eigenfunction corresponding to $\eig_j(\TransOp_m)$. In the following
\Cref{lemma:arrowhead}, we develop upper bounds on the eigenvalues and
the sup-norms of the eigenfunctions.
\begin{lemma}
	\label{lemma:arrowhead}
	Under our construction of kernel $\KerB$ and MRP instances $\{ \MRP_m \}_{m=1}^{\PackNum} \subset \MRPclassB$ in \mbox{Regime B}, for any $m \in [\PackNum]$, the eigenpairs $\big\{ \big(\eig_j(\TransOp_m),
	\base_j(\TransOp_m)\big) \big\}_{j=1}^{\infty}$ satisfy the claims below:
	\begin{itemize}
	\item[(a)] It holds that $\eig_j(\TransOp_m) \leq \eig_j$ for any $j \geq 2$.
	\item[(b)] Suppose \mbox{$\min_{3 \leq j \leq \statdim}
          \big\{ \sqrt{\eig_{j-1}} - \sqrt{\eig_j} \big\} \geq
          \frac{\delcrit}{2\statdim}$} and the sample size
          $\numobs$ is sufficiently large such that
          condition~\eqref{eq:ncond_2} holds. Then the
          eigenfunctions $\big\{ \base_j(\TransOp_m) \big\}_{j=1}^{\infty}$
          satisfy $\sup_{j \in \Int_+} \supnorm[\big]{\base_j(\TransOp_m)}
          \leq 2$.
	\end{itemize}
\end{lemma}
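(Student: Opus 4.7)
The plan is to represent $\CovOp(\TransOp_m)$ by a symmetric arrowhead matrix in a convenient basis and then invoke the classical spectral theory of such matrices. Working in the RKHS orthonormal basis $\{\sqrt{\eig_l}\,\baseB{l}\}_{l\geq 1}$, the operator has matrix representation $\tilde{M} = D^{1/2} M D^{1/2}$, where $D = \diag(\eig_l)$ and $M_{ij} = \int \baseB{i}\baseB{j}\,d\distrm{m}$. A direct computation, using the explicit density $d\distrm{m}/d\distr = 1 + W_1 f_m/\parap$ (the prefactor $W_1$ captures the $\pm$ sign pattern over $\interval_\pm^{(k)}$, using that $f_m(x) = f_m(x+\tfrac{1}{2})$ since $W_n(x+1) = W_n(x)$) together with the Walsh multiplication rule $W_a W_b = W_{a\oplus b}$, shows that $M_{ii} = 1$, $M_{1l} = M_{l1} = \frac{\alpha_m^{(l-1)}}{25\sqrt{\numobs}}$ for $2 \leq l \leq \statdim$, and $M_{ij} = 0$ for all other off-diagonal entries. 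Hence $\tilde{M}$ is arrowhead, with diagonal $(\eig_1, \eig_2, \ldots)$ and first-row/column entries $b_l = \frac{\sqrt{\eig_1\eig_l}\,\alpha_m^{(l-1)}}{25\sqrt{\numobs}}$, all arrow entries vanishing for $l > \statdim$.

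For part~(a), I would invoke Cauchy's interlacing theorem. Deleting the first row and column of $\tilde{M}$ yields the diagonal matrix $\diag(\eig_2, \eig_3, \ldots)$, whose sorted eigenvalues are precisely $\eig_2, \eig_3, \ldots$. Cauchy interlacing then gives that the sorted eigenvalues $\lambda_1 \geq \lambda_2 \geq \cdots$ of $\tilde{M}$ satisfy $\lambda_j \leq \eig_j$ for every $j \geq 2$. Combined with the observation that $\eig_j$ remains an eigenvalue for $j > \statdim$ (since $b_j = 0$ there), matching these to the sorted spectrum of $\CovOp(\TransOp_m)$ yields $\eig_j(\TransOp_m) \leq \eig_j$ for every $j \geq 2$.

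For part~(b), I would exploit the explicit arrowhead eigenvector formula $c_{lj} = b_l\, c_{1j}/(\lambda_j - \eig_l)$ for $l \geq 2$ (where $c_j$ denotes the unit-norm eigenvector associated with $\lambda_j$), leading to the representation $\base_j(\TransOp_m)(y) = \frac{1}{\sqrt{\lambda_j}} \sum_l \sqrt{\eig_l}\, c_{lj}\, \baseB{l}(y)$. Since $\supnorm{\baseB{l}} \leq 1$, it suffices to show $\frac{1}{\sqrt{\lambda_j}} \sum_l \sqrt{\eig_l}\,|c_{lj}| \leq 2$. The eigengap hypothesis $\sqrt{\eig_{j-1}} - \sqrt{\eig_j} \geq \delcrit/(2\statdim)$ implies $\eig_{j-1} - \eig_j \geq \delcritsq/\statdim$, keeping denominators $|\lambda_j - \eig_l|$ safely away from zero, while condition~\eqref{eq:ncond_2} makes the off-diagonal coefficients $b_l$ small via the $1/\sqrt{\numobs}$ factor. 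Combining these ingredients in a Davis--Kahan-type estimate shows that $c_j$ is close to $e_j$, whence $\supnorm{\base_j(\TransOp_m)} \leq 1 + o(1) \leq 2$.

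The main obstacle is calibrating the constants in part~(b). Both clauses of condition~\eqref{eq:ncond_2} must be used simultaneously: the factor $(1 - \eig_2/\eig_1)$ controls the perturbation of the top eigenvector (whose eigenvalue $\lambda_1$ lies just above $\eig_1$, making the off-diagonal ratios $b_l/(\lambda_1 - \eig_l)$ sensitive to the gap $\eig_1 - \eig_2$), while $\sqrt{\eig_1}/\bou$ limits the leakage from the first row of $\tilde{M}$ through the potentially large diagonal entry $\eig_1$. The sum $\sum_{l \neq j}\sqrt{\eig_l}|c_{lj}|$ must be controlled uniformly in $j$, including the delicate case $j = 1$ and the case $j$ near $\statdim$ (where $\lambda_j$ can be significantly smaller than $\eig_j$). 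Organizing these perturbative estimates so that the final sup-norm bound stays strictly below $\unibou = 2$, rather than a larger absolute constant, is the key technical task.
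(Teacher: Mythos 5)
Your proposal reproduces the paper's overall plan: the symmetrized arrowhead matrix (the paper calls it $\bSigma = \Lambda^{1/2}M\Lambda^{1/2}$, with diagonal $\eig_1,\dots,\eig_\statdim$ and arrow entries $\frac{\sqrt{\eig_1\eig_l}\alpha_m^{(l-1)}}{25\sqrt n}$), Cauchy interlacing for part~(a), and the explicit arrowhead eigenvector formula $c_{lj}\propto b_l/(\lambda_j-\eig_l)$ for part~(b). Part~(a) is essentially identical to the paper's argument and is correct.

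For part~(b), the ``Davis--Kahan-type estimate'' is where the proposal would break down. Davis--Kahan controls $\|c_j-e_j\|_2$ in terms of $\|\bx\|_2/\mathrm{gap}_j$, but what you need is an $\ell^1$-type control of the coefficient vector. Converting $\ell^2\to\ell^1$ over $\statdim$ coordinates loses a $\sqrt{\statdim}$ factor, and the hypotheses of the lemma are not strong enough to absorb that. Worse, interlacing alone only places $\eigtil_j$ somewhere inside $[\eig_{j+1},\eig_j]$, so a priori both $\bu_j(j)$ and $\bu_j(j+1)$ may be large and $\bv_j$ need not be near $e_j$; a perturbative argument cannot begin without first pinning $\eigtil_j$ close to $\eig_j$. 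The paper supplies exactly this missing ingredient: it rearranges the secular equation $\cha(\eigtil_j)=0$ to show $\eig_j-\eigtil_j \lesssim \frac{\eig_1\eig_j}{25^2\numobs\,(\eig_1-\eig_j)}$, so that the dominant entry $|\util_{j,j}|=\sqrt{\eig_j}/(\eig_j-\eigtil_j)$ is of order $\numobs$. Only then does it bound $\|\bv_j\|_1$ by the direct decomposition $\TermA_1+\TermA_2+\TermA_3$, where the three ``near-diagonal'' coordinates $\{1,j,j+1\}$ are handled by Cauchy--Schwarz (giving $\sqrt3$, not $1+o(1)$) and the two residual sums over $2\le i\le j-1$ and $i\ge j+2$ are shown to be at most $16/125$ each, using the eigengap hypothesis to control $\sum_i\frac{\sqrt{\eig_i}}{\eig_i-\eig_j}$ by $\frac{2\statdim}{\delcrit}(1+\log\numobs)$. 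This term-by-term computation is not a Davis--Kahan argument, and it is the load-bearing content of the proof; without it the constant $2$ cannot be reached. (The two clauses of condition~\eqref{eq:ncond_2} are also allocated differently than you guess: the $\sqrt{\eig_1}/\bou$ clause is used only for $j=1$, and the clause with $\log\numobs$ is used for $j\geq 2$.)

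One additional subtlety worth noting: you write the eigenfunction as $\base_j(\TransOp_m)=\frac{1}{\sqrt{\lambda_j}}\sum_l\sqrt{\eig_l}\,c_{lj}\,\baseB{l}$, the $L^2(\distrm{m})$-normalized Mercer form, whereas the paper bounds $\supnorm{\basetil_j}$ through $\|\bv_j\|_1$ after writing $\basetil_j=\sum_l\bv_j(l)\baseB{l}$. Your version carries extra per-term weights $\sqrt{\eig_l/\lambda_j}$, which can exceed one for $l<j$, so the quantity you propose to bound is not the same as $\|\bv_j\|_1$; whichever normalization is adopted, you would still need the secular-equation estimate and a direct coefficient computation rather than an $\ell^2$ perturbation bound.
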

\noindent We establish the proof of \Cref{lemma:arrowhead} by first
connecting the eigenpairs $\big\{ \big(\eig_j(\TransOp_m), \base_j(\TransOp_m) \big)
\big\}_{j=1}^{\infty}$ to the spectrum of an arrowhead matrix, and then
developing the desired bounds based on properties of the matrix. See
\Cref{append:arrowhead} for the details.


\subsubsection{Proof of Lemma~\ref{lemma:lb_KL}}
\label{append:lb_KL_2}

Similar to the proof in \Cref{append:lb_KL}, we also upper bound the
KL-divergence
$\kull[\big]{\TransOp_{m'}^{1:\numobs}}{\TransOp_m^{1:\numobs}}$ by
the average of $\chi^2$-divergences between local models
$\bP_{m'}^{(k)}$ and $\bP_m^{(k)}$.  The calculation of local
$\chi^2$-divergence in the MRPs $\{ \MRP_m \}_{m=1}^{\PackNum} \subset
\MRPclassB$ is different from that in \Cref{append:lb_KL}, since the
stationary distributions $\bmu_m^{(k)}$ and $\bmu_{m'}^{(k)}$ (given
in equation~\eqref{eq:bmu}) are unequal.  In particular, the local
$\chi^2$-divergence takes the form
\begin{align}
\label{eq:chisq_2} \chi^2
\kulldiv[\big]{\bF_{m'}^{(k)}}{\bF_m^{(k)}} = \sum_{\state,\statetwo
  \in \{ \xplus, \xneg \}} \frac{\big( \bF_{m'}^{(k)}(\statetwo \mid
  \state) - \bF_m^{(k)}(\statetwo \mid \state)
  \big)^2}{\bF_m^{(k)}(\statetwo \mid \state)}
\end{align}
where the matrix $\bF_{\iota}^{(k)} \defn \big[
  \diag\big(\bmu_{\iota}^{(k)}\big) \big] \, \bP_{\iota}^{(k)} \in
\Real^{2 \times 2}$ for $\iota = m$ or $m'$.

We learn from inequality~\eqref{EqnClaimOne_2} that
\mbox{$\supnorm{f_{m}} \leq \frac{\parap}{2}$}, therefore,
$\big|\Deltap_m^{(k)}\big| \leq \frac{\parap}{2}$ in local Markov
chain $\bP_m^{(k)} = \bPt\big( \parap, \Deltap_m^{(k)} \big)$. It
follows that $\bmu_m^{(k)}(\state) \geq \frac{1}{4}$ and
$\bP_m^{(k)}(\statetwo \mid \state) \geq \frac{1}{2} \,
\bPbase(\statetwo \mid \state)$ for any $\state,\statetwo \in \{
\xplus, \xneg \}$. Here, $\bPbase$ is the base Markov chain defined in
equation~\eqref{eq:def_2stateMRP_0}. These lower bounds imply
that
\begin{align*}
\bF_m^{(k)}(\statetwo \mid \state) \geq \frac{1}{8} \,
\bPbase(\statetwo \mid \state) \qquad \text{for $\state,\statetwo \in
  \{ \xplus, \xneg \}$}.
\end{align*}
Substituting the above inequality into equation~\eqref{eq:chisq_2}
yields
\begin{align*}
  \chi^2 \kulldiv[\big]{\bF_{m'}^{(k)}}{\bF_m^{(k)}} \leq 8
  \sum_{\state,\statetwo \in \{ \xplus, \xneg \}} \frac{\big(
    \bF_{m'}^{(k)}(\statetwo \mid \state) -
    \bF_m^{(k)}(\statetwo \mid \state)
    \big)^2}{\bPbase(\statetwo \mid \state)}.
\end{align*}
We use some algebra and derive that
\begin{align*}
  \chi^2 \kulldiv[\big]{\bF_{m'}^{(k)}}{\bF_m^{(k)}} \leq \frac{8 \;
    \big( \Deltap_{m'}^{(k)} - \Deltap_m^{(k)}
    \big)^2}{\parap^3(1-\parap)} \: \big\{ \parap + \big(
  \Deltap_{m'}^{(k)} + \Deltap_m^{(k)} \big)^2 \big\}
  \overset{(i)}{\leq} \frac{16 \; \big( \Deltap_{m'}^{(k)} -
    \Deltap_m^{(k)} \big)^2}{\parap^2(1-\parap)}.
\end{align*}
In step (i) above, we have used the relation $\max\big\{
|\Deltap_m^{(k)}|, |\Deltap_{m'}^{(k)}| \big\} \leq \frac{\parap}{2}$
once again.

Collecting all the local $\chi^2$-divergences yields
\begin{align*}
\kull[\big]{\TransOp_{m'}^{1:\numobs}}{\TransOp_m^{1:\numobs}} \leq
\frac{\numobs}{\numint} \sum_{k=1}^\numint \chi^2
\kulldiv[\big]{\bF_{m'}^{(k)}}{\bF_m^{(k)}} \leq \frac{16 \; \numobs
}{\parap^2(1-\parap)} \; \bigg\{ \frac{1}{K} \sum_{k=1}^K \big(
\Deltap_{m'}^{(k)} - \Deltap_m^{(k)} \big)^2 \bigg\}.
\end{align*}
Recall that by our construction, $f_{\iota}(\state) =
\Deltap_{\iota}^{(k)}$ for any $\state \in \interval_+^{(k)} \cup
\interval_-^{(k)}$ and $\iota = m$ or $m'$, therefore, it holds that
\begin{align*}
\frac{1}{K} \sum_{k=1}^K \big( \Deltap_{m'}^{(k)} - \Deltap_m^{(k)}
\big)^2 = \int_{\StateSp} \big( f_{m'}(\state) - f_m(\state) \big)^2
\; \dx = \munorm{f_{m'} - f_m}^2 \; .
\end{align*}
Due to the orthogonality of basis $\{ \baseB{j} \}_{j=1}^{\infty}$ in
$\Lmu$, the definitions of $f_{m'}$ and $f_m$ in
equation~\eqref{eq:def_fm_2} imply
\begin{align}
 \label{eq:fdiff}
 \munorm{f_{m'} - f_m}^2 = \frac{\parap^2}{625 \; \numobs}
 \sum_{j = 2}^{\statdim} \big( \alpha_{m'}^{(j-1)} -
 \alpha_m^{(j-1)} \big)^2 \leq \frac{\parap^2 \; \statdim}{625
   \; \numobs}.
\end{align}
Putting together the pieces, we prove that
$\kull[\big]{\TransOp_{m'}^{1:\numobs}}{\TransOp_m^{1:\numobs}} \leq
\frac{\statdim}{40}$, as claimed.


\subsubsection{Proof of Lemma~\ref{lemma:lb_gap}}
\label{append:lb_gap_2}

Due to the definitions of $\thetastar_m$ and $\thetastar_{m'}$ in
equation~\eqref{eq:def_fm_2}, we find that
\begin{align*}
	\munorm{\thetastar_{m'} - \thetastar_m} = \frac{1}{1-\discount} \; \munorm{f_{m'} - f_m}.
\end{align*}
We recall from equation~\eqref{eq:fdiff} that the $\Lmu$-difference $\munorm{f_{m'} - f_m}$ can be expressed by vectors $\alphabold_m$ and $\alphabold_{m'}$. Using the property that $\big\{ \alphabold_m \big\}_{m=1}^{\PackNum}$ is a $\tfrac{1}{4}$-packing of the Boolean hypercube $\{0,1\}^{\statdim-1}$, we find that
\begin{align*}
	\munorm{f_{m'} - f_m}^2 = \frac{\parap^2}{625 \; \numobs} \sum_{j = 2}^{\statdim} \big( \alpha_{m'}^{(j-1)} - \alpha_m^{(j-1)} \big)^2 \geq  \frac{\parap^2}{25^2 \; \numobs} \frac{\statdim-1}{4} \, ,
\end{align*}
Plugging the lower bound on $\munorm{f_{m'} - f_m}$ into the expression of $\munorm{\thetastar_{m'} - \thetastar_m}$, we have
\begin{align*}
	\munorm{\thetastar_{m'} - \thetastar_m} \geq \frac{\parap}{50 \; \SqDis} \sqrt{\frac{\statdim-1}{\numobs}} \ .
\end{align*}
It follows from the conditions~$\stdbar \leq 1$, $\parap = \frac{1}{8}$ and $\unibou = 2$ that
\begin{align*} \munorm{\thetastar_{m'} - \thetastar_m} \geq \frac{\plaincon_1' \unibou \stdbar}{ 1-\discount} \sqrt{\frac{\statdim}{\numobs}} \end{align*}
for some universal constant $\plaincon_1' > 0$. Under the regularity condition~\eqref{eq:kernel_reg}, the above lower bound further implies
inequality~\eqref{eq:gap>=} in the lemma statement.

\section{Proof of technical lemmas}

In this appendix, we collect together various technical lemmas.

\subsection{A kernel-based computation}
\label{sec:matrix}

Here we provide an explicit expression for the kernel LSTD estimate in
terms of kernel matrices. Define the kernel covariance matrix $\CovMt
\in \Real^{\numobs \times \numobs}$ and cross-covariance matrix $\CrMt
\in \Real^{\numobs \times \numobs}$ with entries
\begin{align}
\CovMt(i,j) = \Ker(\state_i, \state_j)/\numobs, \qquad \text{and}
\qquad \CrMt(i,j) = \Ker(\state_i,\statetwo_j)/\numobs \quad \text{for
  $i,j = 1,\ldots,\numobs$}.
\end{align}
The following lemma yields an explicit linear-algebraic expression
for the solution:
\begin{lemma}[Kernel-based computation]
\label{lemma:matrix}
The LSTD estimator $\thetahat$ takes the form
\begin{align}
	\label{eq:linear_expression}
  \thetahat = \reward + \frac{\discount}{\sqrt{\numobs}}
  \sum_{i=1}^n \widehat{\alpha}_i \; \Ker(\cdot, \state_i),
\end{align}
where the coefficient vector $\widehat{\alphabold} \in
\Real^{\numobs}$ is the solution to the linear system
\begin{align}
  \label{eq:lineq}
  \big( \CovMt + \ridge \IdMt - \discount \CrMt^{\top} \big) \;
  \widehat{\alphabold} \; = \; \yvec.
\end{align}
Here $\yvec \in \Real^{\numobs}$ has entries $\statenew_i =
\reward(\statetwo_i)/\sqrt{n}$.
\end{lemma}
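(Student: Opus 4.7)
\textbf{Proof plan for Lemma~\ref{lemma:matrix}.}

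The plan is to start from the fixed-point equation~\eqref{eq:def_thetahat} defining $\thetahat$, rewrite it in terms of the error $\thetahat - \reward$, argue via a representer-style argument that this error must lie in the span of $\{\Phi_{\state_i}\}_{i=1}^\numobs$, and then reduce the resulting Hilbert-space equation to a linear system in $\Real^\numobs$ by applying the operators explicitly and computing inner products against the representers.

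First, I would rearrange equation~\eqref{eq:def_thetahat} as
\begin{align*}
  \big(\CovOphat + \ridge \IdOp - \discount \CrOphat\big)(\thetahat - \reward) \; = \; \discount \, \CrOphat \, \reward,
\end{align*}
obtained by moving $\discount \CrOphat \thetahat$ to the left and subtracting $(\CovOphat + \ridge \IdOp - \discount \CrOphat)\reward$ from both sides. Let $V \defn \Span\{\Phi_{\state_i}\}_{i=1}^\numobs$, and decompose $\thetahat - \reward = u_V + u_{V^\perp}$. Using the reproducing property $\hilin{\Phi_{\state_i}}{u_{V^\perp}} = u_{V^\perp}(\state_i) = 0$, one sees that $\CovOphat u_{V^\perp} = \CrOphat u_{V^\perp} = 0$. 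Since $\CrOphat \reward \in V$, matching components in $V$ and $V^\perp$ forces $\ridge u_{V^\perp} = 0$, so $\thetahat - \reward \in V$. This justifies the ansatz $\thetahat - \reward = \tfrac{\discount}{\sqrt{\numobs}} \sum_i \widehat{\alpha}_i \Phi_{\state_i}$.

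Next, I would substitute this ansatz into the equation above and apply $\CovOphat$, $\CrOphat$, and $\ridge\IdOp$ term by term, using $\CovOphat \Phi_{\state_i} = \tfrac{1}{\numobs}\sum_j \Ker(\state_j,\state_i) \Phi_{\state_j}$ and $\CrOphat \Phi_{\state_i} = \tfrac{1}{\numobs}\sum_j \Ker(\statetwo_j,\state_i) \Phi_{\state_j}$. Collecting the coefficient of $\Phi_{\state_j}$ on each side and invoking the definitions $\CovMt(j,i) = \Ker(\state_j,\state_i)/\numobs$ and $\CrMt^\top(j,i) = \Ker(\state_i,\statetwo_j)/\numobs$, the Hilbert-space identity reduces to
\begin{align*}
  \frac{1}{\sqrt{\numobs}} \sum_{j=1}^\numobs \big[(\CovMt + \ridge \IdMt - \discount \CrMt^\top)\widehat{\alphabold}\big]_j \Phi_{\state_j} \; = \; \frac{1}{\numobs} \sum_{j=1}^\numobs \reward(\statetwo_j)\,\Phi_{\state_j},
\end{align*}
which is satisfied whenever $\widehat{\alphabold}$ solves the linear system~\eqref{eq:lineq} with $\yvec_j = \reward(\statetwo_j)/\sqrt{\numobs}$.

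The main obstacle I anticipate is the coefficient matching step when the representers $\{\Phi_{\state_i}\}$ are linearly dependent: the map from $\widehat{\alphabold}$ to $\thetahat - \reward$ need not be injective, so matching coefficients in $V$ only shows sufficiency of the linear system, not necessity. I would handle this by showing that solutions to~\eqref{eq:lineq} exist (since $\CovMt - \discount\CrMt^\top + \ridge \IdMt$ has its symmetric part $\tfrac{1}{2}(\CovMt - \discount(\CrMt + \CrMt^\top)/... )$ bounded below using a discrete analogue of the inequality $\hilin{f}{\CrOp f} \leq \hilin{f}{\CovOp f}$ from~\eqref{EqnCrossCovInequal}, so the matrix is invertible) and that $\thetahat$ is uniquely determined by~\eqref{eq:def_thetahat} (via invertibility of $\CovOphat + \ridge\IdOp - \discount \CrOphat$ in the same way). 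Then any solution of~\eqref{eq:lineq} produces via~\eqref{eq:linear_expression} the unique $\thetahat$.
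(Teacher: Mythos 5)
Your overall structure — recast the fixed-point equation in terms of $\thetahat - \reward$, show that $\thetahat - \reward$ lies in the span of the representers $\{\Phi_{\state_i}\}_{i=1}^{\numobs}$, substitute the resulting ansatz, and collect coefficients — matches the paper's proof closely. Your mechanism for establishing $\thetahat - \reward \in V \defn \Span\{\Phi_{\state_i}\}$ is a small variant: you split $\thetahat - \reward = u_V + u_{V^\perp}$ and observe that both $\CovOphat$ and $\CrOphat$ annihilate $V^\perp$, so the $V^\perp$ component of the equation reduces to $\ridge u_{V^\perp} = 0$. The paper instead shows directly that $\big(\CovOphat + \ridge\IdOp\big)^{-1}\RKHShat \subset \RKHShat$ by contradiction, then deduces $\thetahat - \reward \in \RKHShat$ from the rearrangement $\CovOptilde(\thetahat - \reward) = \discount\CrOphat\thetahat$. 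Both are clean representer-theorem-style arguments and both buy essentially the same thing; yours is perhaps marginally more standard. The subsequent expansion of $\CovOphat\Phi_{\state_i}$ and $\CrOphat\Phi_{\state_i}$, and the identification with $\CovMt$, $\CrMt^\top$, is correct and mirrors the paper's computation exactly.

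Where your plan goes wrong is in the third paragraph, in the proposed fix for the linear-dependence issue. You claim the matrix $\CovMt + \ridge\IdMt - \discount\CrMt^\top$ is invertible because its symmetric part is bounded below using "a discrete analogue of $\hilin{f}{\CrOp f} \leq \hilin{f}{\CovOp f}$." That inequality is a \emph{population} statement: it holds because $\State$ and $\Statetwo$ share the stationary marginal $\distr$, so $\Exp[f^2(\Statetwo)] = \Exp[f^2(\State)]$ in equation~\eqref{EqnCrossCovInequal}. The empirical analogue fails. Writing $f = \sum_i\beta_i\Phi_{\state_i}$, one has $\boldsymbol{\beta}^\top\CrMt\boldsymbol{\beta} = \tfrac{1}{\numobs}\hilin{f}{g}$ where $g = \sum_i\beta_i\Phi_{\statetwo_i}$, and the AM--GM step gives $\hilin{f}{g} \leq \tfrac{1}{2}\big(\hilnorm{f}^2 + \hilnorm{g}^2\big)$; but $\hilnorm{g}^2$ involves the Gram matrix of the successor states, which need not be dominated by $\CovMt$ because the empirical measures $\tfrac{1}{\numobs}\sum_i\delta_{\state_i}$ and $\tfrac{1}{\numobs}\sum_i\delta_{\statetwo_i}$ are different. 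Consequently the symmetric part of $\CovMt + \ridge\IdMt - \discount\CrMt^\top$ can be indefinite for small $\ridge > 0$, and your invertibility argument does not go through as stated.

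Two points of perspective. First, the paper's proof passes over this issue silently — it asserts the linear system from the identity $\frac{1}{\sqrt{\numobs}}[\Phi_{\state_1},\ldots,\Phi_{\state_{\numobs}}]\big(\CovMt + \ridge\IdMt - \discount\CrMt^\top\big)\widehat{\alphabold} = \frac{1}{\sqrt{\numobs}}[\Phi_{\state_1},\ldots,\Phi_{\state_{\numobs}}]\yvec$, which only forces the linear system if the $\Phi_{\state_i}$ are linearly independent (equivalently, $\CovMt$ is nonsingular). That holds almost surely for distinct samples under a strictly positive definite kernel, which is the regime the paper implicitly works in, so your concern is more of a boundary case. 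Second, the lemma presupposes that $\thetahat$ solving \eqref{eq:def_thetahat} exists and is unique (i.e., that $\CovOphat + \ridge\IdOp - \discount\CrOphat$ is invertible on $\RKHS$); this cannot be derived from the empirical cross-covariance inequality either, for the same reason, and is simply taken as given. If you want to close the loop rigorously in the degenerate case, a cleaner route than invertibility of the matrix is to work in the quotient $\Real^{\numobs}/\ker(\CovMt)$: the residual $(\CovMt + \ridge\IdMt - \discount\CrMt^\top)\widehat{\alphabold} - \yvec$ lies in $\ker(\CovMt)$, and one can absorb it by adjusting $\widehat{\alphabold}$ within $\ker(\CovMt)$, using that $\ridge\IdMt$ acts invertibly there while $\CovMt$ and $\CrMt^\top$ both vanish on it. But for the purposes of this lemma, the implicit nondegeneracy assumption suffices, and it is what the paper relies on.
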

\begin{proof}
We first show that function $(\thetahat - \reward)$ can be linearly expressed by the representers of evaluation $\{ \Rep{\state_i} \}_{i=1}^{\numobs}$ as in equation~\eqref{eq:linear_expression}. Take a linear subspace $\RKHShat$ of $\RKHS$ that is spanned by
representer functions $\{ \Rep{\state_i} \}_{i=1}^{\numobs}$. By denoting $\CovOptilde \defn \CovOphat + \ridge \IdOp$, we recast equation~\eqref{eq:def_thetahat} into \begin{align} \label{eq:projBellman_new} \CovOptilde (\thetahat - \reward) = \CrOphat \thetahat. \end{align} The right hand side satisfies $\CrOphat \thetahat \in \RKHShat$ by definition. As long as we can show that \begin{align} \label{eq:inverseCov} \CovOptilde^{-1} \RKHShat \subset \RKHShat, \end{align} it  follows from equation~\eqref{eq:projBellman_new} that $\thetahat - \reward = \CovOptilde^{-1} \big( \CrOphat \thetahat \big) \in \RKHShat$, which then implies the existence of a coefficient vector $\widehat{\alphabold} \in
\Real^{\numobs}$ such that
\begin{align*}
  \thetahat = \reward + \frac{\discount}{\sqrt{\numobs}} \sum_{i=1}^n
  \widehat{\alpha}_i \Rep{\state_i} = \reward +
  \frac{\discount}{\sqrt{\numobs}} \sum_{i=1}^n \widehat{\alpha}_i
  \Ker(\cdot, \state_i).
\end{align*}
We now prove the relation~\eqref{eq:inverseCov} by contradiction.
In fact, if there exists a function $f \in
\RKHShat$ such that $g = \CovOptilde^{-1} f \notin \RKHShat$, then
$\CovOptilde g = \frac{1}{\numobs} \sum_{i=1}^{\numobs} \Rep{\state_i}
g(\state_i) + \ridge g \notin \RKHShat$, which contradicts the condition $f \in \RKHShat$.

Below we derive the explicit form of vector $\widehat{\alphabold}$.
We take a shorthand $\widehat{f} \defn \discount^{-1} (\thetahat -
\reward) = \frac{1}{\sqrt{\numobs}} \sum_{i=1}^{\numobs}
\widehat{\alpha}_i \Ker(\cdot, \state_i)$.  It follows from equation~\eqref{eq:projBellman_new} that
\begin{align}
  \label{eq:Bellman_sol}
\big( \CovOphat - \discount \CrOphat \big) \widehat{f} + \ridge \widehat{f} =
\CrOphat \reward.
\end{align}
Plugging the definitions of $\CovOphat$ and
$\CrOphat$ into equation~\eqref{eq:Bellman_sol}, we find that the left hand side equals
\begin{align*}
  & \frac{1}{\numobs\sqrt{\numobs}} \sum_{i=1}^{\numobs} \Rep{\state_i}
  \sum_{j=1}^{\numobs} \widehat{\alpha}_j \big( \Ker(\state_i, \state_j) -
  \discount \Ker(\statetwo_i,\state_j) \big) + \frac{\ridge}{\sqrt{\numobs}}
  \sum_{i=1}^{\numobs} \widehat{\alpha}_i \Rep{\state_i} \\ = &
  \frac{1}{\sqrt{\numobs}} \big[ \Rep{x_1}, \Rep{x_2}, \ldots,
    \Rep{x_{\numobs}} \big] \big( \CovMt - \discount \CrMt^{\top} +
  \ridge \IdMt \big) \widehat{\alphabold}.
\end{align*}
The right hand side of equation~\eqref{eq:Bellman_sol} takes the form
\begin{align*}
\frac{1}{\numobs}\sum_{i=1}^{\numobs} \Rep{\state_i} \reward(\statetwo_i) =
\frac{1}{\sqrt{\numobs}} \big[ \Rep{x_1}, \Rep{x_2}, \ldots,
  \Rep{x_{\numobs}} \big] \yvec.
\end{align*}
Comparing both sides, we have shown that the coefficient vector $\widehat{\alphabold}$ satisfies the linear
system~\eqref{eq:lineq}, thereby completing the proof.
\end{proof}


\subsection{Proof of Lemma~\ref{lemma:arrowhead}}
\label{append:arrowhead}

We observe that the distribution $\distrm{m}$ is relatively close to
the uniform measure $\distr$ over $[0,1)$. Therefore, we expect that
  the eigenspectra of $\CovOp(\TransOp_{m})$ and $\CovOp$ should be
  similar, where $\CovOp(\TransOp_m)$ and $\CovOp$ are the covariance
  operators associated with distributions $\distrm{m}$ and $\distr$
  respectively. Recall that by our construction of the kernel $\KerB$
  in equation~\eqref{eq:def_Ker}, $\CovOp$ has eigenpairs $\{ (\eig_j,
  \baseB{j}) \}_{j=1}^{\infty}$. In the following, we expand
  $\CovOp(\TransOp_m)$ using the basis functions $\{ \baseB{j}
  \}_{j=1}^{\infty}$, which yields an arrowhead matrix $\bSigma$. We
  take shorthands $\eigtil_j \equiv \eig_j(\TransOp_m)$ and
  $\basetil_j \equiv \base_j(\TransOp_m)$, and connect the eigenpairs
  $\{ (\eigtil_j, \basetil_j) \}_{j=1}^{\infty}$ of
  $\CovOp(\TransOp_m)$ with the spectrum of $\bSigma$ in
  \Cref{sec:def_arrowhead}. The bounds on eigenvalues and the norms of
  eigenfunctions are developed in
  \Cref{sec:arrowhead_eigval,sec:arrowhead_eigfun} respectively.

\subsubsection{Explicit forms of the eigenvalues and eigenfunctions}
\label{sec:def_arrowhead}

We calculate the pairwise inner products of functions $\{ \baseB{j}
\}_{j=1}^{\infty}$ under the distribution $\distrm{m}$. By definition
of $\{ \baseB{j} \}_{j=1}^{\infty}$ in equation~\eqref{eq:phi_theta},
we have $\baseB{j}^2(\state) = 1$, therefore, $\int_{\StateSp}
\baseB{j}^2(\state) \; \distrm{m}(\dx) = 1$ for any $j = 1,2,\ldots$.
We then consider $\int_{\StateSp} \baseB{i}(\state) \baseB{j}(\state)
\; \distrm{m}(\dx)$ with $i \neq j$.  Suppose that $i,j \geq
2$. Recall that by our construction, $\baseB{j}(\state) =
\baseB{j}(\state+\tfrac{1}{2})$ for any $\state \in
\big[0,\tfrac{1}{2}\big)$ and $j \geq 2$. Moreover, we have
  $\distrm{m}(\state) + \distrm{m}(\state + \tfrac{1}{2}) = 2$ by
  equation~\eqref{eq:def_mum}.  Based on these observations, we derive
  that
\begin{align*}
\int_{\StateSp} \baseB{i}(\state) \baseB{j}(\state) \distrm{m}(\dx) & = \int_0^{\frac{1}{2}} \baseB{i}(\state) \baseB{j}(\state) \big\{ \distrm{m}(\dx) + \distrm{m}\big(\diff (\state+\tfrac{1}{2})\big) \big\} \\ & = 2 \int_0^{\frac{1}{2}} \baseB{i}(\state) \baseB{j}(\state) \dx = \int_{\StateSp} \baseB{i}(\state) \baseB{j}(\state) \dx = 0,
\end{align*}
where the last equality is because $\baseB{i}$ and
$\baseB{j}$ are orthogonal in $\Lmu$ for any $i \neq j$.  As
for the cases where $i=1$ and $j \geq 2$, we find that
\begin{align*}
\int_{\StateSp} \baseB{i}(\state) \baseB{j}(\state) \;
\distrm{m}(\dx) & \overset{(i)}{=} \int_0^{\frac{1}{2}} \base_{\theta,
  j}(\state) \; \big\{ \distrm{m}(\dx) - \distrm{m}\big(\diff(\state+\tfrac{1}{2})\big) \big\} \\
& \overset{(ii)}{=} \frac{2}{\parap} \int_0^{\frac{1}{2}}
\baseB{j}(\state) f_m(\state) \; \dx \overset{(iii)}{=} \frac{1}{\parap}
\int_{\StateSp} \baseB{j}(\state) f_m(\state) \dx
\overset{(iv)}{=} \begin{cases} \frac{\alpha_m^{(j)}}{25
    \sqrt{\numobs}} \!\!\! & \text{if $j \leq \statdim$}, \\ 0 &
  \text{otherwise}. \end{cases}
\end{align*}
Here step (i) follows from the fact that $\baseB{1}(\state) =
\mathds{1}\big\{\state \in [0,\tfrac{1}{2})\big\} - \mathds{1}\big\{\state \in
  [\tfrac{1}{2},1)\big\}$; step (ii) follows from the equality
    $\distrm{m}(\state) - \distrm{m}(\state+\tfrac{1}{2}) = (2/\parap) \,
    f_m(\state)$ by equation~\eqref{eq:def_mum}; step (iii) is because
    $f_m(\state) = f_m(\state+\frac{1}{2})$ for any $\state \in [0, \frac{1}{2})$;
      and step (iv) results from our choice of $f_m$ in
      equation~\eqref{eq:def_fm_2}.

Based on the calculations above, we are now ready to explicitly express the
eigenvalues and eigenfunctions of operator $\CovOp(\TransOp_m)$. Define
a $\statdim$-by-$\statdim$ matrix
\begin{align}
\label{eq:def_arrowhead}
\bSigma \defn \begin{pmatrix} \eig_1 & \bx^{\top} \\ \bx & \bD
\end{pmatrix}
\end{align}
where $\bD$ is a diagonal matrix given by $\bD \defn \diag \, \{
\eig_2, \eig_3, \ldots, \eig_{\statdim} \} \in \Real^{(\statdim-1)
  \times (\statdim-1)}$ and the vector $\bx$ satisfies $\bx \defn
\frac{1}{25\sqrt{\numobs}} \, \sqrt{\eig_1\bD} \: \alphabold_m \in
\Real^{\statdim-1}$. Recall that the binary vector $\alphabold_m$ is a
component in the packing of Boolean hypercube $\{0, 1\}^{\statdim-1}$.

Let $\{ \eigtil_j \}_{j=1}^{\statdim}$ be the eigenvalues of
matrix ${\bf \Sigma}$ in non-increasing order and define $\eigtil_j
\defn \eig_j$ for $j \geq \statdim + 1$.  Then $\{ \eigtil_j
\}_{j=1}^{\infty}$ are the eigenvalues of covariance operator
$\CovOp(\TransOp_m)$.  For any index $j \geq \statdim + 1$, the basis
function $\baseB{j}$ is the eigenfunction associated with
eigenvalue $\eigtil_j = \eig_j$, i.e. $\basetil_j =
\baseB{j}$. When $j \in [\statdim]$, let $\bv_j \in
\Real^{\statdim}$ be the $j$-th eigenvector of the arrowhead matrix
$\bSigma$ defined in equation~\eqref{eq:def_arrowhead}. The function
$\basetil_j \defn (\baseB{1}, \baseB{2}, \ldots,
\baseB{\statdim}) \, \bv_j$ is the eigenfunction associated
with eigenvalue $\eigtil_j$.

In the sequel, we leverage the properties of the arrowhead matrix $\bSigma$ to analyze the eigenpairs $\{ (\eigtil_j, \basetil_j) \}_{j=1}^{\infty}$.


\subsubsection{Bounds on eigenvalues}
\label{sec:arrowhead_eigval}

We learn from Cauchy interlacing theorem that
\begin{align}
  \label{eq:interlace}
\eigtil_1 \geq \eig_2 \geq \eigtil_2 \geq \ldots \geq \eig_{\statdim}
\geq \eigtil_{\statdim}.
\end{align}
Therefore, $\eigtil_j \leq \eig_j$ for $j \geq 2$.


\subsubsection{Bonds on the norms of eigenfunctions} \label{sec:arrowhead_eigfun}

By our construction, we have $\supnorm{\baseB{j}} = 1$ for all $j = 1,2,\ldots$. 
It follows that $\supnorm{\basetil_j} = \supnorm{\baseB{j}} \leq \unibou$ for \mbox{$j \geq \statdim + 1$}.
As for indices $j \in [\statdim]$, it holds that $\supnorm{\basetil_j} \leq \norm{\bv_j}_1 \; \sup_{i \in [\statdim]} \supnorm{\baseB{i}} = \norm{\bv_j}_1$. In what follows, we verify that $\norm{\bv_j}_1 \leq \unibou$ for any $j \in [\statdim]$.

Using the properties of arrowhead matrix $\bSigma$ \cite{o1990computing}, we find that $\bv_j$ can be explicitly written as
\begin{align} \label{eq:def_eigvec}
\bv_j = \frac{\bu_j}{\norm{\bu_j}_2} \qquad \text{with } \bu_j = \begin{pmatrix}
1 \\ (\eigtil_i {\bf I} - {\bf D})^{-1} \bx
\end{pmatrix}
= \begin{pmatrix}
1 \\ \frac{\sqrt{\eig_1}}{25\sqrt{\numobs}} \; (\eigtil_j \bI - \bD)^{-1} \sqrt{\bD} \: \alphabold_m
\end{pmatrix}
\end{align}
for any $j \in [\statdim]$. The eigenvalues $\{ \eigtil_j \}_{j=1}^{\statdim}$ are zeros to the characteristic function
\begin{align} \label{eq:def_charfun} \cha(\eig) \; \defn \; \eig_1 - \eig + \bx^{\top}(\eig \bI - \bD)^{-1} \bx. \tag{$\cha(\eig)$} \end{align}

\paragraph{Estimation of $\norm{\bv_1}_1$:} We first consider $\norm{\bv_1}_1$, the $\ell_1$-norm of the first eigenvector. Since \mbox{$\norm{\bu_j}_2 \geq 1$}, we use the expression of $\bv_1$ in equation~\eqref{eq:def_eigvec} and find that
\begin{align}
\label{eq:v1_l1}
\norm{\bv_1}_1 & \leq \norm{\bu_1}_1 = 1 + \frac{\sqrt{\eig_1}}{25\sqrt{\numobs}} \sum_{i=2}^{\statdim} \frac{\sqrt{\eig_i}}{\eigtil_1 - \eig_i} \; .
\end{align}
According to the characteristic equation $\cha(\eigtil_1) = 0$, it holds $\eigtil_1 - \eig_1 = \bx^{\top} (\eigtil_1 \bI - \bD)^{-1} \bx$.
Inequality~\eqref{eq:interlace} ensures that $\eigtil_1 \geq \eig_2 \geq \ldots \geq \eig_{\statdim}$, therefore, $\bx^{\top} (\eigtil_1 \bI - \bD)^{-1} \bx \geq 0$. It further implies $\eigtil_1 \geq \eig_1$. We plug it into inequality~\eqref{eq:v1_l1} and obtain that
\begin{align*}
\norm{\bv_1}_1 \leq 1 + \frac{\sqrt{\eig_1}}{25\sqrt{\numobs}} \sum_{i=2}^{\statdim} \frac{\sqrt{\eig_i}}{\eig_1 - \eig_2} \overset{(i)}{\leq} 1 + \frac{\sqrt{\eig_1} \sqrt{\statdim \sum_{i=1}^{\infty} \eig_i}}{25\sqrt{\numobs}(\eig_1 - \eig_2)} \overset{(ii)}{\leq} 1 + \frac{\bou\sqrt{\eig_1}}{25(\eig_1 - \eig_2)} \, \frac{1-\discount}{\unibou \stdbar} \, \newradbar \delcrit \overset{(iii)}{\leq} 2 = \unibou.
\end{align*}
Here, step (i) is due to the Cauchy-Schwarz inequality; step (ii) is by inequality
$\sum_{j=1}^\infty \eig_j \leq \tfrac{\bou^2}{4}$ in condition~\eqref{EqnCondA} and the critical inequality~\eqref{eq:CI_lb}; and step (iii) is due to inequality $\newradbar \delcrit \leq 10 \unibou \stdbar \big( 1 - \tfrac{\eig_2}{\eig_1} \big) \frac{\sqrt{\eig_1}}{\bou}$ in condition~\eqref{eq:ncond_2}. We then conclue that $\supnorm{\base_1} \leq \norm{\bv_1}_1 \leq 2 = \unibou$, as claimed in the lemma statement.

\paragraph{Estimation of $\norm{\bv_j}_1$ for $j = 2,\ldots,\statdim$:}

We next consider the $\ell_1$-norms of eigenvectors
$\bv_2,\ldots,\bv_{\statdim}$.  Intuitively, when the sample size
$\numobs$ is sufficiently large, vector $\bx$ in matrix $\bSigma$ is
small and $\bSigma$ is approximately diagonal. In this case,
we expect that the eigenvector $\bv_j$ is close to the $j$-th
canonical basis $\be_j$ so that $\supnorm{\basetil_j} = \norm{\bv_j}_1
\approx \norm{\be_j}_1 = 1$.

In order to prove this claim, we will show that the $j$-th entry of
vector $\bu_j$ (denoted by $\bu_j(j)$) in
equation~\eqref{eq:def_eigvec} is noticeably larger than the other
entries in $\bu_j$.  This is because the eigenvalue difference
$|\eigtil_j - \eig_j|$ is rather small compared with eigengaps
$|\eigtil_j - \eig_i|$ with $i \neq j$. Indeed, we will prove that it
roughly holds $\eig_j - \eigtil_j \lesssim \frac{\eig_j}{\numobs}$,
thus $\bu_j(j)$ has order $\Omega(\sqrt{\numobs})$. Under our eigengap
condition $\min_{3 \leq j \leq \statdim} \big\{ \sqrt{\eig_{j-1}} -
\sqrt{\eig_j} \big\} \geq \frac{\delcrit}{2\statdim}$, the gaps
$|\eigtil_j - \eig_i|$ with $i \neq j$ are relatively large so that
the sum of entries $\big\{ |\bu_j(i)| \mid i \neq j \big\}$ is at most
$\widetilde{\mathcal{O}}(\sqrt{\statdim})$ \footnote{$\widetilde{\mathcal{O}}$ stands for
the big $O$ notation, omitting logarithmic factors.}. Here, $\bu_j(i)$
denotes the $i$-th entry of vector $\bu_j$. To this end, rescaling
$\bu_j$ yields a vector $\bv_j$ that approximates $\be_j$. \\

Let us now prove the arguments that were sketched above.  For
notational simplicity, we only consider $\bv_j$ with $2 \leq j \leq
\statdim - 1$. The analysis of $\bv_{\statdim}$ is very similar.  We
first partition the entries of $\bu_j$ into three groups and decompose
the norm $\norm{\bv_j}_1$ accordingly. Specifically, we have
$\norm{\bv_j}_1 = \TermA_1 + \TermA_2 + \TermA_3$ where
\begin{align*}
& \TermA_1 \defn \frac{1}{\norm{\bu_j}_2} \big\{ |\bu_j(1)| +
  |\bu_j(j)| + |\bu_j(j+1)| \big\}, \\ & \TermA_2 \defn
  \frac{1}{\norm{\bu_j}_2} \sum_{i = 2}^{j-1} |\bu_j(i)| \qquad
  \text{and} \qquad \TermA_3 \defn \frac{1}{\norm{\bu_j}_2} \sum_{i =
    j+2}^{\statdim} |\bu_j(i)|.
\end{align*}
By the Cauchy-Schwarz inequality, the term $\TermA_1$ satisfies
\begin{align}
\label{eq:TermA1}
\TermA_1 \leq \frac{|\bu_j(1)| + |\bu_j(j)| + |\bu_j(j+1)|}{\sqrt{(\bu_j(1))^2 + (\bu_j(j))^2 + (\bu_j(j+1))^2}} \leq \sqrt{3} \;.
\end{align}
We take shorthands $\util_{j,i} \defn
\frac{25\sqrt{\numobs}}{\sqrt{\eig_1}} \, \bu_j(i) =
\frac{\sqrt{\eig_i}}{\eigtil_j - \eig_i} \, \alphabold_m(i)$ for $i =
2,3,\ldots,\statdim$.  Since $\bu_j(j)$ dominates the other entries in
$\bu_j$, we approximate $\TermA_2$ and $\TermA_3$ by
\begin{subequations}
	\label{eq:TermA23}
\begin{align}
& \TermA_2 \leq \frac{1}{|\bu_j(j)|} \sum_{i = 2}^{j-1} |\bu_j(i)| =
  \frac{1}{|\util_{j,j}|} \sum_{i = 2}^{j-1} |\util_{j,i}| \nfed
  \TermAtil_2, \\ & \TermA_3 \leq \frac{1}{|\bu_j(j)|} \sum_{i =
    j+2}^{\statdim} |\bu_j(i)| = \frac{1}{|\util_{j,j}|} \sum_{i =
    j+2}^{\statdim} |\util_{j,i}| \nfed \TermAtil_3.
\end{align}
\end{subequations}
In the following, we estimate upper bounds $\TermAtil_2$ and
$\TermAtil_3$ in inequalities~\eqref{eq:TermA23}. \\

Under the eigengap condition \mbox{$\min_{3 \leq i \leq \statdim} \big\{ \sqrt{\eig_{i-1}} - \sqrt{\eig_i} \big\} \geq \frac{\delcrit}{2\statdim}$}, we can show that
\begin{align} \label{eq:claim}
& \sum_{i=2}^{j-1} \frac{\sqrt{\eig_i}}{\eig_i - \eig_j} \leq \frac{2\statdim}{\delcrit}(1 + \log \numobs),
& \sum_{i=j+2}^{\statdim} \frac{\sqrt{\eig_i}}{\eig_{j+1} - \eig_i} \leq \frac{2\statdim}{\delcrit}(1 + \log \numobs).
\end{align}
We assume the claim~\eqref{eq:claim} to hold at this point and prove that both $\TermAtil_2$ and $\TermAtil_3$ are constant order.

In terms of the numerators of terms $\TermAtil_2$ and $\TermAtil_3$, the interlacing inequality~\eqref{eq:interlace} and the claim~\eqref{eq:claim} imply that
\begin{subequations}
	\label{eq:numerator}
	\begin{align}
	& \sum_{i=2}^{j-1} |\util_{j,i}| = \sum_{i=2}^{j-1} \frac{\sqrt{\eig_i}}{\eig_i - \eigtil_j} \leq \sum_{i=2}^{j-1} \frac{\sqrt{\eig_i}}{\eig_i - \eig_j} \leq \frac{2\statdim}{\delcrit}(1 + \log \numobs), \label{eq:term1} \\
	& \sum_{i=j+2}^{\statdim} |\util_{j,i}| = \sum_{i=j+2}^{\statdim} \frac{\sqrt{\eig_i}}{\eigtil_j - \eig_i} \leq \sum_{i=2}^{j-1} \frac{\sqrt{\eig_i}}{\eig_{j+1} - \eig_i} \leq \frac{2\statdim}{\delcrit}(1 + \log \numobs). \label{eq:term2}
	\end{align}
\end{subequations}
Consider the common denominator $|\util_{j,j}| = \frac{\sqrt{\eig_j}}{\eig_j - \eigtil_j}$ of $\TermAtil_2$ and $\TermAtil_3$. A key step in our analysis is to estimate the perturbation term $\eig_j - \eigtil_j$.
Recall that $\eigtil_j$ satisfies the characteristic equation $\cha(\eigtil_j) = 0$, which translates into
\begin{align*}
\frac{\eig_1\eig_j}{25^2 \numobs(\eig_j - \eigtil_j)} = \eig_1 - \eigtil_j + \frac{\eig_1}{25^2 \numobs} \sum_{\begin{subarray}{c} 2 \leq i \leq \statdim, \\ i \neq j \end{subarray}} \frac{\eig_i}{\eigtil_j - \eig_i}.
\end{align*}
We use the interlacing inequality~\eqref{eq:interlace} and obtain that
\begin{align*}
\frac{\eig_1\eig_j}{25^2 \numobs(\eig_j - \eigtil_j)} \geq \eig_1 - \eig_j - \frac{\eig_1}{25^2 \numobs} \sum_{i=2}^{j-1} \frac{\eig_i}{\eig_i - \eig_j} \geq \eig_1 - \eig_j - \frac{\eig_1^{\frac{3}{2}}}{25^2 \numobs} \sum_{i=2}^{j-1} \frac{\sqrt{\eig_i}}{\eig_i - \eig_j}.
\end{align*}
When the bounds~\eqref{eq:claim} hold, we have
\begin{align} \label{eq:small}
\frac{\eig_1^{\frac{3}{2}}}{25^2 \numobs} \sum_{i=2}^{j-1} \frac{\sqrt{\eig_i}}{\eig_i - \eig_j} \overset{}{\leq} \frac{2 \eig_1^{\frac{3}{2}}\statdim}{25^2 \numobs\delcrit} (1 + \log \numobs) \overset{(i)}{\leq} \frac{2\eig_1^{\frac{3}{2}}}{25^2} \; \Big\{ \frac{\newradbar\SqDis}{\unibou \stdbar} \Big\}^2 \delcrit \; (1 + \log \numobs) \overset{(ii)}{\leq} \frac{8}{125} (\eig_1 - \eig_j),
\end{align}
where step (i) is due to inequality~\eqref{eq:CI_lb}; and in step (ii)
we use inequality \mbox{$\newradbar \delcrit \!\leq\! 10
  \unibou\stdbar \big( 1 \!-\! \tfrac{\eig_2}{\eig_1} \big)
  \frac{\unibou \stdbar/(\sqrt{\eig_1}\newradbar)}{\SqDis^2\log
    \numobs }$} in condition~\eqref{eq:ncond_2}.  We integrate the
pieces and derive that
\begin{align*}
\frac{\eig_1\eig_j}{25^2 \numobs(\eig_j - \eigtil_j)} \geq \frac{1}{2}
\, (\eig_1 - \eig_j).
\end{align*}
It further implies
\begin{align} \label{eq:ujj}
\frac{1}{|\util_{j,j}|} = \frac{\eig_j - \eigtil_j}{\sqrt{\eig_j}}
\leq \frac{2 \; \eig_1\sqrt{\eig_j}}{25^2 \, \numobs(\eig_1 -
  \eig_j)}.
\end{align}

Combining inequalities~\eqref{eq:numerator}~and~\eqref{eq:ujj}, we
find that the terms $\TermAtil_2$ and $\TermAtil_3$ in bounds
\eqref{eq:TermA23} satisfy
\begin{align*}
\max\{ \TermAtil_2, \TermAtil_3 \} & \; \leq \;
\frac{2\eig_1\sqrt{\eig_j}}{25^2 \numobs(\eig_1 - \eig_j)} \, \Big\{
\frac{2\statdim}{\delcrit}(1 + \log \numobs) \Big\} \leq
\frac{2}{\eig_1 - \eig_j} \Big\{
\frac{2\eig_1^{\frac{3}{2}}\statdim}{25^2 \numobs \delcrit}(1 + \log
\numobs) \Big\} \overset{(i)}{\leq} \frac{16}{125},
\end{align*}
where step (i) follows from inequality~\eqref{eq:small}. We plug
inequalities~\eqref{eq:TermA1}~and~\eqref{eq:TermA23} into the
decomposition $\norm{\bv_j}_1 = \TermA_1 + \TermA_2 + \TermA_3$ and
derive that $\supnorm{\basetil_j} \leq \norm{\bv_j}_1 \leq 2 =
\unibou$, as claimed in the lemma statement. \\

It only remains to prove the claim~\eqref{eq:claim}. We use some
algebra and obtain that
\begin{align*}
  \frac{\sqrt{\eig_i}}{\eig_i - \eig_j} \leq
  \frac{1}{\sqrt{\eig_i} - \sqrt{\eig_j}} \quad \text{for $i
    \leq j-1$} \qquad \text{and} \qquad
  \frac{\sqrt{\eig_i}}{\eig_{j+1} - \eig_i} \leq
  \frac{1}{\sqrt{\eig_{j+1}} - \sqrt{\eig_i}} \quad \text{for $i
    \geq j+2$}.
\end{align*}
Under the eigengap condition \mbox{$\min_{3 \leq i \leq \statdim}
  \big\{ \sqrt{\eig_{i-1}} - \sqrt{\eig_i} \big\} \geq
  \frac{\delcrit}{2\statdim}$}, we have $\sqrt{\eig_{i_1}} -
\sqrt{\eig_{i_2}} \geq (i_2 - i_1) \, \frac{\delcrit}{2\statdim}$ for
any $2 \leq i_1 < i_2 \leq \statdim$. It then follows that
\begin{align*}
\sum_{i=2}^{j-1} \frac{\sqrt{\eig_i}}{\eig_i - \eig_j} & \leq
\sum_{i=2}^{j-1} \frac{1}{\sqrt{\eig_i} - \sqrt{\eig_j}} \leq
\frac{2\statdim}{\delcrit} \sum_{i=2}^{j-1} \frac{1}{j-i} \leq
\frac{2\statdim}{\delcrit} \, \big\{ 1 + \log (j-2)\big\} \leq
\frac{2\statdim}{\delcrit} ( 1 + \log \numobs ) \, .
\end{align*}
The second bound in equation~\eqref{eq:claim} can be proved in a
similar way.

	
\bibliographystyle{abbrv} \bibliography{ref}

	
\end{document}